\newcommand{\ps}[1]{{\left \langle #1 \right \rangle}}
\newcommand{\argmax}{\operatornamewithlimits{argmax}}
\newcommand{\sign}{\operatornamewithlimits{sign}}
\newtheorem{example}{Example}
\newtheorem{definition}{Definition}
\newtheorem{lemma}{Lemma}
\newtheorem{proposition}{Proposition}
\newtheorem{remark}{Remark}
\newtheorem{theorem}{Theorem}
\newtheorem{corollary}{Corollary}
\newtheorem{hypothesis}{Hypothesis}
\long\def\acks#1{\vskip 0.3in\noindent{\large\bf Acknowledgements}
\noindent #1}
\begin{document}

\begin{titlepage}

\title{Combinatorial and Structural Results for $\gamma$-$\Psi$-dimensions}

\author{
    Yann Guermeur \\
    LORIA-CNRS \\
    Campus Scientifique, BP~239 \\
    54506 Vand\oe uvre-l\`es-Nancy Cedex, France \\
    (e-mail: {\tt Yann.Guermeur@loria.fr})
  }

\date{September 12, 2020}

\maketitle

\noindent{\bf Running Title}:
Combinatorial and Structural Results for $\gamma$-$\Psi$-dimensions

\noindent{\bf Keywords}:
margin multi-category classifiers, guaranteed risks,
scale-sensitive combinatorial dimensions, $\gamma$-$\Psi$-dimensions

\noindent{\bf Mathematics Subject Classification}: 68Q32, 62H30

\thispagestyle{empty}

\end{titlepage}

\begin{abstract}
This article deals with the generalization performance of margin multi-category
classifiers, when minimal learnability hypotheses are made. In that context,
the derivation of a guaranteed risk is based on the handling
of capacity measures belonging to three main families:
Rademacher/Gaussian complexities, metric entropies and
scale-sensitive combinatorial dimensions. The scale-sensitive combinatorial
dimensions dedicated to the classifiers of this kind are
the $\gamma$-$\Psi$-dimensions. We introduce the combinatorial and structural
results needed to involve them in the derivation of 
guaranteed risks and establish the corresponding upper bounds
on the metric entropies and the Rademacher complexity.
Two major conclusions can be drawn:
\begin{enumerate}
\item the $\gamma$-$\Psi$-dimensions always bring
an improvement compared to the use of the fat-shattering dimension
of the class of margin functions;
\item thanks to their capacity to take into account 
basic features of the classifier,
they represent a promising alternative to performing the transition
from the multi-class case to the binary one with covering numbers.
\end{enumerate}
\end{abstract}

\section{Introduction}
\label{sec:introduction}

One of the main open problems of
the theory of margin multi-category pattern classification
is the characterization of the way the confidence interval of
an upper bound on the probability of error
should vary as a function of the three basic
parameters which are the sample size $m$, the number $C$ of categories
and the margin parameter $\gamma$
\citep[see][for a survey]{KonWei14}.
When working under minimal learnability hypotheses,
the derivation of such a {\em guaranteed risk}
is based on the handling of capacity measures
belonging to three main families:
Rademacher/Gaussian complexities \citep{BarMen02},
metric entropies \citep{KolTih61}
and scale-sensitive combinatorial dimensions \citep{KeaSch94}.
The scale-sensitive combinatorial dimensions dedicated to
the classifiers of interest are the $\gamma$-$\Psi$-dimensions \citep{Gue07b}.
Their usefulness to derive guaranteed risks rests on the availability
of two types of results. {\em Combinatorial results}
\citep{AloBenCesHau97,MenVer03,RudVer06,MusLauGue19} 
connect them to metric entropies.
{\em Structural results} \citep{Dua12,Mau16,Gue17}
perform the transition from the multi-class case to the bi-class one.
This article introduces such results for the two main
$\gamma$-$\Psi$-dimensions and incorporate them
in the derivation of upper bounds on the metric entropies and 
the Rademacher complexity.
The dependence of the resulting guaranteed risks
on $m$, $C$ and $\gamma$ is characterized.
This establishes that in the theoretical framework of interest,
introducing $\gamma$-$\Psi$-dimensions always brings an improvement
compared to the use of the fat-shattering dimension
of the class of margin functions.
Furthermore, the margin Natarajan dimension appears very promising
to take into account basic features of the classifier. 
In practice, for many popular classifiers,
applying a structural result to this capacity measure
rather than to covering numbers should improve the confidence interval,
primarily in its dependence on $\gamma$.

The organization of the paper is as follows.
Section~\ref{sec:margin-multi-category-classifiers}
introduces the theoretical framework.
Section~\ref{sec:connections-between-capacity-measures}
highlights the need for new tools to improve the multi-class bounds.
Section~\ref{sec:new-results-for-gamma-Psi-dimensions}
establishes that switching from the fat-shattering dimension of
the class of margin functions to the $\gamma$-$\Psi$-dimensions 
of the same class improves the combinatorial results.
Sections~\ref{sec:new-combinatorial-results-for-gamma-Psi-dimensions} and
\ref{sec:new-structural-results-for-gamma-Psi-dimensions}
introduce and discuss the new combinatorial and structural results
dedicated to these dimensions.
The corresponding bounds on the metric entropies and guaranteed risks
are derived in Section~\ref{sec:bounds-on-metric-entropy}.
At last, we draw conclusions in Section~\ref{sec:conclusions}.
To make reading easier,
all technical lemmas and proofs have been gathered in appendix.

\section{Margin Multi-category Classifiers}
\label{sec:margin-multi-category-classifiers}

We work under minimal assumptions on the data and the classifiers,
which exhibit one important feature:
for each description, they return one score per category.

\subsection{Theoretical Framework}

Let $\left \llbracket n_- ; n_+ \right \rrbracket$
denote the set of integers ranging from $n_-$ to $n_+$.
We consider the case of $C$-category pattern classification problems
with $C \in \mathbb{N} \setminus \llbracket 0; 2 \rrbracket$.
$\mathcal{X}$ is the description space
and $\mathcal{Y} = \llbracket 1; C \rrbracket$ the set of categories.
Their connection is utterly characterized by an unknown probability measure $P$.
Let $Z = \left ( X,Y \right )$ be a random pair
with values in $\mathcal{Z} = \mathcal{X} \times \mathcal{Y}$,
distributed according to $P$.
We are given an $m$-sample
$\mathbf{Z}_m = \left( Z_i  \right)_{1 \leqslant i \leqslant m} =
\left( \left ( X_i, Y_ i \right) \right)_{1 \leqslant i \leqslant m}$
made up of independent copies of $Z$ (in short $\mathbf{Z}_m \sim P^m$).
The classifiers are based on classes of vector-valued functions
with one component function per category.
We add a basic learnability hypothesis:
the classes of component functions are {\em uniform Glivenko-Cantelli}
(uGC) \citep{DudGinZin91}.
Those classes must be uniformly bounded up to additive constants.
We replace this property by a slightly stronger one:
the vector-valued functions take their values in a hypercube of
$\mathbb{R}^C$. To sum up, we make minimal hypotheses
to ensure that all capacity measures met in the sequel are finite
(none of the bounds formulated is trivial).

\begin{definition}[Margin classifier]
\label{def:margin-multi-category-classifiers}
Let $\mathcal{G} = \prod_{k=1}^C \mathcal{G}_k$ be a class of
functions from $\mathcal{X}$ into
$\left [-M_{\mathcal{G}}, M_{\mathcal{G}} \right ]^{C}$
with $M_{\mathcal{G}} \in \left [ 1, +\infty \right )$.
The classes $\mathcal{G}_k$ of component functions are supposed to be
uGC classes.
For each $g = \left ( g_{k} \right )_{1 \leqslant k \leqslant C}
\in \mathcal{G}$, a {\em margin multi-category classifier} on
$\mathcal{X}$ is obtained by application of
the {\em decision rule} $\text{dr}$ from $\mathcal{G}$ into
$\in \left ( \mathcal{Y} \bigcup \left \{ * \right \} \right )^{\mathcal{X}}$.
This classifier, $\text{dr}_g$,
returns either the index of the component
function whose value is the highest, or the dummy category $*$ in case of
ex \ae quo.
\end{definition}
The generalization
capabilities of such classifiers can be characterized by means of the values
taken by the differences of the component functions.
This calls for the introduction of the class of margin functions,
margin loss functions and the corresponding margin risks.

\begin{definition}[Class $\rho_{\mathcal{G}}$ of margin functions]
\label{def:class-of-margin-functions}
Let $\mathcal{G}$ be a function class satisfying
Definition~\ref{def:margin-multi-category-classifiers}.
For every $g \in \mathcal{G}$, the {\em margin function}
$\rho_g$ from $\mathcal{Z}$
into $\left [-M_{\mathcal{G}}, M_{\mathcal{G}} \right ]$ is defined by:
$\forall \left ( x, k \right ) \in \mathcal{Z}, \;
\rho_g \left ( x, k \right ) = \frac{1}{2} \left ( g_k \left ( x \right ) -
\max_{l \neq k} g_l \left ( x \right ) \right )$.
Then, $\rho_{\mathcal{G}}$ is defined as:
$\rho_{\mathcal{G}} = \left \{ \rho_g: \; g \in \mathcal{G} \right \}$.
\end{definition}
The risk of $g \in \mathcal{G}$ is given by:
$L \left ( g \right )
= \mathbb{E}_{\left ( X, Y \right ) \sim P}
\left [ \mathds{1}_{\left \{ \rho_g \left ( X, Y \right )
\leqslant 0 \right \}} \right ]
= P \left ( \text{dr}_g \left ( X \right ) \neq Y \right )$.

\begin{definition}[Margin loss functions]
\label{def:margin-loss-functions}
A class of {\em margin loss functions} $\phi_{\gamma}$
parameterized by $\gamma \in \left ( 0, 1 \right ]$ is a class
of nonincreasing functions from $\mathbb{R}$ into $\left [ 0, 1 \right ]$
satisfying:
$$
\begin{cases}
\forall \gamma \in \left ( 0, 1 \right ], \;
\phi_{\gamma} \left ( 0 \right ) = 1
\text{ and } \phi_{\gamma} \left ( \gamma \right ) = 0 \\
\forall \left ( \gamma, \gamma' \right ) \in \left ( 0, 1 \right ]^2, \;
\gamma < \gamma^{\prime} \Longrightarrow \phi_{\gamma^{\prime}}
\text{ majorizes } \phi_{\gamma}
\end{cases}.
$$
\end{definition}
Given $\phi_{\gamma}$,
the risk with margin $\gamma$ of $g$,  $L_{\gamma} \left ( g \right )$,
is defined as:
$L_{\gamma} \left ( g \right ) = \mathbb{E}_{Z \sim P}
\left [ \phi_{\gamma} \circ \rho_g \left ( Z \right ) \right ]$.
$L_{\gamma, m} \left ( g \right )$
designates the corresponding empirical risk,
measured on $\mathbf{Z}_m$.
When using $\phi_{\gamma}$, the behavior of the margin functions
outside the interval $\left [ 0, \gamma \right ]$
is irrelevant to characterize the generalization performance.
The idea to exploit this property by means of
a squashing function can be traced back to \citet{Bar98}.
The present study uses the function $\pi_{\gamma}$.

\begin{definition}[Squashing function $\pi_{\gamma}$]
\label{def:piecewise-linear-squashing-function}
For $\gamma \in \left (0, 1 \right ]$,
the {\em piecewise-linear squashing function} $\pi_{\gamma}$ is defined by:
$\forall t \in \mathbb{R}, \;\;
\pi_{\gamma} \left ( t \right ) =
t \mathds{1}_{\left \{ t \in \left ( 0, \gamma \right ] \right \}}
+ \gamma \mathds{1}_{\left \{ t > \gamma \right \}}$.
\end{definition}
Thus, when possible, we replace the class $\rho_{\mathcal{G}}$
with the class $\rho_{\mathcal{G}, \gamma}$.
\begin{definition}[Class $\rho_{\mathcal{G}, \gamma}$
of squashed margin functions]
\label{def:class-of-regret-functions}
Let $\mathcal{G}$ be a function class
satisfying Definition~\ref{def:margin-multi-category-classifiers} and
$\rho_{\mathcal{G}}$ the function class deduced from
$\mathcal{G}$ according to Definition~\ref{def:class-of-margin-functions}.
For every pair $\left ( g, \gamma \right )
\in \mathcal{G} \times \left (0, 1 \right ]$,
the function
$\rho_{g, \gamma}$ from $\mathcal{Z}$
into $\left [ 0, \gamma \right ]$ is defined by:
$\rho_{g, \gamma} = \pi_{\gamma} \circ \rho_g$.
Then, the class $\rho_{\mathcal{G}, \gamma}$ is defined as follows:
$\rho_{\mathcal{G}, \gamma} =
\left \{ \rho_{g, \gamma}: \; g \in \mathcal{G} \right \}$.
\end{definition}
The introduction of $\rho_{\mathcal{G}, \gamma}$,
whose capacity is always bounded from above by that of $\rho_{\mathcal{G}}$
(see Section~\ref{sec:state-of-the-art}),
can narrow the confidence interval of the guaranteed risk
without affecting its data-fit term
(since $\forall \gamma \in \left ( 0, 1 \right ], \;\;
\phi_{\gamma} \circ \pi_{\gamma} = \phi_{\gamma}$).
Thus, making the best of it is a major challenge.

\subsection{Guaranteed Risks}

In the theoretical framework of interest, the starting point of the derivation
of a guaranteed risk is a supremum inequality taking the form:
\begin{equation}
\label{eq:basic-supremum-inequality}
P^m \left \{ \sup_{g \in \mathcal{G}}
\left ( L_* \left ( g \right ) -
L_{\gamma, m} \left ( g \right ) \right )
> F_i \left ( m, \gamma, \delta, 
\text{cap} \left ( \rho_{\mathcal{G}, \gamma} \right ) \right )
\right \} \leqslant \delta,
\end{equation}
where $L_*$ is either $L$ or $L_{\gamma}$ and
the capacity measure $\text{cap} \left ( \rho_{\mathcal{G}, \gamma} \right )$
involved in the expression of the function $F_i$
depends on the choice of $\phi_{\gamma}$.
Then, the problem consists in upper bounding
$\text{cap} \left ( \rho_{\mathcal{G}, \gamma} \right )$
as a function of the basic parameters $m$, $C$ and $\gamma$,
so that eventually, with probability $1 - \delta$, the supremum of the empirical
process of interest is bounded from above by a function
$F_f$ of $m$, $C$, $\gamma$ and $\delta$ only, i.e.,
$$
\sup_{g \in \mathcal{G}}
\left ( L_* \left ( g \right ) - L_{\gamma, m} \left ( g \right ) \right )
\leqslant F_f \left ( m, C, \gamma, \delta \right ).
$$

We introduce the three types of capacity measures considered in this study,
using the notations of \citet{Gue17}.
Let $\left ( \mathcal{T}, \mathcal{A}_{\mathcal{T}} \right )$
be a measurable space and
let $\mathcal{F} \subset \mathbb{R}^{\mathcal{T}}$.
Let $T$ be a random variable with values in $\mathcal{T}$,
distributed according to a probability measure
on $\left ( \mathcal{T}, \mathcal{A}_{\mathcal{T}} \right )$ and
let $\mathbf{T}_n = \left( T_i  \right)_{1 \leqslant i \leqslant n}$
be an $n$-sample made up of independent copies of $T$.
The empirical Rademacher complexity of $\mathcal{F}$
given $\mathbf{T}_n$ is denoted by $\hat{R}_n \left ( \mathcal{F} \right )$
and the Rademacher complexity of $\mathcal{F}$ is denoted by
$R_n \left ( \mathcal{F} \right )$.
The classes $\mathcal{F}$ considered here are endowed with
empirical (pseudo-)metrics derived from the $L_p$-norms.
For $n \in \mathbb{N}^*$, let
$\mathbf{t}_n =
\left ( t_i \right )_{1 \leqslant i \leqslant n} \in \mathcal{T}^n$. Then,
$\forall \left ( f, f' \right ) \in \mathcal{F}^2$,
$\forall p \in \left [ 1, +\infty \right )$,
$d_{p, \mathbf{t}_n}  \left ( f, f' \right )
= \left ( \frac{1}{n} \sum_{i=1}^n
\left | f \left ( t_i \right ) -
f' \left ( t_i \right ) \right |^p
\right )^{\frac{1}{p}}$ and
$d_{\infty, \mathbf{t}_n}  \left ( f, f' \right )
= \max_{1 \leqslant i \leqslant n}
\left | f \left ( t_i \right ) - f' \left ( t_i \right ) \right |$.
Let $\bar{\mathcal{F}}$ be a subset of $\mathcal{F}$.
For $\epsilon \in \mathbb{R}_+^*$, $n \in \mathbb{N}^*$
and $p \in \left [ 1,  +\infty \right ]$,
$\mathcal{N} \left ( \epsilon, \bar{\mathcal{F}},
d_{p, \mathbf{t}_n} \right )$ and
$\mathcal{M} \left ( \epsilon, \bar{\mathcal{F}},
d_{p, \mathbf{t}_n} \right )$
respectively denote the $\epsilon$-covering number and
the $\epsilon$-packing number of
$\bar{\mathcal{F}}$ with respect to $d_{p, \mathbf{t}_n}$.
$\mathcal{N}_p \left ( \epsilon, \bar{\mathcal{F}}, n \right )$ and
$\mathcal{M}_p \left ( \epsilon, \bar{\mathcal{F}}, n \right )$
are the corresponding uniform covering and packing numbers.
$\mathcal{N}^{\text{int}}$ and $\mathcal{N}_p^{\text{int}}$
are used to denote proper covering numbers.
The binary logarithm of the covering number of a set 
is called its metric entropy.
The scale-sensitive combinatorial dimensions used are
$\gamma$-$\Psi$-dimensions,
i.e., scale-sensitive extensions
of the $\Psi$-dimensions \citep{BenCesHauLon95}.

\begin{definition}[$\gamma$-$\Psi$-dimensions, 
Definition~28 in \citealp{Gue07b}]
\label{def:gamma-Psi-dimensions}
Let $\mathcal{F} \subset \mathbb{R}^{\mathcal{Z}}$
be such that:
$$
\forall f \in \mathcal{F}, \; \forall x \in \mathcal{X}, \;
\max_{1 \leqslant k < l \leqslant C}
\left \{ f \left ( x, k \right ) + f \left ( x, l \right ) \right \} = 0.
$$
Let $\Psi$ be a family of mappings from $\mathcal{Y}$
into $\left \{ -1, 0, 1 \right \}$.
For $\gamma \in \mathbb{R}_+^*$,
a subset $s_{\mathcal{Z}^n} =
\left \{ z_i = \left ( x_i, y_i \right ): \;
1 \leqslant i \leqslant n \right \}$ of $\mathcal{Z}$
is said to be {\em $\gamma$-$\Psi$-shattered} by $\mathcal{F}$
if there is a vector $\boldsymbol{\psi}_n
= \left ( \psi^{\left ( i \right )}
\right )_{1 \leqslant i \leqslant n} \in \Psi^n$
satisfying
$\left ( \psi^{\left ( i \right )} \left ( y_i \right ) 
\right )_{1 \leqslant i \leqslant n} = \mathbf{1}_n$,
and a vector
$\mathbf{b}_n =  \left ( b_i \right )_{1 \leqslant i \leqslant n}
\in \mathbb{R}_+^n$
such that, for every vector
$\mathbf{s}_n = \left ( s_i \right )_{1 \leqslant i \leqslant n}
\in \left \{ -1, 1 \right \}^n$,
there is a function $f_{\mathbf{s}_n} \in \mathcal{F}$ satisfying
\begin{equation}
\label{eq:gamma-Psi-dimensions}
\forall i \in \llbracket 1; n \rrbracket, \;\;
\begin{cases}
\text{if } s_i = 1, \;
\max_{\left \{ k: \; \psi^{\left ( i \right )} 
\left ( k \right ) = 1 \right \}}
f_{\mathbf{s}_n} \left ( x_i, k \right ) - b_i \geqslant \gamma \\
\text{if } s_i = -1, \;
\max_{\left \{ l: \; \psi^{\left ( i \right )}
\left ( l \right ) = -1 \right \}}
f_{\mathbf{s}_n} \left ( x_i, l \right ) + b_i \geqslant \gamma
\end{cases}.
\end{equation}
The {\em $\gamma$-$\Psi$-dimension} of $\mathcal{F}$, denoted by
$\gamma\text{-}\Psi\text{-dim} \left ( \mathcal{F} \right )$,
is the maximal cardinality of a subset of
$\mathcal{Z}$ $\gamma$-$\Psi$-shattered by $\mathcal{F}$,
if such maximum exists.
Otherwise, $\mathcal{F}$
is said to have infinite $\gamma$-$\Psi$-dimension.
\end{definition}
In the degenerate case $C=2$, Definition~\ref{def:gamma-Psi-dimensions}
reduces to the definition of the main scale-sensitive combinatorial dimension,
the {\em fat-shattering} or $\gamma$-dimension
$\gamma\text{-dim}$ \citep{KeaSch94}, with a restricted domain for vector
$\mathbf{b}_n$.
This restriction to the positive hyperoctant is of central importance,
for different reasons which will appear gradually.
In the sequel, we apply it whenever a combinatorial dimension
of a function class $\mathcal{F}$ defined as in
Definition~\ref{def:gamma-Psi-dimensions} is considered
(including the fat-shattering dimension).

\begin{definition}[Margin Graph dimension and margin Natarajan dimension]
\label{def:gamma-G-N-dimension}
Let $\mathcal{F}$ be a function class defined as in 
Definition~\ref{def:gamma-Psi-dimensions}
and let $\gamma \in \mathbb{R}_+^*$.
The {\em Graph dimension with margin $\gamma$} of $\mathcal{F}$, denoted by
$\gamma\text{-G-dim} \left ( \mathcal{F} \right )$,
is the $\gamma$-$\Psi$-dimension of $\mathcal{F}$
corresponding to the following choice for $\Psi$:
$$
\Psi_G = \left \{ \left ( \psi_k: y \mapsto 
\mathds{1}_{\left \{ y = k \right \}}
-\mathds{1}_{\left \{ y \neq k \right \}} \right ): \; 
k \in \mathcal{Y} \right \}.
$$
The {\em Natarajan dimension with margin $\gamma$} of $\mathcal{F}$, denoted by
$\gamma\text{-N-dim} \left ( \mathcal{F} \right )$,
is the $\gamma$-$\Psi$-dimension of $\mathcal{F}$
corresponding to the following choice for $\Psi$:
$$
\Psi_N = \left \{ \left ( \psi_{k,l}: y \mapsto
\mathds{1}_{\left \{ y = k \right \}} 
-\mathds{1}_{\left \{ y = l \right \}} \right ): \;
\left \{ k, l \right \} \subset \mathcal{Y} \right \}.
$$
\end{definition}
\begin{remark}
The instantiation of
\eqref{eq:gamma-Psi-dimensions}
associated with the margin Graph dimension is obtained by setting
$\boldsymbol{\psi}_n
= \left ( \psi_{y_i} \right )_{1 \leqslant i \leqslant n}$
so that
$$
\forall i \in \llbracket 1; n \rrbracket, \;
\begin{cases}
\text{if } s_i = 1, \;
f_{\mathbf{s}_n} \left ( x_i, y_i \right ) - b_i \geqslant \gamma \\
\text{if } s_i = -1, \;
\max_{k \neq y_i} f_{\mathbf{s}_n} \left ( x_i, k \right ) + b_i
\geqslant \gamma
\end{cases}.
$$
In the case of the Natarajan dimension with margin $\gamma$, choosing
$\boldsymbol{\psi}_n$ is equivalent to choosing a vector
$\mathbf{c}_n =  \left ( c_i \right )_{1 \leqslant i \leqslant n}
\in \mathcal{Y}^n$ satisfying for every 
$i \in \left \llbracket 1; n \right \rrbracket$, $c_i \neq y_i$.
Then, $\boldsymbol{\psi}_n$ is set equal to
$\left ( \psi_{y_i, c_i} \right )_{1 \leqslant i \leqslant n}$, so that
\eqref{eq:gamma-Psi-dimensions} becomes
$$
\forall i \in \llbracket 1; n \rrbracket, \;
\begin{cases}
\text{if } s_i = 1, \;
f_{\mathbf{s}_n} \left ( x_i, y_i \right ) - b_i \geqslant \gamma \\
\text{if } s_i = -1, \;
f_{\mathbf{s}_n} \left ( x_i, c_i \right ) + b_i \geqslant \gamma
\end{cases}.
$$
\end{remark}

\subsection{Scheme of Derivation of the Guaranteed Risks}
\label{sec:whole-scheme}
For all known instances of Formula~\eqref{eq:basic-supremum-inequality},
the scheme of derivation of function $F_f$ involving 
the families of capacity measures considered in this study is standard.
It corresponds to the directed graph depicted in
Figure~\ref{figure:graph-of-transitions-between-f_i-and-f_f}.

\begin{figure}[h!]
$$
\begin{matrix}
\multicolumn{3}{c}
{\hspace{-4mm} F_i \left ( m, \gamma, \delta,
\text{cap} \left ( \rho_{\mathcal{G}, \gamma} \right ) \right )}
\hspace{-1mm}
& \\
\hspace{3mm} \textcolor{red}{\swarrow} & 
& \hspace{-3mm} \textcolor{red}{\searrow} \\
R_m \left ( \rho_{\mathcal{G}, \gamma} \right ) 
\hspace{-1mm} &
\hspace{-1mm} \textcolor{red}{\xrightarrow[]{\text{chaining}}} \hspace{-1mm} &
\mathcal{N}_p^{\text{int}} \left ( \epsilon,
\rho_{\mathcal{G}, \gamma}, m' \right ) & 
\hspace{-2mm} \leqslant \hspace{-2mm} &
\mathcal{M}_p \left ( \epsilon, \rho_{\mathcal{G}, \gamma},
m' \right ) \hspace{-1mm} &
\hspace{-1mm} \xrightarrow[]{\text{combinatorial result}} 
\hspace{-1mm} & \hspace{-1mm}
\epsilon'\mbox{-dim}\left ( \rho_{\mathcal{G}} \right ) \\
\Bigg\downarrow & &
\textcolor{red}{\Bigg\downarrow} &
\multicolumn{3}{c}{\textcolor{red}{\text{structural results}}}
&
\Bigg\downarrow \\
R_m \left ( \mathcal{G}_0 \right )
& \hspace{-1mm} \xrightarrow[]{\text{chaining}} \hspace{-1mm} &
\mathcal{N}_p^{\text{int}} \left ( \epsilon'', \mathcal{G}_0, m' \right ) &
\hspace{-2mm} \textcolor{red}{\leqslant} \hspace{-2mm} & 
\mathcal{M}_p \left ( \epsilon'', \mathcal{G}_0, m' \right ) \hspace{-1mm} &
\hspace{-1mm} \textcolor{red}{\xrightarrow[]{\text{combinatorial result}}}
\hspace{-1mm} & \hspace{-1mm}
\epsilon'''\mbox{-dim}\left ( \mathcal{G}_0 \right ) \\
\hspace{3mm} \searrow & & 
\Big\downarrow &
\multicolumn{3}{c}{\text{direct computations}}
&
\hspace{-3mm} \textcolor{red}{\swarrow} \\ \\
\multicolumn{7}{c}{F_f \left ( m, C, \gamma, \delta \right )}
\end{matrix}
$$
\caption{Graph of the transitions between the functions $F_i$ and $F_f$.}
\label{figure:graph-of-transitions-between-f_i-and-f_f}
\end{figure}
Here, $\mathcal{G}_0$ stands for a generic class of real-valued functions,
computed by a binary classifier whose nature varies with the context.
The value of $m'$ is either $m$ or $2m$, when the derivation of 
Inequality~\eqref{eq:basic-supremum-inequality}
involves a ghost sample \citep{VapChe71,Pol84}.
When following a path from the source to the target, two types of transitions
are met. A first group, the horizontal arrows,
corresponds to a change of capacity measure.
The standard sequence (from left to right)
consists in the chaining method \citep{Dud67,Tal14},
to connect the Rademacher complexity to covering numbers,
a transition through the corresponding packing numbers,
and then a combinatorial result, to switch
to a combinatorial dimension.
The second group, the layer of vertical arrows,
 is that of the structural results, performing
the transition from the capacity of $\rho_{\mathcal{G}, \gamma}$
to that of $\mathcal{G}_0$
(i.e., from the multi-class case to the bi-class one).
As an example, the paths in red are the ones explored in \citet{Gue17}.

\section{Shortcomings of the State-of-the-Art Structural Results}
\label{sec:connections-between-capacity-measures}

The literature provides us with structural results for all three types
of capacity measures considered. This section highlights their deficiencies
to optimize the confidence interval with respect to $C$ and $\gamma$.

\subsection{State-of-the-Art Structural Results}
\label{sec:state-of-the-art}

The sharpest structural result for the Rademacher complexity
of classes of vector-valued functions is due to \citet{Mau16}.
It is an improvement of the one introduced in \citet{LeiDogBinKlo15}.

\begin{lemma}[Corollary~4 in \citealp{Mau16}]
\label{lemma:corollary-4-in-Mau16}
Let $\mathcal{G}$ be a function class
satisfying Definition~\ref{def:margin-multi-category-classifiers}.
For $n \in \mathbb{N}^*$,
let $\mathcal{F} = \left \{ f_i: \; 1 \leqslant i \leqslant n \right \}$
be a class of real-valued functions on
$\left [-M_{\mathcal{G}}, M_{\mathcal{G}} \right ]^{C}$
which are $L_{\mathcal{F}}$-Lipschitz continuous
with respect to the $\ell_2$-norm. Then
$$
\mathbb{E}_{\boldsymbol{\sigma}_n} \left [
\sup_{g \in \mathcal{G}}
\sum_{i=1}^n \sigma_i f_i \circ g \left ( x_i \right ) \right ]
\leqslant \sqrt{2} L_{\mathcal{F}}
\mathbb{E}_{\boldsymbol{\sigma}_{n,C}} \left [
\sup_{g \in \mathcal{G}}
\sum_{i=1}^n \sum_{k=1}^C \sigma_{i,k} g_k \left ( x_i \right )
\right ],
$$
where $\boldsymbol{\sigma}_{n,C} = \left ( \sigma_{i,k}
\right )_{1 \leqslant i \leqslant n, 1 \leqslant k \leqslant C}$
is a Rademacher random matrix.
\end{lemma}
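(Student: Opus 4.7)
The strategy I would follow is Maurer's, whose core is a Sudakov--Fernique comparison between two Gaussian processes. By homogeneity in $L_{\mathcal{F}}$, reduce to $L_{\mathcal{F}}=1$. Fix the sample $(x_i)$ and encode each $g \in \mathcal{G}$ as the matrix $h^{(g)} = (g_k(x_i))_{1\leq i\leq n,\,1\leq k\leq C}$ whose $i$-th row $h_i^{(g)} \in \mathbb{R}^C$ collects the scores at $x_i$. The claim then becomes a comparison of two suprema over the image set $H_{\mathcal{G}} = \{h^{(g)} : g \in \mathcal{G}\}$, with no further reference to $\mathcal{G}$.

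The core step is the Gaussian analog of the inequality. Introduce independent $\mathcal{N}(0,1)$ families $(g_i)_i$ and $(g_{i,k})_{i,k}$, and for $h \in H_{\mathcal{G}}$ define
$$
X_h \,=\, \sum_{i=1}^n g_i\, f_i(h_i), \qquad Y_h \,=\, \sqrt{2}\,\sum_{i=1}^n \sum_{k=1}^C g_{i,k}\,h_{i,k}.
$$
Using the $\ell_2$-Lipschitz hypothesis $(f_i(u) - f_i(v))^2 \leq \|u-v\|_2^2$, a direct variance computation gives
$$
\mathbb{E}(X_h - X_{h'})^2 = \sum_i (f_i(h_i) - f_i(h'_i))^2 \leq \sum_i \|h_i - h'_i\|_2^2 = \tfrac{1}{2}\,\mathbb{E}(Y_h - Y_{h'})^2,
$$
so Sudakov--Fernique yields $\mathbb{E}\sup_h X_h \leq \mathbb{E}\sup_h Y_h$, which is the stated inequality with Gaussian multipliers in place of Rademacher ones.

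It then remains to transfer the Gaussian comparison back to Rademacher multipliers on both sides without degrading the constant. Writing $g_i = |g_i|\varepsilon_i$ with $\varepsilon_i = \operatorname{sign}(g_i)$ Rademacher and independent of $|g_i|$, and similarly for each $g_{i,k}$, and conditioning on the magnitudes, both sides become $|g|$-weighted Rademacher suprema over the common index set $H_{\mathcal{G}}$. A symmetric Jensen-type rescaling, relying on $\mathbb{E}|g_i|^2 = 1$ together with the positive homogeneity of the suprema in their multipliers, returns the weights to unity on both sides at once and leaves the constant $\sqrt{2}$ intact.

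The main obstacle is precisely this last step. Applied unilaterally, the Gaussian-to-Rademacher passage costs a $\sqrt{\pi/2}$ factor, so the point is to arrange it symmetrically across the Sudakov--Fernique inequality so that these factors cancel between the two sides, leaving only the $\sqrt{2}$ produced by the Gaussian variance ratio. That both processes are indexed by the \emph{same} set $H_{\mathcal{G}}$ --- because the Lipschitz composition acts on the outside of $g$ --- is what makes the pairing possible and rules out the logarithmic losses that would appear in a generic Gaussian-Rademacher conversion.
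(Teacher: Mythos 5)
Your Sudakov--Fernique step is fine and, in fact, is slightly wasteful: with $Y_h=\sum_{i,k}g_{i,k}h_{i,k}$ (no $\sqrt{2}$) one already has $\mathbb{E}(X_h-X_{h'})^2\leqslant\sum_i\|h_i-h'_i\|_2^2=\mathbb{E}(Y_h-Y_{h'})^2$, so the Gaussian analogue holds with constant $1$. The genuine gap is the transfer back to Rademacher multipliers. The two conversions you need are not symmetric in cost. On the left, $\mathbb{E}_{\boldsymbol{\sigma}}\sup\sum_i\sigma_i f_i(h_i)\leqslant\sqrt{\pi/2}\,\mathbb{E}_{\mathbf{g}}\sup\sum_i g_i f_i(h_i)$ is a clean Jensen argument (write $g_i=|g_i|\sigma_i$, condition on $|g_i|$, use $\mathbb{E}|g_i|=\sqrt{2/\pi}$). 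But on the right you need the reverse direction, $\mathbb{E}_{\mathbf{g}}\sup\sum_{i,k}g_{i,k}h_{i,k}\leqslant K\,\mathbb{E}_{\boldsymbol{\sigma}}\sup\sum_{i,k}\sigma_{i,k}h_{i,k}$, and that inequality is false for any absolute constant $K$: conditioning on $|g_{i,k}|$ produces a $|g|$-weighted Rademacher supremum whose deweighting is governed by $\mathbb{E}\max_{i,k}|g_{i,k}|\asymp\sqrt{\ln(nC)}$, and this $\sqrt{\log}$ loss is sharp (take $H_{\mathcal{G}}$ to be the standard basis in $\mathbb{R}^{nC}$). The fact that both processes live over the same index set $H_{\mathcal{G}}$ does not create the cancellation you invoke, because $X_h$ and $Y_h$ are different random processes and the two conversions act on different sides of the comparison; nothing pairs a $\sqrt{\pi/2}$ loss with a $\sqrt{2/\pi}$ gain.

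The paper itself does not prove this lemma --- it is quoted as Corollary~4 of \citet{Mau16} --- and Maurer's actual proof deliberately avoids Gaussian multipliers for exactly the reason above. His argument is a one-index-at-a-time peeling: fix $\sigma_2,\ldots,\sigma_n$, expand $\mathbb{E}_{\sigma_1}\sup_t\bigl(\sigma_1\phi(t)+g(t)\bigr)=\tfrac{1}{2}\sup_{t,t'}\bigl(\phi(t)-\phi(t')+g(t)+g(t')\bigr)$, bound $\phi(t)-\phi(t')\leqslant L\|t-t'\|_2$, and then replace $\|u\|_2$ by $\sqrt{2}\,\mathbb{E}_{\boldsymbol{\sigma}}\bigl|\langle\boldsymbol{\sigma},u\rangle\bigr|$ via the Khinchin--Szarek inequality; desymmetrizing and moving the expectation outside the supremum by Jensen yields the single-index contraction, and iterating over $i$ gives the theorem. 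The $\sqrt{2}$ is Szarek's optimal Khinchin constant, not the Gaussian variance ratio, so the Rademacher result is obtained without ever paying a Gaussian-to-Rademacher conversion. If you want to use your route, what you actually obtain cleanly is the Gaussian-complexity version of the statement (with constant $1$); to recover the Rademacher version you must replace the comparison engine by the Khinchin--Szarek inequality on increments, as Maurer does.
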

Let us apply Lemma~\ref{lemma:corollary-4-in-Mau16} by defining
the functions $f_i$ in such a way that
$\forall i \in \llbracket 1; n \rrbracket$,
$f_i \circ g \left ( x_i \right ) = \rho_g \left ( z_i \right )$.
Since they satisfy:
$\forall i \in \llbracket 1; n \rrbracket$,
$\forall \left ( g, g^{\prime} \right ) \in \mathcal{G}^2$,
$\left | \rho_g \left ( z_i \right ) - \rho_{g^{\prime}} \left ( z_i \right )
\right | \leqslant
\frac{1}{2} \left \| g \left ( x_i \right ) - g^{\prime} \left ( x_i \right )
\right \|_2$,
Talagrand's contraction lemma 
\citep[see for instance Lemma~4.2 in][]{MohRosTal12} gives:
\begin{corollary}
\label{corollary:corollary-4-in-Mau16}
Let $\mathcal{G}$ be a function class
satisfying Definition~\ref{def:margin-multi-category-classifiers}
and $\rho_{\mathcal{G}}$ the function class deduced
from $\mathcal{G}$ according to
Definition~\ref{def:class-of-margin-functions}.
For $\gamma \in \left ( 0, 1 \right ]$,
let $\rho_{\mathcal{G}, \gamma}$ be the function class deduced
from $\mathcal{G}$ according to
Definition~\ref{def:class-of-regret-functions}. Then,
$$
\forall n \in \mathbb{N}^*, \; R_n \left ( \rho_{\mathcal{G}, \gamma} \right )
\leqslant R_n \left ( \rho_{\mathcal{G}} \right )
\leqslant \frac{1}{\sqrt{2}n}
\mathbb{E}_{\boldsymbol{\sigma}_{n,C}} \left [
\sup_{g \in \mathcal{G}} 
\sum_{i=1}^n \sum_{k=1}^C \sigma_{i,k} g_k \left ( x_i \right )
\right ].
$$
\end{corollary}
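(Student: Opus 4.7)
The plan is to establish the two inequalities separately, each by a Lipschitz contraction argument. The first inequality $R_n(\rho_{\mathcal{G},\gamma}) \leq R_n(\rho_{\mathcal{G}})$ follows from the scalar contraction principle: by Definition~\ref{def:class-of-regret-functions} one has $\rho_{g,\gamma} = \pi_{\gamma} \circ \rho_g$, and the squashing function $\pi_{\gamma}$ of Definition~\ref{def:piecewise-linear-squashing-function} is piecewise linear with slopes in $\{0,1\}$, hence $1$-Lipschitz, and vanishes at $0$. Conditionally on the sample, applying Lemma~4.2 of \citet{MohRosTal12} to $\pi_{\gamma}$ and the scalar family $\{\rho_g : g \in \mathcal{G}\}$, then dividing by $n$ and taking the outer expectation, yields the inequality.

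For the second inequality the plan is to factor $\rho_g$ through the vector $g(x_i) \in \mathbb{R}^C$ so as to apply Lemma~\ref{lemma:corollary-4-in-Mau16}. Given a sample realization $\mathbf{z}_n = \left((x_i, y_i)\right)_{i=1}^n$, I would define the real-valued functions $f_i: \left[-M_{\mathcal{G}}, M_{\mathcal{G}}\right]^C \to \mathbb{R}$ by
$$
f_i(u) = \tfrac{1}{2}\left(u_{y_i} - \max_{l \neq y_i} u_l\right),
$$
so that Definition~\ref{def:class-of-margin-functions} yields $f_i \circ g(x_i) = \rho_g(z_i)$. The pointwise estimate $|\rho_g(z_i) - \rho_{g'}(z_i)| \leq \tfrac{1}{2}\|g(x_i) - g'(x_i)\|_2$ stated just before the corollary shows that each $f_i$ is Lipschitz in the $\ell_2$ norm with constant $L_{\mathcal{F}} = 1/2$. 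Inserting this into Lemma~\ref{lemma:corollary-4-in-Mau16}, dividing by $n$, and taking the expectation over $\mathbf{Z}_n$ (which the right-hand side sees only through the $x_i$) gives the advertised coefficient $1/(\sqrt{2}\, n)$.

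The main obstacle, as is typical for vector-contraction arguments, is verifying the $\ell_2$-Lipschitz constant of the $f_i$ at the claimed value. A crude split $|f_i(u) - f_i(v)| \leq \tfrac{1}{2}(|u_{y_i} - v_{y_i}| + \|u - v\|_\infty)$ does not respect the $\ell_2$ geometry and overshoots the target. The sharper path is to fix a worst-case orientation, pick a runner-up index $l^* \in \argmax_{l \neq y_i} v_l$, and upper bound the difference by $\tfrac{1}{2}\bigl((u_{y_i} - v_{y_i}) - (u_{l^*} - v_{l^*})\bigr)$, then exploit that the coordinates $y_i$ and $l^*$ are distinct to apply Cauchy--Schwarz to the two-term sum and bound it by a multiple of $\|u-v\|_2$. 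Everything else is routine bookkeeping: checking that the $f_i$ live on the correct hypercube, invoking the two lemmas in the right order, and passing from the empirical to the expected Rademacher complexity.
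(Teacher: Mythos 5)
Your strategy matches the paper's route: Talagrand's scalar contraction (Lemma~4.2 in \citet{MohRosTal12}) applied to the $1$-Lipschitz squashing function $\pi_\gamma$ gives $R_n(\rho_{\mathcal{G},\gamma}) \leqslant R_n(\rho_{\mathcal{G}})$, and Maurer's vector contraction (Lemma~\ref{lemma:corollary-4-in-Mau16}), with the $f_i$ chosen so that $f_i \circ g(x_i) = \rho_g(z_i)$, gives the second inequality. The first step and the overall structure are sound and identical to what the paper does.

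The gap is in the $\ell_2$-Lipschitz constant. You state $L_{\mathcal{F}} = 1/2$, citing the inequality displayed just before the corollary in the paper, and then propose to verify it. But the verification you sketch does not deliver $1/2$. Picking $l^* \in \argmax_{l \neq y_i} v_l$ and writing $f_i(u) - f_i(v) \leqslant \tfrac{1}{2}\bigl((u_{y_i} - v_{y_i}) - (u_{l^*} - v_{l^*})\bigr)$ is fine, but Cauchy--Schwarz on that two-term sum yields only $\tfrac{1}{2}\sqrt{2}\bigl((u_{y_i} - v_{y_i})^2 + (u_{l^*} - v_{l^*})^2\bigr)^{1/2} \leqslant \tfrac{1}{\sqrt{2}}\|u-v\|_2$, i.e.\ $L_{\mathcal{F}} = 1/\sqrt{2}$. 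That constant is sharp: take $C=2$, $y_i=1$, $u=(1,-1)$, $v=(-1,1)$, which gives $|f_i(u)-f_i(v)|=2$ while $\|u-v\|_2 = 2\sqrt{2}$, so the ratio is exactly $1/\sqrt{2} > 1/2$. Inserting $L_{\mathcal{F}} = 1/\sqrt{2}$ into Lemma~\ref{lemma:corollary-4-in-Mau16} and dividing by $n$ produces a coefficient of $1/n$, not the stated $1/(\sqrt{2}\,n)$. So the ``routine bookkeeping'' hides an unresolvable step: there is no sharper bound recovering $1/2$, and the constant in the conclusion should accordingly be $1/n$. The paper's own one-line derivation asserts the $1/2$ factor without proof and has the same issue, so you are faithfully following the intended argument, but as written neither it nor your sketch justifies the numerical constant appearing in the statement.
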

It is noteworthy that under the assumption that
there is no coupling between the outputs of the classifier,
Corollary~\ref{corollary:corollary-4-in-Mau16}
implies a result in \citet{KuzMohSye14}:
$\forall n \in \mathbb{N}^*$, $R_n \left ( \rho_{\mathcal{G}, \gamma} \right )
\leqslant C R_n \left ( \bigcup_{k=1}^C \mathcal{G}_k \right )$,
whose proof does not hold true with $\rho_{\mathcal{G}, \gamma}$
replaced with $\rho_{\mathcal{G}}$.
The counterpart of Corollary~\ref{corollary:corollary-4-in-Mau16}
dealing with covering numbers
is the following structural result.

\begin{lemma}[Lemma~1 in \citealp{Gue17}]
\label{lemma:from-multivariate-to-univariate-L_p}
Let $\mathcal{G}$ be a function class
satisfying Definition~\ref{def:margin-multi-category-classifiers}
and $\rho_{\mathcal{G}}$ the function class deduced
from $\mathcal{G}$ according to
Definition~\ref{def:class-of-margin-functions}.
For $\gamma \in \left ( 0, 1 \right ]$,
let $\rho_{\mathcal{G}, \gamma}$ be the function class deduced
from $\mathcal{G}$ according to
Definition~\ref{def:class-of-regret-functions}.
Then, for $\epsilon \in \mathbb{R}_+^*$,
$n \in \mathbb{N}^*$,
and $\mathbf{z}_n = \left ( \left ( x_i, y_i \right )
\right )_{1 \leqslant i \leqslant n} \in \mathcal{Z}^n$,
$$
\forall p \in \left [ 1, +\infty \right ], \;\;
\mathcal{N}^{\text{int}} \left ( \epsilon,
\rho_{\mathcal{G}, \gamma}, d_{p, \mathbf{z}_n} \right ) \leqslant
\mathcal{N}^{\text{int}} \left ( \epsilon,
\rho_{\mathcal{G}}, d_{p, \mathbf{z}_n} \right )
\leqslant
\prod_{k=1}^C \mathcal{N}^{\text{int}}
\left ( C^{-\frac{1}{p}} \epsilon,
\mathcal{G}_k, d_{p, \mathbf{x}_n} \right ),
$$
where $\mathbf{x}_n = \left ( x_i \right )_{1 \leqslant i \leqslant n}$.
\end{lemma}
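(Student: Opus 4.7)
The plan is to prove the two inequalities separately. The left inequality rests on the fact that the squashing function $\pi_\gamma$ is $1$-Lipschitz. Concretely, given any proper $\epsilon$-cover $\{\rho_{g_j}\}_{j}$ of $\rho_\mathcal{G}$ in $d_{p,\mathbf{z}_n}$, the image family $\{\pi_\gamma \circ \rho_{g_j}\}_j = \{\rho_{g_j,\gamma}\}_j$ lies in $\rho_{\mathcal{G},\gamma}$, and $1$-Lipschitzness of $\pi_\gamma$ applied pointwise never increases $|f(z_i)-f'(z_i)|$, so the $d_{p,\mathbf{z}_n}$-distances (for every $p \in [1,+\infty]$) do not increase either. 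Hence this yields a proper $\epsilon$-cover of $\rho_{\mathcal{G},\gamma}$ with at most the same cardinality.

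For the right inequality, the plan is to build a proper cover of $\rho_\mathcal{G}$ by taking Cartesian products of proper covers of the component classes. Fix, for each $k \in \llbracket 1;C\rrbracket$, a minimal proper $(C^{-1/p}\epsilon)$-cover $\mathcal{G}_k^*$ of $\mathcal{G}_k$ with respect to $d_{p,\mathbf{x}_n}$. For every $g = (g_k)_k \in \mathcal{G}$, pick $g^* = (g_k^*)_k \in \prod_{k=1}^C \mathcal{G}_k^*$ with $d_{p,\mathbf{x}_n}(g_k, g_k^*) \leqslant C^{-1/p}\epsilon$ for every $k$. Since $\mathcal{G}_k^* \subset \mathcal{G}_k$, the function $g^*$ lies in $\mathcal{G}$, so $\rho_{g^*} \in \rho_\mathcal{G}$ and the cover is proper. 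The total number of candidates is $\prod_{k=1}^C |\mathcal{G}_k^*|$, matching the claimed bound.

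The heart of the proof, and the step I expect to be the main obstacle, is the pointwise Lipschitz-type estimate
$$
\bigl| \rho_g(x,y) - \rho_{g^*}(x,y) \bigr|
\leqslant \frac{1}{2}\Bigl( |g_y(x) - g_y^*(x)| + \bigl| \max_{l \neq y} g_l(x) - \max_{l \neq y} g_l^*(x) \bigr| \Bigr)
\leqslant \max_{k} |g_k(x) - g_k^*(x)|,
$$
where the first inequality is the triangle inequality applied to the definition of $\rho_g$, and the second uses that the maximum is $1$-Lipschitz with respect to $\ell_\infty$. From there, everything is mechanical: for $p \in [1,+\infty)$, raise to the $p$-th power, bound $\max_k |\cdot|^p \leqslant \sum_k |\cdot|^p$, average over $i$, swap the sums, and use the $C$ component-wise bounds to obtain
$$
d_{p,\mathbf{z}_n}^p(\rho_g,\rho_{g^*}) \leqslant \sum_{k=1}^C d_{p,\mathbf{x}_n}^p(g_k,g_k^*) \leqslant C \cdot (C^{-1/p}\epsilon)^p = \epsilon^p.
$$
For $p = +\infty$, the same pointwise bound gives directly $d_{\infty,\mathbf{z}_n}(\rho_g,\rho_{g^*}) \leqslant \max_k d_{\infty,\mathbf{x}_n}(g_k,g_k^*) \leqslant \epsilon$, which is consistent with $C^{-1/\infty} = 1$. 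This establishes that $\{\rho_{g^*} : g^* \in \prod_k \mathcal{G}_k^*\}$ is a proper $\epsilon$-cover of $\rho_\mathcal{G}$, completing the proof.
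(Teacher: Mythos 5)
Your proof is correct. The paper itself does not reproduce a proof of this lemma (it is cited as Lemma~1 in \citealp{Gue17}), so there is no in-text argument to compare against, but what you give is the natural one: the left inequality is exactly the observation that $\pi_\gamma$ is $1$-Lipschitz pointwise and hence contracts every pseudo-metric $d_{p,\mathbf{z}_n}$, while the right inequality follows from the pointwise estimate $\left| \rho_g(x,y) - \rho_{g^*}(x,y) \right| \leqslant \max_{k} \left| g_k(x) - g_k^*(x) \right|$ (triangle inequality plus $1$-Lipschitzness of the max with respect to $\ell_\infty$), the crude bound $\max_k |\cdot|^p \leqslant \sum_k |\cdot|^p$, and the product structure $\mathcal{G} = \prod_{k=1}^C \mathcal{G}_k$, which is what makes the Cartesian-product cover proper. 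Both the finite-$p$ and $p = \infty$ cases are handled correctly, and the cardinality count matches the stated bound.
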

The main method available to derive structural results for
the $\gamma$-dimension
\citep[see for instance the proof of Lemma~6.2 in][]{Dua12}
consists in three main steps: upper bounding the dimension of interest
in terms of a metric entropy of the same class,
applying a decomposition (similar to
Lemma~\ref{lemma:from-multivariate-to-univariate-L_p}),
and applying a combinatorial result.
When applied to the class $\rho_{\mathcal{G}, \gamma}$,
it gives birth to the following Lemma.
\begin{lemma}
\label{lemma:from-gamma-dimension-to-gamma-dimensions-old}
Let $\mathcal{G}$ be a function class
satisfying Definition~\ref{def:margin-multi-category-classifiers}
and $\rho_{\mathcal{G}}$ the function class deduced
from $\mathcal{G}$ according to
Definition~\ref{def:class-of-margin-functions}.
For $\gamma \in \left ( 0, 1 \right ]$,
let $\rho_{\mathcal{G}, \gamma}$ be the function class deduced
from $\mathcal{G}$ according to
Definition~\ref{def:class-of-regret-functions}.
Then,
\begin{align}
\forall \epsilon \in \left ( 0, \frac{\gamma}{2} \right ], \;\;
\epsilon\text{-dim} \left ( \rho_{\mathcal{G}, \gamma} \right )
& \leqslant \; \epsilon\text{-dim} \left ( \rho_{\mathcal{G}} \right )
\nonumber \\
\label{eq:from-gamma-dimension-to-gamma-dimensions-old}
& \leqslant \;
320 \log_2 \left ( \frac{24 M_{\mathcal{G}} \sqrt{C}}{\epsilon} \right )
\sum_{k=1}^C \left (
\frac{\epsilon}{96 \sqrt{C}} \right )\text{-dim}
\left ( \mathcal{G}_k \right ).
\end{align}
\end{lemma}

\subsection{Discussion}
\label{sec:state-of-the-art-on-decompositions}
We reviewed the state-of-the-art decomposition results
associated with the three families of capacity measures involved in this study.
None is utterly satisfactory.
Under the assumption that there is no coupling between
the classifier outputs,
the decomposition involving Rademacher complexities
produces a function $F_f$ depending linearly on $C$,
whereas the decomposition involving covering numbers
is known to lead to a sublinear dependence
\citep[see for instance Theorem~3 in][]{MusLauGue19}.
Furthermore, Corollary~\ref{corollary:corollary-4-in-Mau16}
makes no use of the function $\pi_{\gamma}$,
which ``vanishes'' when using Lemma~\ref{lemma:corollary-4-in-Mau16}
since its Lipschitz constant is $1$.
The same holds true for the decompositions involving covering numbers
and fat-shattering dimensions.
When delaying the decomposition at these levels,
the function $\pi_{\gamma}$ is only exploited upstream,
by the chaining formulas or the combinatorial result.
Those limitations raise a question: can a change of combinatorial dimension
(replacing 
$\epsilon\text{-dim} \left ( \rho_{\mathcal{G}} \right )$
with a $\gamma$-$\Psi$-dimension of $\rho_{\mathcal{G}}$)
improve the dependence of function $F_f$ on the basic parameters?
The answers should spring from exploring, in the graph of transitions
(Figure~\ref{figure:graph-of-transitions-between-f_i-and-f_f}),
the paths highlighted in blue in
Figure~\ref{figure:new-path-of-transitions-between-f_i-and-f_f}.

\begin{figure}[h!]
$$
\begin{matrix}
\multicolumn{3}{c}
{\hspace{-4mm} F_i \left ( m, \gamma, \delta,
\text{cap} \left ( \rho_{\mathcal{G}, \gamma} \right ) \right )}
\hspace{-1mm}
& \\
\hspace{3mm} \textcolor{blue}{\swarrow} & 
& \hspace{-3mm} \textcolor{blue}{\searrow} \\
R_m \left ( \rho_{\mathcal{G}, \gamma} \right ) 
\hspace{-1mm} &
\hspace{-1mm} \textcolor{blue}{\xrightarrow[]{\text{chaining}}} \hspace{-1mm} &
\mathcal{N}_p^{\text{int}} \left ( \epsilon,
\rho_{\mathcal{G}, \gamma}, m' \right ) & 
\hspace{-2mm} \textcolor{blue}{\leqslant} \hspace{-2mm} &
\mathcal{M}_p \left ( \epsilon, \rho_{\mathcal{G}, \gamma},
m' \right ) \hspace{-1mm} &
\hspace{-1mm} \textcolor{blue}{\xrightarrow[]{\text{combinatorial result}}}
\hspace{-1mm} & \hspace{-1mm}
\begin{cases}
\epsilon'\mbox{-dim}\left ( \rho_{\mathcal{G}} \right ) \\
\epsilon'\text{-}\Psi\mbox{-dim}\left ( \rho_{\mathcal{G}} \right ) 
\end{cases} \\
\Bigg\downarrow & &
\Bigg\downarrow &
\multicolumn{3}{c}{\textcolor{blue}{\text{structural results}}}
&
\textcolor{blue}{\Bigg\downarrow} \\
R_m \left ( \mathcal{G}_0 \right )
& \hspace{-1mm} \xrightarrow[]{\text{chaining}} \hspace{-1mm} &
\mathcal{N}_p^{\text{int}} \left ( \epsilon'', \mathcal{G}_0, m' \right ) &
\hspace{-2mm} \leqslant \hspace{-2mm} & 
\mathcal{M}_p \left ( \epsilon'', \mathcal{G}_0, m' \right ) \hspace{-1mm} &
\hspace{-1mm} \xrightarrow[]{\text{combinatorial result}}
\hspace{-1mm} & \hspace{-1mm}
\epsilon'''\mbox{-dim}\left ( \mathcal{G}_0 \right ) \\
\hspace{3mm} \searrow & & 
\Big\downarrow &
\multicolumn{3}{c}{\text{direct computations}}
&
\hspace{-3mm} \textcolor{blue}{\swarrow} \\ \\
\multicolumn{7}{c}{F_f \left ( m, C, \gamma, \delta \right )}
\end{matrix}
$$
\caption{Paths from $F_i$ to $F_f$ involving combinatorial dimensions
of $\rho_{\mathcal{G}}$.}
\label{figure:new-path-of-transitions-between-f_i-and-f_f}
\end{figure}
The first answers, of qualitative nature, are exposed in the following section.

\section{Sharper Combinatorial Results with
\texorpdfstring{$\gamma$-$\Psi$-dimensions}{gamma-Psi-dimensions}}
\label{sec:new-results-for-gamma-Psi-dimensions}

We first establish by elementary means that the combinatorial results
involving the fat-shattering dimension of the class $\rho_{\mathcal{G}}$
of margin functions can always be improved by substituting to this dimension
the margin Graph dimension of the same class.

\subsection{Usefulness of the Margin Graph Dimension}
\label{sec:new-results-for-margin-Graph-dimension}

This comparative study benefits from the introduction of 
a new concept of margin operator.

\begin{definition}[Class $\tilde{\rho}_{\mathcal{G}}$
of margin functions]
\label{def:new-class-of-margin-functions}
Let $\mathcal{G}$ be a function class
satisfying Definition~\ref{def:margin-multi-category-classifiers} and
$\rho_{\mathcal{G}}$ the function class deduced from
$\mathcal{G}$ according to Definition~\ref{def:class-of-margin-functions}.
For every $g \in \mathcal{G}$, the {\em margin function}
$\tilde{\rho}_g$ from $\mathcal{Z}$
into $\left [-M_{\mathcal{G}}, M_{\mathcal{G}} \right ]$ is defined by:
$$
\forall \left ( x, k \right ) \in \mathcal{Z}, \;
\tilde{\rho}_g \left ( x, k \right ) 
= -\max_{l \neq k} \rho_g \left ( x, l \right )
= \left ( 2 \mathds{1}_{\left \{ k \in \argmax_{1 \leqslant l \leqslant C}
\rho_g \left ( x, l \right ) \right \}}
- 1 \right ) \max_{1 \leqslant l \leqslant C} \rho_g \left ( x, l \right ).
$$
Then, $\tilde{\rho}_{\mathcal{G}}$ is defined as:
$\tilde{\rho}_{\mathcal{G}}
= \left \{ \tilde{\rho}_g: \; g \in \mathcal{G} \right \}$.
\end{definition}
The class $\tilde{\rho}_{\mathcal{G}, \gamma}$ of squashed margin functions
is defined accordingly as
$\tilde{\rho}_{\mathcal{G}, \gamma} =
\left \{ \pi_{\gamma} \circ \tilde{\rho}_g: \; g \in \mathcal{G} \right \}$.
With these two function classes at hand, the main result
establishing the superiority of our approach over the canonical one
is obtained as a combination of three basic properties of the scale-sensitive
combinatorial dimensions.

\begin{proposition}
\label{prop:gamma-dimension-with-and-without-pi_gamma}
Let $\mathcal{F}$ be a real-valued function class. Then,
\begin{equation}
\label{eq:gamma-dimension-with-and-without-pi_gamma}
\forall \gamma \in \left ( 0, 1 \right ], \;
\forall \epsilon \in \left ( 0, \frac{\gamma}{2} \right ], \;\;
\epsilon\text{-dim} \left ( \pi_{\gamma} \circ \mathcal{F} \right )
\leqslant \epsilon\text{-dim} \left ( \mathcal{F} \right ).
\end{equation}
\end{proposition}

\begin{proposition}
\label{prop:ordering-on-the-dimensions}
Let $\mathcal{F}$ be a function class defined as in 
Definition~\ref{def:gamma-Psi-dimensions}. Then,
\begin{equation}
\label{eq:ordering-on-the-dimensions}
\forall \gamma \in \mathbb{R}_+^*, \;\;
\gamma\text{-N-dim} \left ( \mathcal{F} \right )
\leqslant \gamma\text{-G-dim} \left ( \mathcal{F} \right )
\leqslant \gamma\text{-dim} \left ( \mathcal{F} \right ).
\end{equation}
\end{proposition}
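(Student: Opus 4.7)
The plan is to prove both inequalities by showing that any set $\gamma$-$\Psi$-shattered in the stronger sense is also $\gamma$-$\Psi$-shattered in the weaker sense, using the same witness $\mathbf{b}_n$ and the same collection of functions $\left\{ f_{\mathbf{s}_n} : \mathbf{s}_n \in \{-1,1\}^n \right\}$. The argument reduces to inspecting how the generic conditions of Definition~\ref{def:gamma-Psi-dimensions} specialize for $\Psi_G$ and $\Psi_N$, as already spelled out in the remark.

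For the first inequality $\gamma\text{-N-dim}(\mathcal{F}) \leq \gamma\text{-G-dim}(\mathcal{F})$, I would start from a $\gamma$-$\Psi_N$-shattered set with witness $\boldsymbol{\psi}_n = (\psi_{y_i, c_i})_{1 \leq i \leq n}$ (so $c_i \neq y_i$ for every $i$) and replace it by $\boldsymbol{\psi}_n' = (\psi_{y_i})_{1 \leq i \leq n} \in \Psi_G^n$. For indices $i$ with $s_i = 1$, both instantiations require exactly $f_{\mathbf{s}_n}(x_i, y_i) - b_i \geq \gamma$, so nothing has to be checked. For $s_i = -1$, the Natarajan condition gives $f_{\mathbf{s}_n}(x_i, c_i) + b_i \geq \gamma$; since $\psi_{y_i}(c_i) = -1$, the index $l = c_i$ witnesses the existential statement imposed by $\Psi_G$, yielding the Graph condition.

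For the second inequality $\gamma\text{-G-dim}(\mathcal{F}) \leq \gamma\text{-dim}(\mathcal{F})$, the same set $\{z_i = (x_i, y_i)\}$, with the same $\mathbf{b}_n$ and functions $f_{\mathbf{s}_n}$, is to be shown $\gamma$-fat-shattered when $\mathcal{F}$ is viewed as a real-valued function class on $\mathcal{Z}$. When $s_i = 1$, the Graph condition $f_{\mathbf{s}_n}(x_i, y_i) - b_i \geq \gamma$ is literally the fat-shattering inequality $f_{\mathbf{s}_n}(z_i) - b_i \geq \gamma$. When $s_i = -1$, the Graph condition only produces some $l \neq y_i$ with $f_{\mathbf{s}_n}(x_i, l) + b_i \geq \gamma$, whereas fat-shattering requires $b_i - f_{\mathbf{s}_n}(z_i) \geq \gamma$; the bridge is the structural hypothesis built into Definition~\ref{def:gamma-Psi-dimensions}, namely that every $f \in \mathcal{F}$ satisfies $\max_{1 \leq k < l \leq C}\{f(x,k)+f(x,l)\}=0$, which forces $f_{\mathbf{s}_n}(x_i, y_i) + f_{\mathbf{s}_n}(x_i, l) \leq 0$ for the specific $l$ above, and hence $f_{\mathbf{s}_n}(z_i) \leq -f_{\mathbf{s}_n}(x_i, l) \leq b_i - \gamma$.

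There is no real obstacle: the argument is essentially bookkeeping. The only point worth flagging is that the second inequality is not purely formal. Its validity relies on the structural constraint $\max_{k<l}\{f(x,k)+f(x,l)\}=0$ imposed on $\mathcal{F}$ in Definition~\ref{def:gamma-Psi-dimensions}; without it, the ``$s_i = -1$'' coordinates of a Graph-shattering would give no control on $f_{\mathbf{s}_n}(z_i) = f_{\mathbf{s}_n}(x_i, y_i)$ itself, and the reduction to fat-shattering would break down.
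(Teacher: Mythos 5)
Your proposal is correct and follows essentially the same argument as the paper: the left inequality is a matter of observing that a Natarajan witness $(\mathbf{b}_n,\mathbf{c}_n)$ already furnishes a Graph witness via $l = c_i$, and the right inequality uses the structural constraint $\max_{1 \leqslant k < l \leqslant C}\{f(x,k)+f(x,l)\}=0$ to convert the Graph condition at $s_i=-1$ into the fat-shattering inequality $b_i - f_{\mathbf{s}_n}(z_i) \geqslant \gamma$. The paper phrases the second step through $\max_{k\neq y_i} f_{\mathbf{s}_n}(x_i,k) \leqslant -f_{\mathbf{s}_n}(x_i,y_i)$ rather than through a specific index $l$, but the content is identical.
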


\begin{proposition}
\label{prop:capacities-of-the-two-classes-of-margin-functions}
Let $\mathcal{G}$ be a function class
satisfying Definition~\ref{def:margin-multi-category-classifiers} and
$\rho_{\mathcal{G}}$, $\tilde{\rho}_{\mathcal{G}}$
$\rho_{\mathcal{G}, \gamma}$ and $\tilde{\rho}_{\mathcal{G}, \gamma}$
the corresponding classes of margin functions
and squashed margin functions. Then,
\begin{numcases}
{\forall \gamma \in \mathbb{R}_+^*, }
\label{eq:unification-by-squashing}
\tilde{\rho}_{\mathcal{G}, \gamma} = \rho_{\mathcal{G}, \gamma} \\
\label{eq:identity-of-combinatorial-dimensions}
\gamma\text{-G-dim} \left ( \rho_{\mathcal{G}} \right )
= \gamma\text{-dim} \left ( \tilde{\rho}_{\mathcal{G}} \right ).
\end{numcases}
\end{proposition}
Note that the assumption that the biases $b_i$ of both dimensions
are nonnegative is mandatory
for Equation~\eqref{eq:identity-of-combinatorial-dimensions} to hold true. Let
\begin{equation}
\label{eq:canonical-combinatorial-result}
\mathcal{M}_p \left ( \epsilon, \rho_{\mathcal{G}, \gamma}, n \right )
\leqslant \text{VC}_p \left ( n, \gamma, \epsilon,
\epsilon'\mbox{-dim}\left ( \rho_{\mathcal{G}, \gamma} \right ) \right )
\leqslant \text{VC}_p \left ( n, \gamma, \epsilon,
\epsilon'\mbox{-dim}\left ( \rho_{\mathcal{G}} \right ) \right )
\end{equation}
represent the generic form taken by 
an $L_p$-norm combinatorial result for
$\rho_{\mathcal{G}, \gamma}$
The right-hand side inequality springs from
Inequality~\eqref{eq:gamma-dimension-with-and-without-pi_gamma},
whose application is in agreement with the observation of \citet{Bar98}
that in general, the introduction of a squashing operator does not
improve the bounds on the fat-shattering dimension
(see also the left-hand side inequality of
Formula~\eqref{eq:from-gamma-dimension-to-gamma-dimensions-old}).
Then, applying in sequence
\eqref{eq:unification-by-squashing},
\eqref{eq:gamma-dimension-with-and-without-pi_gamma} and
\eqref{eq:identity-of-combinatorial-dimensions} gives:
\begin{align}
\mathcal{M}_p \left ( \epsilon, \rho_{\mathcal{G}, \gamma}, n \right )
& \leqslant \;
\text{VC}_p \left ( n, \gamma, \epsilon,
\epsilon'\mbox{-dim}\left ( \rho_{\mathcal{G}, \gamma} \right ) \right ) 
\nonumber \\
& = \;
\text{VC}_p \left ( n, \gamma, \epsilon,
\epsilon'\mbox{-dim}\left ( \tilde{\rho}_{\mathcal{G}, \gamma} \right ) 
\right ) \nonumber \\
& \leqslant \;
\text{VC}_p \left ( n, \gamma, \epsilon,
\epsilon'\mbox{-dim}\left ( \tilde{\rho}_{\mathcal{G}} \right ) \right ) 
\nonumber \\
\label{eq:new-combinatorial-result}
& = \;
\text{VC}_p \left ( n, \gamma, \epsilon,
\epsilon'\mbox{-G-dim}\left ( \rho_{\mathcal{G}} \right ) \right ).
\end{align}
The superiority of Inequality~\eqref{eq:new-combinatorial-result} over
Inequality~\eqref{eq:canonical-combinatorial-result}
stems from Inequality~\eqref{eq:ordering-on-the-dimensions}
($\epsilon'\mbox{-G-dim}\left ( \rho_{\mathcal{G}} \right )
\leqslant \epsilon'\mbox{-dim}\left ( \rho_{\mathcal{G}} \right )$).
It is easy to provide examples where the gain can be quantified.
Example~\ref{example:usefulness-of-gamma-G-dim} is of this kind.

\begin{example}
\label{example:usefulness-of-gamma-G-dim}
Let $\mathcal{G}$ be a set of two functions $g^{(1)}$ and $g^{(2)}$
from $\mathcal{X} = \left \{ x \right \}$ into
$\left [-M_{\mathcal{G}}, M_{\mathcal{G}} \right ]^3$ given by
$g^{(1)} \left ( x \right )
= \left ( \frac{3}{4}, \frac{1}{4}, 0 \right )^T$ and
$g^{(2)} \left ( x \right )
= \left ( 0, \frac{1}{2}, \frac{1}{2} \right )^T$.
Then, $\frac{1}{4}\text{-dim} \left ( \rho_{\mathcal{G}} \right ) = 1$
and $\frac{1}{4}\text{-G-dim} \left ( \rho_{\mathcal{G}} \right ) = 0$.
\end{example}
Indeed, $\left ( \rho_{g^{(1)}} \left ( x, k \right )
\right )_{1 \leqslant k \leqslant 3}
= \left ( \frac{1}{4}, -\frac{1}{4}, -\frac{3}{8} \right )^T$,
$\left ( \rho_{g^{(2)}} \left ( x, k \right )
\right )_{1 \leqslant k \leqslant 3}
= \left ( -\frac{1}{4}, 0, 0 \right )^T$, 
$\left ( \tilde{\rho}_{g^{(1)}} \left ( x, k \right )
\right )_{1 \leqslant k \leqslant 3}
= \left ( \frac{1}{4}, -\frac{1}{4}, -\frac{1}{4} \right )^T$ and
$\left ( \tilde{\rho}_{g^{(2)}} \left ( x, k \right )
\right )_{1 \leqslant k \leqslant 3}
= \left ( 0, 0, 0 \right )^T$,
so that
$$
\begin{cases}
\rho_{g^{(1)}} \left ( x, 1 \right ) \geqslant \frac{1}{4} \\
-\rho_{g^{(2)}} \left ( x, 1 \right ) \geqslant \frac{1}{4}
\end{cases},
$$
i.e., the class $\rho_{\mathcal{G}}$ $\frac{1}{4}$-shatters
$\left \{ \left (x, 1 \right ) \right \}$ for $b=0$.
On the contrary, none of the three singletons
$\left \{ \left (x, k \right ) \right \}$ is
$\frac{1}{4}$-G-shattered by $\rho_{\mathcal{G}}$ since
$\max_{k \neq l}
\left \{ \rho_{g^{(1)}} \left ( x, k \right )
+ \rho_{g^{(2)}} \left ( x, l \right ) \right \} = \frac{1}{4}
< 2 \cdot \frac{1}{4}$ (or equivalently none of the three singletons
$\left \{ \left (x, k \right ) \right \}$ is
$\frac{1}{4}$-shattered by $\tilde{\rho}_{\mathcal{G}}$ since
$\max_{1 \leqslant k \leqslant 3}
\left \{ \left | \tilde{\rho}_{g^{(1)}} \left ( x, k \right )
- \tilde{\rho}_{g^{(2)}} \left ( x, k \right ) \right | \right \} = \frac{1}{4}
< 2 \cdot \frac{1}{4}$).

\subsection{From Margin Graph Dimension to Margin Natarajan Dimension}

After highlighting the relationship between
the $\gamma$-dimension and the margin Graph dimension, we do the same for
the two $\gamma$-$\Psi$-dimensions, by stating
a scale-sensitive counterpart of Theorem~10 in \citet{BenCesHauLon95}.

\begin{lemma}
\label{lemma:from-margin-Graph-dimension-to-margin-Natarajan-dimension}
Let $\mathcal{F}$ be a function class defined as in 
Definition~\ref{def:gamma-Psi-dimensions}.
Suppose that $\gamma \in \mathbb{R}_+^*$ is such that
$\gamma\text{-G-dim} \left ( \mathcal{F} \right )$ is finite.
Then,
\begin{equation}
\label{eq:from-margin-Graph-dimension-to-margin-Natarajan-dimension}
\gamma\text{-G-dim} \left ( \mathcal{F} \right ) \leqslant
32 \log_2^2 \left ( e \left ( C-1 \right ) \right )
\gamma\text{-N-dim} \left ( \mathcal{F} \right )^{\alpha \left ( C \right )},
\end{equation}
where
$\alpha \left ( C \right ) = 1 +
\frac{1}{4 \ln \left ( C-1 \right ) +2}$.
\end{lemma}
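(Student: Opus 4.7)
My plan is to adapt the double-counting argument of Theorem~10 in \citet{BenCesHauLon95} to the scale-sensitive, offset-based setting of Definition~\ref{def:gamma-Psi-dimensions}. Set $d = \gamma\text{-G-dim} \left ( \mathcal{F} \right )$ and $d_N = \gamma\text{-N-dim} \left ( \mathcal{F} \right )$, and fix a $\gamma$-G-shattered family $\left ( z_i = \left ( x_i, y_i \right ) \right )_{1 \leqslant i \leqslant d}$ with witness offsets $\left ( b_i \right )_{1 \leqslant i \leqslant d}$ and witness functions $\left ( f_{\mathbf{s}} \right )_{\mathbf{s} \in \{-1,1\}^d}$. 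Using the remark following Definition~\ref{def:gamma-G-N-dimension}, for every $\mathbf{s}$ and every $i$ with $s_i = -1$ the G-shattering condition supplies at least one label $l_i \left ( \mathbf{s} \right ) \in \mathcal{Y} \setminus \left \{ y_i \right \}$ satisfying $f_{\mathbf{s}} \left ( x_i, l_i \left ( \mathbf{s} \right ) \right ) + b_i \geqslant \gamma$. I collect these into a ``negative-label profile'' $\ell \left ( \mathbf{s} \right )$ to be exploited combinatorially.

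I would then double count, for a parameter $t \in \llbracket 1; d \rrbracket$ to be tuned, the triples $\left ( \mathbf{s}, I, \mathbf{c} \right )$ with $I \subseteq \llbracket 1; d \rrbracket$ of cardinality $t$, $\mathbf{c} \in \mathcal{Y}^I$ satisfying $c_i \neq y_i$, and the compatibility constraint $l_i \left ( \mathbf{s} \right ) = c_i$ whenever $i \in I$ and $s_i = -1$. Summing first over $\mathbf{s}$ exploits the $2^d$ distinct sign vectors together with the $\binom{d}{t}$ choices of $I$. Summing first over $\left ( I, \mathbf{c} \right )$ exploits the Natarajan interpretation: for each fixed $\left ( I, \mathbf{c} \right )$, the restrictions $\mathbf{s}|_I$ achievable by some compatible $f_{\mathbf{s}}$ form a family of sign vectors that would $\gamma$-N-shatter $\left \{ z_i \right \}_{i \in I}$ through the selection $\left ( c_i \right )_{i \in I}$ and offsets $\left ( b_i \right )_{i \in I}$; by a scale-sensitive Sauer-Shelah lemma applied to this binary projection, their number is at most $\sum_{j=0}^{d_N} \binom{t}{j} \leqslant \left ( e t / d_N \right )^{d_N}$. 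Free $\mathbf{c}$-extensions and free $\mathbf{s}$-coordinates outside $I$ contribute at most a factor $\left ( C-1 \right )^{d-t} \cdot 2^{d-t}$.

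Balancing the two counts and simplifying yields an inequality of the form
\begin{equation*}
2^t \leqslant \left ( \frac{e t}{d_N} \right )^{d_N} \left ( C-1 \right )^{d-t}.
\end{equation*}
Taking logarithms and writing $t = d - \kappa\, d_N \ln \left ( C-1 \right )$ for a free $\kappa > 0$ turns this into a one-parameter constraint on $d$ and $d_N$; optimizing over $\kappa$ produces the claimed inequality with exponents $\alpha \left ( C \right ) = 2 + 2 / \left ( 2 \ln \left ( C-1 \right ) + 1 \right )$ and $\beta \left ( C \right ) = 1 + 1 / \left ( 4 \ln \left ( C-1 \right ) + 2 \right )$, the universal constant $42$ absorbing the Stirling bookkeeping from the Sauer-Shelah factor. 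The main obstacle is the Sauer-Shelah step: one must carefully certify that, in the offset-margin regime of Definition~\ref{def:gamma-Psi-dimensions}, the set of restrictions $\mathbf{s}|_I$ attainable under a fixed $\left ( \mathbf{c}, \left ( b_i \right )_{i \in I} \right )$ is genuinely controlled by $d_N$ through the classical shatter-function argument, with the offsets and the tie-breaking conventions for $l_i \left ( \mathbf{s} \right )$ properly handled. The subsequent optimization in $\kappa$ is a careful but otherwise routine calculus exercise.
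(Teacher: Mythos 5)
Your plan and the paper take genuinely different routes, and I believe your route does not close.

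The paper's proof is a tree-counting argument in the style of \citet{RudVer06} and of the other combinatorial proofs in the appendix: it fixes a $\gamma$-G-shattered set of size $d_G$ with a $2^{d_G}$-element function family $\mathcal{F}_0$, defines a function $h$ counting strongly N-shattered triplets, and proves two key properties — an inductive one built on a pigeonhole split $\bar{\mathcal{F}} \mapsto (\bar{\mathcal{F}}_+, \bar{\mathcal{F}}_-)$ where a \emph{single} example $z_{i_0}$ and a \emph{single} alternative label $c_{i_0}$ are fixed at each level, and a termination one. These propagate along a binary tree whose number of leaves is lower-bounded by $2^{d_G/\log_2(3\sqrt{C-1}\,d_G)}$ and upper-bounded by the combinatorial count $((C-1)ed_G/d_N)^{d_N}$, after which the implicit inequality in $d_G$ is solved with $\ln u \leqslant 2u_0\, u^{1/(4u_0)}$. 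The pigeonhole fixing of one $(z_{i_0}, c_{i_0})$ per tree level is precisely what keeps the contribution of the alternative-label choices under control.

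Your proposal instead does a direct double count of triples $(\mathbf{s}, I, \mathbf{c})$ with a compatibility constraint, and here a concrete gap appears. Working through the two counts as you define them: for each $\mathbf{s}$ and each $I$ of size $t$, the coordinates $c_i$ with $i \in I$ and $s_i = -1$ are forced while those with $s_i = 1$ are free, so the number of admissible $(I, \mathbf{c})$ is $\sum_I (C-1)^{|I \cap \{i : s_i = 1\}|} \geqslant \binom{d}{t}$; on the other side, for a fixed $(I, \mathbf{c})$ the achievable restrictions $\mathbf{s}|_I$ do form a Boolean class of VC dimension at most $d_N$ (that part of your argument is sound — if it shattered $d_N+1$ points you would get a $\gamma$-N-shattering with witnesses $(b_i, c_i)$), hence at most $\Phi_{d_N}(t) \leqslant (et/d_N)^{d_N}$ restrictions, $2^{d-t}$ free extensions, and $\binom{d}{t}(C-1)^t$ choices of $(I, \mathbf{c})$. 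Balancing gives
\begin{equation*}
2^d \binom{d}{t} \;\leqslant\; \binom{d}{t}\,(C-1)^t\, \Phi_{d_N}(t)\, 2^{d-t},
\qquad\text{i.e.}\qquad
2^t \;\leqslant\; (C-1)^t\, \Phi_{d_N}(t),
\end{equation*}
which is vacuous: since $C \geqslant 3$, $(C-1)^t \geqslant 2^t$ already, and no constraint on $d$ survives. Your stated inequality $2^t \leqslant (et/d_N)^{d_N}(C-1)^{d-t}$ also cannot be right as stated: taking $t \to d$ would give $2^d \leqslant (ed/d_N)^{d_N}$, a bound in $d_N$ alone with no $C$-dependence, which is impossible since $d_G$ can grow with $C$ while $d_N$ stays fixed. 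The culprit is the exponent on $(C-1)$: $\mathbf{c}$ lives on $I$, so the free choices contribute $(C-1)^t$ (or fewer), not $(C-1)^{d-t}$, and they sit on the wrong side of the balance. The missing idea is precisely the tree recursion with pigeonhole: by fixing the example and the alternative label one level at a time, the paper pays only a $\sqrt{C-1}$ per level in the leaf count rather than a $(C-1)^t$ in a global count, which is what makes the final bound non-trivial. Without an analogous mechanism, the direct double count cannot recover the claimed exponents $\alpha(C)$ and $\beta(C)$.
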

With Ben-David's theorem in mind, it is noticeable that
Lemma~\ref{lemma:from-margin-Graph-dimension-to-margin-Natarajan-dimension}
holds true for uncountable function classes, the only constraint
of finiteness regarding $\gamma\text{-G-dim} \left ( \mathcal{F} \right )$.
When applied to $\rho_{\mathcal{G}}$,
\eqref{eq:from-margin-Graph-dimension-to-margin-Natarajan-dimension}
is a non trivial bound in the sense that it no longer holds true
with $\gamma\text{-G-dim} \left ( \rho_{\mathcal{G}} \right )$
replaced with $\gamma\text{-dim} \left ( \rho_{\mathcal{G}} \right )$.
Once more, we can resort to
Example~\ref{example:usefulness-of-gamma-G-dim}
to establish this behavior.
Indeed, it exhibits a pair $\left ( \mathcal{G}, \gamma \right )$ for which
$\gamma\text{-dim} \left ( \rho_{\mathcal{G}} \right ) = 1$ but
$\gamma\text{-N-dim} \left ( \rho_{\mathcal{G}} \right )
= \gamma\text{-G-dim} \left ( \rho_{\mathcal{G}} \right ) = 0$.
We conclude the section with a property of the margin Natarajan dimension
that will prove useful to upper bound it.
Its formulation makes use of a standard convention: 
a function class $\mathcal{F}$
is said to $\gamma$-N-shatter a triplet
$\left ( s_{\mathcal{Z}^n}, \mathbf{b}_n, \mathbf{c}_n \right )$
if $\mathcal{F}$ $\gamma$-N-shatters $s_{\mathcal{Z}^n}$ and
$\left ( \mathbf{b}_n, \mathbf{c}_n \right )$ is a witness to this shattering.
The corresponding convention for the margin Graph dimension is also used.
\begin{proposition}
\label{prop:margin-Natarajan-dimension-alternate-definition}
Let $\mathcal{F}$ be a function class defined as in 
Definition~\ref{def:gamma-Psi-dimensions}.
Suppose that for $\gamma \in \mathbb{R}_+^*$, 
the subset $\bar{\mathcal{F}}$ of $\mathcal{F}$ $\gamma$-N-shatters
the triplet $\left ( s_{\mathcal{Z}^n}, \mathbf{b}_n, \mathbf{c}_n \right )$.
Then $\bar{\mathcal{F}}$ also $\gamma$-N-shatters another triplet,
$\left ( s'_{\mathcal{Z}^n}, \mathbf{b}'_n, \mathbf{c}'_n \right )$,
derived from the first one as follows:
$$
\forall i \in \llbracket 1; n \rrbracket, \;
\begin{cases}
\text{if } y_i < c_i, \; y'_i = y_i, \; b'_i = b_i, \; c'_i = c_i \\
\text{if } y_i > c_i, \; y'_i = c_i, \; b'_i = -b_i, \; c'_i = y_i
\end{cases}.
$$
As a consequence, the derivation of an upper bound on
$\gamma\text{-N-dim} \left ( \mathcal{F} \right )$ can make use of
a stronger hypothesis on $\left ( \mathbf{y}_n, \mathbf{c}_n \right )$:
$\forall i \in \llbracket 1; n \rrbracket$, $y_i < c_i$,
provided that the hypothesis of non-negativity of the biases $b_i$ is relaxed.
\end{proposition}

\section{Combinatorial Results}
\label{sec:new-combinatorial-results-for-gamma-Psi-dimensions}

The new results exposed in this section and the following one
are the building blocks
needed to derive upper bounds on the metric entropies of
$\rho_{\mathcal{G}, \gamma}$, for $p \geqslant 2$,
following the blue paths of 
Figure~\ref{figure:new-path-of-transitions-between-f_i-and-f_f}.
To keep the comparison with the literature simple,
we focus on the two most popular options: $p=\infty$ and $p=2$,
but the generalization is straightforward
using the ideas developed in the proof of Theorem~2 in \citet{MusLauGue19}.

In view of the appealing properties of the margin Graph dimension
exposed in Section~\ref{sec:new-results-for-margin-Graph-dimension},
the combinatorial results involving this capacity measure are given first.

\subsection{Margin Graph Dimension}
\begin{lemma}
\label{lemma:extended-Lemma-3.5-in-AloBenCesHau97-gamma-G-dim-optimized}
Let $\mathcal{G}$ be a function class
satisfying Definition~\ref{def:margin-multi-category-classifiers}
and $\rho_{\mathcal{G}}$ the function class deduced
from $\mathcal{G}$ according to
Definition~\ref{def:class-of-margin-functions}.
For $\gamma \in \left ( 0, 1 \right ]$,
let $\rho_{\mathcal{G}, \gamma}$ be the function class deduced
from $\mathcal{G}$ according to
Definition~\ref{def:class-of-regret-functions}.
For $\epsilon \in \left ( 0, M_{\mathcal{G}} \right ]$,
let $d_G \left ( \epsilon \right )
= \epsilon\text{-G-dim} \left ( \rho_{\mathcal{G}} \right )$.
Then for
$\epsilon \in \left ( 0, \gamma \right ]$ and
$n \in \mathbb{N}^*$ such that
$n \geqslant d_G \left ( \frac{\epsilon}{4} \right )$,
\begin{equation}
\label{eq:extended-Lemma-3.5-in-AloBenCesHau97-gamma-G-dim-optimized}
\mathcal{M}_{\infty} \left ( \epsilon, \rho_{\mathcal{G}, \gamma}, 
n \right )
\leqslant \left ( 
\frac{6 \gamma n}{\epsilon}
\right )^{d_G \left ( \frac{\epsilon}{4} \right ) \log_2 \left (
\frac{2 \gamma e n}
{d_G \left ( \frac{\epsilon}{4} \right ) \epsilon} \right )}.
\end{equation}
\end{lemma}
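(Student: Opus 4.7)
The plan is to follow the proof blueprint of Lemma~\ref{lemma:extended-Lemma-3.5-in-AloBenCesHau97-gamma-G-dim} — itself an adaptation of Lemma~3.5 in \citet{AloBenCesHau97} from the fat-shattering dimension to the margin Graph dimension — and to sharpen every constant along the way. Concretely, start from a maximal $\epsilon$-separated family $\mathcal{H} \subseteq \rho_{\mathcal{G},\gamma}|_{\mathbf{z}_n}$ of cardinality $N = \mathcal{M}_{\infty}\left(\epsilon,\rho_{\mathcal{G},\gamma},n\right)$, discretize the values of its elements on a grid of spacing $\epsilon/2$ in $[0,\gamma]$ (which contains $O\!\left(\gamma/\epsilon\right)$ levels thanks to the range restriction imposed by $\pi_{\gamma}$), and apply the standard iterative bifurcation argument: at each round, a point $z_i$ and a threshold on the grid can be chosen so that roughly half of the surviving family lies above and half below; after $r$ rounds the selected coordinates are $(\epsilon/4)$-G-shattered by $\rho_{\mathcal{G}}$, whence $r \leqslant d_G(\epsilon/4)$, and $N$ is bounded by the number of admissible encodings of such a bifurcation tree.

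Two refinements produce the improvement. First, because every $\rho_{g,\gamma}$ takes values in the bounded range $[0,\gamma]$, the number of admissible threshold levels at each round is $O(\gamma/\epsilon)$ rather than the $O((\gamma/\epsilon)^2)$ implicitly charged in Lemma~\ref{lemma:extended-Lemma-3.5-in-AloBenCesHau97-gamma-G-dim} (where both a separation scale and a level count are billed independently); this reduces the base of the resulting exponential from $4\gamma^2 n/\epsilon^2$ to $6\gamma n/\epsilon$, the constant $6$ absorbing the minor slack from the discretization step. Second, the recurrence can be closed without the usual factor-of-$2$ outside the exponential and without rounding the exponent up to the nearest integer, by executing the base case of the induction directly (at $r=0$ the family has size $1$) and by expressing the exponent as $d_G(\epsilon/4)\log_2\!\left(2\gamma en/\left(d_G(\epsilon/4)\epsilon\right)\right)$ through the usual Sauer-Shelah-style inversion $\binom{n}{d} \leqslant (en/d)^d$.

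The main obstacle is the tight constant tracking in the bifurcation step. One must verify that each round really contributes a multiplicative factor of at most $6\gamma n/\epsilon$ to the size of the surviving family, and that no quadratic dependency on $\gamma/\epsilon$ creeps in through the interplay between the separation scale, the grid mesh and the threshold range; this demands a careful alignment of the discretization mesh with the separation condition, together with a precise version of the counting inequality $\sum_{i=0}^{d}\binom{nK}{i} \leqslant (enK/d)^d$ applied with $K = O(\gamma/\epsilon)$. The edge regime where $n$ is close to $d_G(\epsilon/4)$, in which the logarithm inside the exponent approaches $1$, also has to be handled with care, so that the ceiling present in Lemma~\ref{lemma:extended-Lemma-3.5-in-AloBenCesHau97-gamma-G-dim} is genuinely unnecessary in~\eqref{eq:extended-Lemma-3.5-in-AloBenCesHau97-gamma-G-dim-optimized}.
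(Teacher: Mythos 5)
Your high-level diagnosis of \emph{where} the improvement comes from is right — a linear rather than quadratic dependence on $\gamma/\epsilon$ in the base, the disappearance of the leading factor $2$ and of the ceiling in the exponent, all traceable to a cleaner combinatorial kernel plugged into the same discretization/Sauer--Shelah scaffolding. But the mechanism you propose for that kernel does not match the paper's, and as stated it cannot deliver the claimed bound.

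The sticking point is the pair of claims ``roughly half of the surviving family lies above and half below'' and ``after $r$ rounds the selected coordinates are $(\epsilon/4)$-G-shattered, whence $r\leqslant d_G(\epsilon/4)$.'' Neither holds in the paper's proof of Lemma~\ref{lemma:discrete-combinatorial-result-for-G}, which is a Rudelson--Vershynin $2$-separating-tree argument. At an inner node $\bar{\mathcal F}_{\gamma}$, the construction pairs up the surviving functions, uses the pigeonhole principle twice (once over the $n$ examples, once over the $M_{\gamma}-1$ possible values at the selected example) to extract two children $\bar{\mathcal F}_{\gamma,+}$ and $\bar{\mathcal F}_{\gamma,-}$ with $\left|\bar{\mathcal F}_{\gamma,\pm}\right|\geqslant \left|\bar{\mathcal F}_{\gamma}\right|/(3M_{\gamma}n)$; this is a very lopsided split, not a halving, and the rest of the family at that node is simply discarded. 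The depth of the resulting tree is \emph{not} bounded by $d_G(\epsilon/4)$ — it can be much larger, since leaves are singletons and the branch factor can shrink slowly. What is bounded is the number $s\left(\tilde{\mathcal G}\right)$ of (set, threshold-vector) pairs strongly G-shattered by the discretized class, via $s\left(\tilde{\mathcal G}\right)\leqslant \Sigma-1$ with $\Sigma=\sum_{u=0}^{d_G}\binom{n}{u}M_{\gamma}^{u}$. The bridge between the tree and the shattering is an inductive inclusion--exclusion identity: each inner node strongly G-shatters one new pair at its pivot example, and merges the pairs shattered by both children into a strictly larger shattered pair, so that $s\left(\bar{\mathcal G}\right)\geqslant s\left(\bar{\mathcal G}_+\right)+s\left(\bar{\mathcal G}_-\right)+1\geqslant \ell\left(\bar{\mathcal F}_{\gamma}\right)-1$. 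A separate induction on the tree gives $\ell\left(\bar{\mathcal F}_{\gamma}\right)\geqslant\left|\bar{\mathcal F}_{\gamma}\right|^{1/\log_2(3M_{\gamma}n)}$. Combining the two chains yields $\left|\mathcal F_{\gamma}\right|\leqslant\left(3M_{\gamma}n\right)^{\log_2\Sigma}$, and only then is the Sauer--Shelah inversion applied to $\Sigma$, which is what puts the extra factor $\log_2\!\left(2\gamma e n/(d_G\epsilon)\right)$ \emph{inside} the exponent. If the tree depth were really bounded by $d_G$, you would obtain a bound of the form $(\text{base})^{d_G}$, which has the wrong shape. To close the gap you need the leaf-counting lemma, the inductive shattering identity, and the combinatorial ceiling on $s\left(\tilde{\mathcal G}\right)$ — none of which appear in your outline.
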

Inequality~\eqref{eq:extended-Lemma-3.5-in-AloBenCesHau97-gamma-G-dim-optimized}
compares with the application to $\rho_{\mathcal{G}, \gamma}$
of the state-of-the-art
$L_{\infty}$-norm combinatorial result:
Lemma~3.5 in \citet{AloBenCesHau97}. The resulting formula is
$$
\mathcal{M}_{\infty} \left ( \epsilon, \rho_{\mathcal{G}, \gamma}, 
n \right )
< 2 \left ( 
\frac{4 \gamma^2 n}{\epsilon^2}
\right )^{\left \lceil d \left ( \frac{\epsilon}{4} \right ) \log_2 \left (
\frac{2 \gamma e n}
{d \left ( \frac{\epsilon}{4} \right ) \epsilon} \right ) \right \rceil},
$$
where $d \left ( \epsilon \right )
= \epsilon\text{-dim} \left ( \rho_{\mathcal{G}} \right )$.
The main observation is that the gain exceeds the one already identified:
the replacement of the fat-shattering dimension with
the margin Graph dimension. The phenomenon appears
especially clearly for $\epsilon = \frac{\gamma}{2}$, the case of practical
interest as will be seen in Section~\ref{sec:guaranteed-risk-L_infty-norm}.
We now turn to the case $p=2$.

\begin{lemma}
\label{lemma:extended-Theorem-1-in-MenVer03}
Let $\mathcal{G}$ be a function class
satisfying Definition~\ref{def:margin-multi-category-classifiers}
and $\rho_{\mathcal{G}}$ the function class deduced
from $\mathcal{G}$ according to
Definition~\ref{def:class-of-margin-functions}.
For $\gamma \in \left ( 0, 1 \right ]$,
let $\rho_{\mathcal{G}, \gamma}$ be the function class deduced
from $\mathcal{G}$ according to
Definition~\ref{def:class-of-regret-functions}.
For $\epsilon \in \left ( 0, M_{\mathcal{G}} \right ]$, let
$d_G \left ( \epsilon \right ) =
\epsilon\text{-G-dim} \left ( \rho_{\mathcal{G}} \right )$.
Then for $\epsilon \in \left ( 0, \gamma \right ]$ and $n \in \mathbb{N}^*$,
\begin{equation}
\label{eq:extended-Theorem-1-in-MenVer03}
\mathcal{M}_2 \left ( \epsilon, \rho_{\mathcal{G}, \gamma}, n \right )
\leqslant
\left ( \frac{5 \gamma}{\epsilon}
\right )^{20 d_G \left ( \frac{\epsilon}{24} \right )}.
\end{equation}
\end{lemma}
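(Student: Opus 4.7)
The plan is to adapt the argument of Theorem~1 in \citet{MenVer03}, replacing the Alon--Ben-David--Cesa-Bianchi--Haussler $L_\infty$ combinatorial bound, which enters through the fat-shattering dimension, with Lemma~\ref{lemma:extended-Lemma-3.5-in-AloBenCesHau97-gamma-G-dim-optimized}, whose formulation is tailored precisely so that the margin Graph dimension is used instead. The Mendelson--Vershynin template bridges $L_2$-norm packing and $L_\infty$-norm packing through a probabilistic random-subsample extraction, exploiting the fact that $\rho_{\mathcal{G}, \gamma}$ takes values in $[0, \gamma]$: two functions that are $\epsilon$-separated in $d_{2,\mathbf{z}_n}$ must disagree by at least a threshold $\tau$ comparable to $\epsilon$ on a fraction of indices of order $(\epsilon/\gamma)^2$.

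Concretely, I would fix $F = \{f_1, \ldots, f_N\} \subset \rho_{\mathcal{G}, \gamma}$ realizing $N = \mathcal{M}_2(\epsilon, \rho_{\mathcal{G}, \gamma}, n)$, and a threshold $\tau \in (0, \epsilon)$ to be optimized. For each pair $(i, j)$, splitting the identity $\sum_{t=1}^n (f_i(z_t) - f_j(z_t))^2 \geqslant n \epsilon^2$ according to whether $|f_i(z_t) - f_j(z_t)|$ exceeds $\tau$ and using the pointwise bound $|f_i(z_t) - f_j(z_t)| \leqslant \gamma$ shows that $A_{ij} := \{t : |f_i(z_t) - f_j(z_t)| > \tau\}$ has cardinality at least $n(\epsilon^2 - \tau^2)/\gamma^2$. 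Drawing $k$ indices uniformly, the probability that the subsample misses $A_{ij}$ is at most $(1 - (\epsilon^2 - \tau^2)/\gamma^2)^k$; a union bound over the $\binom{N}{2}$ pairs then yields, for $k = \lceil 2 \gamma^2 \ln N / (\epsilon^2 - \tau^2)\rceil$, a subsample $\mathbf{z}_I$ of size $k$ on which $F$ is still $\tau$-separated in $d_{\infty, \mathbf{z}_I}$. Hence $N \leqslant \mathcal{M}_\infty(\tau, \rho_{\mathcal{G}, \gamma}, k)$, and Lemma~\ref{lemma:extended-Lemma-3.5-in-AloBenCesHau97-gamma-G-dim-optimized} applied at scale $\tau$ contributes a factor $d_G(\tau/4)$ in the exponent. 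Setting $\tau = \epsilon/6$ identifies this with the target $d_G(\epsilon/24)$.

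The resulting inequality is implicit in $N$ because $k$ depends on $\ln N$, and takes the shape $\ln N \leqslant C_1 \, d_G(\epsilon/24) \log_2(C_2 \gamma k / \epsilon) \log_2(C_3 \gamma k / (d_G(\epsilon/24) \epsilon))$. I would solve it by bootstrapping: a crude preliminary bound substituted into the definition of $k$ turns the residual $\log_2 \log_2 N$ factors into $O(\log_2 \log_2(\gamma/\epsilon))$ terms that can be absorbed into the multiplicative constants, leaving only a single $\log_2(\gamma/\epsilon)$ visible in the final estimate. The main obstacle is precisely this constant-chasing: Lemma~\ref{lemma:extended-Lemma-3.5-in-AloBenCesHau97-gamma-G-dim-optimized} carries a $\log_2^2$ factor that, if handled naively, would spoil the logarithm-free exponent $20 \, d_G(\epsilon/24)$ in \eqref{eq:extended-Theorem-1-in-MenVer03}. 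One must exploit the smallness of $k = \Theta((\gamma/\epsilon)^2 \ln N)$ so that after extraction both logarithms collapse to $O(\log_2(\gamma/\epsilon))$, and verify the minor side-condition $k \geqslant d_G(\epsilon/24)$ required by Lemma~\ref{lemma:extended-Lemma-3.5-in-AloBenCesHau97-gamma-G-dim-optimized}, which is automatic in the regime where the target bound is non-trivial.
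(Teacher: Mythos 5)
There is a genuine gap. Your extraction step is in the right spirit and is indeed what Lemma~\ref{lemma:improved-lemma-13-in-MenVer03} accomplishes, but the second step fails: after the extraction you propose to apply the $L_\infty$ combinatorial result Lemma~\ref{lemma:extended-Lemma-3.5-in-AloBenCesHau97-gamma-G-dim-optimized}, whose exponent is $d_G(\epsilon/24)\log_2(\cdot)$ with an $n$-dependent base, so its logarithm has the product form $d_G \cdot \log_2(c_1\gamma k/\epsilon) \cdot \log_2(c_2\gamma k/(d_G\epsilon))$. With $k=\Theta((\gamma/\epsilon)^2\ln N)$, each $\log_2$ factor is $\Theta(\log(\gamma/\epsilon)+\log\log N)$, and their \emph{product} remains $\Theta(\log^2(\gamma/\epsilon))$ no matter how the $\log\log N$ contamination is bootstrapped away. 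The implicit inequality $\ln N \lesssim d_G(\log(\gamma/\epsilon)+\log\log N)^2$ has its fixed point at $\ln N = \Theta(d_G\log^2(\gamma/\epsilon))$, giving $N\leqslant(\gamma/\epsilon)^{Cd_G\log(\gamma/\epsilon)}$. That is strictly weaker than \eqref{eq:extended-Theorem-1-in-MenVer03}, whose exponent is purely $20\,d_G$: one of the two $\log$ factors survives in the exponent and cannot be absorbed into a multiplicative constant.

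The reason is that Theorem~1 of \citet{MenVer03} does not pass through an $L_\infty$ bound at all; it is built on a dedicated $L_2$ Sauer--Shelah type estimate whose $2$-separating tree is balanced by a small-deviation (variance-based) principle rather than by the pigeonhole argument. That balancing is exactly what trades $\ell(\bar{\mathcal{F}}_\gamma)\geqslant|\bar{\mathcal{F}}_\gamma|^{1/\log_2(3M_\gamma n)}$ for $\ell(\bar{\mathcal{F}}_\gamma)\geqslant|\bar{\mathcal{F}}_\gamma|^{1/2}$, collapsing one of the two logarithms. The paper follows this route: after the probabilistic extraction (Lemma~\ref{lemma:improved-lemma-13-in-MenVer03}, which delivers a subsample of size $q=O((\gamma/\epsilon)^4\ln N)$ preserving $\epsilon/2$-separation in $d_2$), it discretizes with $N=5$, $\eta=\epsilon/12$, and invokes the $L_2$-specific combinatorial result Lemma~\ref{lemma:extended-Proposition-12-in-MenVer03}, which rests on the small-deviation Lemmas~\ref{lemma:dedicated-Lemma-5-in-MenVer03} and \ref{lemma:dedicated-Lemma-6-in-MenVer03} and gives $|\mathcal{F}_\gamma|\leqslant\Sigma^2$ with no spurious logarithm in the exponent. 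Your replacement of the fat-shattering dimension by the margin Graph dimension is exactly right and is what Lemma~\ref{lemma:from-separation-to-strong-G-and-N-shattering} enables; what is missing from your plan is an adaptation of Proposition~12 of \citet{MenVer03} to the strong G-shattering setting.
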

Inequality~\eqref{eq:extended-Theorem-1-in-MenVer03}
compares with
the formula obtained with the state-of-the-art
$L_2$-norm combinatorial result,
Theorem~1 in \citet{MenVer03}:
$$
\mathcal{M}_2 \left ( \epsilon, \rho_{\mathcal{G}, \gamma}, n \right )
\leqslant
\left ( \frac{6 \gamma}{\epsilon}
\right )^{20 d \left ( \frac{\epsilon}{48} \right )}.
$$
Here again, the improvement exceeds the sole replacement of 
the fat-shattering dimension with the margin Graph dimension.
However, this statement must be qualified,
since the benefit is smaller, regarding constants only.

\subsection{Margin Natarajan Dimension}
As for the margin Natarajan dimension,
with Lemma~\ref{lemma:from-margin-Graph-dimension-to-margin-Natarajan-dimension}
at hand, 
Lemmas~\ref{lemma:extended-Lemma-3.5-in-AloBenCesHau97-gamma-G-dim-optimized}
and \ref{lemma:extended-Theorem-1-in-MenVer03} also provide us with
combinatorial results involving this capacity measure.
However, sharper bounds should spring from following the direct path, i.e.,
working directly  with this latter dimension
(without involving the margin Graph dimension).
We now state the corresponding combinatorial results 
(for $p=\infty$ then $p=2$) and perform the comparison.

\begin{lemma}
\label{lemma:extended-Lemma-3.5-in-AloBenCesHau97-gamma-N-dim}
Let $\mathcal{G}$ be a function class
satisfying Definition~\ref{def:margin-multi-category-classifiers}
and $\rho_{\mathcal{G}}$ the function class deduced
from $\mathcal{G}$ according to
Definition~\ref{def:class-of-margin-functions}.
For $\gamma \in \left ( 0, 1 \right ]$,
let $\rho_{\mathcal{G}, \gamma}$ be the function class deduced
from $\mathcal{G}$ according to
Definition~\ref{def:class-of-regret-functions}.
For $\epsilon \in \left ( 0, M_{\mathcal{G}} \right ]$,
let $d_N \left ( \epsilon \right )
= \epsilon\text{-N-dim} \left ( \rho_{\mathcal{G}} \right )$.
Then for
$\epsilon \in \left ( 0, \gamma \right ]$ and
$n \in \mathbb{N}^*$ such that 
$n \geqslant d_N \left ( \frac{\epsilon}{4} \right )$,
\begin{equation}
\label{eq:extended-Lemma-3.5-in-AloBenCesHau97-gamma-N-dim}
\mathcal{M}_{\infty} \left ( \epsilon, \rho_{\mathcal{G}, \gamma}, 
n \right ) \leqslant
\left ( \frac{6 \gamma \sqrt{C-1} n}{\epsilon} 
\right )^{d_N \left ( \frac{\epsilon}{4} \right )
\log_2 \left ( \frac{2 \gamma \left ( C-1 \right )e n}
{d_N \left ( \frac{\epsilon}{4} \right ) \epsilon} \right )}.
\end{equation}
\end{lemma}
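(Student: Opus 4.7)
The proof follows the same template as that of Lemma~\ref{lemma:extended-Lemma-3.5-in-AloBenCesHau97-gamma-G-dim-optimized}, with the margin Natarajan dimension replacing the margin Graph dimension throughout. The extra bookkeeping peculiar to the Natarajan setting is the tracking of an auxiliary category vector $\mathbf{c}_S$ with $c_i \neq y_i$ at each shattered point, which accounts for the new $\sqrt{C-1}$ and $C-1$ factors in \eqref{eq:extended-Lemma-3.5-in-AloBenCesHau97-gamma-N-dim}. The argument unfolds in three stages.

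\textbf{Stage 1 (Discretization).} Fix $\mathbf{z}_n = ((x_i, y_i))_{1 \leq i \leq n}$ achieving $\mathcal{M}_\infty(\epsilon, \rho_{\mathcal{G}, \gamma}, n)$ and let $F \subset \rho_{\mathcal{G}, \gamma}$ be an $\epsilon$-separated family realising this packing number. Because every $\rho_{g, \gamma}$ takes values in $[0, \gamma]$, rounding each $\rho_{g, \gamma}(z_i)$ to the nearest multiple of $\epsilon/4$ produces discretised functions which are still $(\epsilon/2)$-separated for $d_{\infty, \mathbf{z}_n}$ and which take at most $\lceil 4\gamma/\epsilon \rceil + 1$ distinct values per coordinate. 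The problem is thus reduced to bounding the cardinality of an $(\epsilon/2)$-separated family of discrete functions.

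\textbf{Stage 2 (Strong $N$-shattering from separation).} This is the core step, adapting the Alon--Ben-David--Cesa-Bianchi--Haussler permutation/probabilistic argument to the Natarajan setting. By Lemma~\ref{lemma:from-separation-to-strong-G-and-N-shattering}, a pair of discretised functions separated by $\epsilon/2$ on a coordinate $i$ is automatically $N$-shattered there, provided an appropriate $c_i \in \mathcal{Y} \setminus \{y_i\}$ is selected. Running the ABCH inductive scheme, one shows that if $|F|$ exceeds the right-hand side of \eqref{eq:extended-Lemma-3.5-in-AloBenCesHau97-gamma-N-dim}, then $F$ must strongly $N$-shatter some pair $(S, \mathbf{c}_S)$ with $|S| > d_N(\epsilon/4)$, where $S \subset \llbracket 1; n \rrbracket$ and $\mathbf{c}_S \in \prod_{i \in S} (\mathcal{Y} \setminus \{y_i\})$. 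This contradicts the definition of $d_N(\epsilon/4)$. The number of candidate $(S, \mathbf{c}_S)$ pairs of size $k$ is $\binom{n}{k}(C-1)^k$, and it is this extra $(C-1)^k$ that generates the $C$-dependence in the final bound.

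\textbf{Stage 3 (Combining and solving).} Putting Stages 1 and 2 together and bounding the binomial partial sum via $\sum_{k \leq d} \binom{n}{k} \leq (en/d)^d$ gives an inequality of the form
\begin{equation*}
\mathcal{M}_\infty(\epsilon, \rho_{\mathcal{G}, \gamma}, n) \leq \left(\frac{4\gamma}{\epsilon}\right)^{d_N(\epsilon/4)} \left(\frac{e n (C-1)}{d_N(\epsilon/4)}\right)^{d_N(\epsilon/4)},
\end{equation*}
to which the standard doubling/iteration trick of ABCH is then applied in order to upgrade the exponent $d_N(\epsilon/4)$ to $d_N(\epsilon/4) \log_2(\cdots)$. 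Regrouping constants and splitting the overhead $(C-1)^{d_N(\epsilon/4)}$ between the base of the exponentiation and the argument of the logarithm yields precisely \eqref{eq:extended-Lemma-3.5-in-AloBenCesHau97-gamma-N-dim}.

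\textbf{Main obstacle.} The delicate point is Stage~2: the probabilistic argument must be executed jointly over sign patterns and over the random choice of $\mathbf{c}_S$, so that the $(C-1)^{d_N(\epsilon/4)}$ overhead from Natarajan shattering is properly absorbed. In particular, keeping $\sqrt{C-1}$ (rather than the cruder $C-1$) in the base of the bound requires distributing this overhead between the base and the logarithmic argument of the exponent, and this accounting is where most of the technical work concentrates.
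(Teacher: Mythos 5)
Your Stages~1 and~2 capture the paper's strategy: discretize $\rho_{\mathcal{G},\gamma}$, use Lemma~\ref{lemma:from-separation-to-strong-G-and-N-shattering} to convert separation on a coordinate into strong N-shattering (with the category choice $c_i$ injected through an extra pigeonhole step), build the 2-separating tree, and count the strongly N-shattered triplets $\left ( s_{\mathcal{Z}^u}^{\prime}, \mathbf{b}_u^{\prime}, \mathbf{c}_u^{\prime} \right )$. The extra $\left ( C-1 \right )^u$ factor you identify is exactly right and corresponds to the $\Sigma^{\prime}$ the paper introduces.

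Stage~3, however, misdescribes the mechanism, and this is a genuine gap. You write that the tree argument plus the binomial bound yield an intermediate inequality of the form
$\mathcal{M}_{\infty} \left ( \epsilon, \rho_{\mathcal{G},\gamma}, n \right )
\leqslant \left ( \frac{4\gamma}{\epsilon} \right )^{d_N}
\left ( \frac{en \left ( C-1 \right )}{d_N} \right )^{d_N}$,
which is then ``upgraded'' to $d_N \log_2(\cdot)$ in the exponent by a separate doubling/iteration trick. That is not how the ABCH-style argument goes, and no such upgrading step exists. The quantity on the right-hand side of your intermediate inequality is (up to constants) the bound on $\Sigma^{\prime}$, the number of candidate strongly N-shattered triplets; it is \emph{not} a bound on the packing number. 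The tree recursion, together with the inductive hypothesis
$\ell \left ( \bar{\mathcal{F}}_{\gamma} \right ) \geqslant
\left | \bar{\mathcal{F}}_{\gamma} \right |^{1/\log_2 \left ( 3 M_{\gamma} \sqrt{C-1}\, n \right )}$
and the fact that the tree's leaf count is dominated by $\Sigma^{\prime}$, yields directly
$\left | \mathcal{F}_{\gamma} \right |
\leqslant \left ( 3 M_{\gamma} \sqrt{C-1}\, n \right )^{\log_2 \left ( \Sigma^{\prime} \right )}$.
The $\log_2$ in the exponent of \eqref{eq:extended-Lemma-3.5-in-AloBenCesHau97-gamma-N-dim} is precisely $\log_2 \left ( \Sigma^{\prime} \right )$, which is then bounded by $d_N \log_2\!\left ( \frac{M_{\gamma} \left ( C-1 \right ) e n}{d_N} \right )$ via the binomial estimate. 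There is no iteration over scales.

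Relatedly, the $\sqrt{C-1}$ in the base is not the result of a discretionary ``distribution of overhead'' as you suggest in the closing paragraph: it is forced by the sizes of the two children of each inner node. One child loses a factor $1/\left ( 3 M_{\gamma} n \right )$ and the other loses $1/\left ( 3 M_{\gamma} n \left ( C-1 \right ) \right )$ because of the additional pigeonhole over categories; taking $\log_2 \left ( 3 M_{\gamma} \sqrt{C-1}\, n \right )$ in the induction hypothesis is exactly what makes the two contributions symmetric, so that $\frac{1}{2} \left ( t + t^{-1} \right ) \geqslant 1$ closes the induction. The $\left ( C-1 \right )$ factor inside the logarithm then comes separately from bounding $\Sigma^{\prime}$. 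These are two independent appearances of $C-1$, not a single overhead that one is free to split.
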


\begin{lemma}
\label{lemma:extended-Theorem-1-in-MenVer03-gamma-N-dim}
Let $\mathcal{G}$ be a function class
satisfying Definition~\ref{def:margin-multi-category-classifiers}
and $\rho_{\mathcal{G}}$ the function class deduced
from $\mathcal{G}$ according to
Definition~\ref{def:class-of-margin-functions}.
For $\gamma \in \left ( 0, 1 \right ]$,
let $\rho_{\mathcal{G}, \gamma}$ be the function class deduced
from $\mathcal{G}$ according to
Definition~\ref{def:class-of-regret-functions}.
For $\epsilon \in \left ( 0, M_{\mathcal{G}} \right ]$, let
$d_N \left ( \epsilon \right ) =
\epsilon\text{-N-dim} \left ( \rho_{\mathcal{G}} \right )$.
Then for $\epsilon \in \left ( 0, \gamma \right ]$ and $n \in \mathbb{N}^*$,
\begin{equation}
\label{eq:extended-Theorem-1-in-MenVer03-gamma-N-dim}
\mathcal{M}_2 \left ( \epsilon, \rho_{\mathcal{G}, \gamma}, n \right )
\leqslant
\left ( \left ( C - 1 \right ) \left ( \frac{4 \gamma}{\epsilon} 
\right )^5 \right )^{
\frac{3}{2} \log_2 \left ( 2 \left ( \frac{14 \gamma}{\epsilon} \right )^2
\left ( C-1 \right ) \right )
d_N \left ( \frac{\epsilon}{28} \right )}.
\end{equation}
\end{lemma}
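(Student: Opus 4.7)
The plan is to adapt the proof of Lemma~\ref{lemma:extended-Theorem-1-in-MenVer03}, which is itself the margin-Graph transcription of Theorem~1 in \citet{MenVer03}, by substituting $\epsilon$-N-shattering for $\epsilon$-G-shattering at each step. The three ingredients listed after Lemma~\ref{lemma:extended-Theorem-1-in-MenVer03} remain valid in the Natarajan setting, with one bookkeeping difference: an N-shattering witness carries, in addition to the vector $\mathbf{b}_n$, the label vector $\mathbf{c}_n \in \mathcal{Y}^n$ with $c_i \neq y_i$. Each of the $n$ coordinates therefore contributes at most $C-1$ additional possibilities in every enumeration of witnesses, and this is precisely the origin of the $(C-1)$ and $\sqrt{C-1}$ factors distinguishing \eqref{eq:extended-Theorem-1-in-MenVer03-gamma-N-dim} from \eqref{eq:extended-Theorem-1-in-MenVer03}.

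Starting from an $\epsilon$-separated family $\{f_1, \ldots, f_N\} \subset \rho_{\mathcal{G}, \gamma}$ in $d_{2, \mathbf{z}_n}$ with $N = \mathcal{M}_2 \left(\epsilon, \rho_{\mathcal{G}, \gamma}, n\right)$, I would discretize each $f_j$ at scale $\epsilon/12$ on $\mathbf{z}_n$, turning the $L_2$-separation hypothesis into a lower bound on a Hamming-type discrepancy of the discretized vectors, which holds on a positive density of coordinates for every pair. Following the probabilistic extraction technique of \citet{MenVer03}, one then draws a random subsample of size of order $d_N(\epsilon/12)$ and shows that enough pairs of functions remain $L_\infty$-separated at scale $\Theta(\epsilon)$ on the subsample. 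This reduces the $L_2$-packing question to the $L_\infty$-enumeration problem that underlies Lemma~\ref{lemma:extended-Lemma-3.5-in-AloBenCesHau97-gamma-N-dim}.

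On any such subsample, Lemma~\ref{lemma:from-separation-to-strong-G-and-N-shattering} upgrades a single-coordinate separation of two $\rho_{\mathcal{G}, \gamma}$-functions into $\epsilon/12$-N-shattering, and a Sauer-Shelah-style enumeration bounds the number of distinct discretized vectors on a subsample of size $n'$ by a quantity of order $\left[(C-1)(\gamma/\epsilon)^{O(1)}\right]^{d_N(\epsilon/12) \log_2 n'}$, in which the $(C-1)$ factor per shattered coordinate tracks the choice of $c_i$ and the power of $\gamma/\epsilon$ tracks the discretization of $b_i$. Matching this combinatorial count against the number of distinct discretized vectors needed to accommodate $N$ functions, then optimizing the subsample size as in the Graph case, yields \eqref{eq:extended-Theorem-1-in-MenVer03-gamma-N-dim}.

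The main obstacle will be the constants rather than the ideas: the $(C-1)^{n'}$ inflation must be absorbed inside the exponent's base so that the bound takes the displayed form with a single outer exponent $4 \log_2(\cdots) \, d_N(\epsilon/12)$, rather than producing an isolated multiplicative $(C-1)^{n'}$. Arranging this forces a particular balance between the discretization scale (hence the choice $\epsilon/12$), the extraction subsample size, and the dyadic chaining depth. Once this bookkeeping is in place, the argument is a direct transcription of the Graph-dimension proof, exploiting the norm-independence of both Lemma~\ref{lemma:from-separation-to-strong-G-and-N-shattering} and the enumeration of strongly N-shattered pairs.
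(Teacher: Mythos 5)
Your sketch correctly identifies the two outer layers of the argument — the probabilistic extraction of Lemma~\ref{lemma:improved-lemma-13-in-MenVer03} followed by a Sauer-Shelah-type enumeration — but you misidentify the combinatorial core, and the route you describe does not yield the stated bound.

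The crucial step is \emph{not} a reduction to the $L_\infty$ enumeration underlying Lemma~\ref{lemma:extended-Lemma-3.5-in-AloBenCesHau97-gamma-N-dim}. That lemma carries a factor $\log_2\left(3 M_\gamma \sqrt{C-1}\, n\right)$ in its exponent, coming from the $L_\infty$ tree construction whose pigeonhole only exploits a single disagreeing coordinate per pair. If you apply it to the extracted subsample of size $q$, and then substitute $q \lesssim \ln\left(\left|\tilde{\mathcal{G}}\right|\right)/\epsilon^4$ (note the extraction size is governed by $\ln\left|\tilde{\mathcal{G}}\right|$, not by $d_N\left(\epsilon/12\right)$ as you write), the exponent acquires a $\log_2\left(\ln\left|\tilde{\mathcal{G}}\right|\right)$ term. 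The resulting self-referential inequality in $\log\left|\tilde{\mathcal{G}}\right|$ is a genuinely different shape from the one in \eqref{eq:extended-Theorem-1-in-MenVer03-gamma-N-dim}, whose outer exponent $\log_2\left(\left(\frac{6\gamma}{\epsilon}\right)^3\sqrt{C-1}\right)$ depends only on $\gamma/\epsilon$ and $C$.

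The paper instead uses a dedicated $L_2$-norm combinatorial lemma (Lemma~\ref{lemma:discrete-combinatorial-result-for-N-L_2}). Its $2$-separating tree is built from the observation that $2$-separation in $d_{2,\mathbf{z}_n}$ forces \emph{at least} $\left\lceil 3n/M_\gamma^2 \right\rceil$ coordinates of disagreement per pair; pigeonholing over this dense set produces son cardinality ratios $1/M_\gamma^3$ and $1/\left(M_\gamma^3\left(C-1\right)\right)$ \emph{with no factor of} $n$, which is exactly what lets the outer exponent be $\log_2\left(M_\gamma^3\sqrt{C-1}\right)$ rather than $\log_2\left(\text{const}\cdot n\right)$. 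This is also why your claim that the proof is a ``direct transcription'' of the Graph-dimension case is off: there, the $L_2$ tree uses a small-deviation principle (Lemmas~\ref{lemma:dedicated-Lemma-5-in-MenVer03} and \ref{lemma:dedicated-Lemma-6-in-MenVer03}) that yields a universal exponent $2$; the additional pigeonhole on $c_{i_0}$ in the Natarajan setting breaks the balanced split that the small-deviation argument provides, so a different tree is needed — a point the paper flags explicitly. Without Lemma~\ref{lemma:discrete-combinatorial-result-for-N-L_2}, or a substitute giving an $n$-free exponent, your argument cannot close.
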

As expected,
as close to $1$ as $\alpha \left ( C \right )$ may be,
Inequalities~\eqref{eq:extended-Lemma-3.5-in-AloBenCesHau97-gamma-N-dim} and
\eqref{eq:extended-Theorem-1-in-MenVer03-gamma-N-dim} are better
than the bounds obtained by substitution of
\eqref{eq:from-margin-Graph-dimension-to-margin-Natarajan-dimension}
in the formulas involving the margin Graph dimension:
\eqref{eq:extended-Lemma-3.5-in-AloBenCesHau97-gamma-G-dim-optimized} and
\eqref{eq:extended-Theorem-1-in-MenVer03}, respectively.
Precisely, in both cases, the dependence on $n$ is unchanged,
while the dependences on $C$ and $\epsilon$ are slightly improved.
A quantitative characterization of the gain
requires to make assumptions on the dependence
of the margin Natarajan dimension on $C$ and $\epsilon$.
This is done in Section~\ref{sec:bounds-on-metric-entropies}
(see Hypothesis~\ref{hypothesis:restriction-gamma-N-dim}).

\section{Structural Results}
\label{sec:new-structural-results-for-gamma-Psi-dimensions}

We have seen that
the combination of Proposition~\ref{prop:ordering-on-the-dimensions} and
Lemma~\ref{lemma:from-gamma-dimension-to-gamma-dimensions-old}
provides us with a structural result of reference for
$\gamma\text{-G-dim} \left ( \rho_{\mathcal{G}} \right )$.
The proof of the Lemma makes use of the $L_2$-norm.
However, a significant improvement stems from choosing $p$ as a function of $C$.

\subsection{Margin Graph Dimension}

\begin{lemma}
\label{lemma:from-gamma-dimension-to-gamma-dimensions}
Let $\mathcal{G}$ be a function class
satisfying Definition~\ref{def:margin-multi-category-classifiers}
and $\rho_{\mathcal{G}}$ the function class deduced
from $\mathcal{G}$ according to
Definition~\ref{def:class-of-margin-functions}.
Then for every $\gamma \in \left ( 0, M_{\mathcal{G}} \right ]$,
\begin{equation}
\label{eq:from-gamma-dimension-to-gamma-dimensions}
\gamma\text{-G-dim} \left ( \rho_{\mathcal{G}} \right )
\leqslant
10 K_C \log_2 \left ( 2C \right )
\log_2 \left ( \frac{48 M_{\mathcal{G}}
\log_2^{\frac{1}{7}} \left ( 2C \right )}{\gamma} \right )
\sum_{k=1}^C \left (
\frac{\gamma}{144 \log_2 \left ( 2C \right )} \right )\text{-dim}
\left ( \mathcal{G}_k \right ),
\end{equation}
where $K_C = \min \left \{ 4 \left ( \frac{C}{C-2} \right )^2, 16 \right \}$.
\end{lemma}
The obvious benefit is an improved dependence on $C$.

\subsection{Margin Natarajan Dimension}
We now establish two structural results for
$\gamma\text{-N-dim} \left ( \rho_{\mathcal{G}} \right )$.

\begin{lemma}
\label{lemma:from-gamma-N-dimension-to-gamma-dimension}
Let $\mathcal{G}$ be a function class
satisfying Definition~\ref{def:margin-multi-category-classifiers}
and $\rho_{\mathcal{G}}$ the function class deduced
from $\mathcal{G}$ according to
Definition~\ref{def:class-of-margin-functions}.
Then
\begin{equation}
\label{eq:from-gamma-N-dimension-to-gamma-dimension}
\forall \gamma \in \left ( 0, M_{\mathcal{G}} \right ], \;\;
\gamma\text{-N-dim} \left ( \rho_{\mathcal{G}} \right )
\leqslant 320 \left ( C-1 \right )
\log_2 \left ( \frac{24 \sqrt{2} M_{\mathcal{G}}}{\gamma} \right )
\sum_{k=1}^C
\left ( \frac{\gamma}{96 \sqrt{2}} \right )\text{-dim}
\left ( \mathcal{G}_k \right ).
\end{equation}
Suppose further that there exists a Banach space such that
all the classes $\mathcal{G}_k$ are
subsets of a ball $\mathcal{G}_0$ about the origin. Then,
\begin{equation}
\label{eq:from-gamma-N-dimension-to-gamma-dimension-Banach}
\forall \gamma \in \left ( 0, M_{\mathcal{G}} \right ], \;\;
\gamma\text{-N-dim} \left ( \rho_{\mathcal{G}} \right )
\leqslant \binom{C}{2} \cdot \gamma\text{-dim} \left ( \mathcal{G}_0 \right ).
\end{equation}
\end{lemma}
It is noticeable that the hypothesis of the existence of the ball 
$\mathcal{G}_0$, which is satisfied by classifiers of reference
such as the multi-layer perceptrons (MLPs) \citep{AntBar99}
with linear output units
and the $C$-category support vector machines (SVMs) \citep{DogGlaIge16},
is enough to sharpen significantly the bound.
Indeed, the major advantage of working with
$\gamma\text{-N-dim} \left ( \rho_{\mathcal{G}} \right )$
is the possibility to take benefit
from the specificities of the classifier.
Formula~\eqref{eq:from-gamma-N-dimension-to-gamma-dimension-Banach}
exploits an algebraic property of the function class of interest.
We now illustrate the gain resulting from taking into account
the coupling between the outputs, by extending the study of the case of
the $C$-category SVMs.
We base their definition on that
of reproducing kernel Hilbert space (RKHS)
of $\mathbb{R}^C$-valued functions \citep{Wah92}.

\begin{definition}[RKHS $\mathbf{H}_{\kappa, C}$]
\label{def:class-H_bar}
Let $\kappa$ be a real-valued positive type function on $\mathcal{X}^2$ and let
$\left ( \mathbf{H}_{\kappa}, \ps{\cdot, \cdot}_{\mathbf{H}_{\kappa}} \right)$
be the corresponding RKHS.
Let $\tilde{\kappa}$ be the real-valued positive type function on
$\mathcal{Z}^2$ deduced from $\kappa$ as follows:
$\forall \left ( z, z' \right ) \in \mathcal{Z}^2$,
$\tilde{\kappa} \left ( z, z' \right )
= \delta_{y,y'} \kappa \left ( x, x' \right )$,
where $\delta$ is the Kronecker delta.
For every $z \in \mathcal{Z}$,
let us define the $\mathbb{R}^C$-valued function
$\tilde{\kappa}_z^{(C)}$ on $\mathcal{X}$
by the formula
\begin{equation}
\label{eq:multivariate-kernel}
\tilde{\kappa}_z^{(C)} \left ( \cdot \right ) =
\left ( \tilde{\kappa} \left ( z, \left ( \cdot, k \right ) \right )
\right )_{1 \leqslant k \leqslant C}.
\end{equation}
The RKHS of $\mathbb{R}^C$-valued functions
at the basis of a $C$-category SVM with kernel $\kappa$,
$\left ( \mathbf{H}_{\kappa, C},
\ps{\cdot, \cdot}_{\mathbf{H}_{\kappa, C}} \right)$,
consists of the linear manifold of all finite
linear combinations of functions of the form
\eqref{eq:multivariate-kernel}
and its closure with respect to the inner product:
$\forall \left ( z, z' \right ) \in \mathcal{Z}^2, \;\;
\ps{\tilde{\kappa}_z^{(C)},
\tilde{\kappa}_{z'}^{(C)}}_{\mathbf{H}_{\kappa, C}} =
\tilde{\kappa} \left ( z, z' \right )$.
\end{definition}
With Definition~\ref{def:class-H_bar} at hand, the specification of
the function class at the basis
of a $C$-category SVM rests on
the condition controlling the capacity through
a coupling between the outputs.
We consider the standard one, used for instance by \citet{LeiDogBinKlo15}.

\begin{definition}[Function class $\mathcal{H}_{\Lambda}$]
\label{def:function-class-of-a-C-category-SVM}
Let $\kappa$ be a real-valued positive type function on $\mathcal{X}^2$
and let $\Lambda \in \mathbb{R}_+^*$.
Let $\left ( \mathbf{H}_{\kappa, C},
\ps{\cdot, \cdot}_{\mathbf{H}_{\kappa, C}} \right)$
be the RKHS of $\mathbb{R}^C$-valued functions spanned by $\kappa$
according to Definition~\ref{def:class-H_bar}.
Then the function class $\mathcal{H}_{\Lambda}$ associated with
the $C$-category SVM parameterized by $\left ( \kappa, \Lambda \right )$ is:
$\mathcal{H}_{\Lambda} = \left \{
h = \left ( h_k \right )_{1 \leqslant k \leqslant C} 
\in \mathbf{H}_{\kappa, C}: \;
\sum_{k=1}^C h_k = 0_{\mathbf{H}_{\kappa}} \text{ and }
\left \| h \right \|_{\mathbf{H}_{\kappa, C}} \leqslant \Lambda \right \}$.
\end{definition}
Then, Lemma~\ref{lemma:gamma-N-dimension-of-M-SVMs} provides a sharper
bound on $\gamma\text{-N-dim} \left ( \rho_{\mathcal{H}_{\Lambda}} \right )$
than Lemma~\ref{lemma:from-gamma-N-dimension-to-gamma-dimension}.

\begin{lemma}
\label{lemma:gamma-N-dimension-of-M-SVMs}
For $\Lambda \in \mathbb{R}_+^*$,
let $\mathcal{H}_{\Lambda}$ be a function class satisfying
Definition~\ref{def:function-class-of-a-C-category-SVM}.
Suppose that for every $x \in \mathcal{X}$, $\kappa_x$ belongs to
the closed ball of radius $\Lambda_{\mathcal{X}}$ about the origin
in $\mathbf{H}_{\kappa}$. Then,
\begin{equation}
\label{eq:gamma-N-dimension-of-M-SVMs}
\forall \gamma \in \left ( 0, \Lambda \Lambda_{\mathcal{X}} \right ], \;
\gamma\text{-N-dim} \left ( \rho_{\mathcal{H}_{\Lambda}} \right )
\leqslant
C \left ( \frac{\Lambda \Lambda_{\mathcal{X}}}{2 \gamma} \right )^2.
\end{equation}
\end{lemma}
Loosely speaking,
Furmula~\eqref{eq:gamma-N-dimension-of-M-SVMs} tells us that
$\gamma\text{-N-dim} \left ( \rho_{\mathcal{H}_{\Lambda}} \right )$
can be upper bounded by $C$ times the standard upper bound 
on the $\gamma$-dimension of an SVM \citep[Theorem~4.6 in][]{BarSha99}.
Thus, taking into account the coupling between the outputs has turned
the quadratic dependence of 
Formula~\eqref{eq:from-gamma-N-dimension-to-gamma-dimension-Banach}
into a linear one.

\section{Guaranteed Risks}
\label{sec:bounds-on-metric-entropy}

Given the scheme adopted for the derivation of guaranteed risks,
in most of the options considered (paths in the graph of
Figure~\ref{figure:graph-of-transitions-between-f_i-and-f_f}),
the central formula is the upper bound on the metric entropy.
With the combinatorial and structural results of the preceding section at hand,
two new formulas are required to compare the functions $F_f$
resulting from the use of the margin Natarajan dimension
to those resulting from following a different path in the graph of
Figure~\ref{figure:graph-of-transitions-between-f_i-and-f_f}.
The first one is an upper bound on
the $\gamma$-dimensions of the classes $\mathcal{G}_k$
as a function of the scale parameter $\epsilon$.
The second one is an upper bound on the margin Natarajan dimension
of $\rho_{\mathcal{G}}$ as a function of $C$ and $\epsilon$.

\subsection{Bounds on the Metric Entropies}
\label{sec:bounds-on-metric-entropies}
For the first formula, we use the standard hypothesis: that of polynomial
$\gamma$-dimensions \citep{VaaWel96,Men03}. We have already seen
that it is satisfied by SVMs. This is also the case for MLPs with linear
output units \citep[see for instance Theorem~14.19 in][]{AntBar99}.
The second formula is a generic one.
It is designed to incorporate the hypothesis of polynomial $\gamma$-dimensions
in a decomposition result taking benefit from some knowledge on 
the function class $\mathcal{G}$ and a coupling between outputs.
It is thus inspired by the structural results of the previous section,
precisely 
Inequalities~\eqref{eq:from-gamma-N-dimension-to-gamma-dimension-Banach}
and \eqref{eq:gamma-N-dimension-of-M-SVMs}.

\begin{hypothesis}
\label{hypothesis:restriction-gamma-N-dim}
We consider function classes $\mathcal{G}$
satisfying Definition~\ref{def:margin-multi-category-classifiers}
plus the fact that there exists a quadruplet
$\left (  d_{\mathcal{G}, C}, d_{\mathcal{G}, \gamma}, 
K_{\mathcal{G}_0}, K_{\rho_{\mathcal{G}}} \right )
\in \left ( 0, 2 \right ] \times \left ( \mathbb{R}_+^* \right )^3$ such that
\begin{numcases}
{\forall \epsilon \in \left ( 0, M_{\mathcal{G}} \right ], }
\label{eq:gamma-dim-bound}
\max_{1 \leqslant k \leqslant C}
\epsilon\text{-dim} \left ( \mathcal{G}_k \right )
\leqslant K_{\mathcal{G}_0} \epsilon^{- d_{\mathcal{G}, \gamma}} \\
\label{eq:gamma-N-dim-bound-synthesis}
\epsilon\text{-N-dim} \left ( \rho_{\mathcal{G}} \right )
\leqslant K_{\rho_{\mathcal{G}}}
C^{d_{\mathcal{G}, C}} \epsilon^{- d_{\mathcal{G}, \gamma}}.
\end{numcases}
\end{hypothesis}
Under Hypothesis~\ref{hypothesis:restriction-gamma-N-dim},
the combinatorial results dedicated to the margin Natarajan dimension
(Lemmas~\ref{lemma:extended-Lemma-3.5-in-AloBenCesHau97-gamma-N-dim} and
\ref{lemma:extended-Theorem-1-in-MenVer03-gamma-N-dim})
give birth to the following bounds on the metric entropies.

\begin{theorem}
Let $\mathcal{G}$ be a function class
satisfying Hypothesis~\ref{hypothesis:restriction-gamma-N-dim}.
For $\gamma \in \left ( 0, 1 \right ]$,
let $\rho_{\mathcal{G}, \gamma}$ be the function class deduced
from $\mathcal{G}$ according to
Definition~\ref{def:class-of-regret-functions}.
For $\epsilon \in \left ( 0, \gamma \right ]$ and
$n \in \mathbb{N}^*$ such that
$n \geqslant K_{\rho_{\mathcal{G}}}
C^{d_{\mathcal{G}, C}} 
\left ( \frac{4}{\epsilon} \right )^{d_{\mathcal{G}, \gamma}}$,
\begin{equation}
\label{eq:metric-entropy-uniform-convergnece-norm}
\log_2 \left ( \mathcal{N}_{\infty}^{\text{int}} \left ( \epsilon,
\rho_{\mathcal{G}, \gamma}, n \right ) \right )
\leqslant
K_{\rho_{\mathcal{G}}} C^{d_{\mathcal{G}, C}}
\log_2^2 \left ( \frac{6 \gamma \left ( C-1 \right ) n}{\epsilon} 
\right ) \left ( \frac{4}{\epsilon} \right )^{d_{\mathcal{G}, \gamma}}.
\end{equation}
For
$\epsilon \in \left ( 0, \gamma \right ]$ and
$n \in \mathbb{N}^*$,
\begin{equation}
\label{eq:metric-entropy-L_2-norm}
\log_2 \left ( \mathcal{N}_2^{\text{int}} \left ( \epsilon,
\rho_{\mathcal{G}, \gamma}, n \right ) \right )
\leqslant
\frac{3}{2} K_{\rho_{\mathcal{G}}} C^{d_{\mathcal{G}, C}}
\log_2^2 \left ( \left ( C - 1 \right ) \left ( \frac{4 \gamma}{\epsilon} 
\right )^5 \right )
\left ( \frac{28}{\epsilon}
\right )^{d_{\mathcal{G}, \gamma}}.
\end{equation}
\end{theorem}
As expected, the main difference between those two bounds 
regarding the dependence on the basic parameters is that the second one
is dimension free (does not depend on the number $n$ of points).
They are significantly better than the bounds obtained
with the margin Graph dimension
(and thus the fat-shattering dimension of $\rho_{\mathcal{G}}$).
In the latter sequence of transitions, the limiting factor is obviously
the structural result:
Lemma~\ref{lemma:from-gamma-dimension-to-gamma-dimensions}.
The possibility to sharpen it for specific classifiers remains
an open question.
A partial conclusion emerges:
as soon as some features of the classifier of interest can be exploited,
then the best guaranteed risks involving 
a scale-sensitive combinatorial dimension of $\rho_{\mathcal{G}}$
(associated with a blue path
of Figure~\ref{figure:new-path-of-transitions-between-f_i-and-f_f}),
are obtained with the margin Natarajan dimension
(rather than the fat-shattering dimension or the margin Graph dimension).
With this observation at hand, the last promising alternative is
the use of the structural result involving covering numbers
(Lemma~\ref{lemma:from-multivariate-to-univariate-L_p}),
in conjunction with the state-of-the-art combinatorial results
(applied to the classes $\mathcal{G}_k$), and finally
Formula~\eqref{eq:gamma-dim-bound}.
This corresponds to the red paths in
Figure~\ref{figure:graph-of-transitions-between-f_i-and-f_f}.
The next section is devoted to the comparison.
To make it more concrete, we use as touchstones the functions $F_i$
corresponding to the state-of-the art basic supremum inequalities
associated with the two $L_p$-norms favored in this study.

\subsection{Comparative Study - Uniform Convergence Norm}
\label{sec:guaranteed-risk-L_infty-norm}
To the best of our knowledge, the sharpest instance
of Inequality~\eqref{eq:basic-supremum-inequality} 
involving the $L_{\infty}$-norm is
Formula~(20) in \citet{Gue17}.
It is a multi-class extension of Lemma~4 in \citet{Bar98},
with the first symmetrization being derived from the basic lemma
of Section~4.5.1 in \citet{Vap98}.
This bound corresponds to the following choices:
$L_* = L$ and
$\phi_{\gamma} \left ( t \right ) 
= \mathds{1}_{\left \{ t < \gamma \right \}}$, and produces:
$$
F_i \left ( m, \gamma, \delta, 
\text{cap} \left ( \rho_{\mathcal{G}, \gamma} \right ) \right )
= \sqrt{ \frac{2}{m} \left ( \ln \left ( 
\mathcal{N}_{\infty}^{\text{int}} \left ( \frac{\gamma}{2},
\rho_{\mathcal{G}, \gamma},  2m \right ) \right )
+ \ln \left ( \frac{2}{\delta} \right )
\right )} + \frac{1}{m}.
$$
In that case, \eqref{eq:metric-entropy-uniform-convergnece-norm} becomes for
$m \geqslant \frac{1}{2} K_{\rho_{\mathcal{G}}}
C^{d_{\mathcal{G}, C}} 
\left ( \frac{8}{\gamma} \right )^{d_{\mathcal{G}, \gamma}}$:
\begin{equation}
\label{eq:New-bound-on-metric-entropy-infty-norm}
\log_2 \left ( \mathcal{N}_{\infty}^{\text{int}} \left ( \frac{\gamma}{2},
\rho_{\mathcal{G}, \gamma}, 2m \right ) \right )
\leqslant K_{\rho_{\mathcal{G}}} C^{d_{\mathcal{G}, C}}
\log_2^2 \left ( 24 \left ( C-1 \right ) m \right )
\left ( \frac{8}{\gamma} \right )^{d_{\mathcal{G}, \gamma}}.
\end{equation}
On the other hand, applying the decomposition with covering numbers
(Lemma~\ref{lemma:from-multivariate-to-univariate-L_p})
and Lemma~3.5 in \citet{AloBenCesHau97} as combinatorial result yields for
$m \geqslant \frac{1}{2}
K_{\mathcal{G}_0} \left ( \frac{8}{\gamma} \right )^{d_{\mathcal{G}, \gamma}}$:
\begin{equation}
\label{eq:Old-bound-on-metric-entropy-infty-norm}
\log_2 \left ( \mathcal{N}_{\infty}^{\text{int}} \left ( \frac{\gamma}{2},
\rho_{\mathcal{G}, \gamma}, 2m \right ) \right )
\leqslant C \left \{ 1 + K_{\mathcal{G}_0} \log_2^2 
\left ( \frac{128 M_{\mathcal{G}}^2 m}{\gamma^2} \right ) 
\left ( \frac{8}{\gamma} \right )^{d_{\mathcal{G}, \gamma}} \right \}.
\end{equation}
Thus, the functions $F_f$ associated with
Inequalities~\eqref{eq:New-bound-on-metric-entropy-infty-norm}
and \eqref{eq:Old-bound-on-metric-entropy-infty-norm}
exhibit the same dependence on $m$
(a $O\left ( \frac{\ln \left ( m \right )}{\sqrt{m}} \right )$).
As for the dependence on $\gamma$, the new formula induces
a gain of a factor $\ln \left ( \gamma^{-1} \right )$.
The dependence on $C$ will also be improved for
$d_{\mathcal{G}, C} < 1$ (the computations could take into account
a strong coupling between the outputs).

\subsection{Comparative Study - $L_2$-norm}
Turning to the case of the $L_2$-norm,
the best instance of Inequality~\eqref{eq:basic-supremum-inequality}
involves the Rademacher complexity as capacity measure.
It is a partial result in the proof of Theorem~8.1 in \citet{MohRosTal12}
(with $\rho_{\mathcal{G}}$ replaced with $\rho_{\mathcal{G}, \gamma}$).
Its margin loss function is given by
$\phi_{\gamma} \left ( t \right ) 
= \mathds{1}_{\left \{ t \leqslant 0 \right \}}
+ \left ( 1 - \frac{t}{\gamma} \right )
\mathds{1}_{\left \{ t \in \left ( 0, \gamma \right ] \right \}}$
(parameterized truncated hinge loss)
and $L_* = L_{\gamma}$. The analytical expression of function $F_i$ is:
\begin{equation}
\label{eq:basic-supremum-bound-L_2-norm}
F_i \left ( m, \gamma, \delta, 
\text{cap} \left ( \rho_{\mathcal{G}, \gamma} \right ) \right )
= \frac{2}{\gamma}
R_m \left ( \rho_{\mathcal{G}, \gamma} \right )
+ \sqrt{\frac{\ln \left ( \frac{1}{\delta} \right ) }{2m}}.
\end{equation}
In accordance with the graph of the transitions
(Figures~\ref{figure:graph-of-transitions-between-f_i-and-f_f}~and
\ref{figure:new-path-of-transitions-between-f_i-and-f_f}),
the Rademacher complexity is upper bounded as a function of the metric entropy
by means of Dudley's chaining method.
We use the following formula, whose degrees of freedom can be exploited
to optimize the dependence on the basic parameters.

\begin{theorem}[Theorem~9 in \citealp{Gue17}]
\label{theorem:Dudley's-metric-entropy-bound}
Let $\mathcal{F}$ be a class of bounded
real-valued functions on $\mathcal{T}$.
For $n \in \mathbb{N}^*$, let
$\mathbf{t}_n \in \mathcal{T}^n$
and let $\text{diam} \left ( \mathcal{F} \right )$
be the diameter of $\mathcal{F}$ with respect to
the pseudo-metric $d_{2, \mathbf{t}_n}$.
Let $h$ be a positive and decreasing function on $\mathbb{N}$ such that
$h \left ( 0 \right ) \geqslant \text{diam} \left ( \mathcal{F} \right )$.
Then for $N \in \mathbb{N}^*$,
\begin{equation}
\label{eq:generalized-Dudley-metric-entropy-bound}
\hat{R}_n \left ( \mathcal{F} \right ) \leqslant h \left ( N \right )
+ 2 \sum_{j=1}^N \left ( h \left ( j \right ) + h \left ( j-1 \right ) \right )
\sqrt{\frac{\ln \left (
\mathcal{N}^{\text{int}} \left ( h \left ( j \right ), \mathcal{F},
d_{2, \mathbf{t}_n} \right ) \right )}{n}}.
\end{equation}
\end{theorem}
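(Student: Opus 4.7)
The plan is to run a classical Dudley chaining argument with the free scale sequence $h(j)$ playing the role the dyadic scales play in the traditional formulation. For each $j \in \llbracket 0; N \rrbracket$, I fix a proper $h(j)$-cover $\mathcal{F}_j \subset \mathcal{F}$ of minimum cardinality with respect to $d_{2, \mathbf{t}_n}$, so that $|\mathcal{F}_j| = \mathcal{N}^{\text{int}}(h(j), \mathcal{F}, d_{2, \mathbf{t}_n})$. Since $h(0) \geqslant \text{diam}(\mathcal{F})$, I may take $\mathcal{F}_0 = \{f_0\}$ for some fixed $f_0 \in \mathcal{F}$. For each $f \in \mathcal{F}$ and each $j$, I choose $\pi_j(f) \in \mathcal{F}_j$ with $d_{2, \mathbf{t}_n}(f, \pi_j(f)) \leqslant h(j)$; in particular $\pi_0(f) = f_0$ for all $f$.

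Next, I use the telescoping identity $f - f_0 = (f - \pi_N(f)) + \sum_{j=1}^N (\pi_j(f) - \pi_{j-1}(f))$, distribute the supremum over $f$ across the sum inside the definition of $\hat{R}_n(\mathcal{F})$, and take expectation over $\boldsymbol{\sigma}_n$. The contribution of $f_0$ vanishes since $\mathbb{E}[\sigma_i] = 0$ and $f_0$ does not depend on $f$. The residual term is controlled deterministically by Cauchy-Schwarz:
\[
\frac{1}{n} \sum_{i=1}^n \sigma_i \bigl( f(t_i) - \pi_N(f)(t_i) \bigr)
\leqslant \frac{\|\boldsymbol{\sigma}_n\|_2}{\sqrt{n}} \, d_{2, \mathbf{t}_n}\bigl(f, \pi_N(f)\bigr)
\leqslant h(N),
\]
uniformly in $f$ and $\boldsymbol{\sigma}_n$, and this survives taking the expectation and the supremum.

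For each level $j \in \llbracket 1; N \rrbracket$, the increments $\pi_j(f) - \pi_{j-1}(f)$ range over a finite set of cardinality at most $|\mathcal{F}_j| \cdot |\mathcal{F}_{j-1}| \leqslant |\mathcal{F}_j|^2$ (using that $h$ is decreasing, so covers only grow), and each such increment has $d_{2, \mathbf{t}_n}$-norm at most $h(j) + h(j-1)$ by the triangle inequality. Applying Massart's finite class lemma to the associated vectors in $\mathbb{R}^n$ (whose Euclidean norms are at most $\sqrt{n}(h(j) + h(j-1))$) yields
\[
\mathbb{E}_{\boldsymbol{\sigma}_n} \sup_{f \in \mathcal{F}} \frac{1}{n} \sum_{i=1}^n \sigma_i \bigl(\pi_j(f)(t_i) - \pi_{j-1}(f)(t_i)\bigr)
\leqslant 2 \bigl( h(j) + h(j-1) \bigr) \sqrt{\frac{\ln |\mathcal{F}_j|}{n}},
\]
where the factor $2$ tracks the combination of Massart's $\sqrt{2\ln|A|}$ with $|A| \leqslant |\mathcal{F}_j|^2$. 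Summing these $N$ bounds and adding the residual $h(N)$ yields \eqref{eq:generalized-Dudley-metric-entropy-bound}.

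There is no substantial obstacle here; the only points requiring care are (i) the use of \emph{proper} (internal) covers, which is precisely what lets the triangle inequality produce the $h(j) + h(j-1)$ factor while the increment count remains bounded by $|\mathcal{F}_j|^2$, and (ii) tracking the constant $2$ through Massart's inequality so that it matches the statement. The identity $\pi_0(f) = f_0$, enabled by the hypothesis $h(0) \geqslant \text{diam}(\mathcal{F})$, is what makes the tail of the chain terminate cleanly.
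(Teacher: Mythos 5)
Your argument is the standard Dudley chaining proof — proper covers at the scales $h(j)$, telescoping through the projection maps, a Cauchy–Schwarz bound for the residual, and Massart's finite class lemma applied to the increment vectors at each level — and it is correct: the $|A_j| \leqslant |\mathcal{F}_j|^2$ bound on the number of increments (valid because $h$ is decreasing, so $|\mathcal{F}_{j-1}| \leqslant |\mathcal{F}_j|$) turns Massart's $\sqrt{2\ln|A_j|}$ into $2\sqrt{\ln|\mathcal{F}_j|}$, the triangle inequality via the proper (internal) covers gives the $h(j)+h(j-1)$ radius, and the hypothesis $h(0) \geqslant \text{diam}(\mathcal{F})$ collapses the chain's tail. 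The paper states this theorem as a citation (Theorem~9 in \citealp{Gue17}) without reproducing a proof, but your derivation is exactly the classical chaining argument that the cited source uses.
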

To upper bound the metric entropy above, the formula of reference (obtained by
combining the structural result dedicated to covering numbers
with the combinatorial result of \citet{MenVer03}) is:
\begin{equation}
\label{eq:Old-bound-on-metric-entropy}
\forall \epsilon \in \left ( 0, \gamma \right ], \;\;
\log_2 \left ( \mathcal{N}_2^{\text{int}} \left ( \epsilon,
\rho_{\mathcal{G}, \gamma}, n \right ) \right )
\leqslant
20 K_{\mathcal{G}_0} C
\log_2 \left ( \frac{12 M_{\mathcal{G}} \sqrt{C}}{\epsilon} \right )
\left ( \frac{48 \sqrt{C}}{\epsilon} \right )^{d_{\mathcal{G}, \gamma}}.
\end{equation}
Using instead our new bound,
Inequality \eqref{eq:metric-entropy-L_2-norm},
a substitution into
\eqref{eq:generalized-Dudley-metric-entropy-bound} gives:
\begin{equation}
\label{eq:New-bound-on-Rademacher-complexity}
R_m \left ( \rho_{\mathcal{G}, \gamma} \right )
\leqslant
h \left ( N \right )
+ 4 \sqrt{\frac{F_1 \left ( C \right )}{m}}
\sum_{j \in \mathcal{J}}
\frac{h \left ( j \right ) + h \left ( j-1 \right )}
{h \left ( j
\right )^{\frac{d_{\mathcal{G}, \gamma}}{2}}}
\ln \left ( \left ( C-1 \right ) 
\left ( \frac{4 \gamma}{h \left ( j \right )} \right )^5 \right )
\end{equation}
where
\begin{equation}
\label{eq:function-F_1}
F_1 \left ( C \right ) =
{28}^{d_{\mathcal{G}, \gamma}} K_{\rho_{\mathcal{G}}}
C^{d_{\mathcal{G}, C}},
\end{equation}
with $\mathcal{J} = \left \{ j \in \llbracket 1; N \rrbracket:
\; h \left ( j \right )
\leqslant \gamma \right \}$.
With the last formula at hand, the derivation of the confidence interval
amounts to studying the phase transitions highlighted
by Theorem~18 in \citet{Men03}.

\begin{theorem}
\label{theorem:dependence-on-m-C-gamma-L_2-norm}
Let $\mathcal{G}$ be a function class
satisfying Hypothesis~\ref{hypothesis:restriction-gamma-N-dim}.
For $\gamma \in \left ( 0, 1 \right ]$,
let $\rho_{\mathcal{G}, \gamma}$ be the function class deduced
from $\mathcal{G}$ according to
Definition~\ref{def:class-of-regret-functions}.

\noindent If $d_{\mathcal{G}, \gamma} \in \left ( 0, 2 \right )$, then

$$
R_m \left ( \rho_{\mathcal{G}, \gamma} \right )
\leqslant
4 \left ( 1 + 2^{\frac{2}{2 - d_{\mathcal{G}, \gamma}}} \right )
\sqrt{\frac{F_1 \left ( C \right )}{m}} F_2 \left ( C \right )
\gamma^{1 - \frac{d_{\mathcal{G}, \gamma}}{2}},
$$
where
$F_1 \left ( C \right )$ is given by Equation~\eqref{eq:function-F_1} and
$F_2 \left ( C \right ) = \ln \left ( \left ( C-1 \right ) 4^5 \right ) +
10 \frac{1 +\ln \left ( 2 \right )}{2 - d_{\mathcal{G}, \gamma}}$.

\noindent If $d_{\mathcal{G}, \gamma} = 2$, then
$$
R_m \left ( \rho_{\mathcal{G}, \gamma} \right )
\leqslant \gamma \frac{\log_2 \left ( m \right )}{\sqrt{m}}
+ 12 \sqrt{\frac{F_1 \left ( C \right )}{m}}
\left \lceil \log_2 \left ( 
\frac{\sqrt{m}}{\log_2 \left ( m \right )} \right ) \right \rceil
\left \{
\ln \left ( \left ( C-1 \right ) 4^5 \right ) 
+ \frac{5}{2} \ln \left (
4 \frac{\sqrt{m}}{\log_2 \left ( m \right )} \right ) \right \}.
$$
At last, if $d_{\mathcal{G}, \gamma} > 2$, then
$$
R_m \left ( \rho_{\mathcal{G}, \gamma} \right )
\leqslant
\gamma \left ( \frac{\log_2 \left ( m \right )}{m} \right )^{\frac{1}
{d_{\mathcal{G}, \gamma}}}
$$
$$
\times
\left [ 1 + 8 \left ( 1 + 2^{\frac{2}{d_{\mathcal{G}, \gamma}-2}} \right ) 
\left ( \frac{1}{\gamma} \right )^{\frac{d_{\mathcal{G}, \gamma}}{2}}
\sqrt{\frac{F_1 \left ( C \right )}{ \log_2 \left ( m \right )}}
\ln \left ( \left ( C-1 \right ) 
\left ( 4 \left ( \frac{m}{\log_2 \left ( m \right )} 
\right )^{\frac{1}{d_{\mathcal{G}, \gamma}}}
\right )^5 \right ) \right ].
$$
\end{theorem}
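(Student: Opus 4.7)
The plan is to apply Theorem~\ref{theorem:Dudley's-metric-entropy-bound} to $\mathcal{F} = \rho_{\mathcal{G}, \gamma}$ via Inequality~\eqref{eq:New-bound-on-Rademacher-complexity}, with the dyadic choice $h(j) = \gamma 2^{-j}$, and then to split into three regimes according to the sign of $p := 1 - d_{\mathcal{G}, \gamma}/2$. Since every $\rho_{g, \gamma}$ takes values in $\left [ 0, \gamma \right ]$, the $d_{2, \mathbf{t}_n}$-diameter of $\rho_{\mathcal{G}, \gamma}$ is at most $\gamma$, so $h(0) = \gamma$ is admissible. For this choice, every $j \in \llbracket 1; N \rrbracket$ satisfies $h(j) \leqslant \gamma$, hence $\mathcal{J} = \llbracket 1; N \rrbracket$; moreover $h(j-1) + h(j) = 3 h(j)$ and $6 \gamma / h(j) = 6 \cdot 2^j$. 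Substitution into~\eqref{eq:New-bound-on-Rademacher-complexity} yields
$$
R_m \left ( \rho_{\mathcal{G}, \gamma} \right )
\leqslant \gamma 2^{-N} + 15 \gamma^{p} \sqrt{\frac{F_1(C)}{m}} \sum_{j=1}^N 2^{-jp} \left [ \ln(C-1) + 5 \ln 6 + 5 j \ln 2 \right ],
$$
and the three cases correspond to choosing $N$ differently (including $N \to \infty$) depending on whether $p$ is positive, zero, or negative.

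If $d_{\mathcal{G}, \gamma} \in (0, 2)$, then $p > 0$ and the series converges; letting $N \to \infty$ kills $\gamma 2^{-N}$. Evaluating $\sum_{j \geqslant 1} r^j = r/(1-r)$ and $\sum_{j \geqslant 1} j r^j = r/(1-r)^2$ at $r = 2^{-p}$, then majorizing $1/(2^p - 1)$ and $2^p/(2^p-1)^2$ by quantities involving $1/(2 - d_{\mathcal{G}, \gamma})$ and $1 + 2^{2/(2-d_{\mathcal{G}, \gamma})}$ (the former via $2^p - 1 \geqslant p \ln 2$, the latter via the bound on $1/(1-2^{-p})$ by a multiple of $1 + 2^{1/p}$), produces the announced bound $10 \left ( 1 + 2^{2/(2-d_{\mathcal{G}, \gamma})} \right ) \sqrt{F_1(C)/m} \, F_2(C) \, \gamma^{1-d_{\mathcal{G}, \gamma}/2}$ after collecting terms.

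If $d_{\mathcal{G}, \gamma} = 2$, then $p = 0$ and the sum reduces to $\sum_{j=1}^N [\ln(C-1) + 5\ln 6 + 5 j \ln 2] \leqslant N \ln((C-1)(6 \cdot 2^N)^5)$. Choosing $N = \lceil \log_2 ( \sqrt{m} / \log_2(m) ) \rceil$ yields $\gamma 2^{-N} \leqslant \gamma \log_2(m)/\sqrt{m}$ and $6 \cdot 2^N \leqslant 6 \sqrt{m}/\log_2(m)$, exactly the two explicit terms of the stated bound. If $d_{\mathcal{G}, \gamma} > 2$, then $p < 0$, the sum $\sum_{j=1}^N 2^{-jp}[\cdots]$ is a geometric-type sum of common ratio $2^{-p} > 1$ dominated by its last term, and its geometric prefactor $1/(1 - 2^{p})$ is bounded by $1 + 2^{2/(d_{\mathcal{G}, \gamma} - 2)}$. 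Choosing $N = \lceil (1/d_{\mathcal{G}, \gamma}) \log_2 ( m / \log_2(m) ) \rceil$ gives $\gamma 2^{-N} \leqslant \gamma (\log_2(m)/m)^{1/d_{\mathcal{G}, \gamma}}$; then plugging $6 \cdot 2^N \leqslant 6 (m/\log_2(m))^{1/d_{\mathcal{G}, \gamma}}$ into the logarithm and using the identity $h(N)^{1 - d_{\mathcal{G}, \gamma}/2}/\sqrt{m} = \gamma^{1 - d_{\mathcal{G}, \gamma}/2} (\log_2(m)/m)^{1/d_{\mathcal{G}, \gamma}}/\sqrt{\log_2(m)}$ (a direct consequence of this choice of $N$) and factoring out $\gamma (\log_2(m)/m)^{1/d_{\mathcal{G}, \gamma}}$ produces the announced inequality.

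The main obstacle is the bookkeeping of constants rather than any conceptual difficulty. In Case~1, the two contributions coming from $r/(1-r)$ and $r/(1-r)^2$ must be absorbed inside a single $10 (1 + 2^{2/(2-d_{\mathcal{G}, \gamma})}) F_2(C)$ envelope, which requires carefully balancing the $1/(2-d_{\mathcal{G}, \gamma})$ term against $1 + 2^{2/(2-d_{\mathcal{G}, \gamma})}$. In Cases~2--3 the optimal $N$ comes with a ceiling and one has to verify that the geometric estimates and the residual $\gamma 2^{-N}$ survive the rounding; in Case~3 the identity linking $h(N)^{1-d_{\mathcal{G}, \gamma}/2}/\sqrt{m}$ to $h(N)/(\gamma^{d_{\mathcal{G}, \gamma}/2} \sqrt{\log_2(m)})$ is the key simplification making visible the factor $\gamma^{-d_{\mathcal{G}, \gamma}/2}/\sqrt{\log_2(m)}$ appearing in the stated bound.
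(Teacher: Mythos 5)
Your overall strategy matches the paper's (apply Theorem~\ref{theorem:Dudley's-metric-entropy-bound} via Inequality~\eqref{eq:New-bound-on-Rademacher-complexity}, choose $h$, split by the sign of $p := 1 - d_{\mathcal{G}, \gamma}/2$), but the uniform dyadic choice $h(j) = \gamma 2^{-j}$ is where your argument breaks down; the paper uses a \emph{case-dependent} $h$ that is essential for the stated constants.

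In Case~1 the paper sets $h(j) = \gamma 2^{-j/p}$ with $p = 1 - d_{\mathcal{G}, \gamma}/2 = (2-d_{\mathcal{G}, \gamma})/2$. With that choice, $\bigl(h(j)+h(j-1)\bigr)/h(j)^{d_{\mathcal{G}, \gamma}/2} = \gamma^p (1+2^{1/p}) 2^{-j}$: the decay rate after normalization is exactly $2^{-j}$, so $\sum_{j\geqslant 1} 2^{-j} = 1$ produces coefficient $5(1+2^{1/p})$ on $\ln(C-1)$, matching $10(1+2^{1/p})\cdot\tfrac{1}{2}\ln(C-1)$ in the target; the $j$-dependent part is then handled by the sum-to-integral comparison giving $F_2(C)$. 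With your choice the normalized prefactor is $3\gamma^p 2^{-jp}$, so $\sum_{j\geqslant 1}2^{-jp} = 1/(2^p-1)$ multiplies $\ln(C-1)$ with coefficient $15/(2^p-1)$. But $15/(2^p-1) > 5(1+2^{1/p})$ for all $p$ roughly in $(0.3,1)$, i.e., $d_{\mathcal{G},\gamma}$ roughly in $(0, 1.4)$ (e.g., $p=1/2$: $15/(\sqrt{2}-1)\approx 36.2 > 25$). Since $\ln(C-1)$ dominates for large $C$, no amount of "careful balancing" of the constant term can rescue the claimed inequality uniformly in $C$ for that range of $d_{\mathcal{G},\gamma}$. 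The bound $2^p - 1 \geqslant p\ln 2$ you invoke points the wrong way — it only worsens matters.

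In Cases~2 and~3 there is a second, independent gap: the ceiling is in the wrong direction. Setting $N = \lceil \log_2(\sqrt{m}/\log_2 m)\rceil$ gives $2^N \geqslant \sqrt{m}/\log_2 m$, hence $6\cdot 2^N \geqslant 6\sqrt{m}/\log_2 m$, not $\leqslant$ as you claim; generically $6\cdot 2^N$ can be up to $12\sqrt{m}/\log_2 m$, yielding a log term strictly larger than the one announced. The same issue recurs in Case~3 with $6\cdot 2^N$ versus $6(m/\log_2 m)^{1/d_{\mathcal{G},\gamma}}$. The paper avoids this by \emph{anchoring} $h$ at its small end: in Case~2 it sets $h(j)= \gamma\frac{\log_2 m}{\sqrt m} 2^{N-j}$, so $h(N)$ equals $\gamma\log_2 m/\sqrt m$ exactly and $6\gamma/h(j)\leqslant 6\sqrt m/\log_2 m$ for every $j\leqslant N$ (the ceiling is only used to guarantee $h(0)\geqslant\gamma$, which is the direction that works). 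Case~3 uses the analogous anchoring $h(j) = \gamma(\log_2 m/m)^{1/d_{\mathcal{G},\gamma}} 2^{\frac{2}{d_{\mathcal{G},\gamma}-2}(N-j)}$ with $N = \lceil\frac{d_{\mathcal{G},\gamma}-2}{2d_{\mathcal{G},\gamma}}\log_2(m/\log_2 m)\rceil$, which simultaneously fixes $h(N)$ at the target value, makes $h(j)^p$ decay as $2^{j-N}$ exactly (giving the geometric sum $\leqslant 2$), and keeps $6\gamma/h(j)$ bounded by $6(m/\log_2 m)^{1/d_{\mathcal{G},\gamma}}$ for all $j\leqslant N$. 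In short, the three tuned choices of $h$ are the substance of this proof; with the generic $h(j)=\gamma 2^{-j}$ the stated constants are not attained.
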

Comparing Theorem~\ref{theorem:dependence-on-m-C-gamma-L_2-norm}
with the result based on \eqref{eq:Old-bound-on-metric-entropy}:
Theorem~7 in \citet{Gue17} \citep[see also Theorem~3 in][]{MusLauGue19}
produces the following observations.
The growth of $F_f$
with the inverse of the margin parameter $\gamma$ is now a
$O\left ( \gamma^{-\frac{d_{\mathcal{G}, \gamma}}{2}} \right )$,
whereas the previous dependence was a
$O\left ( \gamma^{-\frac{d_{\mathcal{G}, \gamma}}{2}}
\sqrt{\ln \left ( \gamma^{-1} \right )} \right )$.
Regarding the dependence on the number $C$ of categories,
it is now a 
$O\left ( C^{\frac{d_{\mathcal{G}, C}}{2}} \ln \left ( C \right ) \right )$,
implying that it is always sublinear except when $d_{\mathcal{G}, C}$
takes its maximum value $2$, i.e.,
when no coupling between the outputs can be exploited.
The only prize to pay occurs for $d_{\mathcal{G}, \gamma} \geqslant 2$
(complex classifiers).
Then, the dependence on the sample size $m$ increases
by a factor $\sqrt{\ln \left ( m \right )}$.

\section{Conclusions}
\label{sec:conclusions}
We have established that the guaranteed risks involving
the fat-shattering dimension of the class of margin functions
$\rho_{\mathcal{G}}$
can always be simply improved by replacing this dimension with
the margin Graph dimension of the same class
(Lemmas~\ref{lemma:extended-Lemma-3.5-in-AloBenCesHau97-gamma-G-dim-optimized}
and \ref{lemma:extended-Theorem-1-in-MenVer03}).
Currently, the gain is limited by the lack of malleability
of the corresponding structural result:
Lemma~\ref{lemma:from-gamma-dimension-to-gamma-dimensions}.
Fortunately, the use of another $\gamma$-$\Psi$-dimension, the margin Natarajan
dimension, makes it possible to exploit basic features of the classifier
of interest 
(Formula~\eqref{eq:from-gamma-N-dimension-to-gamma-dimension-Banach} and
Lemma~\ref{lemma:gamma-N-dimension-of-M-SVMs}).
The major consequence
is an improved dependence of the confidence
interval on the margin parameter $\gamma$. This holds true both
with the $L_{\infty}$-norm
(Inequality~\eqref{eq:New-bound-on-metric-entropy-infty-norm})
and the $L_2$-norm
(Theorem~\ref{theorem:dependence-on-m-C-gamma-L_2-norm}).
As soon as it is possible to
take into account the coupling between the component functions
of the classifier, i.e., for $d_{\mathcal{G}, C} < 2$, 
the dependence on the number $C$ of categories becomes sublinear.
The only drawback is that the convergence rate of the $L_2$-norm bound
can be worsened by a factor $\sqrt{\ln \left ( m \right )}$ when
the underlying binary classifiers are complex (large
values of $d_{\mathcal{G}, \gamma}$).
The phenomenon is a direct consequence of 
the appearance of the logarithmic function of $\epsilon^{-1}$ in the exponent of
Inequality~\eqref{eq:extended-Theorem-1-in-MenVer03-gamma-N-dim}
(compared to Inequality~\eqref{eq:extended-Theorem-1-in-MenVer03}).
Whether this term can be eliminated is the subject of an ongoing research.
Regarding the structural results, a promising idea for their improvement
is the one recently developed by Kontorovich
\citep[see for instance][]{Kon18}.

\acks{The author would like to thank R.~Vershynin for his explanations
on the proof of Theorem~1 in \citet{MenVer03}.
This work was partly funded by a CNRS research grant (PEPS).}

\bibliography{App}
\bibliographystyle{plainnat}

\appendix

\section{Proofs of the Basic Results on the 
\texorpdfstring{$\gamma$-$\Psi$-dimensions}{gamma-Psi-dimensions}}

This appendix gathers the proofs of the basic properties of the margin
Graph dimension and the margin Natarajan dimension.

\subsection{Margin Graph Dimension}

The proof of Proposition~\ref{prop:gamma-dimension-with-and-without-pi_gamma}
is the following one.

\begin{proof}
For $\gamma \in \left ( 0, 1 \right ]$ and
$\epsilon \in \left ( 0, \frac{\gamma}{2} \right ]$,
let $s_{\mathcal{T}^n} = \left \{ t_i: \;
1 \leqslant i \leqslant n \right \}$ be a subset of $\mathcal{T}$
$\epsilon$-shattered by the subset
$\left \{ f_{\gamma}^{\mathbf{s}_n} = \pi_{\gamma} \circ f^{\mathbf{s}_n}: \;
\mathbf{s}_n \in \left \{ -1, 1 \right \}^n \right \}$
of $\pi_{\gamma} \circ \mathcal{F}$ and let
$\mathbf{b}_n = \left ( b_i \right )_{1 \leqslant i \leqslant n}$
be a witness to this shattering. Obviously,
$\mathbf{b}_n \in \left [ \epsilon, \gamma-\epsilon \right ]^n$.
Consequently,
$$
f_{\gamma}^{\mathbf{s}_n} \left ( t_i \right ) - b_i \geqslant \epsilon
\Longrightarrow 
f_{\gamma}^{\mathbf{s}_n} \left ( t_i \right ) \geqslant 2 \epsilon > 0
\Longrightarrow
f^{\mathbf{s}_n} \left ( t_i \right ) \geqslant
f_{\gamma}^{\mathbf{s}_n} \left ( t_i \right )
\Longrightarrow
f^{\mathbf{s}_n} \left ( t_i \right ) - b_i \geqslant \epsilon.
$$
Similarly,
$$
-f_{\gamma}^{\mathbf{s}_n} \left ( t_i \right ) + b_i \geqslant \epsilon
\Longrightarrow
f_{\gamma}^{\mathbf{s}_n} \left ( t_i \right ) \leqslant \gamma - 2 \epsilon
< \gamma \Longrightarrow
f^{\mathbf{s}_n} \left ( t_i \right ) \leqslant
f_{\gamma}^{\mathbf{s}_n} \left ( t_i \right )
\Longrightarrow
-f^{\mathbf{s}_n} \left ( t_i \right ) + b_i \geqslant \epsilon.
$$
\end{proof}
The proof of Proposition~\ref{prop:ordering-on-the-dimensions}
is the following one.

\begin{proof}
For $\gamma \in \mathbb{R}_+^*$, let $s_{\mathcal{Z}^n} =
\left \{ z_i = \left ( x_i, y_i \right ): \;
1 \leqslant i \leqslant n \right \}$ be a subset of $\mathcal{Z}$
$\gamma$-N-shattered by
$\left \{ f_{\mathbf{s}_n}: \;
\mathbf{s}_n \in \left \{ -1, 1 \right \}^n \right \} \subset \mathcal{F}$
and let $\left ( \mathbf{b}_n, \mathbf{c}_n \right )$
be a witness to this shattering.
To prove the left-hand side inequality of 
Formula~\eqref{eq:ordering-on-the-dimensions},
it suffices to notice that for a given vector $\mathbf{s}_n$,
the function $f_{\mathbf{s}_n} \in \mathcal{F}$ satisfying
$$
\forall i \in \llbracket 1; n \rrbracket, \;\;
\begin{cases}
\text{if } s_i = 1, \;
f_{\mathbf{s}_n} \left ( x_i, y_i \right ) - b_i \geqslant \gamma \\
\text{if } s_i = -1, \;
f_{\mathbf{s}_n} \left ( x_i, c_i \right ) + b_i \geqslant \gamma
\end{cases}
$$
also satisfies
$$
\forall i \in \llbracket 1; n \rrbracket, \;\;
\begin{cases}
\text{if } s_i = 1, \;
f_{\mathbf{s}_n} \left ( x_i, y_i \right ) - b_i \geqslant \gamma \\
\text{if } s_i = -1, \;
\max_{k \neq y_i} f_{\mathbf{s}_n} \left ( x_i, k \right ) + b_i
\geqslant \gamma
\end{cases}.
$$
Keeping the notations above, proving
the right-hand side inequality of
Formula~\eqref{eq:ordering-on-the-dimensions}
boils down to establishing that
$\max_{k \neq y_i} f_{\mathbf{s}_n} \left ( x_i, k \right )
\leqslant -f_{\mathbf{s}_n} \left ( x_i, y_i \right )$.
Indeed,
\begin{align*}
\forall f \in \mathcal{F}, \; \forall x \in \mathcal{X}, \;
\max_{1 \leqslant k < l \leqslant C}
\left \{ f \left ( x, k \right ) + f \left ( x, l \right ) \right \} = 0
& \Longrightarrow \;
f_{\mathbf{s}_n} \left ( x_i, y_i \right )
+ \max_{k \neq y_i} f_{\mathbf{s}_n} \left ( x_i, k \right ) \leqslant 0 \\
& \Longrightarrow \;
\max_{k \neq y_i} f_{\mathbf{s}_n} \left ( x_i, k \right )
\leqslant -f_{\mathbf{s}_n} \left ( x_i, y_i \right ).
\end{align*}
\end{proof}
The proof of 
Proposition~\ref{prop:capacities-of-the-two-classes-of-margin-functions}
is the following one.

\begin{proof}
Let $g$ be any function in $\mathcal{G}$.
According to Definitions~\ref{def:class-of-margin-functions} and
\ref{def:new-class-of-margin-functions}, a necessary condition for
$\rho_g$ and $\tilde{\rho}_g$ to differ on $z \in \mathcal{Z}$ is that
$\rho_g \left ( z \right ) < 0$. In that case,
$\tilde{\rho}_g \left ( z \right ) \leqslant 0$, so that
for every $\gamma \in \mathbb{R}_+^*$,
$\rho_{g, \gamma} \left ( z \right ) 
= \tilde{\rho}_{g, \gamma} \left ( z \right ) = 0$.
Equation~\eqref{eq:unification-by-squashing} has been proved.
For $\gamma \in \mathbb{R}_+^*$, let $s_{\mathcal{Z}^n} =
\left \{ z_i = \left ( x_i, y_i \right ): \;
1 \leqslant i \leqslant n \right \}$ be a subset of $\mathcal{Z}$
$\gamma$-G-shattered by
$\left \{ \rho_{g^{\mathbf{s}_n}}: \;
\mathbf{s}_n \in \left \{ -1, 1 \right \}^n \right \} 
\subset \rho_{\mathcal{G}}$
and let $\mathbf{b}_n =  \left ( b_i \right )_{1 \leqslant i \leqslant n}
\in \mathbb{R}_+^n$
be a witness to this shattering.
$$
\rho_{g^{\mathbf{s}_n}} \left ( z_i \right ) - b_i \geqslant \gamma
\Longrightarrow \rho_{g^{\mathbf{s}_n}} \left ( z_i \right ) > 0
\Longrightarrow \tilde{\rho}_{g^{\mathbf{s}_n}} \left ( z_i \right ) =
\rho_{g^{\mathbf{s}_n}} \left ( z_i \right ).
$$
Furthermore,
$$
\max_{k \neq y_i} \rho_{g^{\mathbf{s}_n}} \left ( x_i, k \right ) =
- \tilde{\rho}_{g^{\mathbf{s}_n}} \left ( x_i, y_i \right ).
$$
To sum up,
$$
\forall i \in \llbracket 1; n \rrbracket, \;\;
\begin{cases}
\text{if } s_i = 1, \;
\rho_{g^{\mathbf{s}_n}} \left ( x_i, y_i \right ) - b_i \geqslant \gamma \\
\text{if } s_i = -1, \;
\max_{k \neq y_i} \rho_{g^{\mathbf{s}_n}} \left ( x_i, k \right ) + b_i
\geqslant \gamma
\end{cases}
$$
implies that
$$
\forall i \in \llbracket 1; n \rrbracket, \;\;
s_i \left (
\tilde{\rho}_{g^{\mathbf{s}_n}} \left ( x_i, y_i \right ) - b_i \right )
\geqslant \gamma.
$$
We have established that
$\gamma\text{-G-dim} \left ( \rho_{\mathcal{G}} \right )
\leqslant \gamma\text{-dim} \left ( \tilde{\rho}_{\mathcal{G}} \right )$.
The complementary inequality is obtained in the same way
(by making use of $\mathbf{b}_n \in \mathbb{R}_+^n$),
which concludes the proof of
Formula~\eqref{eq:identity-of-combinatorial-dimensions}.
\end{proof}

\subsection{Margin Natarajan Dimension}

The proof of
Lemma~\ref{lemma:from-margin-Graph-dimension-to-margin-Natarajan-dimension}
is the following one.

\begin{proof}
Let us set $d_G = \gamma\text{-G-dim} \left ( \mathcal{F} \right )$
and $d_N = \gamma\text{-N-dim} \left ( \mathcal{F} \right )$.
Formula~\eqref{eq:from-margin-Graph-dimension-to-margin-Natarajan-dimension}
is trivially true for $d_G \leqslant 1$. 
Thus, we prove it under the assumption that $d_G \geqslant 2$.
Let $\mathcal{F}_0$ be a subset of $\mathcal{F}$
of cardinality $2^{d_G}$ that $\gamma$-G-shatters a subset
$s_{\mathcal{Z}^{d_G}}$ of $\mathcal{Z}$ of cardinality $d_G$.
For notational simplicity, we set
$s_{\mathcal{Z}^{d_G}} 
= \left \{ z_i: 1 \leqslant i \leqslant d_G \right \}$.
Let the vector $\mathbf{b}_{d_G}
=  \left ( b_i \right )_{1 \leqslant i \leqslant d_G}
\in \mathbb{R}_+^{d_G}$
be a witness to this shattering.
For $\bar{\mathcal{F}} \subset \mathcal{F}_0$
satisfying $\left | \bar{\mathcal{F}} \right | \geqslant 2$, let
$\mathcal{S} \left ( \bar{\mathcal{F}} \right )$ be the set of the subsets
$s^{\left ( 1 \right )}_{\mathcal{Z}^r} 
= \left \{ z^{\left ( 1 \right )}_i = 
\left ( x^{\left ( 1 \right )}_i, y^{\left ( 1 \right )}_i \right ): \;
1 \leqslant i \leqslant r \right \}$
of $s_{\mathcal{Z}^{d_G}}$ such that:
$$
s^{\left ( 1 \right )}_{\mathcal{Z}^r}
\in \mathcal{S} \left ( \bar{\mathcal{F}} \right )
\Longleftrightarrow
\forall \left \{ f, f^{\prime} \right \} \subset \bar{\mathcal{F}}, \;
\exists z_i \in s^{\left ( 1 \right )}_{\mathcal{Z}^r}: \;
\begin{cases}
f \left ( z_i \right ) - b_i \geqslant \gamma \\
\max_{k \neq y_i} f^{\prime} \left ( x_i, k \right )
+ b_i \geqslant \gamma
\end{cases}
\text{ or vice versa.}
$$
The meaning of the formula $z_i \in s^{\left ( 1 \right )}_{\mathcal{Z}^r}$ is
the obvious one, i.e., 
$\exists z^{\left ( 1 \right )}_j \in s^{\left ( 1 \right )}_{\mathcal{Z}^r}:
z_i = z^{\left ( 1 \right )}_j$.
Notice first that $s_{\mathcal{Z}^{d_G}}$ belongs
to all the sets $\mathcal{S} \left ( \bar{\mathcal{F}} \right )$.
For $\bar{\mathcal{F}} \subset \mathcal{F}_0$ 
satisfying $\left | \bar{\mathcal{F}} \right | \geqslant 2$ and
$s^{\left ( 1 \right )}_{\mathcal{Z}^r} 
\in \mathcal{S} \left ( \bar{\mathcal{F}} \right )$,
let $h \left ( \bar{\mathcal{F}}, 
s^{\left ( 1 \right )}_{\mathcal{Z}^r} \right )$
be the number of triplets
$\left ( s^{\left ( 2 \right )}_{\mathcal{Z}^u}, 
\mathbf{b}^{\left ( 2 \right )}_u, 
\mathbf{c}^{\left ( 2 \right )}_u \right )$ satisfying:
$$
\begin{cases}
s^{\left ( 2 \right )}_{\mathcal{Z}^u}
= \left \{ z^{\left ( 2 \right )}_i =
\left ( x^{\left ( 2 \right )}_i, y^{\left ( 2 \right )}_i \right ): \;
1 \leqslant i \leqslant u \right \}
\subset s^{\left ( 1 \right )}_{\mathcal{Z}^r} \\
\forall \left ( i, j \right ): 1 \leqslant i < j \leqslant u,
\left ( z^{\left ( 2 \right )}_i, z^{\left ( 2 \right )}_j \right )
= \left ( z_v, z_w \right )
\Longrightarrow 1 \leqslant v < w \leqslant d_G \\
\forall \left ( i, j \right ) \in \llbracket 1; u \rrbracket \times
\llbracket 1; d_G \rrbracket, \;
z^{\left ( 2 \right )}_i = z_j
\Longrightarrow b^{\left ( 2 \right )}_i = b_j \\
\forall i \in \llbracket 1; u \rrbracket, \;
c^{\left ( 2 \right )}_i
\in \mathcal{Y} \setminus \left \{ y^{\left ( 2 \right )}_i \right \}
\end{cases}
$$
which are $\gamma$-N-shattered by $\bar{\mathcal{F}}$.
The function $h$ exhibits two major properties which play a central part
in the derivation of the upper bound on $d_G$.

\paragraph{Inductive property}
Let $\bar{\mathcal{F}}$ and $s^{\left ( 1 \right )}_{\mathcal{Z}^r}$
be defined as above. By the pigeonhole principle,
there exist $z_{i_0} \in s^{\left ( 1 \right )}_{\mathcal{Z}^r}$,
a value $c_{i_0} \in \mathcal{Y} \setminus \left \{ y_{i_0} \right \}$
and two subsets $\bar{\mathcal{F}}_+$ and $\bar{\mathcal{F}}_-$
of $\bar{\mathcal{F}}$ of cardinalities
$\left | \bar{\mathcal{F}}_+ \right | \geqslant
\left \lceil \left \lfloor
\frac{\left | \bar{\mathcal{F}} \right |}{2} \right \rfloor
\frac{1}{r} \right \rceil$
and
$\left | \bar{\mathcal{F}}_- \right | \geqslant
\left \lceil \left \lceil \left \lfloor
\frac{\left | \bar{\mathcal{F}} \right |}{2} \right \rfloor
\frac{1}{r} \right \rceil \frac{1}{C-1} \right \rceil$
such that
$$
\begin{cases}
\forall f_+ \in \bar{\mathcal{F}}_+, \;\;
f_+ \left ( z_{i_0} \right ) - b_{i_0} \geqslant \gamma \\
\forall f_- \in \bar{\mathcal{F}}_-, \;\;
f_- \left ( x_{i_0}, c_{i_0} \right ) + b_{i_0} \geqslant \gamma
\end{cases}.
$$
Let $s^{\left ( 2 \right )}_{\mathcal{Z}^{r-1}} 
= s^{\left ( 1 \right )}_{\mathcal{Z}^r} \setminus \left \{ z_{i_0} \right \}$.
By construction, $s^{\left ( 2 \right )}_{\mathcal{Z}^{r-1}}
\in \mathcal{S} \left ( \bar{\mathcal{F}}_+ \right )
\bigcap \mathcal{S} \left ( \bar{\mathcal{F}}_- \right )$.
Clearly, $\bar{\mathcal{F}}$ $\gamma$-N-shatters all the triplets
$\left ( s^{\left ( 3 \right )}_{\mathcal{Z}^u}, 
\mathbf{b}^{\left ( 3 \right )}_u, \mathbf{c}^{\left ( 3 \right )}_u \right )$
with $s^{\left ( 3 \right )}_{\mathcal{Z}^u}
\subset s^{\left ( 2 \right )}_{\mathcal{Z}^{r-1}}$
$\gamma$-N-shattered by either $\bar{\mathcal{F}}_+$ or $\bar{\mathcal{F}}_-$
plus $\left ( \left \{ z_{i_0} \right \}, b_{i_0}, c_{i_0} \right )$.
Moreover, if the triplet
$\left ( s^{\left ( 3 \right )}_{\mathcal{Z}^u}, 
\mathbf{b}^{\left ( 3 \right )}_u, \mathbf{c}^{\left ( 3 \right )}_u \right )$
is $\gamma$-N-shattered by both $\bar{\mathcal{F}}_+$ and $\bar{\mathcal{F}}_-$,
then $\bar{\mathcal{F}}$ $\gamma$-N-shatters the triplet
$\left ( s^{\left ( 4 \right )}_{\mathcal{Z}^{u+1}}, 
\mathbf{b}^{\left ( 4 \right )}_{u+1}, 
\mathbf{c}^{\left ( 4 \right )}_{u+1} \right )$
deduced from
$\left ( s^{\left ( 3 \right )}_{\mathcal{Z}^u}, 
\mathbf{b}^{\left ( 3 \right )}_u, \mathbf{c}^{\left ( 3 \right )}_u \right )$
by inserting the components of
$\left ( \left \{ z_{i_0} \right \}, b_{i_0}, c_{i_0} \right )$
at the right place.
Since by construction, $s^{\left ( 4 \right )}_{\mathcal{Z}^{u+1}}
\subset s^{\left ( 1 \right )}_{\mathcal{Z}^r}$ and
$s^{\left ( 4 \right )}_{\mathcal{Z}^{u+1}}
\not\subset s^{\left ( 2 \right )}_{\mathcal{Z}^{r-1}}$, it follows that:
\begin{equation}
\label{eq:from-margin-Graph-dimension-to-margin-Natarajan-dimension-part-1}
h \left ( \bar{\mathcal{F}}, s^{\left ( 1 \right )}_{\mathcal{Z}^r} \right )
\geqslant 
h \left ( \bar{\mathcal{F}}_+, 
s^{\left ( 2 \right )}_{\mathcal{Z}^{r-1}} \right )
+ h \left ( \bar{\mathcal{F}}_-, 
s^{\left ( 2 \right )}_{\mathcal{Z}^{r-1}} \right )
+1.
\end{equation}

\paragraph{Termination property}
By definition of $\bar{\mathcal{F}}$, $s^{\left ( 1 \right )}_{\mathcal{Z}^r}$
and the Natarajan dimension with margin $\gamma$,
\begin{equation}
\label{eq:from-margin-Graph-dimension-to-margin-Natarajan-dimension-part-2}
h \left ( \bar{\mathcal{F}}, 
s^{\left ( 1 \right )}_{\mathcal{Z}^r} \right ) \geqslant 1.
\end{equation}

These two properties call for a reasoning based on a binary tree
whose root is $\mathcal{F}_{0,1} = \mathcal{F}_0$ and whose depth is $d_G$.
Its derivation utterly rests on the specification of two functions,
$z^*$ and $c^*$, respectively returning for each inner node
the example and the category defining its two sons.
For $i \in \left \llbracket 0; d_G \right \rrbracket$, the nodes
of the tree at depth $i$ are denoted $\mathcal{F}_{i,j}$.
If $\left | \mathcal{F}_{i,j} \right | \geqslant 2$ 
($\mathcal{F}_{i,j}$ is an inner node),
then its two sons are $\mathcal{F}_{i+1,2j-1}$ and $\mathcal{F}_{i+1,2j}$.
The values of $z^* \left ( 0, 1 \right )$ and $c^* \left ( 0, 1 \right )$
(defining the two sons, $\mathcal{F}_{1,1}$
and $\mathcal{F}_{1,2}$, of $\mathcal{F}_{0,1}$), are given by
$z^* \left ( 0, 1 \right ) = z_1$, so that
$$
\mathcal{F}_{1,1} = \left \{ f \in \mathcal{F}_{0,1}: \;
f \left ( z_1 \right ) - b_1 \geqslant \gamma \right \}.
$$
and
$$
c^* \left ( 0, 1 \right ) = \min \left \{
\argmax_{k \neq y_1} \left |
\left \{ f \in \mathcal{F}_{0,1} \setminus \mathcal{F}_{1,1}: 
\; f \left ( x_1, k \right ) = \max_{l \neq y_1} 
f \left ( x_1, l \right ) \right \} \right | \right \},
$$
so that
$$
\mathcal{F}_{1,2} = \left \{ f \in \mathcal{F}_{0,1}
\setminus \mathcal{F}_{1,1}: \;
f \left ( x_1, c^* \left ( 0, 1 \right ) \right ) = \max_{l \neq y_1} 
f \left ( x_1, l \right ) 
\right \}.
$$
Note that
$$
\forall f \in \mathcal{F}_{1,2}, \;
f \left ( x_1, c^* \left ( 0, 1 \right ) \right ) + b_1 \geqslant \gamma,
$$
$\left | \mathcal{F}_{1,1} \right | = 2^{d_G-1}$, and according to the
pigeonhole principle,
$\left | \mathcal{F}_{1,2} \right | \geqslant \left \lceil
\frac{2^{d_G-1}}{C-1} \right \rceil$.
For $i \in \left \llbracket 1; d_G-1 \right \rrbracket$, 
if $\mathcal{F}_{i,j}$ is not a leaf, then the construction of its two sons
depends on the value of $j$. 

\paragraph{For $j=1,$} $\mathcal{F}_{i+1,1}$
and $\mathcal{F}_{i+1,2}$ are defined according to the same principle as
$\mathcal{F}_{1,1}$ and $\mathcal{F}_{1,2}$, i.e.,
$z^* \left ( i, 1 \right ) = z_{i+1}$, so that
$$
\mathcal{F}_{i+1,1} = \left \{ f \in \mathcal{F}_{i,1}: \;
f \left ( z_{i+1} \right ) - b_{i+1} \geqslant \gamma \right \},
$$
and $c^* \left ( i, 1 \right )$ 
is such that among the $C-1$ sets
$$
\mathcal{F}_{i+1,2,k} =
\left \{ f \in \mathcal{F}_{i,1}
\setminus \mathcal{F}_{i+1,1}: \; f \left ( x_{i+1}, k \right ) 
= \max_{l \neq y_{i+1}} 
f \left ( x_{i+1}, l \right )\right \}
$$
for $k \neq y_{i+1}$,
$\mathcal{F}_{i+1,2}$ is a set of maximal cardinality. 
As in the original case, this implies that
$\left | \mathcal{F}_{i+1,1} \right | = 2^{d_G-i-1}$ and
$\left | \mathcal{F}_{i+1,2} \right | \geqslant \left \lceil
\frac{2^{d_G-i-1}}{C-1} \right \rceil \geqslant
\frac{2^{d_G-i-1}}{C-1}$.

\paragraph{For $j>1,$} the derivation of the sets $\mathcal{F}_{i+1,2j-1}$
and $\mathcal{F}_{i+1,2j}$ follows that of the sets
$\bar{\mathcal{F}}_+$ and $\bar{\mathcal{F}}_-$ of the inductive property
($z^* \left ( i, j \right ) = z_{i_0}$ and
$c^* \left ( i, j \right ) = c_{i_0}$).
As a consequence,
$$
\begin{cases}
\left | \mathcal{F}_{i+1,2j-1} \right | \geqslant
\frac{\left | \mathcal{F}_{i,j} \right |}{3 \left ( d_G-i \right )} \\
\left | \mathcal{F}_{i+1,2j} \right | \geqslant
\frac{\left | \mathcal{F}_{i,j} \right |}
{3 \left ( d_G-i \right ) \left ( C-1 \right )}
\end{cases}.
$$
The binary tree has been built in such a way that
all the triplets $\left ( \mathcal{F}_{i,j}, \mathcal{F}_{i+1,2j-1},
\mathcal{F}_{i+1,2j} \right )$ satisfy
\eqref{eq:from-margin-Graph-dimension-to-margin-Natarajan-dimension-part-1}
(whether $j>1$ or not).
Furthermore, any set which is not a leaf satisfies
\eqref{eq:from-margin-Graph-dimension-to-margin-Natarajan-dimension-part-2}.
By combination of these two inequalities,
\begin{equation}
\label{eq:from-margin-Graph-dimension-to-margin-Natarajan-dimension-part-3}
h \left ( \mathcal{F}_{0,1}, s_{\mathcal{Z}^{d_G}} \right )
\geqslant \ell \left ( \mathcal{F}_{0,1} \right ) -1,
\end{equation}
where the function $\ell$ returns the number
of leaves of the (sub)tree whose root is its argument.
Thus, the next step of the proof consists in deriving a lower bound on
$\ell \left ( \mathcal{F}_{0,1} \right )$.
Since the assumption $d_G \geqslant 2$ has been made, a simple induction gives
\begin{equation}
\label{eq:from-margin-Graph-dimension-to-margin-Natarajan-dimension-part-8}
\ell \left ( \mathcal{F}_{0,1} \right ) = 
\ell \left ( \mathcal{F}_{d_G,1} \right )
+ \sum_{i=1}^{d_G} \ell \left ( \mathcal{F}_{i,2} \right )
= 2 + \sum_{i=1}^{d_G-1} \ell \left ( \mathcal{F}_{i,2} \right ).
\end{equation}
We now establish by induction that any node
$\mathcal{F}_{i,j}$ which is not a leaf satisfies:
\begin{equation}
\label{eq:from-margin-Graph-dimension-to-margin-Natarajan-dimension-part-4}
\ell \left ( \mathcal{F}_{i,j} \right )
\geqslant \left | \mathcal{F}_{i,j} \right 
|^{\frac{1}{\log_2 \left ( 3 \sqrt{C-1} \left ( d_G-i \right ) \right )}}.
\end{equation}
The induction is on the depth $i$ of the node.
Inequality~\eqref{eq:from-margin-Graph-dimension-to-margin-Natarajan-dimension-part-4}
is obviously true for $i=d_G-1$, i.e., the node $\mathcal{F}_{d_G-1,1}$, since 
$\ell \left ( \mathcal{F}_{d_G-1,1} \right ) =
\left | \mathcal{F}_{d_G-1,1} \right | = 2$.
Suppose now that it holds true for all the nodes
at depth ranging from $i+1$ to $d_G-1$. Then,

\begin{align*}
\ell \left ( \mathcal{F}_{i,j} \right ) 
& = \;
\ell \left ( \mathcal{F}_{i+1,2j-1} \right )
+ \ell \left ( \mathcal{F}_{i+1,2j} \right ) \\
& \geqslant \;
\left ( \frac{\left | \mathcal{F}_{i,j} \right |}{3 \left ( d_G-i \right )}
\right )^{\frac{1}
{\log_2 \left ( 3 \sqrt{C-1} \left ( d_G-i \right ) \right )}}
+ \left ( \frac{\left | \mathcal{F}_{i,j} \right |}
{3 \left ( d_G-i \right ) \left ( C-1 \right )} \right )^{\frac{1}
{\log_2 \left ( 3 \sqrt{C-1} \left ( d_G-i \right ) \right )}} \\
& = \;
\frac{1}{2} \left ( \left ( \sqrt{C-1} \right )^{\frac{1}
{\log_2 \left ( 3 \sqrt{C-1} \left ( d_G-i \right ) \right )}}
+ \left ( \sqrt{C-1} \right )^{-\frac{1}
{\log_2 \left ( 3 \sqrt{C-1} \left ( d_G-i \right ) \right )}} \right )
\left | \mathcal{F}_{i,j} \right |^{\frac{1}
{\log_2 \left ( 3 \sqrt{C-1} \left ( d_G-i \right ) \right )}} \\
& \geqslant \;
\frac{1}{2} \min_{t \in \mathbb{R}_+^*} \left ( t + \frac{1}{t} \right )
\left | \mathcal{F}_{i,j} \right |^{\frac{1}
{\log_2 \left ( 3 \sqrt{C-1} \left ( d_G-i \right ) \right )}} \\
& = \;
\left | \mathcal{F}_{i,j} \right |^{\frac{1}
{\log_2 \left ( 3 \sqrt{C-1} \left ( d_G-i \right ) \right )}}.
\end{align*}
We thus get for the whole tree:
\begin{equation}
\label{eq:from-margin-Graph-dimension-to-margin-Natarajan-dimension-part-0}
\ell \left ( \mathcal{F}_{0,1} \right )
\geqslant
2^{\frac{d_G}{\log_2 \left ( 3 \sqrt{C-1} d_G \right )}}
\end{equation}
(the sharper bound provided by
\eqref{eq:from-margin-Graph-dimension-to-margin-Natarajan-dimension-part-8}
does not bring any improvement here).
Function $h$ can be bounded from above in a classical way.
Since $d_N \geqslant 1$, combinatorics produces
$$
h \left ( \mathcal{F}_0, s_{\mathcal{Z}^{d_G}} \right )
\leqslant \sum_{u=1}^{d_N} {d_G \choose u} \left ( C-1 \right )^u,
$$
which gives birth to a handy formula thanks to a well-known computation
\citep[see for instance the proof of Corollary~3.3 in][]{MohRosTal12}:
\begin{equation}
\label{eq:from-margin-Graph-dimension-to-margin-Natarajan-dimension-part-5}
h \left ( \mathcal{F}_0, s_{\mathcal{Z}^{d_G}} \right ) 
\leqslant \left ( \frac{\left ( C-1 \right ) e d_G}{d_N} \right )^{d_N} -1.
\end{equation}
Combining the lower bound
(Inequalities~\eqref{eq:from-margin-Graph-dimension-to-margin-Natarajan-dimension-part-3} and
\eqref{eq:from-margin-Graph-dimension-to-margin-Natarajan-dimension-part-0})
and the upper one
(Inequality~\eqref{eq:from-margin-Graph-dimension-to-margin-Natarajan-dimension-part-5}) produces by transitivity
\begin{align}
d_G 
& \leqslant \;
 d_N \log_2 \left ( \frac{\left ( C-1 \right ) e d_G}{d_N} \right )
\log_2 \left ( 3 \sqrt{C-1} d_G \right ) \nonumber \\
\label{eq:from-margin-Graph-dimension-to-margin-Natarajan-dimension-part-6}
& \leqslant \;
\frac{1}{\ln^2 \left ( 2 \right )} d_N
\ln \left ( F \left ( C \right ) \frac{d_G}{d_N} \right )
\ln \left ( F \left ( C \right ) d_G \right ),
\end{align}
where $F \left ( C \right ) = e \left ( C-1 \right )$.
To bound from above the right-hand side of Inequality 
\eqref{eq:from-margin-Graph-dimension-to-margin-Natarajan-dimension-part-6},
we resort to the following statement:
\begin{equation}
\label{eq:from-margin-Graph-dimension-to-margin-Natarajan-dimension-part-7}
\forall \left ( u, u_0 \right ) 
\in \left [ 1, +\infty \right )^2, \;
\ln \left ( u \right ) \leqslant 2 u_0 u^{\frac{1}{4 u_0}},
\end{equation}
with $u_0 = \ln \left ( F \left ( C \right ) \right )$.
We then obtain
$$
\begin{cases}
\ln \left ( F \left ( C \right ) \frac{d_G}{d_N} \right )
\leqslant 2 e^{\frac{1}{4}} \ln \left ( F \left ( C \right ) \right )
\left ( \frac{d_G}{d_N}
\right )^{\frac{1}{4 \ln \left ( F \left ( C \right ) \right )}} \\
\ln \left ( F \left ( C \right ) d_G \right )
\leqslant 2 e^{\frac{1}{4}} \ln \left ( F \left ( C \right ) \right )
d_G^{\frac{1}{4 \ln \left ( F \left ( C \right ) \right )}}
\end{cases}.
$$
By substitution into
\eqref{eq:from-margin-Graph-dimension-to-margin-Natarajan-dimension-part-6},

\begin{align*}
d_G & \leqslant \; \frac{4 \sqrt{e}}{\ln^2 \left ( 2 \right )}
\ln^2 \left ( F \left ( C \right ) \right )
d_G^{\frac{1}{2 \ln \left ( F \left ( C \right ) \right )}}
d_N^{\frac{4 \ln \left ( F \left ( C \right ) \right ) -1}
{4 \ln \left ( F \left ( C \right ) \right )}} \\
& = \;
\left ( \frac{4 \sqrt{e}}{\ln^2 \left ( 2 \right )}
\right )^{\frac{2 \ln \left ( F \left ( C \right ) \right )}
{2 \ln \left ( F \left ( C \right ) \right ) -1}}
\left ( \ln \left ( F \left ( C \right ) \right )
\right )^{\frac{4 \ln \left ( F \left ( C \right ) \right )}
{2 \ln \left ( F \left ( C \right ) \right ) -1}}
d_N^{\frac{4 \ln \left ( F \left ( C \right ) \right ) -1}
{4 \ln \left ( F \left ( C \right ) \right ) -2}} \\
& = \;
\left ( \frac{4 \sqrt{e}}{\ln^2 \left ( 2 \right )}
\right )^{\frac{2 \ln \left ( F \left ( C \right ) \right )}
{2 \ln \left ( F \left ( C \right ) \right ) -1}}
\left ( \ln \left ( F \left ( C \right ) \right )
\right )^{\frac{2}
{2 \ln \left ( F \left ( C \right ) \right ) -1}}
\ln^2 \left ( 2 \right ) \log_2^2 \left ( F \left ( C \right ) \right )
d_N^{\frac{4 \ln \left ( F \left ( C \right ) \right ) -1}
{4 \ln \left ( F \left ( C \right ) \right ) -2}} \\
& < \;
32 \log_2^2 \left ( F \left ( C \right ) \right )
d_N^{1 + \frac{1}
{4 \ln \left ( F \left ( C \right ) \right ) -2}}.
\end{align*}
\end{proof}
The proof of 
Proposition~\ref{prop:margin-Natarajan-dimension-alternate-definition}
is the following one.
\begin{proof}
Without loss of generality, we assume that $\bar{\mathcal{F}}$ is of minimal
cardinality $2^n$ and set accordingly
$\bar{\mathcal{F}} = \left \{ f_{\mathbf{s}_n}: \;
\mathbf{s}_n \in \left \{ -1, 1 \right \}^n \right \}$.
Consider the following bijection on $\left \{ -1, 1 \right \}^n$:
$$
\begin{array}{l l l l}
B:
& \left \{ -1, 1 \right \}^n & \longrightarrow & \left \{ -1, 1 \right \}^n \\
& \mathbf{s}_n & \mapsto & \mathbf{s}'_n
\end{array}
$$
$$
\forall i \in \llbracket 1; n \rrbracket, \;\;
\begin{cases}
\text{if } y_i < c_i, \; s'_i = s_i \\
\text{if } y_i > c_i, \; s'_i = -s_i
\end{cases}.
$$
Then,
$$
\forall i \in \llbracket 1; n \rrbracket, \;\;
\begin{cases}
\text{if } s'_i = 1, \;
f_{\mathbf{s}'_n} \left ( x_i, y_i \right ) - b_i \geqslant \gamma \\
\text{if } s'_i = -1, \;
f_{\mathbf{s}'_n} \left ( x_i, c_i \right ) + b_i \geqslant \gamma
\end{cases}
\Longrightarrow
\forall i \in \llbracket 1; n \rrbracket, \;\;
\begin{cases}
\text{if } s_i = 1, \;
f_{\mathbf{s}'_n} \left ( x_i, y'_i \right ) - b'_i \geqslant \gamma \\
\text{if } s_i = -1, \;
f_{\mathbf{s}'_n} \left ( x_i, c'_i \right ) + b'_i \geqslant \gamma
\end{cases}.
$$
By definition, the triplet 
$\left ( s'_{\mathcal{Z}^n}, \mathbf{b}'_n, \mathbf{c}'_n \right )$
is $\gamma$-N-shattered by $\bar{\mathcal{F}}$, which concludes the proof.
\end{proof}

\section{Proofs of the Combinatorial Results}

This appendix gathers the proofs of the four new combinatorial results.
It starts with three lemmas which are common to all proofs.

\subsection{Shared Technical Lemmas}

Each of the combinatorial results in the literature is built upon
a basic lemma that involves two (possibly identical)
function classes whose domain
and codomain are finite sets (so that their cardinalities are also finite).
It upper bounds the cardinality of one of them in terms
of a combinatorial dimension of the other.
In the case of margin classifiers, the combinatorial dimension
of the basic lemma is
a variant of the scale-sensitive dimension of the combinatorial result,
variant designed to take
benefit from the aforementioned restrictions. The first capacity measure
of this kind is a variant of the $\gamma$-dimension:
the strong dimension \citep[Definition~3.1 in][]{AloBenCesHau97}.
The strong $\Psi$-dimensions extend the $\gamma$-$\Psi$-dimensions
according to the same principle.

\begin{definition}[Strong $\Psi$-dimensions]
\label{def:Strong-Psi-dimension}
Let $\mathcal{F}$ be a class of functions from
$\bar{\mathcal{Z}} = \bar{\mathcal{X}} \times \mathcal{Y}$
into $\left \llbracket -M_{\mathcal{F}}; M_{\mathcal{F}} \right \rrbracket$,
where $\bar{\mathcal{X}}$ is a finite subset of $\mathcal{X}$ and
$M_{\mathcal{F}} \in \mathbb{N}^*$. $\mathcal{F}$ satisfies:
$$
\forall f \in \mathcal{F}, \; \forall x \in \bar{\mathcal{X}}, \;
\max_{1 \leqslant k < l \leqslant C}
\left \{ f \left ( x, k \right ) + f \left ( x, l \right ) \right \} = 0.
$$
Let $\Psi$ be a family of mappings from $\mathcal{Y}$
into $\left \{ -1, 0, 1 \right \}$.
A subset $s_{\bar{\mathcal{Z}}^n} =
\left \{ z_i = \left ( x_i, y_i \right ): \;
1 \leqslant i \leqslant n \right \}$ of $\bar{\mathcal{Z}}$
is said to be {\em strongly $\Psi$-shattered} by $\mathcal{F}$
if there is a vector $\boldsymbol{\psi}_n
= \left ( \psi^{\left ( i \right )}
\right )_{1 \leqslant i \leqslant n} \in \Psi^n$
satisfying 
$\left ( \psi^{\left ( i \right )} \left ( y_i \right ) 
\right )_{1 \leqslant i \leqslant n} = \mathbf{1}_n$,
and a vector
$\mathbf{b}_n =  \left ( b_i \right )_{1 \leqslant i \leqslant n}
\in \left \llbracket 0; M_{\mathcal{F}} - 1 \right \rrbracket^n$
such that, for every vector
$\mathbf{s}_n = \left ( s_i \right )_{1 \leqslant i \leqslant n}
\in \left \{ -1, 1 \right \}^n$,
there is a function $f_{\mathbf{s}_n} \in \mathcal{F}$ satisfying
$$
\forall i \in \llbracket 1; n \rrbracket, \;\;
\begin{cases}
\text{if } s_i = 1, \;
\max_{\left \{ k: \; \psi^{\left ( i \right )} 
\left ( k \right ) = 1 \right \}}
f_{\mathbf{s}_n} \left ( x_i, k \right ) - b_i \geqslant 1 \\
\text{if } s_i = -1, \;
\max_{\left \{ l: \; \psi^{\left ( i \right )}
\left ( l \right ) = -1 \right \}}
f_{\mathbf{s}_n} \left ( x_i, l \right ) + b_i \geqslant 1
\end{cases}.
$$
The {\em strong $\Psi$-dimension} of $\mathcal{F}$, denoted by
$\text{S-}\Psi\text{-dim} \left ( \mathcal{F} \right )$,
is the maximal cardinality of a subset of
$\bar{\mathcal{Z}}$ strongly $\Psi$-shattered by $\mathcal{F}$,
if such maximum exists.
Otherwise, $\mathcal{F}$
is said to have infinite strong $\Psi$-dimension.
\end{definition}
In what follows, the finiteness of the domain is simply obtained by application
of a restriction to an appropriately chosen set of data points.
As for the finiteness of the codomain, it results from the application
of the following discretization operator.

\begin{definition}[$\eta$-discretization operator,
Definition~33 in \citealp{Gue07b}]
\label{def:eta-discretization-operator}
Let $\mathcal{F}$ be a class of functions from $\mathcal{T}$ into
the interval $\left [ M_{\mathcal{F}-}, M_{\mathcal{F}+} \right ]$.
For $\eta \in \mathbb{R}_+^*$, define the {\em $\eta$-discretization}
as an operator on $\mathcal{F}$ such that:
$$
\begin{array}{l l l l}
\left ( \cdot \right )^{\left ( \eta \right )}:
& \mathcal{F} & \longrightarrow & \mathcal{F}^{\left ( \eta \right )} \\
& f & \mapsto & f^{\left ( \eta \right )}
\end{array}
$$
$$
\forall t \in \mathcal{T}, \;\;
f^{\left ( \eta \right )} \left ( t \right ) =
\sign \left ( f \left ( t \right ) \right ) \cdot
\left \lfloor \frac{ \left | f \left ( t \right ) \right |}{\eta}
\right \rfloor.
$$
\end{definition}
The transitions from continuous functions to discrete ones and back
are obtained by application of the two following lemmas.

\begin{lemma}
\label{lemma:lemma-3.2-in-AloBenCesHau97-for-all-packing-numbers}
Let $\mathcal{F}$ be a class of functions from $\mathcal{T}$
into the interval $\left [ 0, M_{\mathcal{F}} \right ]$
with $M_{\mathcal{F}} \in \mathbb{R}_+^*$.
For $n \in \mathbb{N}^*$, let $\mathbf{t}_n = \left ( t_i
\right )_{1 \leqslant i \leqslant n} \in \mathcal{T}^n$.
Let $N$ be a positive integer.
For every $\epsilon \in \left ( 0, M_{\mathcal{F}} \right ]$ and every
$\eta \in \left ( 0, \frac{\epsilon}{N+1} \right ]$,
\begin{equation}
\label{eq:L_2-norm-lemma-3.2-in-AloBenCesHau97}
\forall \left ( f, f^{\prime} \right ) \in \mathcal{F}^2, \;\;
d_{2, \mathbf{t}_n} \left ( f, f^{\prime} \right )
\geqslant \epsilon \Longrightarrow
d_{2, \mathbf{t}_n} \left ( f^{\left ( \eta \right )},
{f^{\prime}}^{\left ( \eta \right )} \right )
\geqslant N,
\end{equation}
with the consequence that if the subset $\bar{\mathcal{F}}$ of $\mathcal{F}$
is $\epsilon$-separated with respect to the pseudo-metric $d_{2, \mathbf{t}_n}$,
then it is in bijection with
the subset $\bar{\mathcal{F}}^{\left ( \eta \right )}$
of $\mathcal{F}^{\left ( \eta \right )}$,
which is $N$-separated with respect to the same pseudo-metric.
Similarly, for every $\epsilon \in \left ( 0, M_{\mathcal{F}} \right ]$ 
and every $\eta \in \left ( 0, \frac{\epsilon}{2} \right ]$,
\begin{equation}
\label{eq:L_infty-norm-lemma-3.2-in-AloBenCesHau97}
\mathcal{M} \left ( \epsilon, \mathcal{F}, d_{\infty, \mathbf{t}_n} \right )
\leqslant
\mathcal{M} \left ( 2, \mathcal{F}^{\left ( \eta \right )}, 
d_{\infty, \mathbf{t}_n} \right ).
\end{equation}
\end{lemma}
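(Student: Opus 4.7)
The plan is to exploit the single underlying property of the discretization operator: since the functions in $\mathcal{F}$ take values in $\left [ 0, M_{\mathcal{F}} \right ]$, the signum is nonnegative and $\eta f^{\left ( \eta \right )} \left ( t \right ) = \eta \lfloor f \left ( t \right )/\eta \rfloor$, whence
$$
\forall f \in \mathcal{F}, \; \forall t \in \mathcal{T}, \;
0 \leqslant f \left ( t \right ) - \eta f^{\left ( \eta \right )} \left ( t \right ) < \eta.
$$
From this, a reverse triangle inequality gives the pointwise estimate
$$
\eta \left ( \left | f^{\left ( \eta \right )} \left ( t_i \right )
- {f^{\prime}}^{\left ( \eta \right )} \left ( t_i \right ) \right | + 1 \right )
> \left | f \left ( t_i \right ) - f^{\prime} \left ( t_i \right ) \right |,
$$
which will be the single tool used for both parts.

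\textbf{Proof of \eqref{eq:L_2-norm-lemma-3.2-in-AloBenCesHau97}.}
Squaring the pointwise estimate and averaging over $i$ yields
$$
\eta^2 \cdot \frac{1}{n} \sum_{i=1}^n \left ( \left | f^{\left ( \eta \right )} \left ( t_i \right ) - {f^{\prime}}^{\left ( \eta \right )} \left ( t_i \right ) \right | + 1 \right )^2
\geqslant \frac{1}{n} \sum_{i=1}^n \left | f \left ( t_i \right ) - f^{\prime} \left ( t_i \right ) \right |^2
\geqslant \epsilon^2.
$$
By the triangle inequality for the $\ell_2$-norm on $\mathbb{R}^n$ applied to the vector of the $\left | f^{\left ( \eta \right )} \left ( t_i \right ) - {f^{\prime}}^{\left ( \eta \right )} \left ( t_i \right ) \right |$ and the all-ones vector, the square root of the left-hand side is at most
$d_{2, \mathbf{t}_n} \left ( f^{\left ( \eta \right )}, {f^{\prime}}^{\left ( \eta \right )} \right ) + 1$. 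Using the hypothesis $\eta \leqslant \epsilon/(N+1)$ and rearranging then gives $d_{2, \mathbf{t}_n} \left ( f^{\left ( \eta \right )}, {f^{\prime}}^{\left ( \eta \right )} \right ) \geqslant N$, which is Inequality~\eqref{eq:L_2-norm-lemma-3.2-in-AloBenCesHau97}. For $N \geqslant 1$, this strict separation forces the map $f \mapsto f^{\left ( \eta \right )}$ to be injective on any $\epsilon$-separated subset $\bar{\mathcal{F}}$, whence the bijection with $\bar{\mathcal{F}}^{\left ( \eta \right )}$, and the image is automatically $N$-separated.

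\textbf{Proof of \eqref{eq:L_infty-norm-lemma-3.2-in-AloBenCesHau97}.}
It suffices to show that the map $f \mapsto f^{\left ( \eta \right )}$ sends any $\epsilon$-separated subset of $\mathcal{F}$ (for $d_{\infty, \mathbf{t}_n}$) injectively onto a $2$-separated subset of $\mathcal{F}^{\left ( \eta \right )}$. Pick $\left ( f, f^{\prime} \right )$ at $d_{\infty, \mathbf{t}_n}$-distance at least $\epsilon$, let $i_0$ achieve the maximum, and suppose without loss of generality that $f \left ( t_{i_0} \right ) \geqslant f^{\prime} \left ( t_{i_0} \right )$. From $\eta \leqslant \epsilon/2$, $f \left ( t_{i_0} \right )/\eta \geqslant f^{\prime} \left ( t_{i_0} \right )/\eta + 2$, and the elementary identity $\lfloor x + 2 \rfloor = \lfloor x \rfloor + 2$ applied with $x = f^{\prime} \left ( t_{i_0} \right )/\eta$ yields
$$
f^{\left ( \eta \right )} \left ( t_{i_0} \right )
- {f^{\prime}}^{\left ( \eta \right )} \left ( t_{i_0} \right )
\geqslant 2,
$$
which is the claimed packing-number domination.

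The main obstacle is the $L_2$ part: the naive pointwise inequality loses a constant on each coordinate, so passing from a coordinatewise bound to a bound on the averaged squared distance requires a global argument, which is handled above by the $\ell_2$-triangle inequality trick; this is what accounts for the apparently loose factor $N+1$ rather than $N$ in the constraint on $\eta$.
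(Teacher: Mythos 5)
Your proof is correct and follows essentially the same route as the paper: both proofs rest on the Euclidean-division decomposition $f(t_i) = \eta\, f^{(\eta)}(t_i) + r_i$ with $0 \leqslant r_i < \eta$, derive the bound on $\frac{1}{n}\sum(\delta_i+1)^2$ where $\delta_i = \bigl| f^{(\eta)}(t_i) - {f'}^{(\eta)}(t_i)\bigr|$, and conclude via the $\ell_2$ triangle inequality; the $L_\infty$ part is likewise the same monotonicity-of-the-floor argument. Two minor slips in wording that do not affect the argument: the pointwise estimate is the ordinary (not reverse) triangle inequality, and the phrase ``the square root of the left-hand side is at most $d_{2,\mathbf{t}_n}(f^{(\eta)},{f'}^{(\eta)})+1$'' should carry the factor $\eta$, which you do correctly re-introduce when you rearrange using $\eta \leqslant \epsilon/(N+1)$.
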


\begin{proof}
For $f \in \mathcal{F}$ and $i \in \llbracket 1; n \rrbracket$,
let us denote the Euclidean division
of $f \left ( t_i \right )$ by $\eta$ as follows:
$$
\forall i \in \llbracket 1; n \rrbracket, \;\;
f \left ( t_i \right ) =
\eta f^{\left ( \eta \right )} \left ( t_i \right ) + r_i.
$$
With the notation introduced above,
$$
d_{2, \mathbf{t}_n} \left ( f, f^{\prime} \right )^2
= \frac{1}{n} \sum_{i=1}^n \left ( \eta \left (
f^{\left ( \eta \right )} \left ( t_i \right ) -
{f^{\prime}}^{\left ( \eta \right )} \left ( t_i \right ) \right )
+ r_i - r_i^{\prime} \right )^2.
$$
For $i \in \llbracket 1; n \rrbracket$, let $\delta_i =
\left |
f^{\left ( \eta \right )} \left ( t_i \right ) -
{f^{\prime}}^{\left ( \eta \right )} \left ( t_i \right )
\right |$.
\begin{align}
\left (
d_{2, \mathbf{t}_n} \left ( f, f^{\prime} \right )
\geqslant \epsilon \right ) \text{and} \left ( \eta \in \left ( 0,
\frac{\epsilon}{N+1} \right ] \right ) & \Longrightarrow \;
\left ( \frac{1}{n} \sum_{i=1}^n \left ( \eta \delta_i
+ \left | r_i - r_i^{\prime} \right | \right )^2 \right )^{\frac{1}{2}}
\geqslant \epsilon \nonumber \\
& \Longrightarrow \; \left ( \frac{1}{n} \sum_{i=1}^n \left (
\delta_i + 1 \right )^2 \right )^{\frac{1}{2}}
\geqslant \frac{\epsilon}{\eta} \nonumber \\
\label{eq:L_2-norm-lemma-3.2-in-AloBenCesHau97-part-1}
& \Longrightarrow \; \left ( \frac{1}{n} \sum_{i=1}^n \left (
\delta_i + 1 \right )^2 \right )^{\frac{1}{2}}
\geqslant N+1 \\
\label{eq:L_2-norm-lemma-3.2-in-AloBenCesHau97-part-2}
& \Longrightarrow \; \left ( \frac{1}{n} \sum_{i=1}^n
\delta_i^2 \right )^{\frac{1}{2}} +1
\geqslant N+1 \\
& \Longrightarrow \; d_{2, \mathbf{t}_n} \left  (
f^{\left ( \eta \right )},
{f^{\prime}}^{\left ( \eta \right )} \right ) \geqslant N, \nonumber
\end{align}
where the transition from 
\eqref{eq:L_2-norm-lemma-3.2-in-AloBenCesHau97-part-1} to
\eqref{eq:L_2-norm-lemma-3.2-in-AloBenCesHau97-part-2}
is provided by the triangle inequality.
To sum up, we have established
\eqref{eq:L_2-norm-lemma-3.2-in-AloBenCesHau97}, i.e., the part of the lemma
dealing with the $L_2$-norm.
To prove \eqref{eq:L_infty-norm-lemma-3.2-in-AloBenCesHau97},
it is enough to observe that
$f \left ( t \right ) - f^{\prime} \left ( t \right ) \geqslant \epsilon
\Longrightarrow f^{\left ( \frac{\epsilon}{2} \right )} \left ( t \right )
- {f^{\prime}}^{\left ( \frac{\epsilon}{2} \right )} \left ( t \right )
\geqslant 2$.
\end{proof}

\begin{lemma}
\label{lemma:extended-lemma-3.2-in-AloBenCesHau97-for-G-and-N}
Let $\mathcal{F}$ be a function class defined as in 
Definition~\ref{def:gamma-Psi-dimensions}.
Suppose further that the functions in $\mathcal{F}$ 
are defined on a finite subset 
$\bar{\mathcal{Z}} = \bar{\mathcal{X}} \times \mathcal{Y}$ of
$\mathcal{Z}$ and take their values in
$\left [ -M_{\mathcal{F}}, M_{\mathcal{F}} \right ]$
with $M_{\mathcal{F}} \in \mathbb{R}_+^*$.
For every $\eta \in \left ( 0, M_{\mathcal{F}} \right ]$
and every $\epsilon \in \left ( 0, \frac{\eta}{2} \right ]$,
\begin{numcases}{}
\label{eq:extended-lemma-3.2-in-AloBenCesHau97-for-G}
\text{S-G-dim} \left ( \mathcal{F}^{\left ( \eta \right )} \right )
\leqslant \epsilon\text{-G-dim} \left ( \mathcal{F} \right ) \\
\label{eq:extended-lemma-3.2-in-AloBenCesHau97-for-N}
\text{S-N-dim} \left ( \mathcal{F}^{\left ( \eta \right )} \right )
\leqslant \epsilon\text{-N-dim} \left ( \mathcal{F} \right ).
\end{numcases}
\end{lemma}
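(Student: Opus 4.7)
The plan is to prove both inequalities together, since the definitions of the $\gamma$-$\Psi$-dimension and the strong $\Psi$-dimension share an identical skeleton: only the scalar witness differs (an integer versus a real), and the choice of $\boldsymbol{\psi}_n$ transfers unchanged. Thus I write $\Psi$ generically for either $\Psi_G$ or $\Psi_N$. Fixing $\eta \in (0, M_{\mathcal{F}}]$ and $\epsilon \in (0, \eta/2]$, I start from a set $s_{\mathcal{Z}^n}$ strongly $\Psi$-shattered by $\mathcal{F}^{(\eta)}$ with witness $(\boldsymbol{\psi}_n, \mathbf{b}_n)$; for each dichotomy $\mathbf{s}_n \in \{-1,1\}^n$ I select a witnessing $f_{\mathbf{s}_n}^{(\eta)} \in \mathcal{F}^{(\eta)}$ and any pre-image $f_{\mathbf{s}_n} \in \mathcal{F}$ under the $\eta$-discretization of Definition~\ref{def:eta-discretization-operator}.

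The core tool is a quantization estimate: unpacking $f^{(\eta)}(t) = \sign(f(t)) \lfloor |f(t)|/\eta \rfloor$ and listing the pre-image intervals of each integer level shows that, for $m \in \mathbb{Z}$, the inequality $f^{(\eta)}(t) \geq m$ yields $f(t) \geq m\eta$ when $m \geq 1$, but only the strict inequality $f(t) > (m-1)\eta$ when $m \leq 0$. With this in hand, the natural candidate witness for the $\epsilon$-$\Psi$-shattering is the piecewise one defined by $b_i' = \eta b_i + \eta/2$ when $b_i \geq 0$ and by $b_i' = \eta b_i - \eta/2$ when $b_i \leq -1$.

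Verification then proceeds by a four-case analysis on the sign of $b_i$ and the value of $s_i$. When $s_i = 1$ and $b_i \geq 0$ (so $m = b_i + 1 \geq 1$), or $s_i = -1$ and $b_i \leq 0$ (so $m = 1 - b_i \geq 1$), the closed bound $f \geq m\eta$ directly delivers $f_{\mathbf{s}_n}(x_i, k) \geq b_i' + \epsilon$ or $f_{\mathbf{s}_n}(x_i, l) \geq -b_i' + \epsilon$ with $\eta/2 - \epsilon$ to spare. In the two remaining sub-cases only the strict $f > (m-1)\eta$ is available; the $\pm \eta/2$ shift of $b_i'$ toward zero is calibrated so that $(m-1)\eta$ equals $\pm b_i' + \epsilon$ exactly at $\epsilon = \eta/2$, and the strict inequality then upgrades to the needed non-strict one. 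The chief obstacle lies in this asymmetry of the round-toward-zero discretization: a symmetric choice such as $b_i' = \eta b_i$ would fail in the open sub-cases, and the hypothesis $\epsilon \leq \eta/2$ is exactly the slack absorbed by the piecewise shift. Because the argument never invokes the concrete form of $\psi^{(i)}$ beyond what the strong shattering supplies, both \eqref{eq:extended-lemma-3.2-in-AloBenCesHau97-for-G} and \eqref{eq:extended-lemma-3.2-in-AloBenCesHau97-for-N} are delivered by the same reasoning.
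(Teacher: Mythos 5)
Your proof is correct and follows essentially the same route as the paper: the key observation is that strong $\Psi$-shattering by $\mathcal{F}^{(\eta)}$ can be promoted to $\frac{\eta}{2}$-$\Psi$-shattering by $\mathcal{F}$ (hence $\epsilon$-$\Psi$-shattering for any $\epsilon\leqslant\frac{\eta}{2}$), using the exact same piecewise witness $b_i'=\eta b_i+\frac{\eta}{2}$ for $b_i\geqslant 0$ and $b_i'=\eta b_i-\frac{\eta}{2}$ for $b_i\leqslant -1$, the vector $\mathbf{c}_n$ carrying over unchanged in the Natarajan case. The only difference is presentational: the paper simply exhibits the witness as ``a feasible solution,'' whereas you spell out the quantization estimate ($f^{(\eta)}\geqslant m$ gives $f\geqslant m\eta$ for $m\geqslant 1$ but only $f>(m-1)\eta$ for $m\leqslant 0$) and the resulting four-way case check; this explicit verification is welcome and accurate, in particular in noting that the strict inequality in the negative-level case still delivers the required non-strict bound.
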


\begin{proof}
To prove \eqref{eq:extended-lemma-3.2-in-AloBenCesHau97-for-G},
it is enough to establish that any set strongly G-shattered by
$\mathcal{F}^{(\eta)}$
is also G-shattered with margin $\frac{\eta}{2}$ by $\mathcal{F}$.
Suppose that the subset $s_{\bar{\mathcal{Z}}^n} =
\left \{ z_i = \left ( x_i, y_i \right ): \;
1 \leqslant i \leqslant n \right \}$ of $\bar{\mathcal{Z}}$
is strongly G-shattered by $\mathcal{F}^{(\eta)}$.
Then, according to Definitions~\ref{def:Strong-Psi-dimension}
and \ref{def:eta-discretization-operator},
there exist a vector
$\mathbf{b}_n =  \left ( b_i \right )_{1 \leqslant i \leqslant n}
\in \left \llbracket 0;
\left \lfloor \frac{M_{\mathcal{F}}}{\eta} \right \rfloor -1
\right \rrbracket^n$
and a set
$\left \{ f_{\mathbf{s}_n}:
\mathbf{s}_n \in \left \{ -1, 1 \right \}^n \right \} \subset \mathcal{F}$
such that
$$
\forall \mathbf{s}_n = \left ( s_i \right )_{1 \leqslant i \leqslant n}
\in \left \{ -1, 1 \right \}^n, \;
\forall i \in \llbracket 1; n \rrbracket, \;\;
\begin{cases}
\text{if } s_i = 1, \;
f_{\mathbf{s}_n}^{\left ( \eta \right )}
\left ( x_i, y_i \right ) - b_i \geqslant 1 \\
\text{if } s_i = -1, \;
\max_{k \neq y_i} f_{\mathbf{s}_n}^{\left ( \eta \right )}
\left ( x_i, k \right ) + b_i \geqslant 1
\end{cases}.
$$
As a consequence, a proof is obtained by exhibiting a vector
$\mathbf{b}^{\prime}_n =  \left ( b^{\prime}_i
\right )_{1 \leqslant i \leqslant n} \in \mathbb{R}_+^n$
such that
$$
\forall \mathbf{s}_n = \left ( s_i \right )_{1 \leqslant i \leqslant n}
\in \left \{ -1, 1 \right \}^n, \;
\forall i \in \llbracket 1; n \rrbracket, \;\;
\begin{cases}
\text{if } s_i = 1, \;
f_{\mathbf{s}_n} \left ( x_i, y_i \right )
- b^{\prime}_i \geqslant \frac{\eta}{2} \\
\text{if } s_i = -1, \;
\max_{k \neq y_i} f_{\mathbf{s}_n} \left ( x_i, k \right )
+ b^{\prime}_i \geqslant \frac{\eta}{2}
\end{cases}.
$$
A feasible solution consists in setting
$\mathbf{b}^{\prime}_n =  \left ( \eta \left ( b_i + \frac{1}{2} \right )
\right )_{1 \leqslant i \leqslant n}$.
The same line of reasoning can be used to prove
\eqref{eq:extended-lemma-3.2-in-AloBenCesHau97-for-N}.
In short, if $\left ( \mathbf{b}_n, \mathbf{c}_n \right )$
is a witness to the strong N-shattering, then
$\left ( \mathbf{b}^{\prime}_n, \mathbf{c}^{\prime}_n \right )$
is a witness to the $\frac{\eta}{2}$-N-shattering,
where the vector $\mathbf{b}^{\prime}_n$ is deduced from $\mathbf{b}_n$
as above and $\mathbf{c}^{\prime}_n = \mathbf{c}_n$.
\end{proof}

\begin{lemma}
\label{lemma:from-separation-to-strong-G-and-N-shattering}
Let $\mathcal{G}$ be a function class
satisfying Definition~\ref{def:margin-multi-category-classifiers}
and $\rho_{\mathcal{G}}$ the function class deduced
from $\mathcal{G}$ according to
Definition~\ref{def:class-of-margin-functions}.
For $\gamma \in \left ( 0, 1 \right ]$,
let $\rho_{\mathcal{G}, \gamma}$ be the function class deduced
from $\mathcal{G}$ according to
Definition~\ref{def:class-of-regret-functions}.
Suppose that there exist $\left ( g, g^{\prime} \right ) \in \mathcal{G}^2$,
$\gamma \in \left ( 0, 1 \right ]$,
$\eta \in \left ( 0, \frac{\gamma}{2} \right ]$
and $z = \left ( x, y \right ) \in \mathcal{Z}$ such that
\begin{equation}
\label{eq:from-separation-to-strong-G-and-N-shattering}
\rho_{g, \gamma}^{\left ( \eta \right )} \left ( z \right ) -
\rho_{g^{\prime}, \gamma}^{\left ( \eta \right )} \left ( z \right )
\geqslant 2.
\end{equation}
For every
$b \in \left \llbracket
\rho_{g^{\prime}, \gamma}^{\left ( \eta \right )} \left ( z \right ) + 1;
\rho_{g, \gamma}^{\left ( \eta \right )} \left ( z \right ) - 1
\right \rrbracket$
and $c \in \argmax_{k \neq y} \rho_{g^{\prime}}^{\left ( \eta \right )}
\left ( x, k \right )$,
\begin{enumerate}
\item the set $\left \{ \rho_{g, \gamma}^{\left ( \eta \right )},
\rho_{g^{\prime}, \gamma}^{\left ( \eta \right )} \right \}$
strongly shatters the pair
$\left ( \left \{ z \right \}, b \right )$;
\item the set $\left \{ \rho_g^{\left ( \eta \right )},
\rho_{g^{\prime}}^{\left ( \eta \right )} \right \}$
strongly G-shatters the same pair
and strongly N-shatters the triplets
$\left ( \left \{ z \right \}, b, c \right )$.
\end{enumerate}
\end{lemma}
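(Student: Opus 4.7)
The plan is to exhibit witness functions for each of the three shattering claims, then verify the defining inequalities by a short case analysis controlled by the hypothesis $\rho_{g,\gamma}^{(\eta)}(z)-\rho_{g',\gamma}^{(\eta)}(z)\geqslant 2$ and the choice $b=\rho_{g,\gamma}^{(\eta)}(z)-1$. Throughout I will exploit two standing facts: the squashing identity $\rho_{g,\gamma}=\pi_{\gamma}\circ\rho_{g}$ (so that $\rho_{g}^{(\eta)}\geqslant\rho_{g,\gamma}^{(\eta)}$ wherever $\rho_{g}\geqslant 0$), and the structural constraint of Definition~\ref{def:gamma-Psi-dimensions}, which for $\rho_{g'}$ reads $\rho_{g'}(x,k)+\rho_{g'}(x,l)\leqslant 0$ for $k\neq l$, with equality attained by the top-two categories.

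For Part~1, I set $f_{+1}=\rho_{g,\gamma}^{(\eta)}$ and $f_{-1}=\rho_{g',\gamma}^{(\eta)}$. The choice of $b$ gives $f_{+1}(z)-b=1$ for the positive branch, while the hypothesis rearranges directly to $b-f_{-1}(z)\geqslant 1$ for the negative branch. So Part~1 is immediate.

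For the G-shattering claim in Part~2, I take $f_{+1}=\rho_{g}^{(\eta)}$ and $f_{-1}=\rho_{g'}^{(\eta)}$ and instantiate Definition~\ref{def:Strong-Psi-dimension} with $\psi^{(1)}=\psi_{y}\in\Psi_{G}$. The positive branch ($s=+1$, $k=y$) requires $\rho_{g}^{(\eta)}(x,y)\geqslant b+1=\rho_{g,\gamma}^{(\eta)}(z)$. Since $\rho_{g,\gamma}^{(\eta)}(z)\geqslant 2$ forces $\rho_{g}(z)>0$, the monotonicity of $\pi_{\gamma}$ on $\mathbb{R}_+$ yields $\rho_{g}(z)\geqslant\rho_{g,\gamma}(z)$, which survives the $\eta$-discretization. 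The negative branch ($s=-1$, $l\neq y$) requires $\max_{l\neq y}\rho_{g'}^{(\eta)}(x,l)\geqslant 1-b=2-\rho_{g,\gamma}^{(\eta)}(z)$. I split on the sign of $\rho_{g'}(z)$. If $\rho_{g'}(z)\geqslant 0$, the structural identity forces the maximum in $\rho_{g'}(x,l)$ over $l\neq y$ to equal $-\rho_{g'}(z)$, achieved by the runner-up category of $g'(x)$; combining this with the fact that $\rho_{g'}^{(\eta)}(z)=\rho_{g',\gamma}^{(\eta)}(z)$ in the relevant range (and separately treating the saturated case $\rho_{g'}(z)>\gamma$, which is actually excluded by the hypothesis) reduces the required inequality to exactly $\rho_{g,\gamma}^{(\eta)}(z)-\rho_{g',\gamma}^{(\eta)}(z)\geqslant 2$. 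If $\rho_{g'}(z)<0$, the top category of $g'(x)$ differs from $y$, so $\max_{l\neq y}\rho_{g'}^{(\eta)}(x,l)\geqslant 0\geqslant 2-\rho_{g,\gamma}^{(\eta)}(z)$ and the bound is trivial.

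For the N-shattering claim in Part~2, I instantiate Definition~\ref{def:Strong-Psi-dimension} with $\psi^{(1)}=\psi_{y,c}\in\Psi_{N}$ and keep $f_{+1}=\rho_{g}^{(\eta)}$, $f_{-1}=\rho_{g'}^{(\eta)}$. The positive branch is identical to the G-shattering case. The negative branch asks for $\rho_{g'}^{(\eta)}(x,c)\geqslant 1-b$; by the very definition $c\in\argmax_{k\neq y}\rho_{g'}^{(\eta)}(x,k)$, this inequality is exactly the one already proved for the G-shattering. Hence the same witness triplet works. The main obstacle is the bookkeeping in the negative branch of Part~2: one has to track when $\pi_{\gamma}$ saturates and when it acts as the identity, and check that in every configuration of signs of $\rho_{g}(z)$ and $\rho_{g'}(z)$ the discretization operator produces integer quantities consistent with the strong shattering threshold~$1$.
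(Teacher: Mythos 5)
Your proposal is correct and follows essentially the same route as the paper's proof: exhibit $b$ from the top-scoring function, pass from squashed to unsquashed margins via the monotonicity of $\pi_{\gamma}$ and the $\eta$-discretization, and obtain the N-shattering from the G-shattering by the definition of $c$ as an argmax. The only stylistic difference is that you carry out an explicit case split on the sign of $\rho_{g'}(z)$, whereas the paper compresses both cases into the single inequality $\rho_{g'}^{(\eta)}(z) \leqslant \rho_{g',\gamma}^{(\eta)}(z)$ (valid because the hypothesis forces $\rho_{g',\gamma}^{(\eta)}(z) < \lfloor\gamma/\eta\rfloor$, excluding saturation) from which the negative-branch bound follows directly.
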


\begin{proof}
The first assertion is obvious, since by definition of $b$,
$$
\begin{cases}
\rho_{g, \gamma}^{\left ( \eta \right )} \left ( z \right ) -b \geqslant 1 \\
-\rho_{g^{\prime}, \gamma}^{\left ( \eta \right )} \left ( z \right )
+b \geqslant 1
\end{cases}.
$$
It springs from \eqref{eq:from-separation-to-strong-G-and-N-shattering} that
$\rho_{g, \gamma}^{\left ( \eta \right )} \left ( z \right ) > 0$,
which implies that
$\rho_g \left ( z \right )
\geqslant \rho_{g, \gamma} \left ( z \right )$
and thus
$\rho_g^{\left ( \eta \right )} \left ( z \right )
\geqslant \rho_{g, \gamma}^{\left ( \eta \right )} \left ( z \right )$.
Consequently,
\begin{align*}
\rho_g^{\left ( \eta \right )} \left ( z \right ) - b
& \geqslant \;
\rho_{g, \gamma}^{\left ( \eta \right )} \left ( z \right ) - b \\
& \geqslant \; 1.
\end{align*}
Furthermore, if
$\rho_{g^{\prime}}^{\left ( \eta \right )} \left ( z \right ) \geqslant 0$, then
$\rho_{g^{\prime}, \gamma}^{\left ( \eta \right )} \left ( z \right )
< \left \lfloor \frac{\gamma}{\eta} \right \rfloor$
implies that
$\rho_{g^{\prime}} \left ( z \right )
= \rho_{g^{\prime}, \gamma} \left ( z \right )$,
with the consequence that
$\rho_{g^{\prime}}^{\left ( \eta \right )} \left ( z \right )
= \rho_{g^{\prime}, \gamma}^{\left ( \eta \right )} \left ( z \right )$.
Then,
$\max_{k \neq y} \rho_{g^{\prime}}^{\left ( \eta \right )} \left ( x, k \right )
= -\rho_{g^{\prime}}^{\left ( \eta \right )} \left ( z \right )
= -\rho_{g^{\prime}, \gamma}^{\left ( \eta \right )} \left ( z \right )$.
Otherwise,
$\max_{k \neq y} \rho_{g^{\prime}}^{\left ( \eta \right )} \left ( x, k \right )
\geqslant 0 
= -\rho_{g^{\prime}, \gamma}^{\left ( \eta \right )} \left ( z \right )$.
Thus, in both cases,
$\max_{k \neq y}
\rho_{g^{\prime}}^{\left ( \eta \right )} \left ( x, k \right ) \geqslant
-\rho_{g^{\prime}, \gamma}^{\left ( \eta \right )} \left ( z \right )$,
leading to
\begin{align}
\max_{k \neq y}
\rho_{g^{\prime}}^{\left ( \eta \right )} \left ( x, k \right ) + b
& \geqslant \;
-\rho_{g^{\prime}, \gamma}^{\left ( \eta \right )} \left ( z \right ) + b 
\nonumber \\
\label{eq:from-separation-to-strong-G-and-N-shattering-part-1}
& \geqslant \; 1.
\end{align}
The strong G-shattering of $\left ( \left \{ z \right \}, b \right )$ by
$\left \{ \rho_g^{\left ( \eta \right )},
\rho_{g^{\prime}}^{\left ( \eta \right )} \right \}$
has been established.
The strong N-shattering of $\left ( \left \{ z \right \}, b, c \right )$
springs from \eqref{eq:from-separation-to-strong-G-and-N-shattering-part-1}
and the definition of $c$.
\end{proof}

\begin{remark}
\label{remark:necessity-of-Pi_gamma}
It is noticeable that the proof of
Lemma~\ref{lemma:from-separation-to-strong-G-and-N-shattering}
makes use of the inequality
$$
\max_{k \neq y}
\rho_{g^{\prime}}^{\left ( \eta \right )} \left ( x, k \right ) \geqslant
-\rho_{g^{\prime}, \gamma}^{\left ( \eta \right )} \left ( z \right ),
$$
and we have even
$$
\max_{k \neq y}
\rho_{g^{\prime}} \left ( x, k \right ) \geqslant
-\rho_{g^{\prime}, \gamma} \left ( z \right ),
$$
which is not intuitive since the key argument establishing that
$\gamma\text{-G-dim} \left ( \rho_{\mathcal{G}} \right )
\leqslant \gamma\text{-dim} \left ( \rho_{\mathcal{G}} \right )$ is
$$
\forall g \in \mathcal{G}, \forall z \in \mathcal{Z}, \;\;
\max_{k \neq y}
\rho_g \left ( x, k \right ) \leqslant
-\rho_g \left ( z \right ).
$$
What made possible this ``inversion of the inequality'' 
is the introduction of the squashing function $\pi_{\gamma}$.
This basic observation highlights the fact that
this operator plays a central part in our thesis 
stating the usefulness of the $\gamma$-$\Psi$-dimensions.
The definition of the bias $b$ is coherent with
the restriction of the domain of the vectors $\mathbf{b}_n$
to the positive hyperoctant.
\end{remark}

\subsection{Margin Graph Dimension - Uniform Convergence Norm}

The proof of 
Lemma~\ref{lemma:extended-Lemma-3.5-in-AloBenCesHau97-gamma-G-dim-optimized}
borrows from the proofs of classical results, including
the two state-of-the-art combinatorial results:
Lemma~3.5 in \citet{AloBenCesHau97} and Theorem~1 in \citet{MenVer03}.
Central in this proof is the following basic combinatorial result.

\begin{lemma}
\label{lemma:discrete-combinatorial-result-for-G}
Let $\mathcal{G}$ be a function class
satisfying Definition~\ref{def:margin-multi-category-classifiers}
and $\rho_{\mathcal{G}}$ the function class deduced
from $\mathcal{G}$ according to
Definition~\ref{def:class-of-margin-functions}.
For $\gamma \in \left ( 0, 1 \right ]$,
let $\rho_{\mathcal{G}, \gamma}$ be the function class deduced
from $\mathcal{G}$ according to
Definition~\ref{def:class-of-regret-functions}.
For $\tilde{\mathcal{G}} \subset \mathcal{G}$, $s_{\mathcal{Z}^n}
= \left \{ z_i = \left ( x_i, y_i \right ): 
1 \leqslant i \leqslant n \right \} \subset \mathcal{Z}$,
$\gamma \in \left ( 0, 1 \right ]$ and
$\eta \in \left ( 0, \frac{\gamma}{2} \right ]$,
let $\mathcal{F}_{\gamma} =
\left ( \left . \rho_{\tilde{\mathcal{G}}, \gamma}
\right |_{s_{\mathcal{Z}^n}} 
\right )^{\left ( \eta \right )}$
and let $\mathcal{F} =
\left ( \left . \rho_{\tilde{\mathcal{G}}}
\right |_{\mathcal{S} \left ( s_{\mathcal{Z}^n} \right )} 
\right )^{\left ( \eta \right )}$
with $\mathcal{S} \left ( s_{\mathcal{Z}^n} \right ) = 
\left \{ \left ( x_i, k \right ): \left ( i, k \right ) 
\in \left \llbracket 1; n \right \rrbracket \times 
\left \llbracket 1; C \right \rrbracket \right \}$.
If $\mathcal{F}_{\gamma}$
is $2$-separated in the pseudo-metric $d_{\infty, \mathbf{z}_n}$,
then
\begin{equation}
\label{eq:discrete-combinatorial-result-for-G}
\left | \mathcal{F}_{\gamma} \right | \leqslant
\left ( 3 M_{\gamma} n \right )^{\log_2 \left ( \Sigma \right )},
\end{equation}
where
$\Sigma = \sum_{u=0}^{d_G} {n \choose u} M_{\gamma}^u$
with $M_{\gamma} = \left \lfloor \frac{\gamma}{\eta} \right \rfloor$
and $d_G$ is the maximal cardinality of a subset of $s_{\mathcal{Z}^n}$
strongly G-shattered by $\mathcal{F}$.
\end{lemma}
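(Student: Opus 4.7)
The plan is to follow the architecture of the proof of Lemma~3.5 in \citet{AloBenCesHau97}, adapting it so that ordinary strong shattering is replaced by strong G-shattering. The pivotal bridge between the two capacity measures involved here—the cardinality of $\mathcal{F}_\gamma$ and the dimension $d_G$ of $\mathcal{F}$—is Lemma~\ref{lemma:from-separation-to-strong-G-and-N-shattering}: whenever two elements of $\mathcal{F}_\gamma$ satisfy $|\rho_{g,\gamma}^{(\eta)}(z_i) - \rho_{g',\gamma}^{(\eta)}(z_i)| \geqslant 2$ at some coordinate $z_i$, the lifted pair $\{\rho_g^{(\eta)}, \rho_{g'}^{(\eta)}\}$ in $\mathcal{F}$ strongly G-shatters $(\{z_i\}, b)$ for an explicit integer offset $b$. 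The $2$-separation hypothesis on $\mathcal{F}_\gamma$ therefore guarantees that every distinct pair in $\mathcal{F}_\gamma$ is witnessed in $\mathcal{F}$ by a strongly G-shattering configuration at a single coordinate.

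With this translation in hand, I would run the Alon et al.\ inductive/counting argument directly on $\mathcal{F}_\gamma$. For each coordinate $i \in \llbracket 1; n \rrbracket$ and each admissible offset $b \in \llbracket -M_\gamma+1; M_\gamma-1 \rrbracket$, split $\mathcal{F}_\gamma$ into the sub-family whose lift satisfies $\rho_g^{(\eta)}(z_i) - b \geqslant 1$ and its complement, so that any pair drawn one from each subclass is strongly G-shattered on $z_i$ by Lemma~\ref{lemma:from-separation-to-strong-G-and-N-shattering}. A Sauer--Shelah-style induction on the target dimension then shows that, as long as $\mathcal{F}$ strongly G-shatters no subset of size $d_G + 1$, the number of distinct $\mathcal{F}$-traces on any $k$-subset of $s_{\mathcal{Z}^n}$ is at most $\Sigma = \sum_{u=0}^{d_G} \binom{k}{u} M_\gamma^u$. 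Plugged into the recursion, the branching factor of at most $3 M_\gamma n$ per level (roughly $n$ choices of coordinate, $M_\gamma$ choices of offset, and three cases for the comparison outcome at $z_i$) over a recursion depth proportional to $\log_2 \Sigma$ yields the target bound $(3 M_\gamma n)^{\log_2 \Sigma}$.

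The main obstacle I anticipate is the coupling step: Lemma~\ref{lemma:from-separation-to-strong-G-and-N-shattering} only produces strong G-shattering of \emph{singletons}, and the witnesses $b$ depend on the specific pair of lifts, so the inductive branching must be organized so that at the contradiction stage one assembles a single globally consistent strongly G-shattered set of cardinality $d_G + 1$ rather than a union of strongly G-shattered singletons. This is handled as in the original Alon et al.\ proof by committing to the integer witness $b$ before splitting $\mathcal{F}_\gamma$ at each level; the argument transfers mutatis mutandis because the free parameter $\mathbf{b}_n$ in Definition~\ref{def:Strong-Psi-dimension} is exactly the same as in the ordinary strong shattering definition. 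A secondary subtlety is that the operators $\pi_\gamma$ and $(\cdot)^{(\eta)}$ do not commute in general; however, the inequalities $\rho_g^{(\eta)}(z) \geqslant \rho_{g,\gamma}^{(\eta)}(z)$ when the latter is positive and $\rho_g^{(\eta)}(z) \leqslant \rho_{g,\gamma}^{(\eta)}(z)$ when it is strictly below $M_\gamma$—both already invoked in the proof of Lemma~\ref{lemma:from-separation-to-strong-G-and-N-shattering}—keep the lifting of traces coherent throughout the recursion.
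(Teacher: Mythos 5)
Your overall approach matches the paper's: adapt the Alon--Ben-David--Cesa-Bianchi--Haussler $2$-separating-tree recursion, use Lemma~\ref{lemma:from-separation-to-strong-G-and-N-shattering} as the bridge from $2$-separation of $\mathcal{F}_\gamma$ to strong G-shattering by $\mathcal{F}$, and bound the number of strongly G-shattered pairs $(s', \mathbf{b}')$ by $\Sigma - 1$ (the paper's counting function $s(\bar{\mathcal{G}})$).

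However, the splitting rule you describe would not go through. You propose to split $\bar{\mathcal{F}}_\gamma$ by a single threshold --- the sub-family whose lift satisfies $\rho_g^{(\eta)}(z_i) - b \geqslant 1$ versus its complement --- and to apply Lemma~\ref{lemma:from-separation-to-strong-G-and-N-shattering} to any cross pair. This fails: being in the complement gives only $\rho_{g'}^{(\eta)}(z_i) \leqslant b$, so a cross pair has a gap of $\geqslant 1$, whereas Lemma~\ref{lemma:from-separation-to-strong-G-and-N-shattering} requires $\rho_{g, \gamma}^{(\eta)}(z_i) - \rho_{g', \gamma}^{(\eta)}(z_i) \geqslant 2$ on the discretized \emph{squashed} margins, and moreover the threshold complement does not by itself deliver the max-side condition $\max_{k \neq y_i} \rho_{g'}^{(\eta)}(x_i, k) + b \geqslant 1$. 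What actually works --- in the paper and in the original Alon et al.\ argument --- is to route the split through $2$-separated \emph{pairs}: partition $\bar{\mathcal{F}}_\gamma$ into $\lfloor |\bar{\mathcal{F}}_\gamma|/2 \rfloor$ pairs, pigeonhole a coordinate $z_{i_0}$ at which a constant fraction of them are $2$-separated, pigeonhole a common value $v(z_{i_0})$ of the upper members of a $\frac{1}{M_\gamma - 1}$-fraction of those, set $b_{i_0} = v(z_{i_0}) - 1$, and take the two sons to be the retained upper and lower members. Only this pair-structure licenses Lemma~\ref{lemma:from-separation-to-strong-G-and-N-shattering} and yields both conditions in Formula~\eqref{eq:discrete-combinatorial-result-for-G-partial-2}. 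Your ``obstacle'' paragraph rightly notices the witness $b$ is tied to a specific pair, but the remedy is the pairing step itself, not just fixing $b$ earlier. A secondary imprecision: the tree is binary, $3 M_\gamma n$ is the cardinality shrinkage ratio from a node to each son, the exponent $\log_2 \Sigma$ arises from sandwiching the number of leaves between $|\mathcal{F}_\gamma|^{1/\log_2(3 M_\gamma n)}$ and $\Sigma$ (not from a branching-factor-over-depth count), and the factor $3$ comes from $\lfloor m/2 \rfloor \geqslant m/3$, not from ``three comparison outcomes.''
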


\begin{proof}
Notice first that 
Inequality~\eqref{eq:discrete-combinatorial-result-for-G}
is trivially true for $\left | \mathcal{F}_{\gamma} \right | = 1$.
Indeed, the minimal value of its right-hand side,
corresponding to $d_G = 0$, is 1.
Thus, the rest of the proof makes use of the restriction
$\left | \mathcal{F}_{\gamma} \right | \geqslant 2$.
A direct consequence is that according to
Lemma~\ref{lemma:from-separation-to-strong-G-and-N-shattering},
$d_G \geqslant 1$.
Since two examples $z_i$ and $z_j$ can be such that
$x_i = x_j$ (provided that $y_i \neq y_j$), then it is possible that
$\left | \mathcal{S} \left ( s_{\mathcal{Z}^n} \right ) \right | < Cn$.
Furthermore,
$s_{\mathcal{Z}^n} \subset \mathcal{S} \left ( s_{\mathcal{Z}^n} \right )$
implies that
$d_G \leqslant \text{S-G-dim} \left ( \mathcal{F} \right )$.
A subset of $s_{\mathcal{Z}^n}$ of cardinality 
$u \in \left \llbracket 1; n \right \rrbracket$ is denoted by 
$s_{\mathcal{Z}^u}^{\prime} = \left \{
z_i^{\prime}: \; 1 \leqslant i \leqslant u \right \}$, with the convention
$$
\forall \left ( i, j \right ): 1 \leqslant i < j \leqslant u, \;
\left ( z_i^{\prime}, z_j^{\prime} \right )
= \left ( z_v, z_w \right ) \Longrightarrow 1 \leqslant v < w \leqslant n.
$$
For every subset $\bar{\mathcal{G}}$ of $\tilde{\mathcal{G}}$,
denote by $s \left ( \bar{\mathcal{G}} \right )$ the number of pairs
$\left ( s_{\mathcal{Z}^u}^{\prime}, \mathbf{b}_u^{\prime} \right )$
with $s_{\mathcal{Z}^u}^{\prime} \subset s_{\mathcal{Z}^n}$ and
$\mathbf{b}_u^{\prime} \in
\left \llbracket 1 ; M_{\gamma} -1
\right \rrbracket^u$
strongly G-shattered by $\bar{\mathcal{F}} =
\left ( \left . \rho_{\bar{\mathcal{G}}}
\right |_{\mathcal{S} \left ( s_{\mathcal{Z}^n} \right )} 
\right )^{\left ( \eta \right )}$
(the convention above has been introduced to avoid handling duplicates).
Since $d_G \geqslant 1$, combinatorics gives:
\begin{equation}
\label{eq:discrete-combinatorial-result-for-G-partial-1}
s \left ( \tilde{\mathcal{G}} \right ) \leqslant 
\sum_{u=1}^{d_G} {n \choose u} M_{\gamma}^u = \Sigma -1.
\end{equation}
In order to derive a lower bound on
$s \left ( \tilde{\mathcal{G}} \right )$,
we build a $2$-separating tree of $\mathcal{F}_{\gamma}$
\citep[see Definition~3.4 in][]{RudVer06}.
Let $\bar{\mathcal{F}}_{\gamma} 
= \left ( \left . \rho_{\bar{\mathcal{G}}, \gamma}
\right |_{s_{\mathcal{Z}^n}} 
\right )^{\left ( \eta \right )}$
be one of its nodes such
that $\left | \bar{\mathcal{F}}_{\gamma} \right | \geqslant 2$ (inner node).
Its two sons, $\bar{\mathcal{F}}_{\gamma,+}$ and $\bar{\mathcal{F}}_{\gamma,-}$,
are built as follows.
Split $\bar{\mathcal{F}}_{\gamma}$ arbitrarily into
$\left \lfloor \frac{\left | \bar{\mathcal{F}}_{\gamma} \right |}{2} 
\right \rfloor$ pairs
(with possibly a function remaining alone).
For each pair
$\left ( f_{\gamma}, f_{\gamma}^{\prime} \right )$,
find $z_i \in s_{\mathcal{Z}^n}$ such that
$\left | f_{\gamma} \left ( z_i \right ) 
- f_{\gamma}^{\prime} \left ( z_i \right ) \right | \geqslant 2$.
By the pigeonhole principle, the same example is  picked for at least
$\left \lceil
\left \lfloor \frac{\left | \bar{\mathcal{F}}_{\gamma} \right |}{2}
\right \rfloor \frac{1}{n} \right \rceil$ pairs.
Let $z_{i_0}$ be such an example,
and let $\left ( f_{\gamma,+}, f_{\gamma,-} \right )$
denote the corresponding pairs, whose components are reordered (when needed)
so that
$$
f_{\gamma,+} \left ( z_{i_0} \right ) > 
f_{\gamma,-} \left ( z_{i_0} \right ).
$$
Among the functions $f_{\gamma,+}$, at least
$\left \lceil \left \lceil
\left \lfloor \frac{\left | \bar{\mathcal{F}}_{\gamma} \right |}{2}
\right \rfloor \frac{1}{n} \right \rceil \frac{1}{M_{\gamma} -1} \right \rceil$
take the same value at $z_{i_0}$.
Let $v \left ( z_{i_0} \right )$ be such a value.
We define $\bar{\mathcal{F}}_{\gamma,+}$ (resp. $\bar{\mathcal{F}}_{\gamma,-}$)
to be the set of functions $f_{\gamma,+}$ (resp. $f_{\gamma,-}$)
belonging to a pair associated with 
$\left ( z_{i_0}, v \left ( z_{i_0} \right ) \right )$.
By construction, their common cardinality is bounded from below by:
\begin{equation}
\label{eq:discrete-combinatorial-result-for-G-partial-0}
\left | \bar{\mathcal{F}}_{\gamma,+} \right |
= \left | \bar{\mathcal{F}}_{\gamma,-} \right |
\geqslant \frac{\left | \bar{\mathcal{F}}_{\gamma} \right |}{3 M_{\gamma} n}.
\end{equation}
Let $\bar{\mathcal{G}}_+$ and $\bar{\mathcal{G}}_-$ be two subsets of
$\bar{\mathcal{G}}$ in bijection with
$\bar{\mathcal{F}}_{\gamma,+}$ and
$\bar{\mathcal{F}}_{\gamma, -}$ respectively, such that
$\bar{\mathcal{F}}_{\gamma, +} =
\left ( \left . \rho_{\bar{\mathcal{G}}_+, \gamma}
\right |_{s_{\mathcal{Z}^n}} 
\right )^{\left ( \eta \right )}$ and
$\bar{\mathcal{F}}_{\gamma, -} =
\left ( \left . \rho_{\bar{\mathcal{G}}_-, \gamma}
\right |_{s_{\mathcal{Z}^n}} 
\right )^{\left ( \eta \right )}$.
Let $\bar{\mathcal{F}}_+ =
\left ( \left . \rho_{\bar{\mathcal{G}}_+}
\right |_{\mathcal{S} \left ( s_{\mathcal{Z}^n} \right )} 
\right )^{\left ( \eta \right )}$ and
$\bar{\mathcal{F}}_- =
\left ( \left . \rho_{\bar{\mathcal{G}}_-}
\right |_{\mathcal{S} \left ( s_{\mathcal{Z}^n} \right )} 
\right )^{\left ( \eta \right )}$.
According to Lemma~\ref{lemma:from-separation-to-strong-G-and-N-shattering},
the sets $\bar{\mathcal{F}}_+$ and $\bar{\mathcal{F}}_-$ satisfy:
\begin{equation}
\label{eq:discrete-combinatorial-result-for-G-partial-2}
\begin{cases}
\forall f_+ \in \bar{\mathcal{F}}_+, \;\; 
f_+ \left ( z_{i_0} \right ) - b_{i_0} \geqslant 1 \\
\forall f_- \in \bar{\mathcal{F}}_-, \;\;
\max_{k \neq y_{i_0}} f_- \left ( x_{i_0}, k \right ) + b_{i_0} \geqslant 1
\end{cases}
\end{equation}
with $b_{i_0} = v \left ( z_{i_0} \right ) -1$.
Let $\bar{\mathcal{F}} =
\left ( \left . \rho_{\bar{\mathcal{G}}}
\right |_{\mathcal{S} \left ( s_{\mathcal{Z}^n} \right )} 
\right )^{\left ( \eta \right )}$.
Since 
$\bar{\mathcal{F}}_+ \bigcup \bar{\mathcal{F}}_- \subset \bar{\mathcal{F}}$,
obviously, any pair strongly G-shattered by either
$\bar{\mathcal{F}}_+$ or $\bar{\mathcal{F}}_-$
is also strongly G-shattered by $\bar{\mathcal{F}}$.
Furthermore, according to
\eqref{eq:discrete-combinatorial-result-for-G-partial-2},
$\bar{\mathcal{F}}$ strongly G-shatters the pair
$\left ( \left \{ z_{i_0} \right \}, b_{i_0} \right )$ 
which is strongly G-shattered by
neither $\bar{\mathcal{F}}_+$ nor $\bar{\mathcal{F}}_-$.
At last, let us consider any pair
$\left ( s_{\mathcal{Z}^u}^{\prime}, \mathbf{b}_u^{\prime} \right )$
strongly G-shattered by both $\bar{\mathcal{F}}_+$
and $\bar{\mathcal{F}}_-$. Let the pair
$\left ( s_{\mathcal{Z}^{u+1}}^{\prime\prime},
\mathbf{b}_{u+1}^{\prime\prime} \right )$
be such that $s_{\mathcal{Z}^{u+1}}^{\prime\prime} =
s_{\mathcal{Z}^u}^{\prime} \bigcup \left \{ z_{i_0} \right \}$
and the vector
$\mathbf{b}_{u+1}^{\prime\prime}$ is deduced from $\mathbf{b}_u^{\prime}$
by inserting the component $b_{i_0}$ at the right place.
Clearly, neither $\bar{\mathcal{F}}_+$ nor $\bar{\mathcal{F}}_-$
strongly G-shatters
$\left ( s_{\mathcal{Z}^{u+1}}^{\prime\prime},
\mathbf{b}_{u+1}^{\prime\prime} \right )$,
simply because they do not strongly G-shatter
the pair $\left ( \left \{ z_{i_0} \right \}, b_{i_0} \right )$.
On the contrary, it springs once more from
\eqref{eq:discrete-combinatorial-result-for-G-partial-2}
that $\left ( s_{\mathcal{Z}^{u+1}}^{\prime\prime},
\mathbf{b}_{u+1}^{\prime\prime} \right )$
is strongly G-shattered by $\bar{\mathcal{F}}$.
Summarizing, for each pair
$\left ( s_{\mathcal{Z}^u}^{\prime}, \mathbf{b}_u^{\prime} \right )$
strongly G-shattered by both $\bar{\mathcal{F}}_+$
and $\bar{\mathcal{F}}_-$, we can exhibit by means of an injective mapping
a pair
$\left ( s_{\mathcal{Z}^{u+1}}^{\prime\prime},
\mathbf{b}_{u+1}^{\prime\prime} \right )$
strongly G-shattered by $\bar{\mathcal{F}}$ but not by
$\bar{\mathcal{F}}_+$ or $\bar{\mathcal{F}}_-$.
Collecting all terms, we obtain
\begin{align}
s \left ( \bar{\mathcal{G}} \right ) 
& \geqslant \;
s \left ( \bar{\mathcal{G}}_+ \right ) +
s \left ( \bar{\mathcal{G}}_- \right ) + 1 \nonumber \\
\label{eq:discrete-combinatorial-result-for-G-partial-3}
& \geqslant \; 
\ell \left ( \bar{\mathcal{F}}_{\gamma} \right ) -1,
\end{align}
where the function $\ell$ returns the number
of leaves of the (sub)tree whose root is its argument.
Thus, finishing the proof boils down to exhibiting the appropriate lower
bound on $\ell \left ( \bar{\mathcal{F}}_{\gamma} \right )$.
To that end, we proceed by induction on the depth of the node.
The hypothesis is that
\begin{equation}
\label{eq:discrete-combinatorial-result-for-G-partial-4}
\ell \left ( \bar{\mathcal{F}}_{\gamma} \right ) \geqslant
\left | \bar{\mathcal{F}}_{\gamma} 
\right |^{\frac{1}{\log_2 \left ( 3 M_{\gamma} n \right )}}.
\end{equation}
It is obviously true for the leaves (which are of cardinality $1$).
Suppose now that it is true for the two sons of an inner node. Then,
Inequality~\eqref{eq:discrete-combinatorial-result-for-G-partial-0} gives:
\begin{align*}
\ell \left ( \bar{\mathcal{F}}_{\gamma} \right )
& = \; 
\ell \left ( \bar{\mathcal{F}}_{\gamma,+} \right ) +
\ell \left ( \bar{\mathcal{F}}_{\gamma,-} \right ) \\
& \geqslant \;
2 \left ( \frac{\left | \bar{\mathcal{F}}_{\gamma} \right |}{3 M_{\gamma} n}
\right )^{\frac{1}{\log_2 \left ( 3 M_{\gamma} n \right )}} \\
& = \;
\left | \bar{\mathcal{F}}_{\gamma}
\right |^{\frac{1}{\log_2 \left ( 3 M_{\gamma} n \right )}}.
\end{align*}
The induction hypothesis has been proved.
Combining Inequalities~\eqref{eq:discrete-combinatorial-result-for-G-partial-1},
\eqref{eq:discrete-combinatorial-result-for-G-partial-3} and
\eqref{eq:discrete-combinatorial-result-for-G-partial-4}
produces by transitivity:
$$
\left | \mathcal{F}_{\gamma}
\right |^{\frac{1}{\log_2 \left ( 3 M_{\gamma} n \right )}}
\leqslant \Sigma,
$$
or equivalently
\begin{align*}
\left | \mathcal{F}_{\gamma} \right |
& \leqslant \;
\Sigma^{\log_2 \left ( 3 M_{\gamma} n \right )} \\
& = \; \left ( 3 M_{\gamma} n \right )^{\log_2 \left ( \Sigma \right )},
\end{align*}
i.e., Inequality~\eqref{eq:discrete-combinatorial-result-for-G},
the result announced.
\end{proof}
With Lemma~\ref{lemma:discrete-combinatorial-result-for-G} at hand,
the proof of
Lemma~\ref{lemma:extended-Lemma-3.5-in-AloBenCesHau97-gamma-G-dim-optimized}
is straightforward.

\begin{proof}
Let us consider any vector $\mathbf{z}_n \in \mathcal{Z}^n$ and let
$s_{\mathcal{Z}^n} = \left \{ z_i: \; 1 \leqslant i \leqslant n \right \}$ 
be the smallest subset of $\mathcal{Z}$
containing all the components of $\mathbf{z}_n$.
Note that its cardinality can be strictly inferior to $n$, in case that
$\mathbf{z}_n$ has two identical components.
By definition,
$$
\mathcal{M} \left ( \epsilon, \rho_{\mathcal{G}, \gamma},
d_{\infty, \mathbf{z}_n} \right )
= \mathcal{M} \left ( \epsilon, \left .
\rho_{\mathcal{G}, \gamma} \right |_{s_{\mathcal{Z}^n}},
d_{\infty, \mathbf{z}_n} \right ).
$$
Furthermore, setting $\eta = \frac{\epsilon}{2}$
in \eqref{eq:L_infty-norm-lemma-3.2-in-AloBenCesHau97}, one obtains:
$$
\mathcal{M} \left ( \epsilon,
\left . \rho_{\mathcal{G}, \gamma}
\right |_{s_{\mathcal{Z}^n}}, d_{\infty, \mathbf{z}_n} \right )
\leqslant
\mathcal{M} \left ( 2,
\left ( \left . \rho_{\mathcal{G}, \gamma} \right |_{s_{\mathcal{Z}^n}}
\right )^{(\frac{\epsilon}{2})}, d_{\infty, \mathbf{z}_n} \right ).
$$
Let $\mathcal{S} \left ( s_{\mathcal{Z}^n} \right ) = 
\left \{ \left ( x_i, k \right ): \left ( i, k \right ) 
\in \left \llbracket 1; n \right \rrbracket \times 
\left \llbracket 1; C \right \rrbracket \right \}$.
The packing numbers of 
$\left ( \left . \rho_{\mathcal{G}, \gamma} \right |_{s_{\mathcal{Z}^n}}
\right )^{(\frac{\epsilon}{2})}$
can be upper bounded thanks to
Lemma~\ref{lemma:discrete-combinatorial-result-for-G}, leading to
\begin{equation}
\label{eq:extended-Lemma-3.5-in-AloBenCesHau97-gamma-G-dim-optimized-partial-1}
\mathcal{M} \left ( 2,
\left ( \left . \rho_{\mathcal{G}, \gamma} \right |_{s_{\mathcal{Z}^n}}
\right )^{(\frac{\epsilon}{2})}, d_{\infty, \mathbf{z}_n} \right )
\leqslant 
\left ( \frac{6 \gamma n}{\epsilon} \right )^{\log_2 \left ( \Sigma \right )}
\end{equation}
where
$\Sigma = \sum_{u=0}^{d_G} {n \choose u}
\left ( \frac{2 \gamma}{\epsilon} \right )^u$
with $d_G$ being the maximal cardinality of a subset of $s_{\mathcal{Z}^n}$
strongly G-shattered by
$\left ( \left . \rho_{\mathcal{G}} 
\right |_{\mathcal{S} \left ( s_{\mathcal{Z}^n} \right )}
\right )^{(\frac{\epsilon}{2})}$.
According to \eqref{eq:extended-lemma-3.2-in-AloBenCesHau97-for-G},
\begin{align*}
d_G
& \leqslant \;
\text{S-G-dim} \left ( \left (
\left . \rho_{\mathcal{G}} 
\right |_{\mathcal{S} \left ( s_{\mathcal{Z}^n} \right )}
\right )^{(\frac{\epsilon}{2})} \right ) \\
& \leqslant \;
\left ( \frac{\epsilon}{4}\right )\mbox{-G-dim} \left (
\left . \rho_{\mathcal{G}} \right 
|_{\mathcal{S} \left ( s_{\mathcal{Z}^n} \right )} \right ) \\
& \leqslant \;
d_G \left ( \frac{\epsilon}{4} \right ).
\end{align*}
Since by hypothesis,
$n \geqslant d_G \left ( \frac{\epsilon}{4} \right )$,
$\Sigma$ can be bounded from above by replacing in its formula
$d_G$ with $d_G \left ( \frac{\epsilon}{4} \right )$ and resorting
to Corollary~3.3 in \citet{MohRosTal12}, leading to:
\begin{align}
\Sigma
& \leqslant \;
\sum_{u=0}^{d_G \left ( \frac{\epsilon}{4} \right )} {n \choose u} 
\left ( \frac{2 \gamma}{\epsilon} \right )^u \nonumber \\
\label{eq:extended-Lemma-3.5-in-AloBenCesHau97-gamma-G-dim-optimized-partial-2}
& \leqslant \;
\left ( \frac{2 \gamma e n}
{d_G \left ( \frac{\epsilon}{4} \right ) \epsilon} 
\right )^{d_G \left ( \frac{\epsilon}{4} \right )},
\end{align}
where the standard convention that the last term takes the value $1$ for
$d_G \left ( \frac{\epsilon}{4} \right ) = 0$ is made.
Substituting
\eqref{eq:extended-Lemma-3.5-in-AloBenCesHau97-gamma-G-dim-optimized-partial-2}
into 
\eqref{eq:extended-Lemma-3.5-in-AloBenCesHau97-gamma-G-dim-optimized-partial-1} 
and taking the supremum over $\mathcal{Z}^n$ concludes the proof of
\eqref{eq:extended-Lemma-3.5-in-AloBenCesHau97-gamma-G-dim-optimized}.
\end{proof}

\subsection{Margin Graph Dimension - $L_2$-norm}

The sketch of the proof of the two $L_2$-norm combinatorial results,
Lemma~\ref{lemma:extended-Theorem-1-in-MenVer03} and
Lemma~\ref{lemma:extended-Theorem-1-in-MenVer03-gamma-N-dim},
is basically the same. Compared to the sketch of the proof of
Lemma~\ref{lemma:extended-Lemma-3.5-in-AloBenCesHau97-gamma-G-dim-optimized},
it exhibits two major differences. 
First, the construction of the $2$-separating tree 
is more sophisticated, since it rests on a small deviation principle 
(in place of the sole pigeonhole principle).
Second, one additional step is involved,
which implements a probabilistic extraction principle.
This additional step makes the result dimension free.
We begin the proof with the formulation of the small deviation principle.
This extension of Lemma~5 in \citet{MenVer03} is tailored to our needs.

\begin{lemma}
\label{lemma:dedicated-Lemma-5-in-MenVer03}
Let $T$ be a random variable taking values in $\left \llbracket 0;
M \right \rrbracket$ with $M \geqslant 2$.
Suppose that
$\text{Var} \left [ T \right ] \geqslant 9$.
Then there exists either $\left ( \alpha, \beta \right )
\in \left \llbracket 1; M-1 \right \rrbracket \times 
\left [ \frac{1}{M^2}, \frac{1}{2} \right ]$
such that
$$
\begin{cases}
\mathbb{P} \left \{ T \geqslant \alpha + 1 \right \} 
\geqslant \max \left \{ \frac{1}{2} \beta, \frac{1}{M^2} \right \} \\
\mathbb{P} \left \{ T \leqslant \alpha - 1 \right \} \geqslant 1 - \beta
\end{cases}
$$
or $\left ( \alpha', \beta' \right )
\in \left \llbracket 1; M-1 \right \rrbracket \times 
\left [ \frac{1}{M^2}, \frac{1}{2} \right ]$
such that
$$
\begin{cases}
\mathbb{P} \left \{ T \geqslant \alpha' + 1 \right \} 
\geqslant 1 - \beta' \\
\mathbb{P} \left \{ T \leqslant \alpha' - 1 \right \} 
\geqslant \max \left \{ \frac{1}{2} \beta', \frac{1}{M^2} \right \}
\end{cases}.
$$
\end{lemma}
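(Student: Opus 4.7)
The plan is to argue by contradiction: assuming neither alternative holds, I will deduce that $\mathrm{Var}[T] \leq 6$. Writing $F(k) = \mathbb{P}(T \leq k)$ and $u_k = \mathbb{P}(T \geq k) = 1 - F(k-1)$, substituting $\beta = 1 - F(\alpha-1)$ shows that the first alternative is equivalent to the existence of an integer $\alpha$ with $F(\alpha-1) \in [1/2, 1)$ and $u_{\alpha+1} > u_\alpha/2$; a mirror manipulation rewrites the second alternative as the existence of an integer $\alpha$ with $F(\alpha) \in (0, 1/2]$ and $F(\alpha-1) > F(\alpha)/2$. Both reformulations say that somewhere a suitable tail fails to halve in one step, so assuming neither holds yields uniform doubling inequalities on the two sides of a well-chosen pivot.

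I would then pick the pivot $m$ to be the smallest integer with $F(m-1) \geq 1/2$. For every $\alpha \geq m$ one has $F(\alpha-1) \geq 1/2$, so failure of the first alternative gives $u_{\alpha+1} \leq u_\alpha/2$ whenever $u_\alpha > 0$; combined with $u_m \leq 1/2$, this yields $u_{m+k} \leq 2^{-(k+1)}$ for all $k \geq 0$. Symmetrically, for every $\alpha \leq m-2$ one has $F(\alpha) < 1/2$, so failure of the second alternative gives $F(\alpha-1) \leq F(\alpha)/2$, and iterating from $F(m-2) < 1/2$ produces $F(m-2-k) \leq 2^{-(k+1)}$. The single atom $\mathbb{P}(T=m-1)$ is left unconstrained by these recursions; the plan is to bypass this by centering the variance at $m-1$.

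The last step is to substitute the tail bounds into the identity $\mathbb{E}[X^2] = \sum_{k \geq 1}(2k-1)\mathbb{P}(X \geq k)$ for non-negative integer random variables, applied separately to $(T-(m-1))_+$ and to $((m-1)-T)_+$. Each of the two contributions is dominated by $\sum_{k \geq 1}(2k-1)2^{-k} = 3$, whence
\[
\mathrm{Var}[T] \;\leq\; \mathbb{E}\bigl[(T-(m-1))^2\bigr] \;\leq\; 3 + 3 \;=\; 6,
\]
which contradicts the hypothesis. The main obstacle I anticipate is not in any individual computation but in making the reduction of the two alternatives to doubling-inequality failures fully rigorous, and in verifying that the pivot $m-1$ really lies in the single-point gap between the ranges covered by the two geometric recursions. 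The constants are tight --- the initial tail mass $1/2$, the halving ratio, and the coefficient $2k-1$ combine to give exactly $6$ with no slack --- which also explains the threshold in the statement.
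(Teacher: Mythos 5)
Your proof is correct and essentially reproduces the paper's argument: both center at an integer median of $T$, derive geometric decay of the two tails $\mathbb{P}(T \geq M_T + k)$ and $\mathbb{P}(T \leq M_T - k)$ from the assumed failure of both alternatives, and feed those bounds into the identity $\mathbb{E}[X^2] = \sum_{k\geq 1}(2k-1)\mathbb{P}(X\geq k)$ to obtain the sharp bound $3+3=6$. Your reformulation of the alternatives in CDF language and your explicit choice of the smallest median as pivot are cosmetic variations on the paper's direct manipulation of the sequences $\beta_k$ and $\beta_k'$.
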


\begin{proof}
We first note that the hypothesis
$\text{Var} \left [ T \right ] \geqslant 9$ implies that $M \geqslant 6$.
Let $M_T$ be the smallest median of $T$ belonging to
$\left \llbracket 0; M \right \rrbracket$.
Then, several cases must be distinguished, according to
the values of $M_T$ and $M - M_T$.
Since they can all be treated the same
and the one implying the largest upper bound on the variance, i.e.,
the one from which springs the hypothesis on the variance, is
$M - M_T \geqslant 2$ and $M_T \geqslant 2$,
we focus on it in the sequel.
Let us define the sequences
$\left ( \beta_k \right )_{k \in \mathbb{N}^*}$ and
$\left ( \beta_k' \right )_{k \in \mathbb{N}^*}$ as follows:
$$
\forall k \in \mathbb{N}^*, \;\;
\begin{cases}
\beta_k = \mathbb{P} \left \{ T \geqslant M_T + k \right \} \\
\beta_k' = \mathbb{P} \left \{ T \leqslant M_T - k \right \}
\end{cases}.
$$
Note that by definition of $M_T$, both $\beta_1$ and $\beta_1'$
are inferior or equal to $\frac{1}{2}$.
Assume that the conclusion of the lemma fails. We claim that
$$
\begin{cases}
\forall k \in \left \llbracket 2; M - M_T \right \rrbracket, \;\;
\beta_k \leqslant \max \left \{
\frac{2 \left ( M - M_T \right ) - k + 1}
{\left ( M - M_T \right )^3}, \frac{1}{2^k} \right \} \\
\forall k \in \left \llbracket 2; M_T \right \rrbracket, \;\;
\beta_k' \leqslant
\max \left \{ \frac{2 M_T - k + 1}{M_T^3}, \frac{1}{2^k} \right \}
\end{cases}.
$$
Indeed, assume that
$\beta_k > \max \left \{ \frac{2 \left ( M - M_T \right ) - k + 1}
{\left ( M - M_T \right )^3}, \frac{1}{2^k} \right \}$
for some $k \in \left \llbracket 2; M - M_T \right \rrbracket$
and let $k_0$ be the smallest such index.
By construction, $\beta_{k_0} > \max \left \{ \frac{1}{2} \beta_{k_0-1},
\frac{1}{\left ( M - M_T \right )^2} \right \}$
(even for $k_0 = 2$ and $k_0 = M - M_T$), so that
$$
\begin{cases}
\mathbb{P} \left \{ T \geqslant M_T + k_0 \right \}
= \beta_{k_0} > \max \left \{ \frac{1}{2} \beta_{k_0-1},
\frac{1}{M^2} \right \}\\
\mathbb{P} \left \{ T \leqslant M_T + k_0 -2 \right \}
= 1 - \mathbb{P} \left \{ T \geqslant M_T + k_0 -1 \right \}
= 1 - \beta_{k_0-1}
\end{cases}.
$$
Since $\beta_{k_0-1} \geqslant \beta_{k_0}
> \frac{1}{M^2}$ and $\beta_{k_0-1} \leqslant
\beta_1 \leqslant \frac{1}{2}$, so that
$\beta_{k_0-1} \in \left [ \frac{1}{M^2}, \frac{1}{2} \right ]$,
this implies that the conclusion of the lemma would hold with $\alpha$ being
$M_T + k_0 -1$ and $\beta = \beta_{k_0-1}$, which contradicts the assumption
that the conclusion of the lemma fails. The inequality
$\beta_k' \leqslant \max \left \{
\frac{2 M_T - k + 1}{M_T^3}, \frac{1}{2^k} \right \}$
can be proved in a symmetrical way.
As a consequence, upper bounding the maxima by the corresponding sums gives:
\begin{align*}
\text{Var} \left [ T \right ] 
& = \; \text{Var} \left [ T - M_T \right ] \\
& \leqslant \; \mathbb{E} \left [ \left ( T - M_T \right )^2 \right ] \\
& = \; \sum_{t=0}^{+\infty}
\mathbb{P} \left \{ \left ( T - M_T \right )^2 > t \right \} \\
& = \;
\sum_{t=0}^{+\infty} \left ( \mathbb{P} \left \{ T > M_T + \sqrt{t} \right \} +
\mathbb{P} \left \{ T < M_T - \sqrt{t} \right \} \right ) \\
& = \;
\sum_{k=1}^{M-M_T} \left ( 2k-1 \right ) \beta_k +
\sum_{k=1}^{M_T} \left ( 2k-1 \right ) \beta_k' \\
& < \;
2 \sum_{k=1}^{+\infty} \frac{2k-1}{2^k} +
\sum_{k=2}^{M-M_T} \left ( 2k-1 \right ) 
\frac{2 \left ( M - M_T \right ) - k + 1}
{\left ( M - M_T \right )^3}
+ \sum_{k=2}^{M_T} \left ( 2k-1 \right ) \frac{2 M_T - k + 1} {M_T^3} \\
& \leqslant \; 
6 + 2 \max_{\Delta \in \mathbb{N} \setminus \left \{ 1, 2 \right \}} 
\frac{\left ( 8 \Delta + 11 \right ) \left ( \Delta - 1 \right )}{6 \Delta^2} \\
& < \; 9.
\end{align*}
This is in contradiction with the hypothesis that
$\text{Var} \left [ T \right ] \geqslant 9$ and thus concludes the proof.
\end{proof}

\begin{lemma}
\label{lemma:dedicated-Lemma-6-in-MenVer03}
Let $\mathcal{T} = \left \{ t_i: \; 1 \leqslant i \leqslant n \right \}$
be a finite set
and $\mathbf{t}_n = \left( t_i \right)_{1 \leqslant i \leqslant n}$.
Let $\mathcal{F}$ be a class of functions from $\mathcal{T}$
into $\left \llbracket 0; M_{\mathcal{F}} \right \rrbracket$ 
with $M_{\mathcal{F}} \geqslant 2$.
Suppose that $\mathcal{F}$ is of cardinality at least $2$ and
is $6$-separated in the pseudo-metric $d_{2, \mathbf{t}_n}$.
Then there exist an index $i_0 \in \left \llbracket 1; n \right \rrbracket$
and either $\left ( \alpha, \beta \right )
\in \left \llbracket 1; M_{\mathcal{F}} -1 \right \rrbracket
\times \left [ \frac{1}{M_{\mathcal{F}}^2}, \frac{1}{2} \right ]$
such that
$$
\begin{cases}
\left | \left \{ f \in \mathcal{F}: \; f \left ( t_{i_0} \right ) 
\geqslant \alpha + 1 \right \} \right | 
\geqslant \max \left \{ \frac{1}{2} \beta,
\frac{1}{M_{\mathcal{F}}^2} \right \} \left | \mathcal{F} \right | \\
\left | \left \{ f \in \mathcal{F}: \; f \left ( t_{i_0} \right ) 
\leqslant \alpha - 1 \right \} \right | 
\geqslant \left ( 1 - \beta \right ) \left | \mathcal{F} \right |
\end{cases}
$$
or $\left ( \alpha', \beta' \right )
\in \left \llbracket 1; M_{\mathcal{F}} -1 \right \rrbracket
\times \left [ \frac{1}{M_{\mathcal{F}}^2}, \frac{1}{2} \right ]$
such that
$$
\begin{cases}
\left | \left \{ f \in \mathcal{F}: \; f \left ( t_{i_0} \right ) 
\geqslant \alpha' + 1 \right \} \right | 
\geqslant \left ( 1 - \beta' \right ) \left | \mathcal{F} \right | \\
\left | \left \{ f \in \mathcal{F}: \; f \left ( t_{i_0} \right ) 
\leqslant \alpha' - 1 \right \} \right | 
\geqslant \max \left \{ \frac{1}{2} \beta',
\frac{1}{M_{\mathcal{F}}^2} \right \} \left | \mathcal{F} \right |
\end{cases}.
$$
\end{lemma}

\begin{proof}
Let us endow $\mathcal{F}$ with the uniform (counting) measure.
Then, the separation assumption on $\mathcal{F}$ can be used to
derive a lower bound on
$\mathbb{E} \left [ d_{2, \mathbf{t}_n}^2 \left ( f, f' \right ) \right ]$.
Indeed, with probability $1 - \left | \mathcal{F} \right |^{-1}$
we have $f \neq f'$ and, whenever this event occurs,
$d_{2, \mathbf{t}_n} \left ( f, f' \right ) \geqslant 6$. As a consequence,
\begin{align*}
\mathbb{E} \left [ d_{2, \mathbf{t}_n}^2 \left ( f, f' \right ) \right ]
& \geqslant \;
\left ( 1 - \left | \mathcal{F} \right |^{-1} \right ) 36 \\
& \geqslant \; 18.
\end{align*}
Furthermore,
\begin{align*}
\mathbb{E} \left [ d_{2, \mathbf{t}_n}^2 \left ( f, f' \right ) \right ]
& = \;
\frac{1}{n} \sum_{i=1}^n \mathbb{E} \left [ \left ( f \left ( t_i \right )
- f' \left ( t_i \right ) \right )^2 \right ] \\
& = \;
\frac{2}{n} \sum_{i=1}^n \text{Var} \left [ f \left ( t_i \right ) \right ].
\end{align*}
Thus, there exists $i_0 \in \left \llbracket 1; n \right \rrbracket$
such that 
$$
\text{Var} \left [ f \left ( t_{i_0} \right ) \right ]
\geqslant \frac{1}{2}
\mathbb{E} \left [ d_{2, \mathbf{t}_n}^2 \left ( f, f' \right ) \right ]
\geqslant 9.
$$
This implies that the random variable $f \left ( t_{i_0} \right )$
satisfies the hypotheses of Lemma~\ref{lemma:dedicated-Lemma-5-in-MenVer03},
and the conclusion then springs from the application of this lemma.
\end{proof}
Lemma~\ref{lemma:dedicated-Lemma-6-in-MenVer03}
will be used in the proof of the combinatorial result
involving the margin Natarajan dimension:
Lemma~\ref{lemma:extended-Theorem-1-in-MenVer03-gamma-N-dim}. 
However, we established it
in this section, because its proof can be easily simplified to produce
the following variant, appropriate for the margin Graph dimension.

\begin{lemma}
\label{lemma:dedicated-Lemma-6-in-MenVer03-gamma-G-dim}
Let $\mathcal{T} = \left \{ t_i: \; 1 \leqslant i \leqslant n \right \}$
be a finite set
and $\mathbf{t}_n = \left( t_i \right)_{1 \leqslant i \leqslant n}$.
Suppose that $\mathcal{F} \subset \mathbb{Z}^{\mathcal{T}}$ 
is of cardinality at least $2$ and
is $5$-separated in the pseudo-metric $d_{2, \mathbf{t}_n}$.
Then there exist an index $i_0 \in \left \llbracket 1; n \right \rrbracket$
and either $\left ( \alpha, \beta \right )
\in \mathbb{Z} \times \left ( 0, \frac{1}{2} \right ]$
such that
$$
\begin{cases}
\left | \left \{ f \in \mathcal{F}: \; f \left ( t_{i_0} \right )
\geqslant \alpha + 1 \right \} \right |
\geqslant \frac{1}{2} \beta \left | \mathcal{F} \right | \\
\left | \left \{ f \in \mathcal{F}: \; f \left ( t_{i_0} \right )
\leqslant \alpha - 1 \right \} \right |
\geqslant \left ( 1 - \beta \right ) \left | \mathcal{F} \right |
\end{cases}
$$
or $\left ( \alpha', \beta' \right )
\in \mathbb{Z} \times \left ( 0, \frac{1}{2} \right ]$
such that
$$
\begin{cases}
\left | \left \{ f \in \mathcal{F}: \; f \left ( t_{i_0} \right )
\geqslant \alpha' + 1 \right \} \right |
\geqslant \left ( 1 - \beta' \right ) \left | \mathcal{F} \right | \\
\left | \left \{ f \in \mathcal{F}: \; f \left ( t_{i_0} \right )
\leqslant \alpha' - 1 \right \} \right |
\geqslant \frac{1}{2} \beta' \left | \mathcal{F} \right |
\end{cases}.
$$
\end{lemma}
The following lemma is the basic combinatorial result underlying
Lemma~\ref{lemma:extended-Theorem-1-in-MenVer03}.

\begin{lemma}
\label{lemma:extended-Proposition-12-in-MenVer03}
Let $\mathcal{G}$ be a function class
satisfying Definition~\ref{def:margin-multi-category-classifiers}
and $\rho_{\mathcal{G}}$ the function class deduced
from $\mathcal{G}$ according to
Definition~\ref{def:class-of-margin-functions}.
For $\gamma \in \left ( 0, 1 \right ]$,
let $\rho_{\mathcal{G}, \gamma}$ be the function class deduced
from $\mathcal{G}$ according to
Definition~\ref{def:class-of-regret-functions}.
For $\tilde{\mathcal{G}} \subset \mathcal{G}$, $s_{\mathcal{Z}^n}
= \left \{ z_i = \left ( x_i, y_i \right ): 
1 \leqslant i \leqslant n \right \} \subset \mathcal{Z}$,
$\gamma \in \left ( 0, 1 \right ]$ and
$\eta \in \left ( 0, \frac{\gamma}{2} \right ]$,
let $\mathcal{F}_{\gamma} =
\left ( \left . \rho_{\tilde{\mathcal{G}}, \gamma}
\right |_{s_{\mathcal{Z}^n}} 
\right )^{\left ( \eta \right )}$
and let $\mathcal{F} =
\left ( \left . \rho_{\tilde{\mathcal{G}}}
\right |_{\mathcal{S} \left ( s_{\mathcal{Z}^n} \right )} 
\right )^{\left ( \eta \right )}$
with $\mathcal{S} \left ( s_{\mathcal{Z}^n} \right ) = 
\left \{ \left ( x_i, k \right ): \left ( i, k \right ) 
\in \left \llbracket 1; n \right \rrbracket \times 
\left \llbracket 1; C \right \rrbracket \right \}$.
If $\mathcal{F}_{\gamma}$
is $5$-separated in the pseudo-metric $d_{2, \mathbf{z}_n}$,
then
\begin{equation}
\label{eq:extended-Proposition-12-in-MenVer03}
\left | \mathcal{F}_{\gamma} \right | \leqslant \Sigma^2,
\end{equation}
where 
$\Sigma = \sum_{u=0}^{d_G} {n \choose u} M_{\gamma}^u$
with $M_{\gamma} = \left \lfloor \frac{\gamma}{\eta} \right \rfloor$
and $d_G$ is the maximal cardinality of a subset of $s_{\mathcal{Z}^n}$
strongly G-shattered by $\mathcal{F}$.
\end{lemma}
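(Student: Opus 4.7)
The plan is to adapt the proof of Proposition~12 in \citet{MenVer03} to the multi-class setting, using the small-deviation splitting of Lemma~\ref{lemma:dedicated-Lemma-6-in-MenVer03} in place of the pigeonhole step of Lemma~\ref{lemma:discrete-combinatorial-result-for-G}, while retaining the transfer from separation to strong G-shattering (Lemma~\ref{lemma:from-separation-to-strong-G-and-N-shattering}) and the Sauer-Shelah bookkeeping already used in Formula~\eqref{eq:discrete-combinatorial-result-for-G-partial-1}.

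I would proceed by strong induction on $|\mathcal{F}_\gamma|$, carrying the inductive invariant $|\bar{\mathcal{F}}_\gamma| \leq (s(\bar{\mathcal{G}}) + 1)^2$ for every subclass $\bar{\mathcal{G}} \subset \tilde{\mathcal{G}}$ whose discretized restriction $\bar{\mathcal{F}}_\gamma$ is $5$-separated in $d_{2, \mathbf{z}_n}$, where $s(\bar{\mathcal{G}})$ counts, exactly as in the $L_\infty$ proof, the pairs $(s_{\mathcal{Z}^u}', \mathbf{b}_u')$ strongly G-shattered by the extended class $\bar{\mathcal{F}}$ on $\mathcal{S}(s_{\mathcal{Z}^n})$. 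The base case $|\bar{\mathcal{F}}_\gamma| = 1$ is trivial. For the inductive step, invoke Lemma~\ref{lemma:dedicated-Lemma-6-in-MenVer03} to obtain an index $i_0$ and a threshold $\alpha$ (or $\alpha'$) defining the two subsets $\bar{\mathcal{F}}_{\gamma,+} = \{f : f(z_{i_0}) \geq \alpha + 1\}$ and $\bar{\mathcal{F}}_{\gamma,-} = \{f : f(z_{i_0}) \leq \alpha - 1\}$, whose cardinalities are controlled by the parameter $\beta$. Both subsets remain $5$-separated (as subsets of a $5$-separated family), so the induction hypothesis applies. The shattering bookkeeping is inherited verbatim from Lemma~\ref{lemma:discrete-combinatorial-result-for-G}: since any pair $(f_+, f_-) \in \bar{\mathcal{F}}_{\gamma,+} \times \bar{\mathcal{F}}_{\gamma,-}$ satisfies $f_+(z_{i_0}) - f_-(z_{i_0}) \geq 2$, Lemma~\ref{lemma:from-separation-to-strong-G-and-N-shattering} guarantees that the extended classes $\bar{\mathcal{F}}_\pm$ strongly G-shatter the pair $(\{z_{i_0}\}, \alpha)$, and the injection argument of the $L_\infty$ proof yields $s(\bar{\mathcal{G}}) \geq s(\bar{\mathcal{G}}_+) + s(\bar{\mathcal{G}}_-) + 1$. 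Applied at the root together with the Sauer-Shelah count $s(\tilde{\mathcal{G}}) + 1 \leq \Sigma$, the invariant produces Inequality~\eqref{eq:extended-Proposition-12-in-MenVer03}.

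The main obstacle is closing the induction. A direct combination of the inductive bounds $|\bar{\mathcal{F}}_{\gamma,\pm}| \leq (s(\bar{\mathcal{G}}_\pm) + 1)^2$ with the cardinality inequalities $|\bar{\mathcal{F}}_\gamma| = (1-\beta)^{-1} |\bar{\mathcal{F}}_{\gamma,-}|$ and $|\bar{\mathcal{F}}_\gamma| < (2/\beta) |\bar{\mathcal{F}}_{\gamma,+}|$ does not deliver $(s(\bar{\mathcal{G}}) + 1)^2$ for large $s$, because the quadratic mass $(s_+ + 1)^2 + (s_- + 1)^2$ can exceed $(s + 1)^2$ when the shattering is unevenly distributed. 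Following \citet{MenVer03}, this is resolved by exploiting jointly both cardinality inequalities together with the constraint $s(\bar{\mathcal{G}}_+) + s(\bar{\mathcal{G}}_-) \leq s(\bar{\mathcal{G}}) - 1$ and the symmetry between the two cases offered by Lemma~\ref{lemma:dedicated-Lemma-6-in-MenVer03}, carrying out a quadratic optimization over $\beta \in (0, 1/2]$. A secondary subtle point is the handling of the middle set $\{f : f(z_{i_0}) = \alpha\}$, which is not covered by $\bar{\mathcal{F}}_{\gamma,+} \cup \bar{\mathcal{F}}_{\gamma,-}$; although its cardinality is strictly smaller than $(\beta/2) |\bar{\mathcal{F}}_\gamma|$, it must either be absorbed into an extended branch (at the cost of a weaker gap) or unravelled by an additional level of recursion, and the bookkeeping must be checked to keep the invariant consistent with the discretization/extension from $s_{\mathcal{Z}^n}$ to $\mathcal{S}(s_{\mathcal{Z}^n})$, as was already done in the proof of Lemma~\ref{lemma:discrete-combinatorial-result-for-G}.
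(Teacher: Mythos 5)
Your approach is the paper's approach: a $2$-separating tree built from the small-deviation splitting of Lemma~\ref{lemma:dedicated-Lemma-6-in-MenVer03}, the shattering transfer of Lemma~\ref{lemma:from-separation-to-strong-G-and-N-shattering}, and the Sauer--Shelah count already established in \eqref{eq:discrete-combinatorial-result-for-G-partial-1}. Two of the concerns you raise, however, deserve clarification.

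First, the ``main obstacle'' of closing the induction is resolved without any optimization over $\beta$. The paper keeps the structure of the $L_{\infty}$ proof verbatim (in particular Inequality~\eqref{eq:discrete-combinatorial-result-for-G-partial-3}, linking the shattering count to the leaf count $\ell$) and only changes the induction-on-depth bound \eqref{eq:discrete-combinatorial-result-for-G-partial-4}: replacing the exponent $1/\log_2(3 M_{\gamma} n)$ by the constant $1/2$, it claims $\ell(\bar{\mathcal{F}}_{\gamma}) \geqslant |\bar{\mathcal{F}}_{\gamma}|^{1/2}$, whose inductive step is
\begin{align*}
\ell(\bar{\mathcal{F}}_{\gamma})
&= \ell(\bar{\mathcal{F}}_{\gamma,+}) + \ell(\bar{\mathcal{F}}_{\gamma,-})
\geqslant \left ( \sqrt{1-\beta} + \sqrt{\beta/2} \right )
\left | \bar{\mathcal{F}}_{\gamma} \right |^{1/2}
\geqslant \left | \bar{\mathcal{F}}_{\gamma} \right |^{1/2},
\end{align*}
the last step being the elementary fact that
$\sqrt{1-\beta} + \sqrt{\beta/2} \geqslant 1$ holds for \emph{every} $\beta \in (0, 1/2]$ (indeed for all $\beta \leqslant 8/9$); there is no free parameter to optimize. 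Your alternative invariant
$|\bar{\mathcal{F}}_{\gamma}| \leqslant (s(\bar{\mathcal{G}})+1)^2$ is mathematically equivalent and also closes, provided you formulate the step correctly: take square roots of the induction hypothesis and of the cardinality lower bounds, so that
$\sqrt{|\bar{\mathcal{F}}_{\gamma,+}|} + \sqrt{|\bar{\mathcal{F}}_{\gamma,-}|}
\geqslant (\sqrt{1-\beta} + \sqrt{\beta/2}) \sqrt{|\bar{\mathcal{F}}_{\gamma}|}
\geqslant \sqrt{|\bar{\mathcal{F}}_{\gamma}|}$, while
$\sqrt{|\bar{\mathcal{F}}_{\gamma,+}|} + \sqrt{|\bar{\mathcal{F}}_{\gamma,-}|}
\leqslant (s_+ + 1) + (s_- + 1) \leqslant s(\bar{\mathcal{G}}) + 1$.
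The ``quadratic mass'' difficulty you describe arises only if one tries to compare $(s_+ + 1)^2 + (s_- + 1)^2$ with $(s+1)^2$, which is not what the argument needs.

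Second, the ``middle set'' $\{ f : f(z_{i_0}) = \alpha \}$ is not an issue and requires no absorption or extra recursion. The two sons of a node need not partition it: the $L_{\infty}$ construction already discards part of the parent at each split (only the functions matched to $(z_{i_0}, v(z_{i_0}))$ are retained), and the argument only uses lower bounds on $|\bar{\mathcal{F}}_{\gamma,\pm}|$ together with the disjointness of the two sons' leaf sets. Lemma~\ref{lemma:dedicated-Lemma-6-in-MenVer03} supplies exactly those lower bounds, and the threshold $\alpha$ plays the role of $b_{i_0}$ in the strong G-shattering of $(\{z_{i_0}\}, b_{i_0})$.
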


\begin{proof}
The principle of the proof is the one of the proof of
Lemma~\ref{lemma:discrete-combinatorial-result-for-G}.
Two of the three main formulas still apply:
Inequalities~\eqref{eq:discrete-combinatorial-result-for-G-partial-1} and
\eqref{eq:discrete-combinatorial-result-for-G-partial-3}.
For $\left | \mathcal{F}_{\gamma} \right | \geqslant 2$,
the incidence of the change of pseudo-metric is concentrated
in the derivation of the $2$-separating tree of $\mathcal{F}_{\gamma}$,
and thus the lower bound on
$\ell \left ( \mathcal{F}_{\gamma} \right )$.
Since the inner nodes $\bar{\mathcal{F}}_{\gamma}$
are $5$-separated in the pseudo-metric $d_{2, \mathbf{z}_n}$,
then according to 
Lemma~\ref{lemma:dedicated-Lemma-6-in-MenVer03-gamma-G-dim}, 
for each of these nodes,
we can ensure that there exists $\beta \in \left ( 0, \frac{1}{2} \right ]$
such that the two sons 
$\bar{\mathcal{F}}_{\gamma,+}$ and $\bar{\mathcal{F}}_{\gamma,-}$
verify either
$\left | \bar{\mathcal{F}}_{\gamma,+} \right | \geqslant
\left ( 1 - \beta \right ) \left | \bar{\mathcal{F}}_{\gamma} \right |$ and
$\left | \bar{\mathcal{F}}_{\gamma,-} \right | \geqslant
\frac{1}{2} \beta \left | \bar{\mathcal{F}}_{\gamma} \right |$
or vice versa
(in place of \eqref{eq:discrete-combinatorial-result-for-G-partial-0}).
As a consequence, the counterpart of
\eqref{eq:discrete-combinatorial-result-for-G-partial-4} is:
\begin{equation}
\label{eq:lower-bound-on-l-Graph-L_2-norm}
\ell \left ( \bar{\mathcal{F}}_{\gamma} \right )
\geqslant \left | \bar{\mathcal{F}}_{\gamma} \right |^{\frac{1}{2}}.
\end{equation}
Once more, the proof is an induction on the depth of the node.
Inequality~\eqref{eq:lower-bound-on-l-Graph-L_2-norm}
is obviously true for the leaves (which are of cardinality $1$).
Suppose now that it is true for the two sons of an inner node. Then,
\begin{align*}
\ell \left ( \bar{\mathcal{F}}_{\gamma} \right )
& = \; 
\ell \left ( \bar{\mathcal{F}}_{\gamma,+} \right ) +
\ell \left ( \bar{\mathcal{F}}_{\gamma,-} \right ) \\
& \geqslant \:
\left [ \left ( 1 - \beta \right )^{\frac{1}{2}}
+ \left ( \frac{\beta}{2} \right )^{\frac{1}{2}} \right ]
\left | \bar{\mathcal{F}}_{\gamma} \right |^{\frac{1}{2}} \\
& \geqslant \:
\left | \bar{\mathcal{F}}_{\gamma} \right |^{\frac{1}{2}}.
\end{align*}
Finally,
combining Inequalities~\eqref{eq:discrete-combinatorial-result-for-G-partial-1},
\eqref{eq:discrete-combinatorial-result-for-G-partial-3} and
\eqref{eq:lower-bound-on-l-Graph-L_2-norm}
produces \eqref{eq:extended-Proposition-12-in-MenVer03} by transitivity.
\end{proof}
The following lemma, a slight improvement of Lemma~13 in \citet{MenVer03},
implements the probabilistic extraction principle.

\begin{lemma}
\label{lemma:improved-lemma-13-in-MenVer03}
Let $\mathcal{T} = \left \{ t_i: \; 1 \leqslant i \leqslant n \right \}$
be a finite set,
$\mathbf{t}_n = \left( t_i \right)_{1 \leqslant i \leqslant n}$
and $M_{\mathcal{F}} \in \mathbb{R}_+^*$.
Let $\mathcal{F}$ be a class of functions from $\mathcal{T}$
into $\left [ 0, M_{\mathcal{F}} \right ]$
with finite cardinality $\left | \mathcal{F} \right | \geqslant 2$.
Assume that for some $\epsilon \in \left ( 0, M_{\mathcal{F}} \right ]$,
$\mathcal{F}$ is $\epsilon$-separated
with respect to the pseudo-metric $d_{2, \mathbf{t}_n}$, and let
$$
r = \frac{\ln \left ( \left | \mathcal{F} \right | \right )}{K_e \epsilon^4}
$$
with
$$
K_e = \frac{3}{112 M_{\mathcal{F}}^4}.
$$
Then, there exists a subvector $\mathbf{t}_q^{\prime}$
of $\mathbf{t}_n$ of size $q \leqslant r$ such that
$\mathcal{F}$ is $\frac{\epsilon}{2}$-separated with respect to
the pseudo-metric $d_{2, \mathbf{t}_q^{\prime}}$.
\end{lemma}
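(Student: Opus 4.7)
The plan is a standard probabilistic extraction argument: draw a random subvector of length $q = \lceil r \rceil$ from $\mathbf{t}_n$ and show, via Hoeffding's inequality combined with a union bound over pairs, that with positive probability every pair of distinct functions in $\mathcal{F}$ remains $\epsilon/2$-separated in the induced empirical $L_2$ pseudo-metric. The assertion then follows by picking any realization that witnesses this event.

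Concretely, I would draw indices $I_1, \ldots, I_q$ independently and uniformly from $\llbracket 1; n \rrbracket$ and set $\mathbf{t}_q' = (t_{I_j})_{1 \leqslant j \leqslant q}$. Fix a pair $\{f, f'\} \subset \mathcal{F}$ with $f \neq f'$ and introduce the i.i.d.\ random variables $X_j = (f(t_{I_j}) - f'(t_{I_j}))^2 \in [0, M_{\mathcal{F}}^2]$. Their common expectation equals $d_{2, \mathbf{t}_n}^2(f, f') \geqslant \epsilon^2$ by hypothesis, while $\frac{1}{q}\sum_{j=1}^q X_j = d_{2, \mathbf{t}_q'}^2(f, f')$. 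Applying Hoeffding's inequality with deviation $\mathbb{E}[X_1] - \epsilon^2/4 \geqslant 3\epsilon^2/4$ yields
$$
\Pr\left\{d_{2, \mathbf{t}_q'}(f, f') < \frac{\epsilon}{2}\right\}
\leqslant \exp\left(-\frac{2q(3\epsilon^2/4)^2}{M_{\mathcal{F}}^4}\right)
= \exp\left(-\frac{9q\epsilon^4}{8 M_{\mathcal{F}}^4}\right).
$$

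A union bound over the $\binom{|\mathcal{F}|}{2} \leqslant |\mathcal{F}|^2/2$ unordered pairs then upper bounds the probability of failure by $(|\mathcal{F}|^2/2)\exp(-9q\epsilon^4/(8 M_{\mathcal{F}}^4))$. For $q \geqslant r = \ln(|\mathcal{F}|)/(K_e \epsilon^4)$ with $K_e = 3/(112 M_{\mathcal{F}}^4)$, this quantity is strictly less than $1$, since $112/3 \approx 37$ comfortably exceeds the $16/9$ that would suffice to dominate $2\ln|\mathcal{F}|$ in the exponent. Hence some realization of $\mathbf{t}_q'$ satisfies the required separation. No step is genuinely delicate; the ``improvement'' relative to Lemma~13 in \citet{MenVer03} is merely a sharpening of the constant $K_e$, for which the Hoeffding bound leaves ample slack, and the definition of $d_{2, \mathbf{t}_q'}$ given earlier in the paper accommodates repetitions, so the with-replacement sampling causes no issue.
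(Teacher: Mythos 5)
Your argument is essentially correct but follows a genuinely different route from the paper's. The paper selects indices by independent Bernoulli$(\mu)$ indicators with $\mu = r/(2n)$, controls the lower tail of $\frac{1}{n}\sum_i\epsilon_i\delta_f(t_i)^2$ via Bernstein's inequality (exploiting that the variance of each summand is only $O(\mu M_{\mathcal{F}}^4)$), and then bounds the random set size by Markov's inequality, combining all three events via a union bound to extract a true subvector. You sample $q$ indices i.i.d.\ uniformly with replacement and apply Hoeffding's lower-tail bound to the $[0,M_{\mathcal{F}}^2]$-valued variables $X_j$; this is more elementary, needs no variance argument and no Markov step, and as you correctly compute it leaves huge slack: with $q=r$ the exponent is $9r\epsilon^4/(8M_{\mathcal{F}}^4)=42\ln|\mathcal{F}|$, far above the $\approx 2\ln|\mathcal{F}|$ the union bound over $\binom{|\mathcal{F}|}{2}$ pairs requires. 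Two caveats. First, you set $q=\lceil r\rceil$, which can exceed $r$, whereas the lemma asserts $q\leqslant r$; take $q=\lfloor r\rfloor$ instead — since $\epsilon\leqslant M_{\mathcal{F}}$ forces $r\geqslant 112\ln 2/3>25$ whenever $|\mathcal{F}|\geqslant 2$, the rounding loses at most a few percent of the exponent and the slack absorbs it easily. Second, with-replacement sampling generally yields a tuple with repeated entries (and a tuple of length $>n$ when $r\geqslant n$), which is not a \emph{subvector} of $\mathbf{t}_n$ in the strict sense that the paper's construction produces; you are right that $d_{2,\mathbf{t}_q'}$ and the downstream use in the proof of Lemma~\ref{lemma:extended-Theorem-1-in-MenVer03} (which only cares about the cardinality bound $q$ and the set $s_{\mathcal{Z}^q}$ of distinct points) accommodate this, but to honor the statement verbatim you would want to dispose of the $r\geqslant n$ case by setting $\mathbf{t}_q'=\mathbf{t}_n$ as the paper does, and, if you insist on a literal subvector, either switch to the Bernoulli-indicator construction or invoke a without-replacement version of the concentration bound.
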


\begin{proof}
This proof uses an abuse of notation that will be repeated in the sequel:
the symbol $\mathbb{P}$ designates different probability
measures, some of which implicitly defined.
We first note that the statement is trivially true for $r \geqslant n$
(it suffices to set $\mathbf{t}_q^{\prime} = \mathbf{t}_n$). Thus, we proceed
under the hypothesis $r \in \left [ 1, n \right )$.
Let us set $\mathcal{F} = \left \{ f_j: \; 1 \leqslant j \leqslant
\left | \mathcal{F} \right | \right \}$
and $\mathcal{D}_{\mathcal{F}} = \left \{
f_j - f_{j'}: \; 1 \leqslant j < j'  \leqslant \left | \mathcal{F} \right |
\right \}$. The set $\mathcal{D}_{\mathcal{F}}$
has cardinality $\left | \mathcal{D}_{\mathcal{F}} \right | <
\frac{1}{2} \left | \mathcal{F} \right |^2$.
Let $\left ( \epsilon_i \right )_{1 \leqslant i \leqslant n}$ be a sequence
of $n$ independent Bernoulli random variables with common expectation
$\mu = \frac{r}{2n}$.
Then, by application of the $\epsilon$-separation property,
for every $\delta_f \in \mathcal{D}_{\mathcal{F}}$,
\begin{equation}
\label{eq:partial2-lemma-13-in-MenVer03}
\mathbb{P}
\left ( \frac{1}{n} \sum_{i=1}^n \epsilon_i \delta_f \left ( t_i \right )^2
< \frac{\epsilon^2 \mu}{2} \right )
\leqslant \mathbb{P} \left ( \frac{1}{n}
\sum_{i=1}^n \left ( \mu - \epsilon_i \right ) \delta_f \left ( t_i \right )^2
> \frac{\epsilon^2 \mu}{2} \right ).
\end{equation}
Since by construction, for every $i \in \llbracket 1; n \rrbracket$,
$\mathbb{E} \left [
\left ( \mu - \epsilon_i \right ) \delta_f \left ( t_i \right )^2 \right ] = 0$
and $\left | \mu - \epsilon_i \right | \delta_f \left ( t_i \right )^2
\leqslant M_{\mathcal{F}}^2 \left ( 1 - \mu \right ) < M_{\mathcal{F}}^2$
with probability one,
the right-hand side of \eqref{eq:partial2-lemma-13-in-MenVer03} can be
bounded from above thanks to Bernstein's inequality.
Given that
$$
\frac{1}{n} \sum_{i=1}^n \mathbb{E} \left [ \left ( \mu - \epsilon_i \right )^2
\delta_f \left ( t_i \right )^4 \right ]
\leqslant M_{\mathcal{F}}^4 \mu \left ( 1 - \mu \right )
< M_{\mathcal{F}}^4 \mu,
$$
we obtain

\begin{align*}
\mathbb{P}
\left ( \frac{1}{n} \sum_{i=1}^n \epsilon_i \delta_f \left ( t_i \right )^2
< \frac{\epsilon^2 \mu}{2} \right )
& \leqslant \;
\mathbb{P} \left ( \frac{1}{n}
\sum_{i=1}^n \left ( \mu - \epsilon_i \right ) \delta_f \left ( t_i \right )^2
> \frac{\epsilon^2 \mu}{2} \right ) \\
& \leqslant \;
\exp \left ( - \frac{3 \mu n \epsilon^4}
{4 \left (6 M_{\mathcal{F}}^4
+ M_{\mathcal{F}}^2 \epsilon^2 \right )} \right ) \\
& \leqslant \;
\exp \left ( - \frac{3 r \epsilon^4}
{56 M_{\mathcal{F}}^4} \right ) \\
& = \;
\left | \mathcal{F} \right |^{-2}.
\end{align*}
Therefore, given the assumption on $r$, applying
the union bound provides us with:

\begin{align}
\mathbb{P} \left ( \exists \delta_f \in \mathcal{D}_{\mathcal{F}}: \;
\left ( \frac{1}{r} \sum_{i=1}^n \epsilon_i \delta_f \left ( t_i \right )^2
\right )^{\frac{1}{2}} < \frac{\epsilon}{2} \right )
& = \;
\mathbb{P} \left ( \exists \delta_f \in \mathcal{D}_{\mathcal{F}}: \;
\frac{1}{n} \sum_{i=1}^n \epsilon_i \delta_f \left ( t_i \right )^2
< \frac{\epsilon^2 \mu}{2}
\right ) \nonumber \\
& \leqslant \;
\sum_{\delta_f \in \mathcal{D}_{\mathcal{F}}}
\mathbb{P} \left ( \frac{1}{n}
\sum_{i=1}^n \epsilon_i \delta_f \left ( t_i \right )^2
< \frac{\epsilon^2 \mu}{2} \right ) \nonumber \\
& \leqslant \;
\left | \mathcal{D}_{\mathcal{F}} \right |
\cdot \left | \mathcal{F} \right |^{-2} \nonumber \\
\label{eq:partial3-lemma-13-in-MenVer03}
& < \; \frac{1}{2}.
\end{align}
Moreover, if $\mathcal{S}_1$ is the random set
$\left \{ i \in \llbracket 1; n \rrbracket: \; \epsilon_i = 1 \right \}$,
then by Markov's inequality,

\begin{equation}
\label{eq:partial4-lemma-13-in-MenVer03}
\mathbb{P} \left ( \left | S_1 \right | > r \right ) =
\mathbb{P} \left ( \sum_{i=1}^n \epsilon_i > r \right ) \leqslant
\frac{1}{2}.
\end{equation}
Combining \eqref{eq:partial3-lemma-13-in-MenVer03}
and \eqref{eq:partial4-lemma-13-in-MenVer03}
by means of the union bound provides us with
$$
\mathbb{P}
\left \{ \left ( \exists \delta_f \in \mathcal{D}_{\mathcal{F}}: \;
\left ( \frac{1}{r} \sum_{i=1}^n \epsilon_i \delta_f \left ( t_i \right )^2
\right )^{\frac{1}{2}}
< \frac{\epsilon}{2} \right ) \text{ or }
\left ( \left | S_1 \right | > r \right )
\right \} < 1
$$
which implies that
$$
\mathbb{P}
\left \{ \left ( \forall \delta_f \in \mathcal{D}_{\mathcal{F}}: \;
\left \| \delta_f \right \|_{L_2 \left (
\mu_{\left ( t_i \right )_{i \in \mathcal{S}_1}} \right )}
\geqslant \frac{\epsilon}{2} \right ) \text{ and }
\left ( \left | S_1 \right | \leqslant r \right )
\right \} > 0.
$$
This translates into the fact that there exists
a subvector $\mathbf{t}_q^{\prime}$
of $\mathbf{t}_n$ of size $q \leqslant r$
such that the class
$\mathcal{F}$ is $\frac{\epsilon}{2}$-separated with respect to
the pseudo-metric $d_{2, \mathbf{t}_q^{\prime}}$, i.e., our claim.
\end{proof}
The proof of Lemma~\ref{lemma:extended-Theorem-1-in-MenVer03}
is the following one.

\begin{proof}
Let us consider any vector $\mathbf{z}_n \in \mathcal{Z}^n$ and let
$s_{\mathcal{Z}^n} = \left \{ z_i: \; 1 \leqslant i \leqslant n \right \}$ 
be the smallest subset of $\mathcal{Z}$
containing all the components of $\mathbf{z}_n$.
Note that its cardinality can be strictly inferior to $n$, in case that
$\mathbf{z}_n$ has two identical components.
Let $\tilde{\mathcal{G}}$ be a subset of $\mathcal{G}$ of cardinality
$\mathcal{M} \left ( \epsilon, \rho_{\mathcal{G}, \gamma},
d_{2, \mathbf{z}_n} \right )$ such that
$\left . \rho_{\tilde{\mathcal{G}}, \gamma}
\right |_{s_{\mathcal{Z}^n}}$ is
$\epsilon$-separated with respect to $d_{2, \mathbf{z}_n}$
and in bijection with $\tilde{\mathcal{G}}$.
By application of Lemma~\ref{lemma:improved-lemma-13-in-MenVer03}
with $\mathcal{F} =
\left . \rho_{\tilde{\mathcal{G}}, \gamma}
\right |_{s_{\mathcal{Z}^n}}$,
corresponding to $K_e = \frac{3}{112 \gamma^4}$,
there exists a subvector
$\mathbf{z}_q^{\prime}$ of $\mathbf{z}_n$ of size
\begin{equation}
\label{eq:partial-1-extended-Theorem-1-in-MenVer03}
q \leqslant
\frac{\ln \left ( \left | \tilde{\mathcal{G}} \right | \right )}
{K_e \epsilon^4}
\end{equation}
such that denoting
$s_{\mathcal{Z}^q} = \left \{ z_i^{\prime} = 
\left ( x_i^{\prime}, y_i^{\prime} \right ): \; 
1 \leqslant i \leqslant q \right \}$
($\left | s_{\mathcal{Z}^q} \right | \leqslant q$), the class
$\left . \rho_{\tilde{\mathcal{G}}, \gamma} \right |_{s_{\mathcal{Z}^q}}$ is
$\frac{\epsilon}{2}$-separated with respect to the pseudo-metric
$d_{2, \mathbf{z}_q^{\prime}}$.
Applying \eqref{eq:L_2-norm-lemma-3.2-in-AloBenCesHau97}
with $\mathcal{F} =
\left . \rho_{\tilde{\mathcal{G}}, \gamma} \right |_{s_{\mathcal{Z}^q}}$,
$N=5$ and the corresponding largest possible value for $\eta$,
$\frac{\epsilon}{12}$, it appears that the set
$\left ( \left . \rho_{\tilde{\mathcal{G}}, \gamma} \right |_{s_{\mathcal{Z}^q}}
\right )^{\left ( \frac{\epsilon}{12} \right )}$ is $5$-separated
with respect to the pseudo-metric $d_{2, \mathbf{z}_q^{\prime}}$.
Consequently, Lemma~\ref{lemma:extended-Proposition-12-in-MenVer03} applies to
this latter function class, whose cardinality is by construction
that of $\tilde{\mathcal{G}}$. It gives:
\begin{align}
\left | \tilde{\mathcal{G}} \right |
& = \;
\left | \left ( \left . \rho_{\tilde{\mathcal{G}}, \gamma} 
\right |_{s_{\mathcal{Z}^q}}
\right )^{\left ( \frac{\epsilon}{12} \right )} \right | \nonumber \\
& \leqslant \; 
\left ( \sum_{u=0}^{d_G} {q \choose u} 
\left ( \frac{12 \gamma}{\epsilon} \right )^u \right )^2 \nonumber \\
\label{eq:partial-2-extended-Theorem-1-in-MenVer03}
& \leqslant \;
\left ( \frac{12 \gamma e q}{d_G \epsilon}
\right )^{2 d_G},
\end{align}
where
$d_G$ is the maximal cardinality of a subset of $s_{\mathcal{Z}^q}$
strongly G-shattered by $\left ( \left . \rho_{\tilde{\mathcal{G}}} 
\right |_{\mathcal{S} \left ( s_{\mathcal{Z}^q} \right )}
\right )^{\left ( \frac{\epsilon}{12} \right )}$.
A substitution of the upper bound on $q$ provided by
\eqref{eq:partial-1-extended-Theorem-1-in-MenVer03} into
\eqref{eq:partial-2-extended-Theorem-1-in-MenVer03} gives:
$$
\left | \tilde{\mathcal{G}} \right |
\leqslant
\left ( K_1 \left ( \frac{\gamma}{\epsilon} \right )^5
\frac{\ln \left ( \left | \tilde{\mathcal{G}} \right | \right )}{d_G}
\right )^{2 d_G}
$$
with $K_1 = 448 e$.
In order to upper bound
$\ln \left ( \left | \tilde{\mathcal{G}}
\right |^{\frac{1}{d_G}} \right )$,
we resort once more to
\eqref{eq:from-margin-Graph-dimension-to-margin-Natarajan-dimension-part-7},
this time with $u_0 = 1$. Thus,
$$
\left | \tilde{\mathcal{G}} \right |^{\frac{1}{d_G}}
\leqslant
\ln^2 \left ( \left | \tilde{\mathcal{G}} \right |^{\frac{1}{d_G}}
\right ) \left ( K_1 \left ( \frac{\gamma}{\epsilon} \right )^5 \right )^2
$$
and $\left | \tilde{\mathcal{G}} \right | =
\mathcal{M} \left ( \epsilon, \rho_{\mathcal{G}, \gamma},
d_{2, \mathbf{z}_n} \right )$
imply that
\begin{equation}
\label{eq:partial-4-extended-Theorem-1-in-MenVer03}
\mathcal{M} \left ( \epsilon, \rho_{\mathcal{G}, \gamma},
d_{2, \mathbf{z}_n} \right )
\leqslant
\left ( K_2 \left ( \frac{\gamma}{\epsilon} \right )^5
\right )^{4 d_G}
\end{equation}
with $K_2 = 2K_1$.
Due to the construction of
$\left ( \left . \rho_{\tilde{\mathcal{G}}} 
\right |_{\mathcal{S} \left ( s_{\mathcal{Z}^q} \right )}
\right )^{\left ( \frac{\epsilon}{12} \right )}$,
which makes it possible to apply
Formula~\eqref{eq:extended-lemma-3.2-in-AloBenCesHau97-for-G},
\begin{align}
d_G
& \leqslant \;
\text{S-G-dim} \left ( \left ( \left . \rho_{\tilde{\mathcal{G}}} 
\right |_{\mathcal{S} \left ( s_{\mathcal{Z}^q} \right )}
\right )^{\left ( \frac{\epsilon}{12} \right )} \right ) \nonumber \\
& \leqslant \;
\left ( \frac{\epsilon}{24} \right )\text{-G-dim} \left (
\left . \rho_{\tilde{\mathcal{G}}} 
\right |_{\mathcal{S} \left ( s_{\mathcal{Z}^q} \right )}
\right ) \nonumber \\
\label{eq:partial-5-extended-Theorem-1-in-MenVer03}
& \leqslant \;
d_G \left ( \frac{\epsilon}{24} \right ).
\end{align}
By substitution of \eqref{eq:partial-5-extended-Theorem-1-in-MenVer03}
into \eqref{eq:partial-4-extended-Theorem-1-in-MenVer03},
we obtain that for every vector $\mathbf{z}_n \in \mathcal{Z}^n$,
\begin{equation}
\label{eq:partial-3-extended-Theorem-1-in-MenVer03}
\mathcal{M} \left ( \epsilon, \rho_{\mathcal{G}, \gamma},
d_{2, \mathbf{z}_n} \right ) \leqslant
\left ( K_2 \left ( \frac{\gamma}{\epsilon} \right )^5
\right )^{4 d_G \left ( \frac{\epsilon}{24} \right )}.
\end{equation}
At last, \eqref{eq:partial-3-extended-Theorem-1-in-MenVer03}
implies \eqref{eq:extended-Theorem-1-in-MenVer03}
since its right-hand side does not depend on $\mathbf{z}_n$.
\end{proof}

\subsection{Margin Natarajan Dimension - Uniform Convergence Norm}
The proof of Lemma~\ref{lemma:extended-Lemma-3.5-in-AloBenCesHau97-gamma-N-dim}
is essentially that of
Lemma~\ref{lemma:extended-Lemma-3.5-in-AloBenCesHau97-gamma-G-dim-optimized},
with the main differences being concentrated in
the basic combinatorial result (the counterpart of
Lemma~\ref{lemma:discrete-combinatorial-result-for-G}).
Thus, we only highlight these differences.
The first one is that the number $s \left ( \bar{\mathcal{G}} \right )$ of pairs
$\left ( s_{\mathcal{Z}^u}^{\prime}, \mathbf{b}_u^{\prime} \right )$
strongly G-shattered by $\bar{\mathcal{F}}$ is replaced with the number
$s^{\prime} \left ( \bar{\mathcal{G}} \right )$ of triplets
$\left ( s_{\mathcal{Z}^u}^{\prime}, \mathbf{b}_u^{\prime},
\mathbf{c}_u^{\prime} \right )$
strongly N-shattered by $\bar{\mathcal{F}}$. Let
$d_N$ denote the maximal cardinality of a subset of $s_{\mathcal{Z}^n}$
strongly N-shattered by $\mathcal{F}$. Once more, under the hypothesis
$d_N \geqslant 1$, combinatorics provides us with:
\begin{equation}
\label{eq:discrete-combinatorial-result-for-N-L_2-partial-1}
s^{\prime} \left ( \tilde{\mathcal{G}} \right ) \leqslant
\sum_{u=1}^{d_N} {n \choose u} M_{\gamma}^u
\left ( C - 1 \right )^u =\Sigma' -1.
\end{equation}
In order to obtain the counterpart of
\eqref{eq:discrete-combinatorial-result-for-G-partial-3},
i.e.,
\begin{align}
s' \left ( \bar{\mathcal{G}} \right ) 
& \geqslant \;
s' \left ( \bar{\mathcal{G}}_+ \right ) +
s' \left ( \bar{\mathcal{G}}_- \right ) + 1 \nonumber \\
\label{eq:discrete-combinatorial-result-for-N-L_2-partial-3}
& \geqslant \; 
\ell \left ( \bar{\mathcal{F}}_{\gamma} \right ) -1,
\end{align}
\eqref{eq:discrete-combinatorial-result-for-G-partial-2} must be replaced with
\begin{equation}
\label{eq:discrete-combinatorial-result-for-N-L_2-partial-2}
\begin{cases}
\forall f_+ \in \bar{\mathcal{F}}_+, \;\; 
f_+ \left ( z_{i_0} \right ) - b_{i_0} \geqslant 1 \\
\forall f_- \in \bar{\mathcal{F}}_-, \;\;
f_- \left ( x_{i_0}, c_{i_0} \right ) + b_{i_0} \geqslant 1
\end{cases}.
\end{equation}
This calls for an additional application of the pigeonhole principle
in the derivation of $\bar{\mathcal{F}}_{\gamma,-}$
and $\bar{\mathcal{F}}_-$,
so that the right-hand side of 
\eqref{eq:discrete-combinatorial-result-for-G-partial-0}
is replaced with
$$
\left | \bar{\mathcal{F}}_{\gamma,-} \right |
\geqslant \frac{\left | \bar{\mathcal{F}}_{\gamma} \right |}
{3 M_{\gamma} \left ( C-1 \right ) n}.
$$
This implies that the counterpart of 
\eqref{eq:discrete-combinatorial-result-for-G-partial-4} is
\begin{equation}
\label{eq:lower-bound-on-l-Natarajan-uniform-convergence-norm}
\ell \left ( \bar{\mathcal{F}}_{\gamma} \right ) \geqslant
\left | \bar{\mathcal{F}}_{\gamma} 
\right |^{\frac{1}{\log_2 \left ( 3 M_{\gamma} \sqrt{C-1} n \right )}}.
\end{equation}
Once more, it is proved by induction on the depth of the node.
Inequality~\eqref{eq:lower-bound-on-l-Natarajan-uniform-convergence-norm}
is obviously true for the leaves (which are of cardinality $1$).
Suppose now that it is true for the two sons of an inner node. Then,
\begin{align*}
\ell \left ( \bar{\mathcal{F}}_{\gamma} \right )
& = \; 
\ell \left ( \bar{\mathcal{F}}_{\gamma,+} \right ) +
\ell \left ( \bar{\mathcal{F}}_{\gamma,-} \right ) \\
& \geqslant \;
\left ( \frac{\left | \bar{\mathcal{F}}_{\gamma} \right |}{3 M_{\gamma} n}
\right )^{\frac{1}{\log_2 \left ( 3 M_{\gamma} \sqrt{C-1} n \right )}} +
\left ( \frac{\left | \bar{\mathcal{F}}_{\gamma} \right |}
{3 M_{\gamma} \left ( C-1 \right ) n}
\right )^{\frac{1}{\log_2 \left ( 3 M_{\gamma} \sqrt{C-1} n \right )}} \\
& = \;
\frac{1}{2} \left (
\left ( \sqrt{C-1}
\right )^{\frac{1}{\log_2 \left ( 3 M_{\gamma} \sqrt{C-1} n \right )}} +
\left ( \sqrt{C-1}
\right )^{-\frac{1}{\log_2 \left ( 3 M_{\gamma} \sqrt{C-1} n \right )}}
\right ) \left | \bar{\mathcal{F}}_{\gamma}
\right |^{\frac{1}{\log_2 \left ( 3 M_{\gamma} \sqrt{C-1} n \right )}} \\
& \geqslant \;
\frac{1}{2} \min_{t \in \mathbb{R}_+^*} \left ( t + \frac{1}{t} \right )
\left | \bar{\mathcal{F}}_{\gamma}
\right |^{\frac{1}{\log_2 \left ( 3 M_{\gamma} \sqrt{C-1} n \right )}} \\
& = \;
\left | \bar{\mathcal{F}}_{\gamma}
\right |^{\frac{1}{\log_2 \left ( 3 M_{\gamma} \sqrt{C-1} n \right )}}.
\end{align*}
Combining 
Inequalities~\eqref{eq:discrete-combinatorial-result-for-N-L_2-partial-1},
\eqref{eq:discrete-combinatorial-result-for-N-L_2-partial-3} and
\eqref{eq:lower-bound-on-l-Natarajan-uniform-convergence-norm},
the counterpart of 
Inequality~\eqref{eq:discrete-combinatorial-result-for-G} is
$$
\left | \mathcal{F}_{\gamma} \right | \leqslant
\left ( 3 M_{\gamma} \sqrt{C-1} n 
\right )^{\log_2 \left ( \Sigma^{\prime} \right )}.
$$

\subsection{Margin Natarajan Dimension - $L_2$-Norm}
The main difference between the proof
of Lemma~\ref{lemma:extended-Theorem-1-in-MenVer03-gamma-N-dim} and the proof
of Lemma~\ref{lemma:extended-Theorem-1-in-MenVer03}
is located in the small deviation principle
(Lemma~\ref{lemma:dedicated-Lemma-6-in-MenVer03} replaces
Lemma~\ref{lemma:dedicated-Lemma-6-in-MenVer03-gamma-G-dim}).
Since the consequences of this change appear in the derivation of
the basic combinatorial result, 
we provide this latter result with its full proof.

\begin{lemma}
\label{lemma:discrete-combinatorial-result-for-N-L_2}
Let $\mathcal{G}$ be a function class
satisfying Definition~\ref{def:margin-multi-category-classifiers}
and $\rho_{\mathcal{G}}$ the function class deduced
from $\mathcal{G}$ according to
Definition~\ref{def:class-of-margin-functions}.
For $\gamma \in \left ( 0, 1 \right ]$,
let $\rho_{\mathcal{G}, \gamma}$ be the function class deduced
from $\mathcal{G}$ according to
Definition~\ref{def:class-of-regret-functions}.
For $\tilde{\mathcal{G}} \subset \mathcal{G}$, $s_{\mathcal{Z}^n}
= \left \{ z_i = \left ( x_i, y_i \right ): 
1 \leqslant i \leqslant n \right \} \subset \mathcal{Z}$,
$\gamma \in \left ( 0, 1 \right ]$ and
$\eta \in \left ( 0, \frac{\gamma}{2} \right ]$,
let $\mathcal{F}_{\gamma} =
\left ( \left . \rho_{\tilde{\mathcal{G}}, \gamma}
\right |_{s_{\mathcal{Z}^n}} 
\right )^{\left ( \eta \right )}$
and let $\mathcal{F} =
\left ( \left . \rho_{\tilde{\mathcal{G}}}
\right |_{\mathcal{S} \left ( s_{\mathcal{Z}^n} \right )} 
\right )^{\left ( \eta \right )}$
with $\mathcal{S} \left ( s_{\mathcal{Z}^n} \right ) = 
\left \{ \left ( x_i, k \right ): \left ( i, k \right )
\in \left \llbracket 1; n \right \rrbracket \times 
\left \llbracket 1; C \right \rrbracket \right \}$.
If $\mathcal{F}_{\gamma}$
is $6$-separated in the pseudo-metric $d_{2, \mathbf{z}_n}$,
then
\begin{equation}
\label{eq:discrete-combinatorial-result-for-N-L_2}
\left | \mathcal{F}_{\gamma} \right | \leqslant
\left ( \Sigma' \right )^{\frac{1}{2} \log_2 \left ( 2 M_{\gamma}^2
\left ( C-1 \right ) \right )}
\end{equation}
where 
$\Sigma' = \sum_{u=0}^{d_N} {n \choose u} M_{\gamma}^u
\left ( C - 1 \right )^u$
with $M_{\gamma} = \left \lfloor \frac{\gamma}{\eta} \right \rfloor$
and $d_N$ is the maximal cardinality of a subset of $s_{\mathcal{Z}^n}$
strongly N-shattered by $\mathcal{F}$.
\end{lemma}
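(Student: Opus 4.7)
The plan is to adapt the $2$-separating tree argument of Lemma~\ref{lemma:discrete-combinatorial-result-for-G} and Lemma~\ref{lemma:extended-Proposition-12-in-MenVer03} to the Natarajan-shattering setting, targeting a bound whose exponent depends on $M_\gamma$ and $\sqrt{C-1}$ but is independent of the sample size $n$. As in those proofs, let $s^\prime(\bar{\mathcal{G}})$ denote the number of triplets $(s^\prime_{\mathcal{Z}^u}, \mathbf{b}^\prime_u, \mathbf{c}^\prime_u)$ with $s^\prime_{\mathcal{Z}^u} \subset s_{\mathcal{Z}^n}$ (ordered as in Lemma~\ref{lemma:discrete-combinatorial-result-for-G}), $\mathbf{b}^\prime_u \in \llbracket 1; M_\gamma - 1 \rrbracket^u$, and $c^\prime_i \in \mathcal{Y} \setminus \{y^\prime_i\}$ that are strongly N-shattered by $\bar{\mathcal{F}} = (\rho_{\bar{\mathcal{G}}}|_{\mathcal{S}(s_{\mathcal{Z}^n})})^{(\eta)}$. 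Standard combinatorics then yields the upper bound $s^\prime(\tilde{\mathcal{G}}) \le \sum_{u=1}^{d_N} {n \choose u} M_\gamma^u (C-1)^u = \Sigma^\prime - 1$.

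The heart of the proof is the construction of a binary tree rooted at $\mathcal{F}_\gamma$ whose inner nodes satisfy a quantitative splitting property. At an inner node $\bar{\mathcal{F}}_\gamma$ with $|\bar{\mathcal{F}}_\gamma| \ge 2$, I would endow $\bar{\mathcal{F}}_\gamma$ with the uniform measure and exploit the $L_2$-separation to locate a coordinate $z_{i_0}$ and a threshold level $b_{i_0}$ that concentrate enough mass. Concretely, the $2$-separation hypothesis combined with the integer-valuedness of the functions will deliver, via a careful combination of a variance/small-deviation argument (in the spirit of Lemmas~\ref{lemma:dedicated-Lemma-5-in-MenVer03}--\ref{lemma:dedicated-Lemma-6-in-MenVer03}) and a pigeonhole over the $M_\gamma$ discretised levels, two subsets $\bar{\mathcal{F}}_{\gamma,+}$ and $\bar{\mathcal{F}}_{\gamma,-}$ whose elements verify $f(z_{i_0}) \ge b_{i_0} + 1$ and $f(z_{i_0}) \le b_{i_0} - 1$ respectively. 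An additional pigeonhole over the $C-1$ categories different from $y_{i_0}$ then pins down a witness category $c_{i_0}$ for Natarajan shattering and shrinks $\bar{\mathcal{F}}_{\gamma,-}$ by a further factor $C-1$. The splits will be calibrated to satisfy
$$
|\bar{\mathcal{F}}_{\gamma,+}| \ge \frac{|\bar{\mathcal{F}}_\gamma|}{M_\gamma^3}, \qquad |\bar{\mathcal{F}}_{\gamma,-}| \ge \frac{|\bar{\mathcal{F}}_\gamma|}{M_\gamma^3 (C-1)},
$$
the algebraic balance that triggers AM--GM: for $\alpha = \log_2(M_\gamma^3 \sqrt{C-1})$, one has $(M_\gamma^{-3})^{1/\alpha} + (M_\gamma^{-3} (C-1)^{-1})^{1/\alpha} \ge 2 (M_\gamma^6 (C-1))^{-1/(2\alpha)} = 1$.

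Lemma~\ref{lemma:from-separation-to-strong-G-and-N-shattering} then guarantees that each inner node contributes a new strongly N-shattered triplet, namely $(\{z_{i_0}\}, b_{i_0}, c_{i_0})$, and that any triplet strongly N-shattered by both children lifts, by inserting this new component at the right position, to one strongly N-shattered by $\bar{\mathcal{F}}$ but by neither child. Tallying these contributions along the tree gives $s^\prime(\tilde{\mathcal{G}}) \ge \ell(\mathcal{F}_\gamma) - 1$, where $\ell$ counts leaves. A depth induction based on the splitting inequalities above, together with the AM--GM identity just mentioned, establishes $\ell(\bar{\mathcal{F}}_\gamma) \ge |\bar{\mathcal{F}}_\gamma|^{1/\log_2(M_\gamma^3 \sqrt{C-1})}$. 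Combining this lower bound at the root with the upper bound $s^\prime(\tilde{\mathcal{G}}) \le \Sigma^\prime - 1$ and rearranging yields Inequality~\eqref{eq:discrete-combinatorial-result-for-N-L_2}.

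The main obstacle will be the splitting step at each inner node. The $2$-separation in the $L_2$-pseudo-metric guarantees only average squared variance on the order of a constant, which is too weak to invoke Lemma~\ref{lemma:dedicated-Lemma-5-in-MenVer03} directly (that lemma requires variance strictly greater than $6$, as extracted from $5$-separation in Lemma~\ref{lemma:dedicated-Lemma-6-in-MenVer03}). Compensating for this weaker hypothesis by a pigeonhole over the $M_\gamma$ admissible discretised levels is precisely what introduces the $M_\gamma^3$ factor in the exponent, and the compensation must be carried out so as to remain compatible with the additional $(C-1)$-fold split needed to witness Natarajan shattering. Tuning the exponents so that the AM--GM balance closes exactly is the delicate computation; the rest of the argument is a direct adaptation of the Graph-dimension proofs already at hand.
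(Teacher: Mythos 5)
Your high-level scaffold is right: the counting $s'(\tilde{\mathcal{G}}) \leqslant \Sigma'-1$, the two-son tree with leaf count bounding $s'$ from below, the induction hypothesis $\ell(\bar{\mathcal{F}}_\gamma) \geqslant |\bar{\mathcal{F}}_\gamma|^{1/\log_2(M_\gamma^3\sqrt{C-1})}$, and the target split fractions $|\bar{\mathcal{F}}_{\gamma,+}|\geqslant |\bar{\mathcal{F}}_\gamma|/M_\gamma^3$, $|\bar{\mathcal{F}}_{\gamma,-}|\geqslant|\bar{\mathcal{F}}_\gamma|/(M_\gamma^3(C-1))$ with the AM--GM check $t+t^{-1}\geqslant 2$ are all exactly what the paper does. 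Where you leave a genuine gap is the key step you yourself flag: how to locate the coordinate $z_{i_0}$. You propose to adapt the small-deviation argument of Lemmas~\ref{lemma:dedicated-Lemma-5-in-MenVer03}--\ref{lemma:dedicated-Lemma-6-in-MenVer03}, then compensate for the weak $2$-separation hypothesis by a pigeonhole over levels. But that route does not close: Lemma~\ref{lemma:dedicated-Lemma-5-in-MenVer03} needs $\operatorname{Var}[T]>6$, which Lemma~\ref{lemma:dedicated-Lemma-6-in-MenVer03} obtains from $5$-separation, and you cannot manufacture that variance from $2$-separation by pigeonholing over levels afterwards -- the variance at the chosen coordinate is what the pigeonhole would have to act on, and it may be as small as order one.

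The paper instead drops the probabilistic machinery entirely and uses a deterministic observation specific to the discretized setting. Define $\operatorname{ind}(f_\gamma,f_\gamma')$ as the number of indices $i$ with $|f_\gamma(z_i)-f_\gamma'(z_i)|\geqslant 2$. Since the discretized differences are integers in $\llbracket 0;M_\gamma\rrbracket$, one has
\begin{equation*}
4 \;\leqslant\; \frac{1}{n}\sum_{i=1}^n |f_\gamma(z_i)-f_\gamma'(z_i)|^2
\;\leqslant\; \frac{1}{n}\bigl[\operatorname{ind}(f_\gamma,f_\gamma')\,M_\gamma^2 + (n-\operatorname{ind}(f_\gamma,f_\gamma'))\bigr],
\end{equation*}
so $\operatorname{ind}(f_\gamma,f_\gamma')\geqslant \lceil 3n/M_\gamma^2\rceil$ for every pair. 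A pigeonhole over the $n$ coordinates then gives a single $z_{i_0}$ on which at least $\lceil |\bar{\mathcal{F}}_\gamma|/M_\gamma^2\rceil$ of the $\lfloor |\bar{\mathcal{F}}_\gamma|/2\rfloor$ pairs are $2$-separated; a second pigeonhole over the $M_\gamma-1$ admissible values of $f_{\gamma,+}(z_{i_0})$ produces $\bar{\mathcal{F}}_{\gamma,+}$ with $|\bar{\mathcal{F}}_{\gamma,+}|\geqslant |\bar{\mathcal{F}}_\gamma|/M_\gamma^3$; and a third pigeonhole over the $C-1$ arg-max categories of the corresponding $f_{\gamma,-}$ on $\rho_{\tilde{\mathcal{G}}}$ isolates $c_{i_0}$ and yields $\bar{\mathcal{F}}_{\gamma,-}$ with $|\bar{\mathcal{F}}_{\gamma,-}|\geqslant |\bar{\mathcal{F}}_\gamma|/(M_\gamma^3(C-1))$. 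Lemma~\ref{lemma:from-separation-to-strong-G-and-N-shattering} then supplies the strong-N-shattering of $(\{z_{i_0}\},b_{i_0},c_{i_0})$ with $b_{i_0}=v(z_{i_0})-1$, and the rest of your argument goes through verbatim. Until you replace the small-deviation step with this (or an equivalent) argument, the proof is incomplete.
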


\begin{proof}
Inequality~\eqref{eq:discrete-combinatorial-result-for-N-L_2}
is trivially true for $\left | \mathcal{F}_{\gamma} \right | = 1$.
Indeed, the minimal value of its right-hand side,
corresponding to $d_N = 0$, is 1.
Thus, the rest of the proof makes use of the restriction
$\left | \mathcal{F}_{\gamma} \right | \geqslant 2$.
A direct consequence is that according to 
Lemma~\ref{lemma:from-separation-to-strong-G-and-N-shattering},
$d_N \geqslant 1$.
A subset of $s_{\mathcal{Z}^n}$ of cardinality 
$u \in \left \llbracket 1; n \right \rrbracket$ is denoted by 
$s_{\mathcal{Z}^u}^{\prime} = \left \{
z_i^{\prime}: \; 1 \leqslant i \leqslant u \right \}$, with the convention
$$
\forall \left ( i, j \right ): 1 \leqslant i < j \leqslant u, \;
\left ( z_i^{\prime}, z_j^{\prime} \right )
= \left ( z_v, z_w \right ) \Longrightarrow 1 \leqslant v < w \leqslant n.
$$
For every subset $\bar{\mathcal{G}}$ of $\tilde{\mathcal{G}}$,
denote by $s' \left ( \bar{\mathcal{G}} \right )$ the number of triplets
$\left ( s_{\mathcal{Z}^u}^{\prime},
\mathbf{b}_u^{\prime}, \mathbf{c}_u^{\prime} \right )$
with $s_{\mathcal{Z}^u}^{\prime} \subset s_{\mathcal{Z}^n}$,
$\mathbf{b}_u^{\prime} \in \left \llbracket 1 ; M_{\gamma} -1
\right \rrbracket^u$ and
$\mathbf{c}_u^{\prime} \in \mathcal{Y}^n$
(with $\forall i \in \left \llbracket 1; n \right \rrbracket$,
$c_i \neq y_i$)
strongly N-shattered by $\bar{\mathcal{F}} =
\left ( \left . \rho_{\bar{\mathcal{G}}}
\right |_{\mathcal{S} \left ( s_{\mathcal{Z}^n} \right )} 
\right )^{\left ( \eta \right )}$
(the convention above has been introduced to avoid handling duplicates).
Since $d_N \geqslant 1$,
Inequality~\eqref{eq:discrete-combinatorial-result-for-N-L_2-partial-1}
provides us once more with an upper bound on
$s' \left ( \tilde{\mathcal{G}} \right )$.
In order to derive a lower bound on the same quantity,
we also build a $2$-separating tree of $\mathcal{F}_{\gamma}$.
Let $\bar{\mathcal{F}}_{\gamma} 
= \left ( \left . \rho_{\bar{\mathcal{G}}, \gamma}
\right |_{s_{\mathcal{Z}^n}} 
\right )^{\left ( \eta \right )}$
be one of its nodes such
that $\left | \bar{\mathcal{F}}_{\gamma} \right | \geqslant 2$ (inner node).
Its two sons, $\bar{\mathcal{F}}_{\gamma,+}$ and $\bar{\mathcal{F}}_{\gamma,-}$,
are built by application of Lemma~\ref{lemma:dedicated-Lemma-6-in-MenVer03}
and the pigeonhole principle.
According to Lemma~\ref{lemma:dedicated-Lemma-6-in-MenVer03},
we can ensure that 
there exist an index $i_0 \in \left \llbracket 1; n \right \rrbracket$,
$\left ( \alpha, \beta \right )
\in \left \llbracket 1; M_{\gamma} -1 \right \rrbracket
\times \left [ \frac{1}{M_{\gamma}^2}, \frac{1}{2} \right ]$
and two subsets
$\bar{\mathcal{F}}_{\gamma,+}$ and $\hat{\mathcal{F}}_{\gamma,-}$
of $\bar{\mathcal{F}}_{\gamma}$ verifying either
$\left | \bar{\mathcal{F}}_{\gamma,+} \right | \geqslant
\left ( 1 - \beta \right ) \left | \bar{\mathcal{F}}_{\gamma} \right |$ and
$\left | \hat{\mathcal{F}}_{\gamma,-} \right | \geqslant
\max \left \{ \frac{1}{2} \beta, \frac{1}{M_{\gamma}^2} \right \}
\left | \bar{\mathcal{F}}_{\gamma} \right |$
or vice versa, such that
$$
\begin{cases}
\forall f_{\gamma,+} \in \bar{\mathcal{F}}_{\gamma,+}, \;\;
f_{\gamma,+} \left ( z_{i_0} \right ) \geqslant \alpha + 1 \\
\forall f_{\gamma,-} \in \hat{\mathcal{F}}_{\gamma,-}, \;\;
f_{\gamma,-} \left ( z_{i_0} \right ) \leqslant \alpha -1
\end{cases}.
$$
Let $\bar{\mathcal{G}}_+$ be a subset of $\bar{\mathcal{G}}$ in bijection with
$\bar{\mathcal{F}}_{\gamma,+}$ such that
$\bar{\mathcal{F}}_{\gamma,+} =
\left ( \left . \rho_{\bar{\mathcal{G}}_+, \gamma}
\right |_{s_{\mathcal{Z}^n}} 
\right )^{\left ( \eta \right )}$
and let $\bar{\mathcal{F}}_+ =
\left ( \left . \rho_{\bar{\mathcal{G}}_+}
\right |_{\mathcal{S} \left ( s_{\mathcal{Z}^n} \right )} 
\right )^{\left ( \eta \right )}$.
Let $\hat{\mathcal{G}}_-$ be a subset of $\bar{\mathcal{G}}$ in bijection with
$\hat{\mathcal{F}}_{\gamma,-}$ such that
$\hat{\mathcal{F}}_{\gamma,-} =
\left ( \left . \rho_{\hat{\mathcal{G}}_-, \gamma}
\right |_{s_{\mathcal{Z}^n}} 
\right )^{\left ( \eta \right )}$
and let $\hat{\mathcal{F}}_- =
\left ( \left . \rho_{\hat{\mathcal{G}}_-}
\right |_{\mathcal{S} \left ( s_{\mathcal{Z}^n} \right )} 
\right )^{\left ( \eta \right )}$.
Setting $b_{i_0} = \alpha$, it springs from
Lemma~\ref{lemma:from-separation-to-strong-G-and-N-shattering} that
$$
\begin{cases}
\forall f_+ \in \bar{\mathcal{F}}_+, \;\;
f_+ \left ( z_{i_0} \right ) - b_{i_0} \geqslant 1 \\
\forall f_- \in \hat{\mathcal{F}}_-, \;\;
\max_{k \neq y_{i_0}} f_- \left ( x_{i_0}, k \right ) + b_{i_0} \geqslant 1
\end{cases},
$$
i.e., \eqref{eq:discrete-combinatorial-result-for-G-partial-2}
is obtained with $\bar{\mathcal{F}}_-$ replaced with $\hat{\mathcal{F}}_-$.
There comes the application of the pigeonhole principle, to obtain
Inequality~\eqref{eq:discrete-combinatorial-result-for-N-L_2-partial-2}.
The derivation of the corresponding function classes is as follows. There exists
$c_{i_0} \in \mathcal{Y} \setminus \left \{ y_{i_0} \right \}$
such that among the functions $g_-$ in $\hat{\mathcal{G}}_-$, 
at least 
$\left \lceil \frac{\left | \hat{\mathcal{G}}_- \right |}{C-1} \right \rceil$
of them satisfy $c_{i_0} \in \argmax_{k \neq y_{i_0}} 
f_- \left ( x_{i_0}, k \right )$.
We choose $\bar{\mathcal{G}}_-$ to be any such subset of $\hat{\mathcal{G}}_-$
and let $\bar{\mathcal{F}}_{\gamma,-} =
\left ( \left . \rho_{\bar{\mathcal{G}}_-, \gamma}
\right |_{s_{\mathcal{Z}^n}} 
\right )^{\left ( \eta \right )}$
and $\bar{\mathcal{F}}_- =
\left ( \left . \rho_{\bar{\mathcal{G}}_-}
\right |_{\mathcal{S} \left ( s_{\mathcal{Z}^n} \right )} 
\right )^{\left ( \eta \right )}$.
With 
Inequality~\eqref{eq:discrete-combinatorial-result-for-N-L_2-partial-2} at hand,
Inequality~\eqref{eq:discrete-combinatorial-result-for-N-L_2-partial-3}
is also available.
Thus, finishing the proof still boils down to deriving
a lower bound on $\ell \left ( \bar{\mathcal{F}}_{\gamma} \right )$.
The originality rests on the fact that two cases must be considered,
to take into account the two sources of asymmetry between the cardinalities
of 
$\bar{\mathcal{F}}_{\gamma,+}$ and
$\bar{\mathcal{F}}_{\gamma,-}$.
Indeed, we have either
$\left | \bar{\mathcal{F}}_{\gamma,+} \right | \geqslant
\max \left \{ \frac{1}{2} \beta, \frac{1}{M_{\gamma}^2} \right \}
\left | \bar{\mathcal{F}}_{\gamma} \right |$ and
$\left | \bar{\mathcal{F}}_{\gamma,-} \right | \geqslant
\frac{1 - \beta}{C-1} \left | \bar{\mathcal{F}}_{\gamma} \right |$ or
$\left | \bar{\mathcal{F}}_{\gamma,+} \right | \geqslant
\left ( 1 - \beta \right ) \left | \bar{\mathcal{F}}_{\gamma} \right |$ and
$\left | \bar{\mathcal{F}}_{\gamma,-} \right | \geqslant
\frac{1}{C-1} \max \left \{ \frac{1}{2} \beta, \frac{1}{M_{\gamma}^2} \right \}
\left | \bar{\mathcal{F}}_{\gamma} \right |$.
The induction hypothesis is this time:
\begin{equation}
\label{eq:discrete-combinatorial-result-for-N-L_2-partial-4}
\ell \left ( \bar{\mathcal{F}}_{\gamma} \right )
\geqslant \left | \bar{\mathcal{F}}_{\gamma} 
\right |^{\frac{2}{\log_2 \left ( 2M_{\gamma}^2 \left ( C-1 \right ) \right )}}.
\end{equation}
Once more, it is obviously true for the leaves.
We prove it for the first case (the other one is treated in the same way). 
Then,
\begin{align*}
\ell \left ( \bar{\mathcal{F}}_{\gamma} \right )
& = \; 
\ell \left ( \bar{\mathcal{F}}_{\gamma,+} \right ) +
\ell \left ( \bar{\mathcal{F}}_{\gamma,-} \right ) \\
& \geqslant \; 
\left ( \left ( \frac{1}{M_{\gamma}^2} \right )^{\frac{1}{\log_2 \left ( 
\sqrt{2} M_{\gamma} \sqrt{C-1} \right )}} +
\left ( \frac{1}{2 \left ( C-1 \right )}
\right )^{\frac{1}{\log_2 \left ( 
\sqrt{2} M_{\gamma} \sqrt{C-1} \right )}}
\right )
\left | \bar{\mathcal{F}}_{\gamma} 
\right |^{\frac{1}{\log_2 \left ( 
\sqrt{2} M_{\gamma} \sqrt{C-1} \right )}} \\
& = \;
\frac{1}{2} \left (
\left ( 
\frac{\sqrt{2 \left ( C-1 \right )}}{M_{\gamma}}
\right )^{\frac{1}{\log_2 \left ( \sqrt{2} M_{\gamma} \sqrt{C-1}
\right )}} +
\left ( 
\frac{M_{\gamma}}{\sqrt{2 \left ( C-1 \right )}}
\right )^{\frac{1}{\log_2 \left ( \sqrt{2} M_{\gamma} \sqrt{C-1}
\right )}}
\right ) \\
& ~ \;
\times \left | \bar{\mathcal{F}}_{\gamma} 
\right |^{\frac{1}{\log_2 \left ( \sqrt{2} M_{\gamma} \sqrt{C-1}
\right )}} \\
& \geqslant \;
\frac{1}{2} \min_{t \in \mathbb{R}_+^*} \left ( t + \frac{1}{t} \right )
\left | \bar{\mathcal{F}}_{\gamma} 
\right |^{\frac{1}{\log_2 \left ( \sqrt{2} M_{\gamma} \sqrt{C-1}
\right )}} \\
& = \; \left | \bar{\mathcal{F}}_{\gamma} 
\right |^{\frac{1}{\log_2 \left ( \sqrt{2} M_{\gamma} \sqrt{C-1}
\right )}}.
\end{align*}
Combining 
Inequalities~\eqref{eq:discrete-combinatorial-result-for-N-L_2-partial-1},
\eqref{eq:discrete-combinatorial-result-for-N-L_2-partial-3} and
\eqref{eq:discrete-combinatorial-result-for-N-L_2-partial-4}
produces by transitivity:
$$
\left | \bar{\mathcal{F}}_{\gamma} \right |
\leqslant 
\left ( \Sigma' \right )^{\frac{1}{2} \log_2 \left ( 2 M_{\gamma}^2
\left ( C-1 \right ) \right )},
$$
thus concluding the proof.
\end{proof}
The proof of Lemma~\ref{lemma:extended-Theorem-1-in-MenVer03-gamma-N-dim}
is the following one.

\begin{proof}
The beginning of the proof is identical to the beginning of the proof
of Lemma~\ref{lemma:extended-Theorem-1-in-MenVer03}
up to the use of the basic combinatorial result
(where Lemma~\ref{lemma:discrete-combinatorial-result-for-N-L_2} replaces 
Lemma~\ref{lemma:extended-Proposition-12-in-MenVer03}).

\begin{align}
\left | \tilde{\mathcal{G}} \right |
& = \;
\left | \left ( \left . \rho_{\tilde{\mathcal{G}}, \gamma} 
\right |_{s_{\mathcal{Z}^q}}
\right )^{\left ( \frac{\epsilon}{14} \right )} \right | \nonumber \\
& \leqslant \;
\left (
\sum_{u=0}^{d_N} {q \choose u} \left ( \frac{14 \gamma}{\epsilon} \right )^u 
\left ( C - 1 \right )^u
\right )^{\frac{1}{2} \log_2 \left ( 
2 \left ( \frac{14 \gamma}{\epsilon} \right )^2 \left ( C-1 \right )
\right )} \nonumber \\
\label{eq:extended-Theorem-1-in-MenVer03-gamma-N-dim-part-3}
& \leqslant \;
\left ( \frac{14 \gamma \left ( C - 1 \right ) e q}{d_N \epsilon}
\right )^{\frac{1}{2} \log_2 \left ( 
2 \left ( \frac{14 \gamma}{\epsilon} \right )^2 \left ( C-1 \right )
\right ) d_N},
\end{align}
where
$d_N$ is the maximal cardinality of a subset of $s_{\mathcal{Z}^q}$
strongly N-shattered by $\left ( \left . \rho_{\tilde{\mathcal{G}}} 
\right |_{\mathcal{S} \left ( s_{\mathcal{Z}^q} \right )}
\right )^{\left ( \frac{\epsilon}{14} \right )}$.
A substitution of the upper bound on $q$ provided by
\eqref{eq:partial-1-extended-Theorem-1-in-MenVer03} into
\eqref{eq:extended-Theorem-1-in-MenVer03-gamma-N-dim-part-3} gives:
$$
\left | \tilde{\mathcal{G}} \right | \leqslant
\left ( K \left ( C - 1 \right ) \left ( \frac{\gamma}{\epsilon} \right )^5 
\frac{ \ln \left ( \left | \tilde{\mathcal{G}} \right | \right )}{d_N}
\right )^{\frac{1}{2} \log_2 \left ( 
2 \left ( \frac{14 \gamma}{\epsilon} \right )^2 \left ( C-1 \right )
\right ) d_N}
$$
with $K = \frac{1568}{3} e$.
In order to upper bound
$\ln \left ( \left | \tilde{\mathcal{G}}
\right |^{\frac{1}{d_N}} \right )$,
we resort once more to
\eqref{eq:from-margin-Graph-dimension-to-margin-Natarajan-dimension-part-7},
with 
$u_0 = \frac{1}{4} \log_2 \left (
2 \left ( \frac{14 \gamma}{\epsilon} \right )^2 \left ( C-1 \right ) \right )$.
Thus,
$$
\ln \left ( \left | \tilde{\mathcal{G}} \right |^{\frac{1}{d_N}} \right )
\leqslant
\frac{1}{2} \log_2 \left (
2 \left ( \frac{14 \gamma}{\epsilon} \right )^2 \left ( C-1 \right ) \right )
\left | \tilde{\mathcal{G}} \right |^{\frac{1}
{\log_2 \left ( 2 \left ( \frac{14 \gamma}{\epsilon} \right )^2
\left ( C-1 \right ) \right ) d_N}}
$$
implies that
$$
\left | \tilde{\mathcal{G}} \right |
\leqslant
\left ( \frac{1}{2} \log_2 \left (
2 \left ( \frac{14 \gamma}{\epsilon} \right )^2 \left ( C-1 \right ) \right )
K \left ( C - 1 \right ) \left ( \frac{\gamma}{\epsilon} 
\right )^5 \right )^{
\log_2 \left ( 2 \left ( \frac{14 \gamma}{\epsilon} \right )^2
\left ( C-1 \right ) \right ) d_N}.
$$
Since
\begin{align*}
2 \left ( \frac{14 \gamma}{\epsilon} \right )^2 \left ( C-1 \right ) > 16
& \Longrightarrow \;
\log_2 \left (
2 \left ( \frac{14 \gamma}{\epsilon} \right )^2 \left ( C-1 \right ) \right ) <
\left (
2 \left ( \frac{14 \gamma}{\epsilon} \right )^2 \left ( C-1 \right )
\right )^{\frac{1}{2}} \\
& \Longrightarrow \;
\log_2 \left (
2 \left ( \frac{14 \gamma}{\epsilon} \right )^2 \left ( C-1 \right ) \right ) <
\left ( \frac{K}{2} \left ( C - 1 \right ) 
\left ( \frac{\gamma}{\epsilon} \right )^5 \right )^{\frac{1}{2}},
\end{align*}
the upper bound on $\left | \tilde{\mathcal{G}} \right |$ simplifies into
\begin{equation}
\label{eq:extended-Theorem-1-in-MenVer03-gamma-N-dim-part-4}
\mathcal{M} \left ( \epsilon, \rho_{\mathcal{G}, \gamma},
d_{2, \mathbf{z}_n} \right )
= \left | \tilde{\mathcal{G}} \right |
\leqslant
\left ( \left ( C - 1 \right ) \left ( \frac{4 \gamma}{\epsilon} 
\right )^5 \right )^{\frac{3}{2}
\log_2 \left ( 2 \left ( \frac{14 \gamma}{\epsilon} \right )^2
\left ( C-1 \right ) \right ) d_N}.
\end{equation}
Due to the construction of
$\left ( \left . \rho_{\tilde{\mathcal{G}}} 
\right |_{\mathcal{S} \left ( s_{\mathcal{Z}^q} \right )}
\right )^{\left ( \frac{\epsilon}{14} \right )}$,
which makes it possible to apply
Formula~\eqref{eq:extended-lemma-3.2-in-AloBenCesHau97-for-N},
\begin{align}
d_N & \leqslant \;
\text{S-N-dim} \left ( \left ( \left . \rho_{\tilde{\mathcal{G}}} 
\right |_{\mathcal{S} \left ( s_{\mathcal{Z}^q} \right )}
\right )^{\left ( \frac{\epsilon}{14} \right )} \right ) \nonumber \\
& \leqslant \;
\left ( \frac{\epsilon}{28} \right )\text{-N-dim} \left (
\left . \rho_{\tilde{\mathcal{G}}} 
\right |_{\mathcal{S} \left ( s_{\mathcal{Z}^q} \right )}
\right ) \nonumber \\
\label{eq:extended-Theorem-1-in-MenVer03-gamma-N-dim-part-5}
& \leqslant \;
d_N \left ( \frac{\epsilon}{28} \right ).
\end{align}
A substitution of \eqref{eq:extended-Theorem-1-in-MenVer03-gamma-N-dim-part-5}
into \eqref{eq:extended-Theorem-1-in-MenVer03-gamma-N-dim-part-4}
produces an upper bound on 
$\mathcal{M} \left ( \epsilon, \rho_{\mathcal{G}, \gamma},
d_{2, \mathbf{z}_n} \right )$
which does not depend on $\mathbf{z}_n$, thus concluding the proof.
\end{proof}

\section{Proofs of the Structural Results}

This appendix gathers the proofs of the upper bounds
on the combinatorial dimensions of $\rho_{\mathcal{G}}$
(and $\tilde{\rho}_{\mathcal{G}}$)
as a function of the fat-shattering dimensions of classes including
the classes $\mathcal{G}_k$ of component functions.

\subsection{Margin Graph Dimension of 
\texorpdfstring{$\rho_{\mathcal{G}}$}
{the Class of Margin Functions}}

Since the margin Graph dimension of $\rho_{\mathcal{G}}$
is the fat-shattering dimension of $\tilde{\rho}_{\mathcal{G}}$
subject to $\mathbf{b}_n \in \mathbb{R}_+^n$
(Proposition~\ref{prop:capacities-of-the-two-classes-of-margin-functions}),
the proof of Lemma~\ref{lemma:from-gamma-dimension-to-gamma-dimensions}
is actually provided for the latter dimension.
It makes use of three partial results which are now stated.
Proposition~\ref{prop:Proposition-1.4-in-Tal03}
is an extension of Proposition~1.4 in \citet{Tal03}
holding for the $L_p$-norms with
$p \in \mathbb{N} \setminus \left \{ 0, 1 \right \}$
(instead of simply $p=2$), that explicits the value of the constant.

\begin{proposition}
\label{prop:Proposition-1.4-in-Tal03}
Let $\mathcal{F}$ be a class of real-valued functions on $\mathcal{T}$.
For every $\gamma \in \mathbb{R}_+^*$ satisfying
$\gamma\text{-dim} \left ( \mathcal{F} \right ) > 0$,
$n \in \left \llbracket 1; \gamma\text{-dim} \left ( \mathcal{F} \right )
\right \rrbracket$
and $p \in \mathbb{N} \setminus \left \{ 0, 1 \right \}$,
$$
n \leqslant K_p
\log_2 \left ( \mathcal{M}_p \left ( \gamma, \mathcal{F}, n \right ) \right )
$$
with $K_p = \left ( \frac{2^p}{2^{p-1}-1} \right )^2$.
\end{proposition}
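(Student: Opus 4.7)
The plan is to extract from the shattered sample a $\gamma$-packing of $\mathcal{F}$ in $d_{p,\mathbf{t}_n}$ of size exponential in $n/K_p$, via a probabilistic coding argument. Let $d=\gamma\text{-dim}(\mathcal{F})\geq n$, pick a $\gamma$-shattered set $\{t_1,\ldots,t_n\}$ with witness $\mathbf{b}_n=(b_i)$, and for each $\mathbf{s}=(s_i)\in\{-1,1\}^n$ select $f_{\mathbf{s}}\in\mathcal{F}$ satisfying $s_i(f_{\mathbf{s}}(t_i)-b_i)\geq\gamma$. Whenever $s_i\neq s_i'$ the values $f_{\mathbf{s}}(t_i)$ and $f_{\mathbf{s}'}(t_i)$ lie on opposite sides of $b_i$ with margin at least $\gamma$ on each side, hence $|f_{\mathbf{s}}(t_i)-f_{\mathbf{s}'}(t_i)|\geq 2\gamma$. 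Raising to the power $p$ and averaging gives
\[
d_{p,\mathbf{t}_n}(f_{\mathbf{s}},f_{\mathbf{s}'})^p \;\geq\; (2\gamma)^p\,\frac{d_H(\mathbf{s},\mathbf{s}')}{n},
\]
so the separation $d_{p,\mathbf{t}_n}(f_{\mathbf{s}},f_{\mathbf{s}'})\geq\gamma$ is guaranteed as soon as $d_H(\mathbf{s},\mathbf{s}')\geq n/2^p$. Lower-bounding $\mathcal{M}_p(\gamma,\mathcal{F},n)$ therefore reduces to constructing a binary code in $\{-1,1\}^n$ of minimum Hamming distance $\lceil n/2^p\rceil$ with large cardinality.

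To produce such a code, I would invoke the probabilistic method. Drawing $M$ sign vectors independently and uniformly, the Hamming distance between any two is a sum of $n$ independent Bernoulli$(1/2)$ random variables with mean $n/2$, and Hoeffding's inequality gives
\[
\mathbb{P}\bigl(d_H(\mathbf{s},\mathbf{s}') < n/2^p\bigr) \;\leq\; \exp\!\bigl(-2n(\tfrac{1}{2}-2^{-p})^2\bigr) \;=\; \exp(-2n/K_p),
\]
the equality coming from $(\tfrac{1}{2}-2^{-p})^2=((2^{p-1}-1)/2^p)^2=1/K_p$. A union bound over the $\binom{M}{2}$ pairs, together with a standard deletion argument (Markov on the number of violating pairs, then removing one endpoint per bad edge), preserves $\Omega(e^{2n/K_p})$ codewords. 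Mapping them back through $\mathbf{s}\mapsto f_{\mathbf{s}}$ yields a $\gamma$-separated family in $\mathcal{F}$ of the same size, so $\log_2\mathcal{M}_p(\gamma,\mathcal{F},n)\geq 2n/(K_p\ln 2)-O(1)$.

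Rearranging and noting that the Hoeffding exponent $2/K_p$ is twice the coefficient the statement asks for (leaving a factor $2/\ln 2\approx 2.89$ of slack) yields the announced $n\leq K_p\log_2\mathcal{M}_p(\gamma,\mathcal{F},n)$. The main obstacle is bookkeeping constants uniformly in $n$, particularly near $n=1$ where the additive $O(1)$ loss from the deletion step is most threatening; the slack is salvaged by observing that $\gamma$-shattering of a single point already yields a $\gamma$-packing of cardinality at least $2$, which anchors the bound at the boundary. The extension from Talagrand's original $p=2$ to arbitrary $p\in\mathbb{N}\setminus\{0,1\}$ is then immediate, since all three ingredients — the lifting from Hamming to $L_p$, Hoeffding's inequality, and the deletion argument — are $p$-uniform, and only the threshold $1/2^p$ (and hence $K_p$) varies with $p$.
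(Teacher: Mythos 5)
Your proposal is correct and follows essentially the same route as the paper: reduce $\gamma$-separation in $d_{p,\mathbf t_n}$ to Hamming separation of the sign patterns at threshold $\lceil n/2^p\rceil$, then exhibit a large Hamming code via the probabilistic method with Hoeffding's inequality and a union bound over pairs, with $(1/2-2^{-p})^2 = 1/K_p$ producing the announced constant. The only divergence is the last step: the paper sets $q=\lceil 2^{n/K_p}\rceil$ directly and verifies $\binom{q}{2}e^{-2n/K_p}<1$, which by the first-moment method yields a code of exactly the required size with no additive slack, whereas you invoke a deletion argument (Markov on bad pairs, then drop one endpoint per bad edge) and must then absorb an $O(1)$ loss and anchor small $n$ on the trivial observation $\mathcal{M}_p\geq 2$. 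Both are valid; the paper's choice of $q$ avoids the constant-chasing entirely because the ceiling of $2^{n/K_p}$ is small enough, relative to the $e^{2n/K_p}$ Hoeffding tail, that the union bound already goes below~$1$, so you may find the direct version cleaner once the constants are actually tracked.
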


\begin{proof}
Suppose that for $\gamma \in \mathbb{R}_+^*$, the subset
$s_{\mathcal{T}^n} = \left \{ t_i: 1 \leqslant i \leqslant n \right \}$
of $\mathcal{T}$ is ${\gamma}$-shattered by $\mathcal{F}$ and
$\mathbf{b}_n =  \left ( b_i \right )_{1 \leqslant i \leqslant n}
\in \mathbb{R}^n$ is a witness to this shattering.
By definition, there exists a subset
$\bar{\mathcal{F}} = \left \{ f_{\mathbf{s}_n}: \;
\mathbf{s}_n \in \left \{ -1, 1 \right \}^n \right \}$
of $\mathcal{F}$ satisfying
\begin{equation}
\label{eq:Proposition-1.4-in-Tal03_part-1}
\forall \mathbf{s}_n \in \left \{ -1, 1 \right \}^n, \;
\forall i \in \llbracket 1; n \rrbracket, \;\;
s_i \left( f_{\mathbf{s}_n} \left ( t_i \right ) - b_i \right) \geqslant \gamma.
\end{equation}
Let $\mathbf{t}_n = \left ( t_i \right )_{1 \leqslant i \leqslant n}$.
To prove the proposition, it suffices to establish that
\begin{equation}
\label{eq:Proposition-1.4-in-Tal03_part-2}
n \leqslant
K_p \log_2 \left ( \mathcal{M} \left ( \gamma, \bar{\mathcal{F}},
d_{p, \mathbf{t}_n} \right ) \right ).
\end{equation}
For $\left ( \mathbf{s}_n, \mathbf{s}_n^{\prime} \right )
\in \left ( \left \{ -1, 1 \right \}^n \right )^2$,
let $\mathcal{S} \left ( \mathbf{s}_n, \mathbf{s}_n^{\prime} \right )$
be the subset of $\llbracket 1; n \rrbracket$ defined by:
$$
\mathcal{S} \left ( \mathbf{s}_n, \mathbf{s}_n^{\prime} \right )
= \left \{i \in \llbracket 1; n \rrbracket: \;
s_i \neq s_i^{\prime} \right \}.
$$
Then, making use of \eqref{eq:Proposition-1.4-in-Tal03_part-1}, we obtain that
\begin{align*}
d_{p, \mathbf{t}_n} \left ( f_{\mathbf{s}_n}, f_{\mathbf{s}_n^{\prime}} \right )
& \geqslant \;
\left ( \frac{1}{n}
\left | \mathcal{S} \left ( \mathbf{s}_n, \mathbf{s}_n^{\prime} \right ) 
\right |
\left ( 2 \gamma \right )^p \right )^{\frac{1}{p}} \\
& = \; 2 \gamma \left ( \frac{
d_H \left ( \mathbf{s}_n, \mathbf{s}_n^{\prime} \right )}{n}
\right )^{\frac{1}{p}},
\end{align*}
where $d_H$ stands for the Hamming distance.
Thus, a sufficient condition for
$d_{p, \mathbf{t}_n} \left ( f_{\mathbf{s}_n},
f_{\mathbf{s}_n^{\prime}} \right ) \geqslant \gamma$ is
$d_H \left ( \mathbf{s}_n, \mathbf{s}_n^{\prime} \right )
\geqslant \left \lceil \left ( \frac{1}{2} \right )^p n \right \rceil$.
As a consequence, to prove \eqref{eq:Proposition-1.4-in-Tal03_part-2},
it suffices to establish that there is a subset
of the set of vertices of the hypercube $Q_n$ of cardinality
$\left \lceil 2^{\frac{n}{K_p}} \right \rceil$
which is
$\left \lceil \left ( \frac{1}{2} \right )^p n \right \rceil$-separated
with respect to the Hamming distance
(the separation is well-defined since
$\left \lceil 2^{\frac{n}{K_p}} \right \rceil \geqslant 2$).
To that end, a probabilistic approach similar to that of the proof
of Lemma~\ref{lemma:improved-lemma-13-in-MenVer03} is implemented.
For $q \in \left \llbracket 2; 2^n \right \rrbracket$,
let $\boldsymbol{\epsilon}_{q,n} =
\left ( \epsilon_{j,i} \right )_{1 \leqslant j \leqslant q,
1 \leqslant i \leqslant n}$
be a Bernoulli random matrix
(its entries $\epsilon_{j,i}$ are independent Bernoulli random variables
with common expectation $\frac{1}{2}$).
Then, by application of the union bound,
\begin{align*}
& \mathbb{P} \left ( \exists \left ( j, j^{\prime} \right )
\in \llbracket 1; q \rrbracket^2: 1 \leqslant j < j^{\prime} \leqslant q
\text{ and } \sum_{i=1}^n
\mathds{1}_{\left \{ \epsilon_{j,i} \neq \epsilon_{j^{\prime},i} \right \}}
< \left ( \frac{1}{2} \right )^p n \right ) \\
~ & \leqslant
{q \choose 2} \mathbb{P} \left ( \sum_{i=1}^n \epsilon_i
> n \left ( 1 - \left ( \frac{1}{2} \right )^p \right ) \right ),
\end{align*}
where $\left ( \epsilon_i \right )_{1 \leqslant i \leqslant n}$ is
a Bernoulli random vector.
To upper bound the tail probability on the right-hand side, we resort to
Hoeffding's inequality, which gives
$$
\mathbb{P} \left ( \sum_{i=1}^n \epsilon_i - \frac{n}{2}
> \frac{n}{2} \left ( 1 - \left ( \frac{1}{2} \right )^{p-1} \right ) \right )
\leqslant \exp \left ( - \frac{n}{2}
\left ( 1 - \left ( \frac{1}{2} \right )^{p-1} \right )^2 \right ).
$$
By transitivity, this implies that a sufficient condition for
$$
\mathbb{P} \left ( \exists \left ( j, j^{\prime} \right )
\in \llbracket 1; q \rrbracket^2: 1 \leqslant j < j^{\prime} \leqslant q
\text{ and } \sum_{i=1}^n
\mathds{1}_{\left \{ \epsilon_{j,i} \neq \epsilon_{j^{\prime},i} \right \}}
< \left ( \frac{1}{2} \right )^p n \right ) < 1
$$
is
$$
{q \choose 2} \exp \left ( - \frac{n}{2}
\left ( 1 - \left ( \frac{1}{2} \right )^{p-1} \right )^2 \right ) < 1
$$
and consequently
$$
q \leqslant \left \lceil 2^{\frac{n}{K_p}} \right \rceil,
$$
which is precisely the value announced and thus concludes the proof.
\end{proof}
The transition between covering and packing numbers
is provided by a well-known equivalence.

\begin{lemma}
\label{lemma:Kolmogorov}
Let $\left ( \mathcal{E}, \rho \right )$ be a pseudo-metric space.
For every totally bounded set $\mathcal{E}' \subset \mathcal{E}$ and
$\epsilon \in \mathbb{R}_+^*$,
$\mathcal{M} \left ( 2 \epsilon, \mathcal{E}', \rho \right ) \leqslant
\mathcal{N}^{\text{int}} \left ( \epsilon, \mathcal{E}', \rho \right ) \leqslant
\mathcal{M} \left ( \epsilon, \mathcal{E}', \rho \right )$.
\end{lemma}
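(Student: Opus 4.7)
The statement is the classical Kolmogorov--Tikhomirov equivalence, so the plan is to give the standard two-direction argument, being careful only about the convention (strict vs.\ non-strict) encoded in the notations $\mathcal{M}$ and $\mathcal{N}^{\text{int}}$ elsewhere in the paper.

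First I would handle the upper bound $\mathcal{N}^{\text{int}} \left ( \epsilon, \mathcal{E}', \rho \right ) \leqslant \mathcal{M} \left ( \epsilon, \mathcal{E}', \rho \right )$ by taking a maximal $\epsilon$-separated subset $P \subset \mathcal{E}'$ (which exists and is finite because $\mathcal{E}'$ is totally bounded, and which realizes $\mathcal{M} \left ( \epsilon, \mathcal{E}', \rho \right )$). Maximality means that for every $x \in \mathcal{E}'$, the set $P \cup \left \{ x \right \}$ fails the $\epsilon$-separation property, so there exists $p \in P$ with $\rho \left ( x, p \right ) < \epsilon$. This is exactly the statement that $P$ is an $\epsilon$-cover of $\mathcal{E}'$ by points of $\mathcal{E}'$, hence a proper cover, which gives the inequality.

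Next I would prove the lower bound $\mathcal{M} \left ( 2 \epsilon, \mathcal{E}', \rho \right ) \leqslant \mathcal{N}^{\text{int}} \left ( \epsilon, \mathcal{E}', \rho \right )$ by fixing any $2 \epsilon$-separated set $P \subset \mathcal{E}'$ and any minimal proper $\epsilon$-cover $C \subset \mathcal{E}'$, and building an injection $\varphi : P \to C$. For each $p \in P$, pick $\varphi \left ( p \right ) \in C$ satisfying $\rho \left ( p, \varphi \left ( p \right ) \right ) \leqslant \epsilon$. If $\varphi \left ( p_1 \right ) = \varphi \left ( p_2 \right )$ for distinct $p_1, p_2 \in P$, the triangle inequality yields $\rho \left ( p_1, p_2 \right ) \leqslant 2 \epsilon$, which (given that the two points are distinct and come from a $2\epsilon$-separated set) contradicts the separation assumption in the strict interpretation; under the non-strict convention, the same argument is run with an arbitrarily small slack, replacing $\epsilon$ by $\epsilon - \eta$ on the cover side, then letting $\eta \downarrow 0$. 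Injectivity then gives $\left | P \right | \leqslant \left | C \right |$, whence the claim.

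The only real obstacle is bookkeeping conventions: $\mathcal{M}$ and $\mathcal{N}^{\text{int}}$ as defined in the paper use one specific combination of strict/non-strict inequalities (separation $\geqslant \epsilon$ and covering $\leqslant \epsilon$), so I would begin the proof by recording this convention and adjust the inner triangle-inequality computation accordingly, possibly via a standard $\eta \downarrow 0$ limiting argument on the cover side. Everything else is purely formal.
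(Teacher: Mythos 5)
The paper states this lemma without proof, citing it as a well-known equivalence (Kolmogorov--Tikhomirov), so there is no in-paper argument to compare against; your task was essentially to supply the standard proof, and your two-direction maximal-packing-is-a-cover / injection-into-a-cover argument is indeed the standard one and is correct under the convention the paper implicitly uses.

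There is, however, one genuine flaw in your discussion of conventions. You raise the possibility that both separation and covering are taken non-strictly (separation $\geqslant \epsilon$, cover radius $\leqslant \epsilon$) and propose to salvage the lower bound via an $\eta \downarrow 0$ slack on the cover side. That fix cannot work, for two reasons. First, the monotonicity runs the wrong way: proving $\mathcal{M}\left( 2\epsilon, \mathcal{E}', \rho \right) \leqslant \mathcal{N}^{\text{int}}\left( \epsilon - \eta, \mathcal{E}', \rho \right)$ for all $\eta > 0$ only yields $\mathcal{M}\left( 2\epsilon, \mathcal{E}', \rho \right) \leqslant \lim_{\eta \downarrow 0} \mathcal{N}^{\text{int}}\left( \epsilon - \eta, \mathcal{E}', \rho \right)$, and that left limit of a nonincreasing step function can strictly exceed $\mathcal{N}^{\text{int}}\left( \epsilon, \mathcal{E}', \rho \right)$. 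Second, and more decisively, under the both-non-strict convention the inequality $\mathcal{M}\left( 2\epsilon, \mathcal{E}', \rho \right) \leqslant \mathcal{N}^{\text{int}}\left( \epsilon, \mathcal{E}', \rho \right)$ is simply \emph{false}: take $\mathcal{E}' = \left\{ a, b, c \right\}$ with $\rho\left( a, b \right) = \rho\left( b, c \right) = \epsilon$ and $\rho\left( a, c \right) = 2\epsilon$; then $\left\{ a, c \right\}$ is $2\epsilon$-separated (non-strictly), giving $\mathcal{M}\left( 2\epsilon, \mathcal{E}', \rho \right) \geqslant 2$, while $\left\{ b \right\}$ is a proper $\epsilon$-cover (non-strictly), giving $\mathcal{N}^{\text{int}}\left( \epsilon, \mathcal{E}', \rho \right) = 1$. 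So no limiting argument can rescue that convention, and the lemma must be read with a mismatched pair (e.g.\ strict separation $>\epsilon$ with cover radius $\leqslant \epsilon$, which is the convention of Alon et al.\ used in this literature). Under that convention your proof goes through cleanly and no slack is needed; you should simply drop the $\eta \downarrow 0$ hedge and state the (mismatched) convention at the outset, as you in fact propose to do.
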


The optimization of the dependence on $C$ calls for the use of an extension
of Theorem~1 in \citet{MenVer03} holding for the $L_p$-norms with
$p \in \mathbb{N} \setminus \left \{ 0, 1 \right \}$
(instead of simply $p=2$):
Theorem~10 in \citet{Men03}. The following lemma explicits
the value of its constants (absolute or depending on $p$).

\begin{lemma}[After Theorem~2 in \citealp{MusLauGue19}]
\label{lemma:generalized-Sauer-Shelah-lemma-in-L_p-MusLauGue19}
Let $\mathcal{F}$ be a class of functions from $\mathcal{T}$ into
$\left [ -M_{\mathcal{F}}, M_{\mathcal{F}} \right ]$
with $M_{\mathcal{F}} \in \mathbb{R}_+^*$.
$\mathcal{F}$ is supposed to be a uGC class.
For $\epsilon \in \left ( 0, M_{\mathcal{F}} \right ]$,
let $d \left ( \epsilon \right )
= \epsilon\text{-dim} \left ( \mathcal{F} \right )$.
Then for
$\epsilon \in \left ( 0, 2 M_{\mathcal{F}} \right ]$,
$n \in \mathbb{N}^*$ and $p \in \mathbb{N}
\setminus \left \{ 0, 1 \right \}$,
$$
\mathcal{M}_p \left ( \epsilon, \mathcal{F}, n \right )
\leqslant \left ( \frac{12 M_{\mathcal{F}} p^{\frac{1}{7}}}{\epsilon}
\right )^{10 p d \left ( \frac{\epsilon}{36 p} \right )}.
$$
\end{lemma}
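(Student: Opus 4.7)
The plan is to follow, for general $p \in \mathbb{N} \setminus \{0,1\}$ and a generic uGC class $\mathcal{F}$, the three-phase template that produces Lemma~\ref{lemma:extended-Theorem-1-in-MenVer03} in the paper. The adaptation requires: (i) an $L_p$ version of the probabilistic extraction (Lemma~\ref{lemma:improved-lemma-13-in-MenVer03}); (ii) the obvious $L_p$ generalization of the discretization lemma \eqref{eq:L_2-norm-lemma-3.2-in-AloBenCesHau97}; and (iii) a combinatorial bound on a finite $L_p$-separated discretized class driven by the fat-shattering dimension alone (rather than the margin Graph dimension). The self-referential inequality that emerges from combining these three is then closed by the same device $\ln(u) \leqslant 2 u_0 u^{1/(4 u_0)}$ used in the proof of Lemma~\ref{lemma:from-margin-Graph-dimension-to-margin-Natarajan-dimension}.

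More precisely, let $\tilde{\mathcal{F}} \subset \mathcal{F}$ realize the packing number $\mathcal{M}_p(\epsilon, \mathcal{F}, n)$ with respect to $d_{p, \mathbf{t}_n}$ for some $\mathbf{t}_n \in \mathcal{T}^n$. In the extraction step I sample each index independently with Bernoulli probability $\mu = r/(2n)$ and apply Bernstein's inequality to the variables $(\mu - \epsilon_i)\,|f(t_i) - f'(t_i)|^p$, whose uniform bound is $(2 M_{\mathcal{F}})^p$ and whose variance is at most $\mu (2 M_{\mathcal{F}})^{2 p}$. A union bound over the at most $|\tilde{\mathcal{F}}|^2/2$ pairs then yields a subvector $\mathbf{t}_q'$ of length $q$ proportional to $M_{\mathcal{F}}^{2 p} \ln(|\tilde{\mathcal{F}}|)/\epsilon^{2 p}$ on which $\tilde{\mathcal{F}}$ remains $(\epsilon/2)$-separated in $d_{p, \mathbf{t}_q'}$. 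I then apply the $L_p$ discretization with step $\eta = \epsilon/(N+1)$, producing an $N$-separated discrete class whose fat-shattering dimension is bounded by $d(\eta/2)$ via a routine extension of Lemma~\ref{lemma:extended-lemma-3.2-in-AloBenCesHau97-for-G-and-N}.

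The combinatorial core consists in building an $N$-separating tree on this discretized class using a small-deviation principle for $p$-th moments (an analog of Lemma~\ref{lemma:dedicated-Lemma-5-in-MenVer03} applied to $|f(t_{i_0})|^p$), each inner node splitting into two sons of relative sizes $(1-\beta)$ and $\beta/2$. The induction bound $\ell(\bar{\mathcal{F}}) \geqslant |\bar{\mathcal{F}}|^{1/p}$ then gives $|\tilde{\mathcal{F}}| \leqslant \Sigma^p$ with $\Sigma$ a standard Sauer--Shelah sum truncated at level $d(\eta/2)$. Substituting the extraction bound on $q$ yields an implicit inequality schematically of the form
\[
|\tilde{\mathcal{F}}|^{1/(p\, d(\eta/2))}
\;\leqslant\;
\frac{K\, p\, M_{\mathcal{F}}^{2 p + 1}\, \ln(|\tilde{\mathcal{F}}|)}{d(\eta/2)\, \epsilon^{2 p + 1}},
\]
which I close with $\ln(u) \leqslant 2 u_0 u^{1/(4 u_0)}$, taking $u_0$ of order $p$; a joint optimization over $N$ and $u_0$ then produces the exponent $10\, p \cdot d(\epsilon/(36 p))$ and the multiplicative factor $p^{1/7}$ inside the logarithm.

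The main obstacle is precisely this last joint optimization. The exponents $10\, p$ and $1/7$ are sensitive to the balance between the $p$-dependence picked up by the Bernstein step, the discretization scale $\eta$, and the free parameter $u_0$; recovering exactly the constants announced---rather than, say, $p^{1/2}$ in place of $p^{1/7}$, or a strictly larger exponent than $10\, p$---is the only genuinely delicate point, the remainder being a direct, though bookkeeping-intensive, $L_p$-generalization of the $L_2$ proof of Lemma~\ref{lemma:extended-Theorem-1-in-MenVer03}.
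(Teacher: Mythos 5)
This lemma is not proved in the paper: it is stated as a reformulation of a cited external result, namely Theorem~2 in \citet{MusLauGue19}, which is itself an explicitation of the constants in Theorem~10 of \citet{Men03}. So there is no internal proof to compare against, and the role of the lemma in the paper is precisely to import, with fully explicit constants, a generic $L_p$ Mendelson--Vershynin-type packing bound.

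Your sketch of the underlying strategy is in the right family---probabilistic extraction $\rightarrow$ $L_p$ discretization $\rightarrow$ combinatorial counting via a separating tree driven by the fat-shattering dimension---and that is indeed the template behind Mendelson's Theorem~10. But what you deliver falls short of a proof in two genuine ways. First, the tree-induction exponent $\ell(\bar{\mathcal{F}}) \geqslant |\bar{\mathcal{F}}|^{1/p}$ is asserted, not derived. In the $L_2$ case this exponent comes out of the small-deviation split $\left((1-\beta)^{1/2} + (\beta/2)^{1/2} \geqslant 1\right)$ furnished by Lemma~\ref{lemma:dedicated-Lemma-5-in-MenVer03}; for general $p$ you need a $p$-dependent small-deviation lemma for $p$-th moments and to check that the splitting fractions and the resulting exponent are what you claim, and you don't do this. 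Second, and more fundamentally, the explicit constants are the entire content of the lemma, and you explicitly defer them (``bookkeeping-intensive''). They are not in fact a routine generalization: running your announced closing step $\ln(u) \leqslant 2 u_0\, u^{1/(4u_0)}$ on the self-referential inequality you write down yields an exponent of the schematic form $(2p+1)\cdot 4u_0\, p\, d/(4u_0-1)$ on a base that already carries $(M_{\mathcal{F}}/\epsilon)^{2p+1}$; setting this equal to $10\,p\,d$ forces $u_0 = 10/\bigl(4(9 - 2p)\bigr)$, which is not even positive for $p \geqslant 5$ and violates the $u_0 \geqslant 1$ requirement for small $p$. Likewise, the $L_2$ specialization of the claimed dimension scale $\epsilon/(36p)$ would give $\epsilon/72$, whereas the paper's $L_2$ statement uses $\epsilon/24$. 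So the constants $10p$, $p^{1/7}$, $36p$ do not arise from the optimization you outline; recovering them requires following the specific choices of \citet{MusLauGue19}, which is precisely the part you have not done.
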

With Proposition~\ref{prop:Proposition-1.4-in-Tal03},
Lemma~\ref{lemma:Kolmogorov} and
Lemma~\ref{lemma:generalized-Sauer-Shelah-lemma-in-L_p-MusLauGue19} at hand,
one single formula is needed to establish
Lemma~\ref{lemma:from-gamma-dimension-to-gamma-dimensions}:
a structural result for 
$\mathcal{N}^{\text{int}}_p \left ( \epsilon,
\tilde{\rho}_{\mathcal{G}}, n \right )$.
Proving that 
Lemma~\ref{lemma:from-multivariate-to-univariate-L_p}
still holds true with $\rho_{\mathcal{G}}$ 
replaced with $\tilde{\rho}_{\mathcal{G}}$ is straightforward.
Consequently, the proof of
Lemma~\ref{lemma:from-gamma-dimension-to-gamma-dimensions}
is the following one.

\begin{proof}
Applying in sequence
Proposition~\ref{prop:Proposition-1.4-in-Tal03},
Lemma~\ref{lemma:Kolmogorov} (left-hand side inequality),
Lemma~\ref{lemma:from-multivariate-to-univariate-L_p}
(applied to $\tilde{\rho}_{\mathcal{G}}$),
Lemma~\ref{lemma:Kolmogorov} (right-hand side inequality) and
Lemma~\ref{lemma:generalized-Sauer-Shelah-lemma-in-L_p-MusLauGue19} gives:

$$
\forall \gamma \in \left ( 0, M_{\mathcal{G}} \right ], \;
\gamma\text{-dim} \left ( \tilde{\rho}_{\mathcal{G}} \right )
\leqslant
10 K_p p
\log_2 \left ( \frac{24 M_{\mathcal{G}} p^{\frac{1}{7}} C^{\frac{1}{p}}}
{\gamma} \right )
\sum_{k=1}^C \left ( \frac{\gamma}{72 p C^{\frac{1}{p}}} \right )\text{-dim}
\left ( \mathcal{G}_k \right ).
$$
Let us set $p = \left \lceil \log_2 \left ( C \right ) \right \rceil$
(which is possible since $C \geqslant 3$ implies $p \geqslant 2$).
Then, $C^{\frac{1}{p}} \leqslant 2$,
so that for every $\gamma \in \left ( 0, M_{\mathcal{G}} \right ]$,
\begin{align*}
\gamma\text{-dim} \left ( \tilde{\rho}_{\mathcal{G}} \right )
& \leqslant \;
10 K_{\left \lceil \log_2 \left ( C \right ) \right \rceil}
\log_2 \left ( 2C \right )
\log_2 \left ( \frac{48 M_{\mathcal{G}}
\log_2^{\frac{1}{7}} \left ( 2C \right )}{\gamma} \right )
\sum_{k=1}^C \left (
\frac{\gamma}{144 \log_2 \left ( 2C \right )} \right )\text{-dim}
\left ( \mathcal{G}_k \right ).
\end{align*}
To obtain \eqref{eq:from-gamma-dimension-to-gamma-dimensions},
it suffices to notice that
$$
\forall C \in \mathbb{N} \setminus \llbracket 0; 2 \rrbracket, \;
K_{\left \lceil \log_2 \left ( C \right ) \right \rceil}
\leqslant \min \left \{ 4 \left ( \frac{C}{C-2} \right )^2, 16 \right \}.
$$
\end{proof}

\subsection{Margin Natarajan Dimension of 
\texorpdfstring{$\rho_{\mathcal{G}}$}{the Class of Margin Functions:
General Case}}

The proof of Lemma~\ref{lemma:from-gamma-N-dimension-to-gamma-dimension}
is the following one.

\begin{proof}
Suppose that for $\gamma \in \left ( 0, M_{\mathcal{G}} \right ]$,
the triplet $\left ( s_{\mathcal{Z}^n}, \mathbf{b}_n, \mathbf{c}_n \right )$
is $\gamma$-$N$-shattered by $\rho_{\mathcal{G}}$.
According to 
Proposition~\ref{prop:margin-Natarajan-dimension-alternate-definition},
in order to upper bound $n$, i.e., 
$\gamma\text{-N-dim} \left ( \rho_{\mathcal{G}} \right )$,
one can assume that for every $i \in \llbracket 1; n \rrbracket$,
$y_i < c_i$, and the biases can be negative.
Let $\bar{\mathcal{G}} = \left \{ g^{\mathbf{s}_n}: \;
\mathbf{s}_n \in \left \{ -1, 1 \right \}^n \right \}$
be a subset of $\mathcal{G}$ (of minimal cardinality) such that
$\rho_{\bar{\mathcal{G}}} = \left \{ \rho_{g^{\mathbf{s}_n}}: \;
\mathbf{s}_n \in \left \{ -1, 1 \right \}^n \right \}$
$\gamma$-$N$-shatters
$\left ( s_{\mathcal{Z}^n}, \mathbf{b}_n, \mathbf{c}_n \right )$.
For every pair $\left ( k, l \right ) \in \llbracket 1; C \rrbracket^2$
satisfying $k < l$, let $\mathcal{S}_{k,l}$ be the subset of
$\llbracket 1; n \rrbracket$ defined as follows:
$$
\mathcal{S}_{k,l} = \left \{i \in \llbracket 1; n \rrbracket: \;
y_i = k \text{ and } c_i = l \right \}
$$
and let $n_{k,l} \in \llbracket 0; n \rrbracket$ be its cardinality.
By construction,
$\mathcal{P} = \left \{ \mathcal{S}_{k,l}: \; n_{k,l} > 0 \right \}$
is a partition of $\llbracket 1; n \rrbracket$.
For every vector
$\mathbf{s}_n = \left ( s_i \right )_{1 \leqslant i \leqslant n}
\in \left \{ -1, 1 \right \}^n$,
the function $g^{\mathbf{s}_n}$ satisfies:
$$
\forall i \in \llbracket 1; n \rrbracket, \;\;
\begin{cases}
\text{if } s_i = 1, \;
\rho_{g^{\mathbf{s}_n}} \left ( x_i, y_i \right ) - b_i
\geqslant \gamma \\
\text{if } s_i = -1, \;
\rho_{g^{\mathbf{s}_n}} \left ( x_i, c_i \right ) + b_i
\geqslant \gamma
\end{cases}.
$$
For a fixed $\mathcal{S}_{k,l} \in \mathcal{P}$, this implies that
$$
\forall i \in \mathcal{S}_{k,l}, \;\;
s_i \left ( \frac{1}{2} \left ( g_k^{\mathbf{s}_n} \left ( x_i \right )
- g_l^{\mathbf{s}_n} \left ( x_i \right ) \right ) - b_i \right )
\geqslant \gamma.
$$
Let $\mathcal{D}_{\mathcal{G},k,l} = \left \{ \frac{1}{2} \left (
g_k - g_l \right ): \; \left ( g_k, g_l \right ) \in \mathcal{G}_k \times
\mathcal{G}_l \right \}$.
By definition, we have established that its subset
$\left \{ \frac{1}{2} \left (
g_k^{\mathbf{s}_n} - g_l^{\mathbf{s}_n} \right ): \;
\mathbf{s}_n \in \left \{ -1, 1 \right \}^n \right \}$
$\gamma$-shatters a set of cardinality $n_{k,l}$,
with the consequence that
$$
n_{k,l} \leqslant \gamma\text{-dim} \left (
\mathcal{D}_{\mathcal{G},k,l} \right ).
$$
Summing over all the elements
of the partition $\mathcal{P}$ gives
\begin{equation}
\label{eq:from-gamma-N-dimension-to-gamma-dimension-part-1}
\gamma\text{-N-dim} \left ( \rho_{\mathcal{G}} \right )
\leqslant \sum_{k=1}^{C-1} \sum_{l=k+1}^C
\gamma\text{-dim} \left ( \mathcal{D}_{\mathcal{G},k,l} \right ).
\end{equation}
If the ball $\mathcal{G}_0$ exists, then it contains all the classes
$\mathcal{D}_{\mathcal{G},k,l}$ 
(since $\max  \left \{ \left \| g_k \right \|, \left \| g_l \right \| \right \}
\leqslant \Lambda \Longrightarrow
\left \| \frac{1}{2} \left ( g_k - g_l \right ) \right \|
\leqslant \Lambda$).
Consequently, \eqref{eq:from-gamma-N-dimension-to-gamma-dimension-part-1}
directly implies \eqref{eq:from-gamma-N-dimension-to-gamma-dimension-Banach}.
Otherwise, to upper bound the $\gamma$-dimensions above,
we can resort to the strategy already
used to prove Lemmas~\ref{lemma:from-gamma-dimension-to-gamma-dimensions-old}
and \ref{lemma:from-gamma-dimension-to-gamma-dimensions}.
Applying in sequence
Proposition~\ref{prop:Proposition-1.4-in-Tal03} and
Lemma~\ref{lemma:Kolmogorov} (left-hand side inequality) gives:
\begin{align*}
\gamma\text{-dim} \left ( \mathcal{D}_{\mathcal{G},k,l} \right )
& \leqslant \;
16 \log_2 \left ( \mathcal{M}_2 \left ( \gamma, \mathcal{D}_{\mathcal{G},k,l},
\gamma\text{-dim} \left ( \mathcal{D}_{\mathcal{G},k,l} 
\right ) \right ) \right ) \\
& \leqslant \;
16 \log_2 \left ( \mathcal{N}_2^{\text{int}}
\left ( \frac{\gamma}{2}, \mathcal{D}_{\mathcal{G},k,l},
\gamma\text{-dim} \left ( \mathcal{D}_{\mathcal{G},k,l} 
\right ) \right ) \right ).
\end{align*}
Proceeding as in the proof 
of Lemma~\ref{lemma:from-multivariate-to-univariate-L_p}, we get
$$
\mathcal{N}_2^{\text{int}}
\left ( \frac{\gamma}{2}, \mathcal{D}_{\mathcal{G},k,l},
\gamma\text{-dim} \left ( \mathcal{D}_{\mathcal{G},k,l} 
\right ) \right ) \leqslant
\mathcal{N}_2^{\text{int}}
\left ( \frac{\gamma}{2 \sqrt{2}}, \mathcal{G}_k,
\gamma\text{-dim} \left ( \mathcal{D}_{\mathcal{G},k,l} 
\right ) \right ) \times
\mathcal{N}_2^{\text{int}}
\left ( \frac{\gamma}{2 \sqrt{2}}, \mathcal{G}_l,
\gamma\text{-dim} \left ( \mathcal{D}_{\mathcal{G},k,l} 
\right ) \right ).
$$
To finish the proof of
\eqref{eq:from-gamma-N-dimension-to-gamma-dimension}, 
it suffices to apply 
Lemma~\ref{lemma:Kolmogorov} (right-hand side inequality) and
Theorem~1 in \citet{MenVer03}.
Indeed, this produces:
\begin{align*}
\mathcal{N}_2^{\text{int}}
\left ( \frac{\gamma}{2 \sqrt{2}}, \mathcal{G}_k,
\gamma\text{-dim} \left ( \mathcal{D}_{\mathcal{G},k,l} 
\right ) \right )
& \leqslant \;
\mathcal{M}_2
\left ( \frac{\gamma}{2 \sqrt{2}}, \mathcal{G}_k,
\gamma\text{-dim} \left ( \mathcal{D}_{\mathcal{G},k,l} 
\right ) \right ) \\
& \leqslant \;
\left ( \frac{24 \sqrt{2} M_{\mathcal{G}}}{\gamma}
\right )^{20 d_k \left ( \frac{\gamma}{96 \sqrt{2}} \right )},
\end{align*}
where $d_k \left ( \epsilon \right )
= \epsilon\text{-dim} \left ( \mathcal{G}_k \right )$.
\end{proof}

\subsection{Margin Natarajan Dimension of 
\texorpdfstring{$\rho_{\mathcal{H}_{\Lambda}}$}{the Class of Margin Functions:
C-category SVMs}}

The proof of Lemma~\ref{lemma:gamma-N-dimension-of-M-SVMs} makes use of
that of Theorem~4.6 in \citet{BarSha99}.

\begin{proof}
This proof reuses the notations of the proof of
Lemma~\ref{lemma:from-gamma-N-dimension-to-gamma-dimension},
with $\mathcal{G}$ being instantiated by $\mathcal{H}_{\Lambda}$.
By application of Lemma~4.3 in \citet{BarSha99},
there exists a vector
$\mathbf{s}_n^{\prime} = \left ( s_i^{\prime}
\right )_{1 \leqslant i \leqslant n} \in \left \{ -1, 1 \right \}^n$
satisfying
\begin{equation}
\label{eq:extended-Lemma-4.3-in-BarSha99-part-6}
\forall \mathcal{S}_{k,l} \in \mathcal{P}, \;\;
\left \| \sum_{i \in \mathcal{S}_{k,l}^+} \kappa_{x_i}
-\sum_{i \in \mathcal{S}_{k,l}^-} \kappa_{x_i} \right \|_{\mathbf{H}_{\kappa}}
\leqslant \sqrt{n_{k,l}} \Lambda_{\mathcal{X}},
\end{equation}
where the sets $\mathcal{S}_{k,l}^+$ and $\mathcal{S}_{k,l}^-$ are defined
as follows:
$$
\forall \mathcal{S}_{k,l} \in \mathcal{P}, \;\;
\begin{cases}
\mathcal{S}_{k,l}^+ =
\left \{i \in \mathcal{S}_{k,l}: \; s_i^{\prime} = 1 \right \} \\
\mathcal{S}_{k,l}^- = \mathcal{S}_{k,l} \setminus \mathcal{S}_{k,l}^+
\end{cases}.
$$
For every vector
$\mathbf{s}_n = \left ( s_i \right )_{1 \leqslant i \leqslant n}
\in \left \{ -1, 1 \right \}^n$,
the function
$h^{\mathbf{s}_n}$ satisfies:
$$
\forall i \in \llbracket 1; n \rrbracket, \;\;
\begin{cases}
\text{if } s_i = 1, \;
\rho_{h^{\mathbf{s}_n}} \left ( x_i, y_i \right ) - b_i
\geqslant \gamma \\
\text{if } s_i = -1, \;
\rho_{h^{\mathbf{s}_n}} \left ( x_i, c_i \right ) + b_i
\geqslant \gamma
\end{cases}.
$$
For a fixed $\mathcal{S}_{k,l} \in \mathcal{P}$,
applying the reproducing property gives
\begin{equation}
\label{eq:extended-Lemma-4.3-in-BarSha99-part-1}
\forall i \in \mathcal{S}_{k,l}, \;\;
\begin{cases}
\text{if } s_i = 1, \;
\frac{1}{2} \ps{h_k^{\mathbf{s}_n}
- h_l^{\mathbf{s}_n}, \kappa_{x_i}}_{\mathbf{H}_{\kappa}}
- b_i \geqslant \gamma \\
\text{if } s_i = -1, \;
\frac{1}{2} \ps{h_l^{\mathbf{s}_n}
- h_k^{\mathbf{s}_n}, \kappa_{x_i}}_{\mathbf{H}_{\kappa}}
+ b_i \geqslant \gamma
\end{cases}.
\end{equation}
Let us specify the vector $\mathbf{s}_n$ in the following way:
$\forall i \in \mathcal{S}_{k,l}, \; s_i = s_i^{\prime}$.
By summation over $i \in \mathcal{S}_{k,l}$, it results from
\eqref{eq:extended-Lemma-4.3-in-BarSha99-part-1} that:
$$
\frac{1}{2} \ps{h_k^{\mathbf{s}_n}
- h_l^{\mathbf{s}_n}, \sum_{i \in \mathcal{S}_{k,l}^+} \kappa_{x_i}
-\sum_{i \in \mathcal{S}_{k,l}^-} \kappa_{x_i}
}_{\mathbf{H}_{\kappa}}
- \sum_{i \in \mathcal{S}_{k,l}^+} b_i
+ \sum_{i \in \mathcal{S}_{k,l}^-} b_i
\geqslant n_{k,l} \gamma.
$$
Conversely, consider any vector $\mathbf{s}_n$
such that: $\forall i \in \mathcal{S}_{k,l}, \;
s_i = -s_i^{\prime}$. Then,
$$
\frac{1}{2} \ps{h_l^{\mathbf{s}_n}
- h_k^{\mathbf{s}_n}, \sum_{i \in \mathcal{S}_{k,l}^+} \kappa_{x_i}
-\sum_{i \in \mathcal{S}_{k,l}^-} \kappa_{x_i}
}_{\mathbf{H}_{\kappa}}
+ \sum_{i \in \mathcal{S}_{k,l}^+} b_i
- \sum_{i \in \mathcal{S}_{k,l}^-} b_i
\geqslant n_{k,l} \gamma.
$$
As a consequence, if $\sum_{i \in \mathcal{S}_{k,l}^+} b_i
-\sum_{i \in \mathcal{S}_{k,l}^-} b_i \geqslant 0$,
there are functions $h^{\mathbf{s}_n}$ such that
\begin{equation}
\label{eq:extended-Lemma-4.3-in-BarSha99-part-2}
\frac{1}{2} \ps{h_k^{\mathbf{s}_n}
- h_l^{\mathbf{s}_n}, \sum_{i \in \mathcal{S}_{k,l}^+} \kappa_{x_i}
-\sum_{i \in \mathcal{S}_{k,l}^-} \kappa_{x_i}
}_{\mathbf{H}_{\kappa}}
\geqslant n_{k,l} \gamma,
\end{equation}
whereas if $\sum_{i \in \mathcal{S}_{k,l}^+} b_i
-\sum_{i \in \mathcal{S}_{k,l}^-} b_i < 0$,
there are (different) functions $h^{\mathbf{s}_n}$ such that
\begin{equation}
\label{eq:extended-Lemma-4.3-in-BarSha99-part-3}
\frac{1}{2} \ps{h_l^{\mathbf{s}_n}
- h_k^{\mathbf{s}_n}, \sum_{i \in \mathcal{S}_{k,l}^+} \kappa_{x_i}
-\sum_{i \in \mathcal{S}_{k,l}^-} \kappa_{x_i}
}_{\mathbf{H}_{\kappa}}
\geqslant n_{k,l} \gamma.
\end{equation}
Applying the Cauchy-Schwarz inequality to both
\eqref{eq:extended-Lemma-4.3-in-BarSha99-part-2} and
\eqref{eq:extended-Lemma-4.3-in-BarSha99-part-3} yields
\begin{equation}
\label{eq:extended-Lemma-4.3-in-BarSha99-part-4}
\frac{1}{2} \left \| h_k^{\mathbf{s}_n} - h_l^{\mathbf{s}_n}
\right \|_{\mathbf{H}_{\kappa}}
\left \| \sum_{i \in \mathcal{S}_{k,l}^+} \kappa_{x_i}
-\sum_{i \in \mathcal{S}_{k,l}^-} \kappa_{x_i} \right \|_{\mathbf{H}_{\kappa}}
\geqslant n_{k,l} \gamma.
\end{equation}
Consequently, whatever the sign of
$\sum_{i \in \mathcal{S}_{k,l}^+} b_i
-\sum_{i \in \mathcal{S}_{k,l}^-} b_i$,
there are functions $h^{\mathbf{s}_n}$ specified only
by the components of ${\mathbf{s}_n}$ whose indices
belong to $\mathcal{S}_{k,l}$ such that
\eqref{eq:extended-Lemma-4.3-in-BarSha99-part-4} holds true.
To sum up, we have exhibited an algorithm taking in input
$s_{\mathcal{Z}^n}$, $\left ( \mathbf{b}_n, \mathbf{c}_n \right )$
and $\mathbf{s}^{\prime}_n$, and returning a vector
$\mathbf{s}_n \in \left \{ -1, 1 \right \}^n$ such that the function
$h^{\mathbf{s}_n} \in \mathcal{H}_{\Lambda}$ satisfies:
\begin{equation}
\label{eq:extended-Lemma-4.3-in-BarSha99-part-5}
\forall \mathcal{S}_{k,l} \in \mathcal{P}, \;\;
\frac{1}{2} \left \| h_k^{\mathbf{s}_n} - h_l^{\mathbf{s}_n}
\right \|_{\mathbf{H}_{\kappa}}
\left \| \sum_{i \in \mathcal{S}_{k,l}^+} \kappa_{x_i}
-\sum_{i \in \mathcal{S}_{k,l}^-} \kappa_{x_i} \right \|_{\mathbf{H}_{\kappa}}
\geqslant n_{k,l} \gamma.
\end{equation}
By substitution of \eqref{eq:extended-Lemma-4.3-in-BarSha99-part-6} into
\eqref{eq:extended-Lemma-4.3-in-BarSha99-part-5},
this function also satisfies:
$$
\forall \mathcal{S}_{k,l} \in \mathcal{P}, \;\;
n_{k,l} \leqslant \left (
\frac{\frac{1}{2} \left \| h_k^{\mathbf{s}_n} - h_l^{\mathbf{s}_n}
\right \|_{\mathbf{H}_{\kappa}}
\Lambda_{\mathcal{X}}}{\gamma} \right )^2.
$$
By summation over all the elements of the partition $\mathcal{P}$,
\begin{equation}
\label{eq:extended-Lemma-4.3-in-BarSha99-part-7}
n \leqslant \left ( \frac{\Lambda_{\mathcal{X}}}{2 \gamma} \right )^2
\sum_{k < l} \left \| h_k^{\mathbf{s}_n} - h_l^{\mathbf{s}_n}
\right \|_{\mathbf{H}_{\kappa}}^2.
\end{equation}
Now, since by hypothesis, $\sum_{k=1}^C h_k = 0_{\mathbf{H}_{\kappa}}$,
\begin{align}
\sum_{k < l} \left \| h_k^{\mathbf{s}_n} - h_l^{\mathbf{s}_n}
\right \|_{\mathbf{H}_{\kappa}}^2 & = \;
C \sum_{k=1}^C \left \| h_k^{\mathbf{s}_n} \right \|_{\mathbf{H}_{\kappa}}^2
\nonumber \\
& = \; C \left \| h^{\mathbf{s}_n} \right \|_{\mathbf{H}_{\kappa, C}}^2
\nonumber \\
\label{eq:extended-Lemma-4.3-in-BarSha99-part-8}
& \leqslant C \Lambda^2.
\end{align}
A substitution of \eqref{eq:extended-Lemma-4.3-in-BarSha99-part-8} into
\eqref{eq:extended-Lemma-4.3-in-BarSha99-part-7} then concludes the proof.
\end{proof}

\section{Upper Bound on the Rademacher Complexity}
The proof of Theorem~\ref{theorem:dependence-on-m-C-gamma-L_2-norm}
is the following one.

\begin{proof}
In all three cases, the starting point is 
Inequality~\eqref{eq:New-bound-on-Rademacher-complexity}.
\paragraph{First case:} $d_{\mathcal{G}, \gamma} \in \left ( 0, 2 \right )$\\
This case is the only one for which the entropy integral exists.
Setting for every $j \in \mathbb{N}$,
$h \left ( j \right ) = \gamma 2^{-\frac{2}{2 - d_{\mathcal{G}, \gamma}}j}$,
we obtain
$$
R_m \left ( \rho_{\mathcal{G}, \gamma} \right )
\leqslant
8 \left ( 1 + 2^{\frac{2}{2 - d_{\mathcal{G}, \gamma}}} \right )
\sqrt{\frac{F_1 \left ( C \right )}{m}}
\gamma^{1 - \frac{d_{\mathcal{G}, \gamma}}{2}}
\int_0^{\frac{1}{2}}
\ln \left ( \left ( C-1 \right ) 
\left ( 4 \epsilon^{-\frac{2}{2 - d_{\mathcal{G}, \gamma}}} \right )^5 \right )
d \epsilon.
$$
Let $I \left ( C \right )$ denote the integral. Then,
\begin{align*}
I \left ( C \right ) & = \;
\int_0^{\frac{1}{2}}
\ln \left ( \left ( C-1 \right ) 
\left ( 4 \epsilon^{-\frac{2}{2 - d_{\mathcal{G}, \gamma}}} \right )^5 \right )
d \epsilon \\
& = \;
\frac{1}{2} \left ( \ln \left ( \left ( C-1 \right ) 4^5 \right ) +
10 \frac{1 +\ln \left ( 2 \right )}{2 - d_{\mathcal{G}, \gamma}} \right ).
\end{align*}

\paragraph{Second case:} $d_{\mathcal{G}, \gamma} = 2$
$$
R_m \left ( \rho_{\mathcal{G}, \gamma} \right )
\leqslant
h \left ( N \right )
+ 4 \sqrt{\frac{F_1 \left ( C \right )}{m}}
\sum_{j \in \mathcal{J}}
\frac{h \left ( j \right ) + h \left ( j-1 \right )}{h \left ( j \right )}
\ln \left ( \left ( C-1 \right ) 
\left ( \frac{4 \gamma}{h \left ( j \right )} \right )^5 \right ).
$$
We set $N = \left \lceil \log_2 \left ( 
\frac{\sqrt{m}}{\log_2 \left ( m \right )} \right ) \right \rceil$
and $h \left ( j \right ) = \gamma 2^{-j}$.
Then,
\begin{align*}
R_m \left ( \rho_{\mathcal{G}, \gamma} \right )
& \leqslant \;
\gamma \frac{\log_2 \left ( m \right )}{\sqrt{m}}
+ 12 \sqrt{\frac{F_1 \left ( C \right )}{m}}
\sum_{j=1}^N 
\ln \left ( \left ( C-1 \right ) 
\left ( 4 \cdot 2^j \right )^5 \right ) \\
& \leqslant \;
\gamma \frac{\log_2 \left ( m \right )}{\sqrt{m}}
+ 12 \sqrt{\frac{F_1 \left ( C \right )}{m}}
\left \lceil \log_2 \left ( 
\frac{\sqrt{m}}{\log_2 \left ( m \right )} \right ) \right \rceil
\left \{
\ln \left ( \left ( C-1 \right ) 4^5 \right ) 
+ \frac{5}{2} \ln \left (
4 \frac{\sqrt{m}}{\log_2 \left ( m \right )} \right ) \right \}.
\end{align*}

\paragraph{Third case:} $d_{\mathcal{G}, \gamma} > 2$\\
For $N = \left \lceil \frac{d_{\mathcal{G}, \gamma}-2}
{2 d_{\mathcal{G}, \gamma}}
\log_2 \left ( \frac{m}{\log_2 \left ( m \right )} \right ) \right \rceil$,
let us set
$h \left ( j \right ) = \gamma
\left ( \frac{\log_2 \left ( m \right )}{m} \right )^{\frac{1}
{d_{\mathcal{G}, \gamma}}}
2^{\frac{2}{d_{\mathcal{G}, \gamma}-2}
\left ( -j+N \right )}$.
We then get
$$
R_m \left ( \rho_{\mathcal{G}, \gamma} \right )
\leqslant
\gamma
\left ( \frac{\log_2 \left ( m \right )}{m} \right )^{\frac{1}
{d_{\mathcal{G}, \gamma}}}
\left [ 1 + 4 \left ( 1 + 2^{\frac{2}{d_{\mathcal{G}, \gamma}-2}} \right ) 
\left ( \frac{1}{\gamma} \right )^{\frac{d_{\mathcal{G}, \gamma}}{2}}
\sqrt{\frac{F_1 \left ( C \right )}{ \log_2 \left ( m \right )}}
S_N \right ]
$$
with
\begin{align*}
S_N & = \; \sum_{j=1}^N
2^{j-N}
\ln \left ( \left ( C-1 \right ) 
\left ( \frac{4 \gamma}{h \left ( j \right )} \right )^5 \right ) \\
& \leqslant \;
\ln \left ( \left ( C-1 \right ) 
\left ( \frac{4 \gamma}{h \left ( N \right )} \right )^5 \right )
\sum_{j=1}^N 2^{j-N} \\
& \leqslant \;
2 \ln \left ( \left ( C-1 \right ) 
\left ( 4 \left ( \frac{m}{\log_2 \left ( m \right )} 
\right )^{\frac{1}{d_{\mathcal{G}, \gamma}}}
\right )^5 \right ).
\end{align*}
\end{proof}

\end{document}